\newtheorem{assumption}{Assumption}
\newtheorem{theorem}{Theorem}
\newtheorem{problem}{Problem}
\newtheorem{lemma}{Lemma}
\newtheorem{corollary}{Corollary}
\theoremstyle{definition}
\newtheorem{remark}{Remark}
\tikzstyle{block} = [rectangle, rounded corners, minimum width=3cm, minimum height=1cm,text centered, draw=black, fill=red!30]
\tikzstyle{new} = [rectangle, rounded corners, minimum width=1cm, minimum
\tikzstyle{arrow} = [thick,->,>=stealth]
\tikzstyle{fblock} = [rectangle, draw, fill=gray!20, 
\tikzstyle{line} = [draw, -latex']
\crefname{problem}{Problem}{Problems}
\crefname{example}{Example}{Examples}
\crefname{section}{Sec.}{Secs.}
\Crefname{section}{Section}{Sections}
\Crefname{table}{Table}{Tables}
\crefname{table}{Table}{Tabs.}
\crefname{figure}{Fig.}{Figs.}
\crefname{algorithm}{Algorithm}{Algorithms}
\crefname{remark}{Remark}{Remarks}
\crefname{theorem}{Theorem}{Theorems}
\crefname{lemma}{Lemma}{Lemmas}
\crefname{corollary}{Corollary}{Corollaries}
\crefname{assumption}{Assumption}{Assumptions}
\newcommand{\Real}{\mathbb{R}}
\newcommand{\Natural}{\mathbb{N}}
\DeclareMathOperator*{\minimize}{minimize}
\providecommand{\inner}[2]{\left \langle #1, #2 \right \rangle}
\providecommand{\norm}[1]{\left\|#1\right\|}
\DeclareMathOperator{\Sym}{\mathcal{S}}
\def \PSD{\mathcal{S}_{+}}
\DeclareMathOperator{\tr}{tr}
\DeclareMathOperator{\Diag}{Diag}
\def \inv{^{-1}}
\def \pinv {^\dagger}
\DeclareMathOperator{\image}{image}
\DeclareMathOperator{\Diff}{D}
\DeclareMathOperator{\rgrad}{grad}
\DeclareMathOperator{\Hess}{Hess}
\DeclareMathOperator{\Retr}{Retr}
\DeclareMathOperator{\dist}{\mathbf{d}}
\DeclareMathOperator{\distang}{\mathbf{d}_\angle}
\DeclareMathOperator{\distchr}{\mathbf{d}_\text{chr}}
\DeclareMathOperator{\inj}{inj}
\DeclareMathOperator{\Exp}{Exp}
\DeclareMathOperator{\Log}{Log}
\DeclareMathOperator{\Lift}{lift}
\DeclareMathOperator{\SOd}{SO}
\DeclareMathOperator{\sod}{so}
\providecommand{\hatop}[1]{\left[#1\right]_\times}
\DeclareMathOperator{\SE}{SE}
\DeclareMathOperator{\Sc}{Sc}
\newcommand{\indvector}{\Delta}
\newcommand{\bmat}{\begin{bmatrix}}
\newcommand{\emat}{\end{bmatrix}}
\newcommand{\etal}{\emph{et~al.}\xspace}
\newcommand{\eg}{\emph{e.g.}\xspace}
\newcommand{\ie}{\emph{i.e.}\xspace}
\newcommand{\cf}{\emph{c.f.}\xspace}
\providecommand{\dataset}[1]{{\small \textsf{#1}}\xspace}
\providecommand{\method}[1]{{\small \textsf{#1}}\xspace}
\newcommand{\Vcal}{\mathcal{V}}
\newcommand{\Ecal}{\mathcal{E}}
\newcommand{\Fcal}{\mathcal{F}}
\newcommand{\Ccal}{\mathcal{C}}
\newcommand{\Ucal}{\mathcal{U}}
\newcommand{\Mcal}{\mathcal{M}}
\newcommand{\Hcal}{\mathcal{H}}
\newcommand{\Ncal}{\mathcal{N}}
\newcommand{\Xcal}{\mathcal{X}}
\newcommand{\fhat}{\widehat{f}}
\newcommand{\hhat}{\widehat{h}}
\newcommand{\xhat}{\widehat{x}}
\newcommand{\xstarhat}{\xhat^\star}
\newcommand{\ghat}{\widehat{g}}
\newcommand{\Hhat}{\widehat{H}}
\newcommand{\mhat}{\widehat{m}}
\newcommand{\That}{\widehat{T}}
\newcommand{\Rhat}{\widehat{R}}
\newcommand{\that}{\widehat{t}}
\newcommand{\Shat}{\widehat{S}}
\newcommand{\Mhat}{\widehat{M}}
\newcommand{\etahat}{\widehat{\eta}}
\newcommand{\xtilde}{\widetilde{x}}
\newcommand{\Ttilde}{\widetilde{T}}
\newcommand{\Rtilde}{\widetilde{R}}
\newcommand{\Stilde}{\widetilde{S}}
\newcommand{\ttilde}{\widetilde{t}}
\newcommand{\Htilde}{\widetilde{H}}
\newcommand{\Ltilde}{\widetilde{L}}
\newcommand{\Xtilde}{\widetilde{X}}
\newcommand{\Gtilde}{\widetilde{G}}
\newcommand{\Runder}{\underline{R}}
\newcommand{\Rstar}{R^\star}
\newcommand{\xstar}{x^\star}
\newcommand{\Xstar}{X^\star}
\newcommand{\ebar}{\overline{e}}
\newcommand{\gbar}{\overline{g}}
\newcommand{\fbar}{\overline{f}}
\newcommand{\Hbar}{\overline{H}}
\newcommand{\Mbar}{\overline{M}}
\newcommand{\thetabar}{\bar{\theta}}
\newcommand{\Mcalbar}{\overline{\mathcal{M}}}
\newcommand{\Retrbar}{\overline{\Retr}}
\newcommand{\RMSE}{\text{RMSE}}
\newcommand{\Rref}{R^\text{ref}}
\newcommand{\TLS}{\rho^\text{\tiny TLS}}
\newcommand{\wgnc}{w^\text{\tiny GNC}}
\newcommand{\SigmaPGO}{\Sigma^\text{\tiny PGO}}
\newcommand{\SigmaInit}{\Sigma^\text{\tiny INIT}}
\newcommand{\SparsifiedSchurComplement}{\textsc{SparsifiedSchurComplement}\xspace}
\newcommand{\SparsifiedLaplacianSolver}{\textsc{SparsifiedLaplacianSolver}\xspace}
\newcommand{\kB}{kB}
\newcommand{\Gendarmenmarkt}{\dataset{Gendarmenmarkt}}
\newcommand{\myParagraph}[1]{{\bf #1.}\xspace}
\providecommand{\edit}[1]{{#1}}
\providecommand{\techreport}[1]{{#1}}  
\providecommand{\mainpaper}[1]{{}}  
\title{ 
Spectral Sparsification for Communication-Efficient 
Collaborative Rotation and Translation Estimation}
\author{Yulun Tian and Jonathan P. How
	\thanks{The authors are with the Department of Aeronautics and Astronautics, Massachusetts Institute of Technology, 77 Massachusetts Ave, Cambridge, MA 02139, USA. {\tt\small \{yulun, jhow\}@mit.edu}.
	{This work was supported in part by ARL DCIST under Cooperative
	Agreement Number W911NF-17-2-0181, and in part by ONR under BRC Award N000141712072.}

    {The authors gratefully acknowledge Dr. Kaveh Fathian, Parker Lusk, Dr. Kasra Khosoussi, Dr. David
    M. Rosen, and anonymous reviewers for helpful comments. The authors would also like to
    thank Prof. Pierre-Antoine Absil for insightful discussions on the linear convergence of approximate Newton
    methods on Riemannian manifolds.}
}%
}
\begin{document}

\maketitle

\begin{abstract}
	We propose fast and communication-efficient optimization algorithms for multi-robot
	rotation averaging and translation estimation problems
	that arise from collaborative simultaneous localization and mapping (SLAM), 
	{structure-from-motion (SfM)},
	and camera network localization applications.
	Our methods are based on theoretical relations between the \emph{Hessians} of the underlying Riemannian optimization problems 
	and the \emph{Laplacians} of suitably weighted graphs.
	{We leverage these results to design a collaborative solver in which robots coordinate with a central server to perform approximate second-order optimization, by solving a \emph{Laplacian system} at each iteration.}
	Crucially, our algorithms permit robots to employ \emph{spectral sparsification} to sparsify intermediate dense matrices before communication, 
	and hence provide a mechanism to trade off accuracy with communication efficiency with provable guarantees.
	We perform rigorous theoretical analysis of our methods and prove that they enjoy (local) \emph{linear} rate of convergence.
	{Furthermore, we show that our methods can be combined with {graduated non-convexity} to achieve \emph{outlier-robust} estimation.}
	{Extensive experiments on real-world SLAM and SfM scenarios demonstrate the superior convergence rate and communication efficiency of our methods.}
\end{abstract}

\begin{IEEEkeywords}
	Simultaneous localization and mapping, optimization, multi-robot systems.
\end{IEEEkeywords}

\section{Introduction}

{Collaborative spatial perception is a fundamental capability for multi-robot systems to operate in unknown, GPS-denied environments.
State-of-the-art systems (\eg, \cite{Schmuck21covins,Chang2022lamp2, Cramariuc2022maplab,Cieslewski18DataEfficient, Lajoie20DOOR, Tian21KimeraMulti}) rely on optimization-based back-ends to achieve accurate multi-robot simultaneous localization and mapping (SLAM).
Often, a central server receives data from all robots (\eg in the form of factor graphs \cite{Dellaert17FactorGraph}) and solves the underlying large-scale optimization for the entire team.
In comparison, \emph{collaborative optimization} frameworks leverage robots' local computation and iterative communication (either peer-to-peer or coordinated by a server),
and thus have the potential to scale to larger scenes and support more robots.}

{Recent works focus on developing} fully distributed algorithms in which robots carry out iterative optimization via peer-to-peer message passing \cite{Tron2014CameraNetwork,Choudhary17IJRR,Tian21DC2PGO,Tian20ASAPP,Fan2021MajorizationJournal,Murai2022RobotWeb}.
While these methods are flexible in terms of the required communication architecture, 
they often suffer from slow convergence due to their first-order nature and the inherent poor conditioning of typical SLAM problems.
To resolve the slow convergence issue, 
an alternative is to pursue a \emph{second-order} optimization framework.
{A prominent example is DDF-SAM~\cite{Cunningham10DDFSAM,Cunningham12DataAssociation,Cunningham13DDFSAM2}, in which robots marginalize out internal variables (\ie, those without inter-robot measurements) in their local factor graphs before communication.
From an optimization perspective, robots partially eliminate their local Hessians and communicate the resulting matrices.}
However, a shortcoming of this approach is that the transmitted matrices are usually \emph{dense} (even if the original problem is sparse), 
{and hence could result in long transmission times that prevent the team from obtaining a timely solution.}

{To address the aforementioned technical gaps, this work presents results towards collaborative optimization that achieves both \emph{fast convergence} and \emph{efficient communication}.}
{Specifically, we develop new algorithms for solving multi-robot rotation averaging and translation estimation.
	These problems are fundamental and have applications ranging from initialization for pose graph SLAM \cite{Carlone2015Initialization}, 
	structure-from-motion (SfM) \cite{Zhu18DistMotionAverage}, 
	and camera network localization \cite{Tron2014CameraNetwork}.}
{Our approach is based on a server-client architecture (\cref{fig:example:communication_architecture}), in which multiple robots (clients) coordinate with a server to collaboratively solve the optimization problem leveraging local computation.}
{The crux of our method lies in exploiting theoretical relations between the Hessians of the optimization problems and the Laplacians of the underlying graphs.} 
We leverage these theoretical insights to develop a fast collaborative optimization method
in which each iteration computes an \emph{approximate second-order} update 
\edit{by replacing the Hessian with a \emph{constant} Laplacian matrix, 
which improves efficiency in both computation and communication. 
Furthermore, during communication,} robots use \emph{spectral sparsification} \cite{Batson2013SpectralSurvey,Spielman2011Sampling} to sparsify {intermediate dense matrices resulted from elimination of its internal variables}. {\cref{fig:example:original_graph,fig:example:reduced_graph_dense,fig:example:reduced_graph_sparse} show a high-level illustration of our approach.}
By varying the degree of sparsification, our method thus provides a principled way for trading off accuracy with communication efficiency.
The theoretical properties of spectral sparsification allow us to perform rigorous convergence analysis, 
and establish \emph{linear} rates of convergence for our methods.
{Lastly, we also present an extension to \emph{outlier-robust} estimation by combining our approach with graduated non-convexity (GNC) \cite{Yang2020Gnc,Black1996Gnc}.}

\begin{figure*}[t]
	\centering
	\definecolor{robotgreen}{RGB}{0,200,50}
	\definecolor{robotblue}{RGB}{0,0,255}
	\definecolor{robotred}{RGB}{200,0,0}
	\begin{subfigure}[t]{0.24\linewidth}
		\begin{tikzpicture}[scale=0.5]
			\tikzstyle{robot}=[circle,scale=1,draw,thick]
			\tikzstyle{server}=[circle,scale=1.5,draw,thick]
			\tikzstyle{upload}=[->,>=stealth,thick]
			\tikzstyle{download}=[<-,>=stealth,thick]
			
			\node[server] (server) at (3.0,5.5) [] {\faServer};
			\node[robot] (r1) at (0.0,1.5) [] {\textcolor{robotgreen}{\faAndroid$_{1}$}};
			\node[robot] (r2) at (3.0,1.5) [] {\textcolor{robotblue}{\faAndroid$_{2}$}};
			\node[robot] (r3) at (6.0,1.5) [] {\textcolor{robotred}{\faAndroid$_{3}$}};
			
			\draw[upload] (r1) -- (server);
			\draw[download] (r1) -- (server);
			\draw[upload] (r2) -- (server);
			\draw[download] (r2) -- (server);
			\draw[upload] (r3) -- (server);
			\draw[download] (r3) -- (server);
		\end{tikzpicture}
		\captionsetup{justification=centering}
		\caption{Server-client architecture}
		\label{fig:example:communication_architecture}
	\end{subfigure}
	\hfill
	\begin{subfigure}[t]{0.24\linewidth}
		\includegraphics[width=\textwidth]{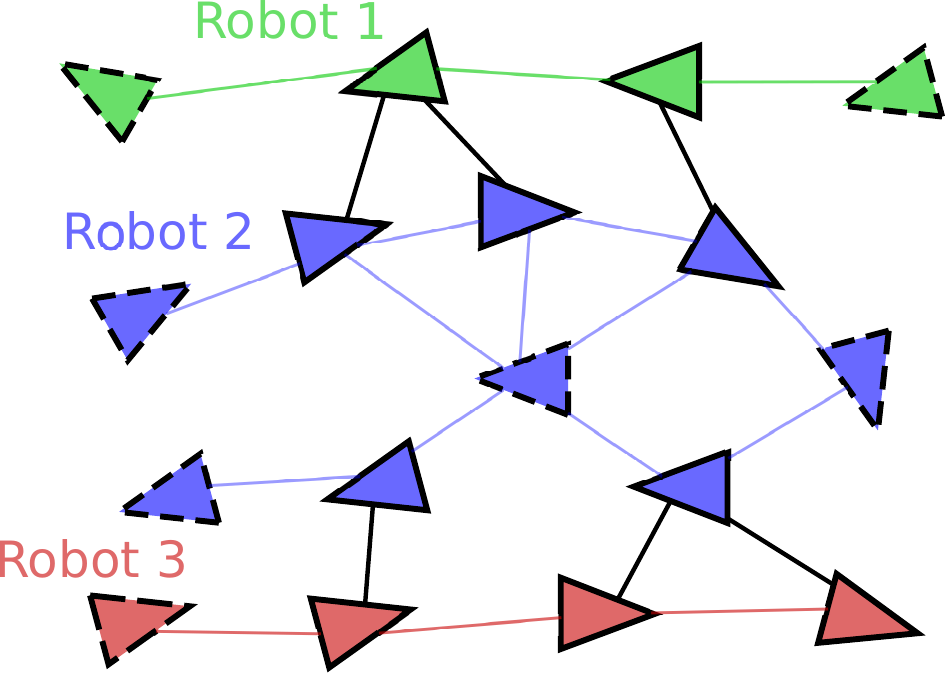}
		\caption{Measurement graph}
		\label{fig:example:original_graph}
	\end{subfigure}
	\hfill
	\begin{subfigure}[t]{0.24\linewidth}
		\includegraphics[width=\textwidth]{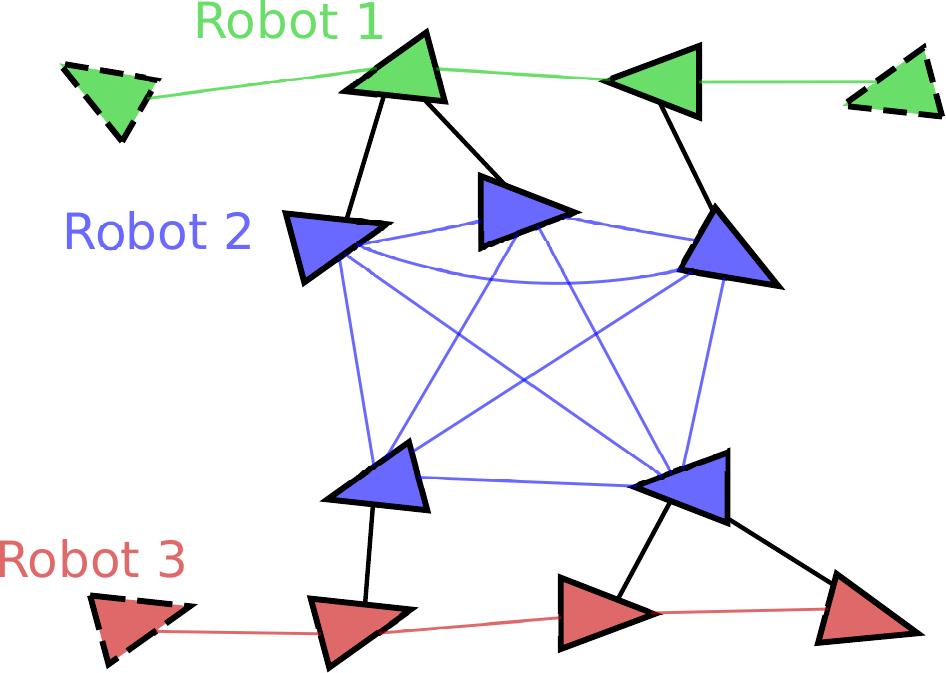}
		\caption{Robot 2's \emph{dense} reduced graph}
		\label{fig:example:reduced_graph_dense}
	\end{subfigure}
	\hfill
	\begin{subfigure}[t]{0.24\linewidth}
		\includegraphics[width=\textwidth]{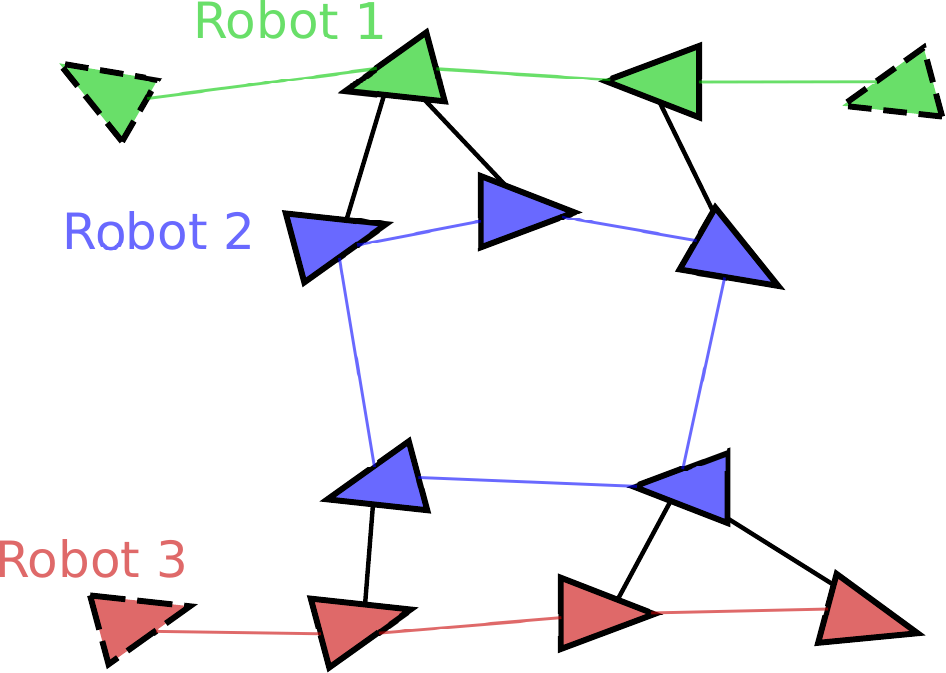}
		\caption{Robot 2's \emph{sparsified} graph}
		\label{fig:example:reduced_graph_sparse}
	\end{subfigure}
	\caption{
		{(a) Information flow in server-client architecture. Each communication round consists of an \emph{upload} stage (clients to server) and a \emph{download} stage (server to clients). }
		(b) Example 3-robot problem visualized as a graph. 
		For each robot $\alpha \in \{1,2,3\}$, its vertices (variables) $\Vcal_\alpha$ are shown in a distinct color.
		Each edge corresponds to a relative measurement between two variables.
		{Separators (solid line) correspond to variables with inter-robot measurements, and the remaining variables form the interior vertices (dashed line).}
		(c) For robot 2, elimination of its interior vertices creates a \emph{dense} matrix $S_2$, which corresponds to a {dense} graph over its separators.
		(d) In our approach, robot 2 achieves communication efficiency by transmitting {a sparse approximation $\Stilde_2$ of the original dense matrix $S_2$, which also corresponds to a sparsified graph over its separators. }
	} 
	\label{fig:example}
        \vspace{-0.3cm}
\end{figure*}

{\myParagraph{Contributions}}
{The key contributions of this work are summarized as follows:
\begin{itemize}[leftmargin=0.3cm]
	\item We present collaborative optimization algorithms for multi-robot rotation averaging and translation estimation under the server-client architecture, which enjoy {fast convergence} (in terms of the number of iterations) and {efficient communication} through the use of \emph{spectral sparsification}.
	\item In contrast to the typical sublinear convergence of prior methods, we prove (local) \emph{linear convergence} for our methods and show that the rate of convergence depends on the user-defined  sparsification parameter.
	\item We present an extension to \emph{outlier-robust} estimation by combining the proposed algorithms with GNC. 
	\item We perform extensive evaluations of our methods and demonstrate their values on real-world SLAM and SfM scenarios with  outlier measurements. 
\end{itemize}
Lastly, while our algorithms and theoretical guarantees cover separate rotation averaging and translation estimation, 
we demonstrate through our experiments that their combination can be used to achieve robust initialization for pose graph optimization (PGO), which is another fundamental problem commonly used in collaborative SLAM.

{\myParagraph{Paper Organization}}
The rest of this paper is organized as follows.
The remainder of this section introduces necessary notation and mathematical preliminaries, and
in \cref{sec:related_work}, we review related works.
{\cref{sec:problem_definition} formally introduces the problem formulation, communication architecture, and relevant applications.}
In \cref{sec:laplacian_systems}, we establish theoretical relations between the Hessians and the underlying graph Laplacians.
Then, in \cref{sec:algorithm}, we leverage these theoretical results to design fast and communication-efficient solvers for the problems of interest and establish convergence guarantees.
Finally, \cref{sec:experiments} presents numerical evaluations of the proposed algorithms. 
\mainpaper{{Proofs and additional details, discussions, and experiments are provided in our extended technical report \cite{Tian2022Sparsification}.}}

\subsection*{Notations and Preliminaries}
\label{sec:preliminary}

\techreport{{\cref{tab:notations} in the appendix summarizes detailed notations used in this work.}}
\mainpaper{{Table~V in our technical report \cite{Tian2022Sparsification} summarizes detailed notations used in this work.}}
Unless stated otherwise, lowercase and uppercase letters denote vectors and matrices, respectively.
We define $[n] \triangleq \{1, 2, \hdots, n\}$ as the set of positive integers from 1 to $n$.

\myParagraph{Linear Algebra and Spectral Approximation}
$\Sym^n$ and $\PSD^n$ denote the set of $n \times n$ symmetric and symmetric positive semidefinite matrices, respectively. 
{We use $\otimes$ to denote the Kronecker product. 
For a positive integer $n$, $1_n \in \Real^n$ and $I_n \in \Real^{n\times n}$ denote the vector of all ones and the Identity matrix.}
{For any matrix $A$, $\ker(A)$ and $\image(A)$ denote the kernel (nullspace) and image (span of column vectors) of $A$, respectively.}
$A\pinv$ denotes the Moore-Penrose inverse of $A$, which coincides with the inverse $A\inv$ when $A$ is invertible.
When $A \in \Sym^n$, $\lambda_1(A), \hdots, \lambda_n(A)$ denote the real eigenvalues of $A$ sorted in increasing order.
When $A \in \PSD^n$, we also define $\norm{X}_A \triangleq \sqrt{\tr(X^\top A X)}$ where $X$ is of compatible dimensions.

Following \cite{lee2015sparsified,kyng2016sparsified},
for $A, B \in \Sym^n$ and $\epsilon > 0$, we say that $B$ is an \emph{$\epsilon$-approximation} of $A$, denoted as $A \approx_\epsilon B$, 
if the following holds,
\begin{equation}
	e^{-\epsilon} B \preceq A \preceq e^\epsilon B,
	\label{eq:spectral_approximation_def}
\end{equation}
{where $B \preceq A$ means $A - B \in \PSD^n$.}
Note that \eqref{eq:spectral_approximation_def} is symmetric and holds under composition:
if $A \approx_\epsilon B$ and $B \approx_\delta C$, then $A \approx_{\epsilon + \delta} C$.
Furthermore, if $A$ is singular, the relation \eqref{eq:spectral_approximation_def} 
implies that $B$ is necessarily singular and $\ker(A) = \ker(B)$.

\myParagraph{Graph Theory}
A weighted undirected graph is denoted as $G = (\Vcal, \Ecal, w)$, where $\Vcal$ and $\Ecal$ denote the vertex and edge sets, and 
$w: \Ecal \to \Real_{>0}$ {is the edge weight function that assigns each edge $(i,j) \in \Ecal$ a positive weight $w_{ij}$.}
For a graph $G$ with $n$ vertices, its \emph{graph Laplacian} $L(G;w) \in \PSD^{n}$ is defined as,
\begin{equation}
	L(G;w)_{ij} = \begin{cases}
		\sum_{k \in \text{Nbr}(i)} w_{ik}, & \text{if $i = j$}, \\
		-w_{ij}, & \text{if $i\neq j, \; (i,j) \in \Ecal$}, \\
		0, & \text{otherwise.}
	\end{cases}
	\label{eq:laplacian_def}
\end{equation}
{In \eqref{eq:laplacian_def}, $\text{Nbr}(i) \subseteq \Vcal$ denotes the neighbors of vertex $i$ in the graph.}
Our notation $L(G;w)$ serves to emphasize that the Laplacian of $G$ depends on the edge weight $w$.
When the edge weight $w$ is irrelevant or clear from context,
we will write the graph as $G = (\Vcal, \Ecal)$ and its Laplacian as $L(G)$ or simply $L$.
{The graph Laplacian $L$ always has a zero eigenvalue, \ie, $\lambda_1(L) = 0$.}
The second smallest eigenvalue $\lambda_2(L)$ is known as the \emph{algebraic connectivity}, which is always positive for connected graphs.

\myParagraph{Riemannian Manifolds}
The reader is referred to \cite{Absil2009Book,Boumal20Book} for a comprehensive review of optimization on matrix manifolds.
In general, we use $\Mcal$ to denote a smooth matrix manifold.
For integer $n > 1$, $\Mcal^n$ denotes the product manifold formed by $n$ copies of $\Mcal$.
$T_{x} \Mcal$ denotes the tangent space at $x \in \Mcal$. 
For tangent vectors $\eta, \xi \in T_x \Mcal$, their inner product is denoted as $\inner{\eta}{\xi}_x$, and the corresponding norm is
$\norm{\eta}_x = \sqrt{\inner{\eta}{\eta}_x}$.
In the rest of the paper, we drop the subscript $x$ as it will be clear from context.
At $x \in \Mcal$, the injectivity radius $\inj(x)$ is a positive constant such that the exponential map $\Exp_x: T_x \Mcal \to \Mcal$ is a diffeomorphism when restricted to the domain $U = \{\eta \in T_x \Mcal: \norm{\eta} < \inj(x) \}$.
In this case, we define the logarithm map to be $\Log_x \triangleq \Exp_x\inv$.
Unless otherwise mentioned, we use $\dist(x, y)$ to denote the geodesic distance
between two points $x, y \in \Mcal$ induced by the Riemannian metric.
In addition, it  holds that $\dist(x,y) = \norm{v}$ where $v = \Log_x(y)$; see \cite[Proposition~10.22]{Boumal20Book}.

\myParagraph{The Rotation Group $\SOd(d)$}
The rotation group is denoted as $\SOd(d) = \{R \in \Real^{d\times d}: R^\top R = I, \; \det(R)=1\}$.
The tangent space at $R$ is given by $T_R \SOd(d) = \{RV: V \in \sod(d) \}$, where $\sod(d)$ is the space of $d \times d$ skew-symmetric matrices. 
In this work, we exclusively work with 2D and 3D rotations.
We define a basis for $T_R \SOd(3)$ such that 
each tangent vector $\eta \in T_R \SOd(3)$ is identified with a vector $v \in \Real^3$,
\begin{equation}
	\eta = R \hatop{v} = 
	R \bmat
	0 & -v_3 & v_2 \\
	v_3 & 0 & -v_1 \\
	-v_2 & v_1 & 0
	\emat.
	\label{eq:SO3_basis}
\end{equation}
Note that \eqref{eq:SO3_basis} defines a bijection between $\eta \in T_R \SOd(3)$ and $v \in \Real^3$.
For $d = 2$, we can define a similar basis for the 1-dimensional tangent space $T_R \SOd(2)$, where each tangent vector $\eta \in T_R \SOd(2)$ is identified by a scalar $v \in \Real$ as,
\begin{equation}
	\eta = R \hatop{v} = 
	R \bmat
	0 & -v  \\
	v & 0  \\
	\emat.
	\label{eq:SO2_basis}
\end{equation}
We have overloaded the notation $\hatop{\cdot}$ to map the input scalar or vector to the corresponding skew-symmetric matrix in $\sod(2)$ or $\sod(3)$.
Under the basis given in \eqref{eq:SO3_basis} and \eqref{eq:SO2_basis}, 
the inner product on the tangent space is defined by the corresponding vector dot product, \ie,
$\inner{\eta_1}{\eta_2} = v_1^\top v_2$ where $v_1,v_2 \in \Real^p$ are vector representations of 
$\eta_1$ and $\eta_2$, and $p = \dim \SOd(d) = d(d-1)/2$.
We define the function $\Exp: \Real^p \to \SOd(d)$ as,
\begin{equation}
	\Exp(v) = \exp(\hatop{v}),
	\label{eq:Exp_def}
\end{equation}
where $\exp(\cdot)$ denotes the conventional matrix exponential.
Note that $\Exp: \Real^p \to \SOd(d)$
should not be confused with the exponential mapping on Riemannian manifolds $\Exp_x: T_x \Mcal \to \Mcal$, 
although the two are closely related in the case of rotations.
Specifically, at a point $R \in \SOd(d)$ where $d \in \{2,3\}$,
the exponential map can be written as $\Exp_R(\eta) = R \Exp(v)$.
{Lastly, we also denote $\Log$ as the inverse of $\Exp$ in \eqref{eq:Exp_def}.}

\section{Related Works}
\label{sec:related_work}

\edit{In this section, we review related work in collaborative SLAM (\cref{sec:related_work:cslam}), 
graph structure on rotation averaging and PGO (\cref{sec:related_work:graph}), 
and the applications of spectral sparsification and Laplacian linear solvers (\cref{sec:related_work:sparsification}).}

\subsection{Collaborative SLAM}
\label{sec:related_work:cslam}
{
\myParagraph{Systems}
State-of-the-art collaborative SLAM systems rely on optimization-based back-ends to accurately estimate robots' trajectories and maps in a global reference frame.
In \emph{fully centralized} systems (\eg, \cite{Schmuck21covins,Chang2022lamp2, Cramariuc2022maplab}), robots upload their processed measurements to a central server that in practice could contain \eg, odometry factors, visual keyframes, and/or lidar keyed scans. Using this information, the server is responsible for managing the multi-robot maps and solving the entire back-end optimization problem. 
In contrast,  in systems leveraging \emph{distributed} computation (\eg \cite{Cieslewski18DataEfficient, Lajoie20DOOR, Tian21KimeraMulti, Zhu18DistMotionAverage,Tian2022Lazy}),
robots collaborate to solve back-end optimization by coordinating with a server or among themselves. 
The resulting communication usually involves exchanging intermediate iterates needed by distributed optimization to attain convergence.}

{
\myParagraph{Optimization Algorithms}
To solve factor graph optimization in a multi-robot setting,} Cunningham~\etal\ develop DDF-SAM~\cite{Cunningham10DDFSAM,Cunningham13DDFSAM2} 
{where each agent communicates a ``condensed graph'' produced by marginalizing out internal variables (those without inter-robot measurements) in its local Gaussian factor graph.}
{Researchers have also developed information-based sparsification methods to sparsify the dense information matrix after marginalization using Chow-Liu tree (\eg, \cite{kretzschmar2012compression,carlevaris2014generic}) or convex optimization (\eg, \cite{mazuran2014nonlinear,Paull15Sparsification}). }
{In these works, sparsification is guided by an information-theoretic objective such as the Kullback-Leibler divergence, and requires linearization to compute the information matrix.}
{In comparison, our approach sparsifies the graph Laplacian that does not depend on linearization, and furthermore the sparsified results are used by collaborative optimization to achieve fast convergence.}

From an optimization perspective, marginalization corresponds to a domain decomposition approach (\eg, see \cite[Chapter~14]{Saad2003Iterative}) where one eliminates a subset of variables in the Hessian using the Schur complement.
Related works use sparse approximations of the resulting matrix (\eg, with tree-based sparsity patterns) to precondition the optimization \cite{Agarwal10BAL,Dellaert10PCG,Wu11MulticoreBA,Kushal12Visibility}.
Recent work \cite{Tian2022Lazy} combines domain decomposition with event-triggered transmission to improve communication efficiency during collaborative estimation.
Zhang~\etal~\cite{Zhang21MRiSAM2} develop a centralized incremental solver for multi-robot SLAM.
Fully decentralized solvers for SLAM have also gained increasing attention; see \cite{Tron2014CameraNetwork,Choudhary17IJRR,Tian21DC2PGO,Tian20ASAPP,Fan2021MajorizationJournal,Murai2022RobotWeb,Lajoie21Survey}.
\edit{In the broader field of optimization, related works include decentralized consensus optimization methods such as \cite{Mateos10CADMM,Shi2015Extra,Nedic2017DIGing,Di2016Next}. 
Compared to these fully decentralized methods, the proposed approach assumes a central server but achieves significantly faster convergence by implementing approximate second-order optimization.}

\subsection{Graph Structure in Rotation Averaging and PGO}
\label{sec:related_work:graph}
Prior works have investigated the impact of graph structure on rotation averaging and PGO problems from different perspectives.
One line of research \cite{Boumal2014CramerRao,Khosoussi2019Reliable,Chen2021CramerRao} adopts an estimation-theoretic approach and shows that the Fisher information matrix is closely related to the underlying graph Laplacian matrix.
Eriksson~\etal~\cite{Eriksson2018Duality} establish sufficient conditions for strong duality to hold in rotation averaging, where the derived analytical error bound depends on the algebraic connectivity of the graph.
Recently, Bernreiter~\etal~\cite{bernreiter2022collaborative} use tools from graph signal processing to correct onboard estimation errors in multi-robot mapping.
Doherty~\etal~\cite{Doherty2022Spectral} propose a measurement selection approach for pose graph SLAM that seeks to maximize the algebraic connectivity of the underlying graph.
This paper differs from the aforementioned works by analyzing the impact of graph structure on the underlying optimization problems,
and exploiting the theoretical analysis to design novel optimization algorithms in the multi-robot setting.

Among related works in this area, the ones most related to this paper are \cite{Carlone2013Convergence, Tron2012Thesis, Wilson2016Convexity,Wilson2020Convexity,Nasiri2021GaussNewton}.
Carlone~\cite{Carlone2013Convergence} analyzes the influences of graph connectivity and noise level on the convergence of Gauss-Newton methods when solving PGO.
{Tron~\cite{Tron2012Thesis} derives the Riemannian Hessian of rotation averaging under the geodesic distance, and uses the results to prove convergence of Riemannian gradient descent.}
In a pair of papers~\cite{Wilson2016Convexity,Wilson2020Convexity}, Wilson~\etal\ study the local convexity of rotation averaging under the geodesic distance, by bounding the Riemannian Hessian using the Laplacian of a suitably weighted graph.
Recently, Nasiri~\etal~\cite{Nasiri2021GaussNewton} develop a Gauss-Newton method for rotation averaging under the chordal distance, and show that its convergence basin is influenced by the norm of the inverse reduced Laplacian matrix.
Our work differs from \cite{Carlone2013Convergence,Tron2012Thesis,Wilson2016Convexity,Wilson2020Convexity,Nasiri2021GaussNewton}
by focusing on the development of fast and communication-efficient solvers in multi-robot teams with provable performance guarantees.
During this process, we also prove new results on the connections between the Riemannian Hessian and graph Laplacian,
and show that they hold under both geodesic and chordal distance.

\subsection{Spectral Sparsification and Laplacian Solvers}
\label{sec:related_work:sparsification}
A remarkable property of graph Laplacians is that they admit sparse approximations; see \cite{Batson2013SpectralSurvey} for a survey.
Spielman and Srivastava~\cite{Spielman2011Sampling} show that every graph with $n$ vertices can be approximated using a sparse graph with  $O(n\log n)$ edges.
This is achieved using a random sampling procedure that selects each edge with probability proportional to its effective resistance, 
which intuitively measures the importance of each edge to the whole graph.
Batson~\etal~\cite{Batson2009twice} develop a procedure based on the so-called barrier functions for constructing \emph{linear-sized} sparsifiers.
Another line of work~\cite{kyng2016approximate,durfee2017spanning} employs sparsification during approximate Gaussian elimination.
Spectral sparsification is one of the main tools that enables recent progress in fast Laplacian solvers (\ie, for solving linear systems of the form $Lx = b$, where $L$ is a graph Laplacian); see \cite{Vishnoi2013LaplacianSolver} for a survey.
Peng and Spielman~\cite{peng2014parallel} develop a parallel solver that invokes sparsification as a subroutine, which is improved and extended in following works \cite{lee2015sparsified,kyng2016sparsified}.
\techreport{Recently, Tutunov~\cite{Tutunov19DistNewton} extends the approach in \cite{peng2014parallel} to solve decentralized consensus optimization problems.}
In this work, 
we leverage spectral sparsification to design communication-efficient collaborative optimization methods for rotation averaging with provable convergence guarantees.


{
	
\section{Problem Formulation}
\label{sec:problem_definition}

This section formally defines the rotation averaging and translation estimation problems in the multi-robot context.
For clarity, here we introduce the problems without considering outlier measurements, and present extensions to outlier-robust optimization in \cref{sec:gnc}.
We review the communication and computation architectures used by our algorithms.
Finally, we discuss relevant applications in multi-robot SLAM and SfM.

\subsection{Rotation Averaging}
\label{sec:problem_definition:rotation}
We model rotation averaging using an undirected \emph{measurement graph} $G = (\Vcal, \Ecal)$.
Each vertex $i \in \Vcal = [n]$ corresponds to a rotation variable $R_i \in \SOd(d)$ to be estimated.
Each edge $(i,j) \in \Ecal$ corresponds to a noisy relative measurement of the form,
\begin{equation}
	\Rtilde_{ij} = \Runder_i^\top \Runder_j R^{\text{err}}_{ij},
	\label{eq:rotation_noise_model}
\end{equation}
where $\Runder_i, \Runder_j \in \SOd(d)$ are the latent (ground truth) rotations
and $R^{\text{err}}_{ij} \in \SOd(d)$ is the measurement noise.
In standard rotation averaging, we aim to estimate the rotations by minimizing the sum of squared measurement residuals, which corresponds to the formulation in \cref{prob:rotation_averaging}.

\begin{problem}[Rotation Averaging]
	\normalfont
	\begin{equation}
		\label{eq:rotation_averaging}
		\underset{R =  (R_1, \hdots, R_n) \in \SOd(d)^n    }{\minimize}
		\quad \sum_{(i,j) \in \Ecal} \kappa_{ij} \varphi(R_i \Rtilde_{ij}, R_j).
	\end{equation}
	For each edge $(i,j) \in \Ecal$, $\kappa_{ij} > 0$ is the corresponding measurement weight.
	The function $\varphi$ is defined as either the squared geodesic \eqref{eq:squared_geodesic_distance} 
	or chordal distance \eqref{eq:squared_chordal_distance}, 
	\begin{subnumcases}{\varphi(R_i \Rtilde_{ij}, R_j) \triangleq }
		\frac{1}{2} \norm{\Log(\Rtilde_{ij}^\top R_i^\top R_j)}^2, 
		\label{eq:squared_geodesic_distance}
		\\
		\frac{1}{2} \norm{R_i \Rtilde_{ij} - R_j}^2_F.
		\label{eq:squared_chordal_distance}
	\end{subnumcases}
	\label{prob:rotation_averaging}
\end{problem}
In the multi-robot setting,  each robot owns a subset of all rotation variables and only knows about measurements involving its own variables; see \cref{fig:example:original_graph} for an illustration.

\subsection{Translation Estimation}
\label{sec:problem_definition:translation}
Similar to rotation averaging, we also consider the problem of estimating multiple translation vectors given noisy relative translation measurements.
\begin{problem}[Translation Estimation]
	\normalfont
	\begin{equation}
		\label{eq:translation_recovery}
		\underset{t =   (t_1, \hdots, t_n)   \in \Real^{d \times n}}{\minimize}
		\quad \sum_{(i,j) \in \Ecal} \frac{\tau_{ij}}{2} \norm{t_j - t_i - \that_{ij}}_2^2.
	\end{equation}
	\label{prob:translation_recovery}
\end{problem}
Note that \eqref{eq:translation_recovery} is a linear least squares problem.
Similar to rotation averaging,  \eqref{eq:translation_recovery} can be modeled using the undirected measurement graph $G = (\Vcal, \Ecal)$,
where vertex $i$ represents the translation variable $t_i  \in \Real^d$ to be estimated, and edge $(i,j) \in \Ecal$ represents the relative translation measurement $\that_{ij} \in \Real^d$. 
Lastly, $\tau_{ij} > 0$ is the positive weight associated with measurement $(i,j) \in \Ecal$.

\subsection{Communication and Computation Architecture}
\label{sec:problem_definition:architecture}
In this work, we consider solving \cref{prob:rotation_averaging,prob:translation_recovery}
under the \emph{server-client} architecture.
As shown in \cref{fig:example:communication_architecture}, a central server coordinates with all robots (clients) to solve the overall problem by distributing the underlying computation to the entire team.
In practice, the server could itself be a robot (\eg, in multi-robot exploration scenarios) or a remote machine (\eg, in cloud-based AR/VR applications).
Each iteration (communication round) consists of an \emph{upload} stage in which robots perform parallel local computations and transmit their intermediate information to the server,
and a \emph{download} stage in which the server aggregates information from all robots and broadcasts back the result.
When a direct communication link to the server does not exist, a robot can still participate in this framework by relaying its information through other robots.
By leveraging local computations, the server-client architecture can scale better compared to a fully centralized approach in which the server solves the entire optimization problem.
At the same time, by implementing second-order optimization algorithms, this architecture also produces significantly faster and more accurate solutions compared to fully distributed approaches that rely on first-order optimization.
In the experiments, we demonstrate the scalability and fast convergence of our approach on large SLAM and SfM problems.

\subsection{Applications}
\label{sec:problem_definition:applications}

Rotation averaging (\cref{prob:rotation_averaging}) is a fundamental problem in robotics and computer vision.
In distributed camera networks (\eg,\cite{Tron2014CameraNetwork}), rotation averaging is used to estimate the orientations of spatially distributed cameras with overlapping fields of view.
In distributed SfM (\eg, \cite{Zhu18DistMotionAverage}), rotation averaging is a key step to build large-scale 3D reconstructions from many images.
Furthermore, in the context of collaborative SLAM, 
rotation averaging and translation estimation (\cref{prob:translation_recovery}) can be combined to provide accurate \emph{initialization}
for PGO \cite{Carlone2015Initialization}.
In state-of-the-art PGO solvers, the cost function often has a separable structure between rotation and translation measurements.
For example, SE-Sync \cite{Rosen19IJRR} uses the formulation,
\begin{equation}
	\label{eq:pose_graph_optimization}
	\begin{aligned}
		\minimize_{
			\substack{
				R =  ( R_1, \hdots, R_n)   \in \SOd(d)^n, \\
				t  = (t_1, \hdots, t_n) \in \Real^{d \times n}}}
		\sum_{(i,j) \in \Ecal} {\kappa_{ij}} \norm{R_i \Rtilde_{ij} - R_j}^2_F \\
		+ \sum_{(i,j) \in \Ecal}  {\tau_{ij}} \norm{t_j - t_i - R_i \ttilde_{ij}}^2_2.
	\end{aligned}
\end{equation}
In \eqref{eq:pose_graph_optimization}, $R_i \in \SOd(d)$ and $t_i \in \Real^d$ are rotation matrices and translation vectors to be estimated,
$\Rtilde_{ij} \in \SOd(d)$ and $\ttilde_{ij} \in \Real^d$ are noisy relative rotation and translation measurements,
and $\kappa_{ij}, \tau_{ij} > 0$ are constant measurement weights.
Notice that in \eqref{eq:pose_graph_optimization}, the first sum of terms is equivalent to rotation averaging (\cref{prob:rotation_averaging}) under the chordal distance.
Furthermore, given fixed rotation estimates $\Rhat \in \SOd(d)^n$, the second sum of terms is equivalent to translation estimation (\cref{prob:translation_recovery}) 
where each $\that_{ij}$ in \eqref{eq:translation_recovery} is given by $\that_{ij} = \Rhat_i \ttilde_{ij}$.
In both cases, the equivalence is up to a multiplying factor of $1/2$, but this is inconsequential since it does not change solutions to the optimization problems.
Following Carlone~\etal~\cite{Carlone2015Initialization}, we use these observations to initialize PGO in a two-stage process.
The first stage initializes the rotation variables using the proposed rotation averaging solver (\cref{sec:collaborative_rotation_averaging}).
Given the initial rotation estimates, the second stage initializes the translations using the proposed translation estimation solver (\cref{sec:collaborative_translation_recovery}).
We note that this initialization scheme does not have theoretical guarantees with respect to the full PGO problem.
However, we still demonstrate its practical value through our experiments.


}
\section{Laplacian Systems Arising from Rotation Averaging and Translation Estimation}
\label{sec:laplacian_systems}

{
In this section, we show that for rotation averaging (\cref{prob:rotation_averaging}) and translation estimation (\cref{prob:translation_recovery}), 
their Hessian matrices are closely related to the Laplacians of suitably weighted graphs.}
The theoretical relations we establish in this section pave the way for designing fast and communication-efficient solvers in \cref{sec:algorithm}.

\subsection{Rotation Averaging}
\label{sec:laplacian_systems:rotation}

{To solve rotation averaging (\cref{prob:rotation_averaging}), we resort to an iterative Riemannian optimization framework.}
Before proceeding, however, one needs to be careful of the inherent \emph{gauge-symmetry} of rotation averaging:
in \eqref{eq:rotation_averaging}, note that left multiplying each rotation $R_i \in \SOd(d), i \in [n]$ by a common rotation $S \in \SOd(d)$
does not change the cost function.
As a result, each solution $R = (R_1, \hdots, R_n) \in \SOd(d)^n$ actually corresponds to an \emph{equivalence class} of solutions in the form of,
\begin{equation}
	[R] = \{(SR_1, \hdots, SR_n), \; S \in \SOd(d)\}.
	\label{eq:rotation_averaging_equivalent_class}
\end{equation}
The equivalence relation \eqref{eq:rotation_averaging_equivalent_class} shows that rotation averaging is actually an optimization problem defined over a \emph{quotient manifold} $\Mcal = \Mcalbar / \sim$, 
where $\Mcalbar = \SOd(d)^n$ is called the total space and $\sim$ denotes the equivalence relation underlying \eqref{eq:rotation_averaging_equivalent_class}; see \cite[Chapter~9]{Boumal20Book} for more details.
Accounting for the quotient structure is critical for establishing the relation between the Hessian and the graph Laplacian.

In this work, we are interested in applying Newton's method on the quotient manifold $\Mcal$ due to its superior convergence rate.
The Newton update can be derived by considering a local perturbation of the cost function.
Specifically, let $R = (R_1, \hdots, R_n) \in \SOd(d)^n$ be our current rotation estimates.
For each rotation matrix $R_i$, we seek a local correction to it in the form of $\Exp(v_i)R_i$, 
where $v_i \in \Real^p$ is some vector to be determined and $\Exp(\cdot)$ is defined in \eqref{eq:Exp_def}.
In \eqref{eq:rotation_averaging},
replacing each $R_i$ with its correction  
$\Exp(v_i)R_i$ leads to the following local approximation\footnote{
	\mainpaper{\edit{The approximation defined in \eqref{eq:RA_full_pullback_global_def} is closely related to the standard \emph{pullback function} in Riemannian optimization; see \cite[Appendix~II-D]{Tian2022Sparsification}.}}
	\techreport{\edit{The approximation defined in \eqref{eq:RA_full_pullback_global_def} is closely related to the standard \emph{pullback function} in Riemannian optimization; see Appendix~\ref{sec:rotation_averaging_hessian_analysis:pullback_discussion}.}}
	In this work, we use \eqref{eq:RA_full_pullback_global_def} since the resulting Hessian has a particularly interesting relationship with the graph Laplacian matrix, as shown in \cref{thm:Hessian_approximation}.} of the optimization problem,

\begin{equation}
\underset{v \in \Real^{pn}}{\min} 
\quad
h(v; R) 
\triangleq \sum_{(i,j) \in \Ecal} 
\kappa_{ij}
\varphi(\Exp(v_i) R_i \Rtilde_{ij}, \Exp(v_j) R_j).
\label{eq:RA_full_pullback_global_def}
\end{equation}
In \eqref{eq:RA_full_pullback_global_def}, the overall vector $v \in \Real^{pn}$ is formed by concatenating all $v_i$'s.
Compared to \eqref{eq:rotation_averaging}, the optimization variable in \eqref{eq:RA_full_pullback_global_def} becomes the vector $v$ and the rotations $R$ are treated as fixed.
Furthermore, 
we note that the quotient structure of \cref{prob:rotation_averaging} gives rise to the following \emph{vertical space} \cite[Chapter~9.4]{Boumal20Book}
that summarizes all directions of change along which \eqref{eq:RA_full_pullback_global_def} is invariant,
\begin{equation}
	\Ncal = \image(1_n \otimes I_p) \subset \Real^{pn}.
	\label{eq:vertical_space_RA}
\end{equation}
Intuitively, $\Ncal$ captures the action of any global left rotation.
Indeed, for any $v \in \Ncal$, we have $\Exp(v_i) = \Exp(v_j)$ for all $i,j \in [n]$,
and thus the cost function \eqref{eq:RA_full_pullback_global_def} remains constant.
Let us denote the gradient and Hessian of \eqref{eq:RA_full_pullback_global_def} as follows,
\begin{equation}
	\gbar(R) \triangleq \nabla h(v; R) \rvert_{v = 0}, \quad 
	\Hbar(R) \triangleq \nabla^2 h(v; R) \rvert_{v = 0}.
	\label{eq:gradient_and_hessian_def}
\end{equation}
Our notations $\gbar(R)$ and $\Hbar(R)$ serve to emphasize that the gradient and Hessian are defined in the total space $\Mcalbar$ and depend on the current rotation estimates $R$. 
\edit{Let $\Hcal \triangleq \Ncal^\perp$ denote the \emph{horizontal space}, which is the orthogonal complement of the vertical space $\Ncal$.}
In \cite[Chapter~9.12]{Boumal20Book}, it is shown that executing the Newton update on the quotient manifold amounts to finding \edit{the solution $v \in \Hcal$ to
the linear system,}
\begin{equation}
	(\underbrace{P_H \, \Hbar(R) \, P_H}_{H(R)}) v 
	= -\gbar(R),  
	\label{eq:Newton_step_quotient}
\end{equation}
where $P_H$ is the orthogonal projection onto $\Hcal$.
{We note that $P_H$ is symmetric, and so is $H(R)$. 
Furthermore, it holds that $\gbar(R) = P_H \gbar(R)$, which follows from known results on optimization over quotient manifolds (see \cref{rm:approximate_newton_step_feasibility} for details).}
Intuitively, including $P_H$ in \eqref{eq:Newton_step_quotient} accounts for the gauge symmetry by eliminating the effect of any vertical component from $v$.
The following theorem reveals an interesting connection between $H(R)$ defined in \eqref{eq:Newton_step_quotient} and the Laplacian of the underlying graph.

\begin{theorem}[Local Hessian Approximation for Rotation Averaging]
	\label{thm:Hessian_approximation}
	Let $\Runder \in \SOd(d)^n$ denote the set of ground truth rotations 
	from which the noisy measurements $\Rtilde_{ij}$ are generated according to \eqref{eq:rotation_noise_model}.
	For any $\delta \in (0, 1/2)$, there exist constants $\thetabar, r > 0$ such that if,
	\begin{equation}
		\dist(\Rtilde_{ij}, \Runder_i^\top \Runder_j) \leq \thetabar, \; \forall (i,j) \in \Ecal,
		\label{thm:Hessian_approximation:noise_bound}
	\end{equation}
	then for all $R \in B_r(\Rstar) = \{R \in \SOd(d)^n: \dist(R, \Rstar) < r\}$ where 
	$\Rstar \in \SOd(d)^n$ is a global minimizer of \cref{prob:rotation_averaging}, 
	it holds that,
	\begin{equation}
		H(R) \approx_\delta L(G; w) \otimes I_p.
		\label{thm:Hessian_approximation:spectral_approximation}
	\end{equation} 
	In \eqref{thm:Hessian_approximation:spectral_approximation},  
	$G = (\Vcal, \Ecal)$ is the measurement graph, \edit{and $p = \dim \SOd(d)$.}
    {For edge $(i,j) \in \Ecal$, its edge weight $w_{ij}$ 
        is given by $w_{ij} = \kappa_{ij}$ for the squared geodesic distance
        cost \eqref{eq:squared_geodesic_distance}, and $w_{ij} = 2 \kappa_{ij}$ for the squared chordal distance cost    \eqref{eq:squared_chordal_distance}.   }
\end{theorem}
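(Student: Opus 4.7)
My plan is to prove the claim via a continuity argument anchored at a \emph{nominal configuration} where the approximation becomes an exact equality. Specifically, at $R = \Runder$ with noiseless measurements $\Rtilde_{ij} = \Runder_i^\top \Runder_j$ for every $(i,j) \in \Ecal$, I will establish the exact identity $\Hbar(\Runder) = L(G;w)\otimes I_p$ (whence $H(\Runder) = L(G;w)\otimes I_p$ after the horizontal projection, since $\ker(L(G;w)\otimes I_p) = \image(1_n \otimes I_p) = \Ncal$). I will then promote this identity to the spectral approximation \eqref{thm:Hessian_approximation:spectral_approximation} on a small neighborhood by combining smoothness of $\Hbar$ jointly in $(R, \{\Rtilde_{ij}\})$, stability of the minimizer $\Rstar$ under small noise, and the positive-definiteness of $L(G;w)\otimes I_p$ on the horizontal space $\Hcal$.

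For Step 1, I compute the second derivative in $v$ at $v=0$ edge-by-edge. For the geodesic cost \eqref{eq:squared_geodesic_distance}, substituting $R_i \mapsto \Exp(v_i)R_i$ and evaluating at the nominal configuration reduces the edge term to $\tfrac{1}{2}\kappa_{ij}\|\Log(\Runder_j^\top \Exp(-v_i)\Exp(v_j)\Runder_j)\|^2 = \tfrac{1}{2}\kappa_{ij}\|\Log(\Exp(-v_i)\Exp(v_j))\|^2$, using the conjugation invariance $\|\Log(S^\top X S)\| = \|\Log(X)\|$ valid on $\SOd(d)$ for $d \in \{2,3\}$ and $S \in \SOd(d)$. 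The Baker-Campbell-Hausdorff expansion gives $\Log(\Exp(-v_i)\Exp(v_j)) = (v_j - v_i) + O(\|v\|^2)$, so the edge contribution to the Hessian is $\kappa_{ij}(e_i - e_j)(e_i - e_j)^\top \otimes I_p$. For the chordal cost \eqref{eq:squared_chordal_distance}, the same substitution yields $\tfrac{1}{2}\kappa_{ij}\|\Exp(v_i) - \Exp(v_j)\|_F^2$ after using orthogonal invariance of the Frobenius norm; together with $\|\hatop{u}\|_F^2 = 2\|u\|^2$ and $\Exp(u) = I + \hatop{u} + O(\|u\|^2)$, the edge contribution becomes $2\kappa_{ij}(e_i - e_j)(e_i - e_j)^\top \otimes I_p$. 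Summation over $\Ecal$ yields the Laplacian identity with $w = \kappa$ (geodesic) or $w = 2\kappa$ (chordal).

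Step 2 is a quantitative perturbation argument. Since $\Hbar$ is smooth, it is locally Lipschitz on the compact product $\SOd(d)^n \times \SOd(d)^{|\Ecal|}$. I pick $\thetabar$ small enough that the implicit function theorem, applied to the critical-point equation of \eqref{eq:rotation_averaging} and using positive-definiteness of $L(G;w)\otimes I_p$ on $\Hcal$ to secure invertibility of the restricted Hessian at the nominal point, supplies a global minimizer $\Rstar$ close to $\Runder$ modulo gauge. I then choose $r$ and, if necessary, shrink $\thetabar$ further so that for every $R \in B_r(\Rstar)$ and every measurement set satisfying \eqref{thm:Hessian_approximation:noise_bound} the Lipschitz bound gives $\|H(R) - L(G;w)\otimes I_p\|_{\mathrm{op}} \le (1 - e^{-\delta})\lambda_2(L(G;w))$. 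Because both matrices share the kernel $\Ncal$ and $L(G;w)\otimes I_p$ has smallest nonzero eigenvalue $\lambda_2(L(G;w))$ on $\Hcal$, this operator-norm bound translates directly into $e^{-\delta} L(G;w)\otimes I_p \preceq H(R) \preceq e^\delta L(G;w)\otimes I_p$, which is \eqref{thm:Hessian_approximation:spectral_approximation}.

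The principal obstacle I anticipate is the rigorous bookkeeping in Step 1 when $R$ and $\{\Rtilde_{ij}\}$ are perturbed away from the nominal point. Generically one must differentiate $\|\Log(\Rtilde_{ij}^\top R_i^\top \Exp(-v_i)\Exp(v_j) R_j)\|^2$ twice in $v$ at $v = 0$, controlling the differential of $\Log$ away from the identity; the conditions \eqref{thm:Hessian_approximation:noise_bound} and $R \in B_r(\Rstar)$ are precisely what keep the argument of $\Log$ inside the injectivity radius of $\SOd(d)$ uniformly over edges. A secondary, lighter concern is handling the chordal case without expanding $\Exp$ to too high order; the orthogonal-invariance trick above isolates the dependence on $v_i - v_j$ and keeps the computation clean.
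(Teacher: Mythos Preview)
Your proposal is essentially correct and follows the same high-level continuity strategy as the paper, but the execution differs in two notable ways.

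First, the paper does not compute $\Hbar$ only at the nominal point. It derives an \emph{explicit} expression for the per-edge Hessian $\Hbar_{ij}$ at arbitrary $R$ (using results on the differential of $\Log$ on $\SOd(3)$ from Tron's thesis), writing it in terms of the edge residual $\theta_{ij}(R) = \distang(R_i\Rtilde_{ij},R_j)$ and three scalar functions $\alpha(\theta),\beta(\theta),\gamma(\theta)$. It then takes the limit $\theta_{ij}\to 0$ edge by edge to obtain the Laplacian block. The payoff is that the approximation is parametrized by the gauge-invariant residuals $\theta_{ij}(R)$ rather than by the location of $(R,\{\Rtilde_{ij}\})$ in the product of rotation groups; this sidesteps the gauge bookkeeping you flag as your principal obstacle. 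Your BCH/Taylor computation at the nominal point is more elementary and perfectly adequate to establish the limiting identity, but it forces you into a joint-Lipschitz argument in $(R,\{\Rtilde_{ij}\})$ that the paper avoids.

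Second, and more substantively, the paper does \emph{not} use the implicit function theorem to locate $\Rstar$. Instead it uses the trivial cost inequality
\[
\max_{(i,j)\in\Ecal}\tfrac{\kappa_{ij}}{2}\theta_{ij}(\Rstar)^2 \;\le\; f(\Rstar)\;\le\;f(\Runder)\;\le\;\tfrac{1}{2}\Bigl(\textstyle\sum_{(i,j)}\kappa_{ij}\Bigr)\thetabar^2,
\]
which bounds every edge residual at $\Rstar$ directly, without saying anything about where $\Rstar$ sits in $\SOd(d)^n$. A triangle-inequality step then propagates the bound to all $R\in B_r(\Rstar)$. This is strictly simpler than your IFT route and also closes a small gap in your argument: IFT produces a \emph{critical point} near $\Runder$, not a priori a global minimizer, so you would still owe an argument that any global minimizer coincides (modulo gauge) with the IFT branch. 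The cost-bound trick handles every global minimizer at once and is worth adopting.
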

Before proceeding, we note that \cref{thm:Hessian_approximation} directly implies the following bound on the Hessian $H(R)$. 
\begin{corollary}[Local Hessian Bound and Condition Number for Rotation Averaging]
	\label{cor:Hessian_bound}
	Under the assumptions of \cref{thm:Hessian_approximation}, define constants
	$\mu_H = e^{-\delta} \lambda_2(L(G;w))$ and $L_H = e^{\delta} \lambda_n(L(G;w))$.
	Then for all $R \in B_r(\Rstar)$,
	\begin{equation}
		\mu_H P_H \preceq H(R) \preceq L_H P_H.
		\label{eq:Hessian_bound}
	\end{equation}
	In the following, $\kappa_H = L_H / \mu_H$ is referred to as the \emph{condition number}.
\end{corollary}

\techreport{{We prove \cref{thm:Hessian_approximation} and \cref{cor:Hessian_bound} in Appendix~\ref{sec:rotation_averaging_hessian_analysis}.}}
\mainpaper{{We prove \cref{thm:Hessian_approximation} and \cref{cor:Hessian_bound} in \cite[Appendix~II]{Tian2022Sparsification}.}}
\cref{thm:Hessian_approximation} shows that under small measurement noise, the Hessian near a global minimizer is well approximated by the Laplacian of an appropriately weighted graph.\footnote{{Currently, \cref{thm:Hessian_approximation} only shows the existence of constants 
	$\thetabar, r > 0$ such that the approximation relation \eqref{thm:Hessian_approximation:spectral_approximation} holds.
	In a nutshell, this is because our proof is based on the following key relation that holds in the limit: 
	if we define $\theta_{ij}(R) = \dist(\Rtilde_{ij}, R_i^\top R_j)$ as the measurement residual of edge $(i,j) \in \Ecal$ at a solution $R \in \SOd(d)^n$,
	then we can show that $H(R) \to L(G; w) \otimes I_p$ as $\theta_{ij}(R) \to 0$ for all $(i,j) \in \Ecal$; 
	\techreport{see discussions around \eqref{eq:H_limit_angular} in the appendix.}
        \mainpaper{see discussions around (98) in \cite[Appendix~II]{Tian2022Sparsification}.}
	While it would be interesting to derive explicit and accurate bounds for $\thetabar$ and $r$ (as a function of $\delta$),
	this would require a substantial improvement to our current proof technique, which we leave for future work.}
}
{In \cref{fig:laplacian_apx_eval}, we perform numerical validation of this result 
using synthetic chordal rotation averaging problems defined over a 3D grid with 125 rotation variables (\cref{fig:laplacian_apx_eval:dataset}).
With a probability of 0.3, we generate noisy relative measurements between pairs of nearby rotations, corrupted by increasing levels of Langevin noise  \cite[Appendix~A]{Rosen19IJRR}.
At each noise level, 
we obtain the global minimizer $\Rstar$ (global optimality is certified using the approach in \cite{Eriksson2018Duality})
and numerically compute the smallest constant $\delta$ such that $H(\Rstar) \approx_\delta L \otimes I_p$.
\cref{fig:laplacian_apx_eval:delta} shows the evolution of the empirical approximation constant $\delta$ as a function of noise level.
In the special case when there is no noise, it can be shown that $H(\Rstar) = L \otimes I_p$, and thus the empirical $\delta$ is zero.
In general, the empirical value of $\delta$ increases smoothly as the noise level increases.
Since the Hessian $H(R)$ varies smoothly with $R$, 
our results confirm that the Laplacian is a good approximation of the Hessian locally around $\Rstar$, as predicted by \cref{thm:Hessian_approximation}.}

\begin{figure}[t]
	\centering
	\begin{subfigure}[t]{0.45\linewidth}
		\includegraphics[width=\textwidth, trim=100 50 100 30, clip]{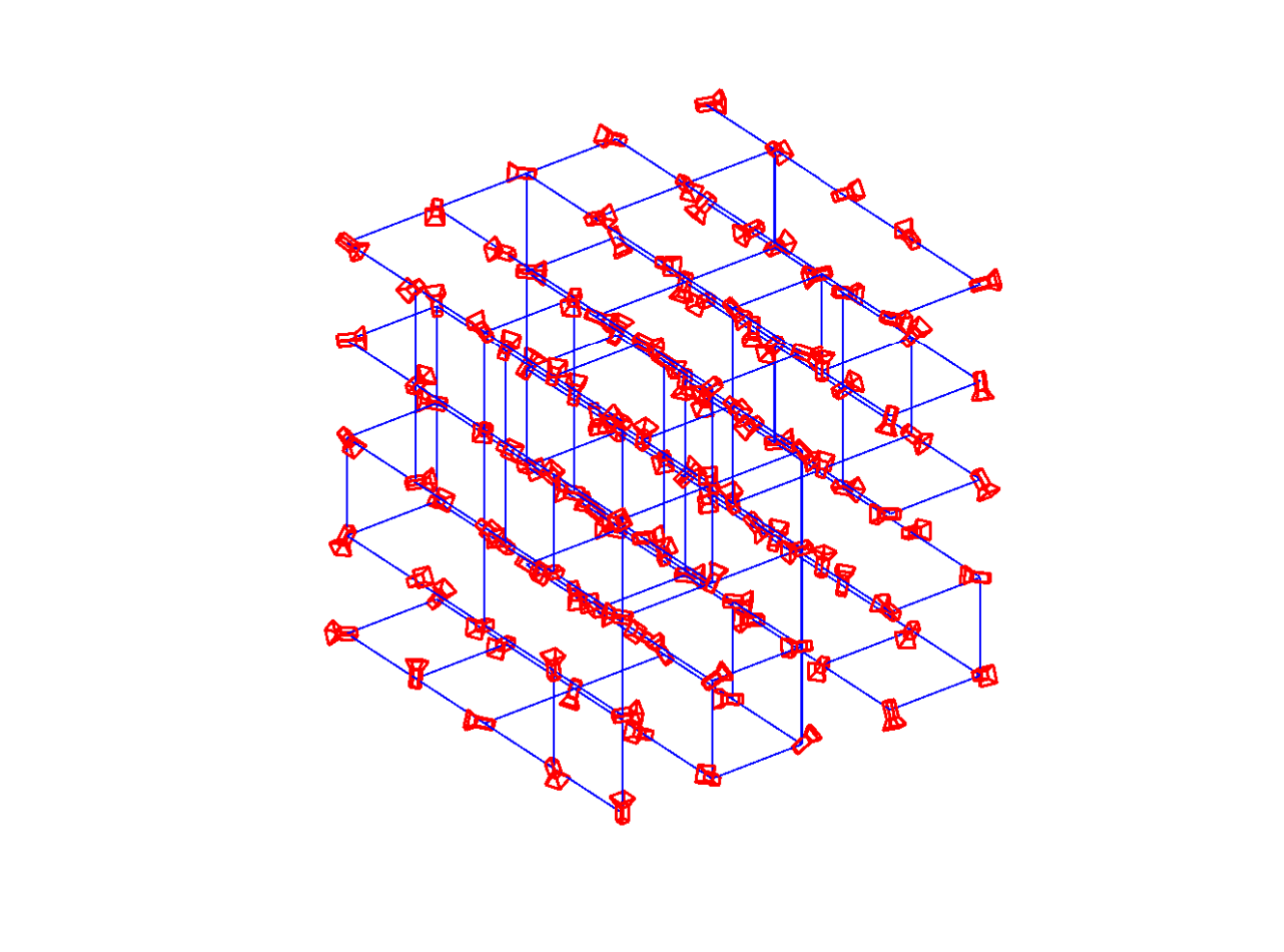}
		\caption{Grid simulation}
		\label{fig:laplacian_apx_eval:dataset}
	\end{subfigure}
	\hfill
	\begin{subfigure}[t]{0.50\linewidth}
		\includegraphics[width=\textwidth]{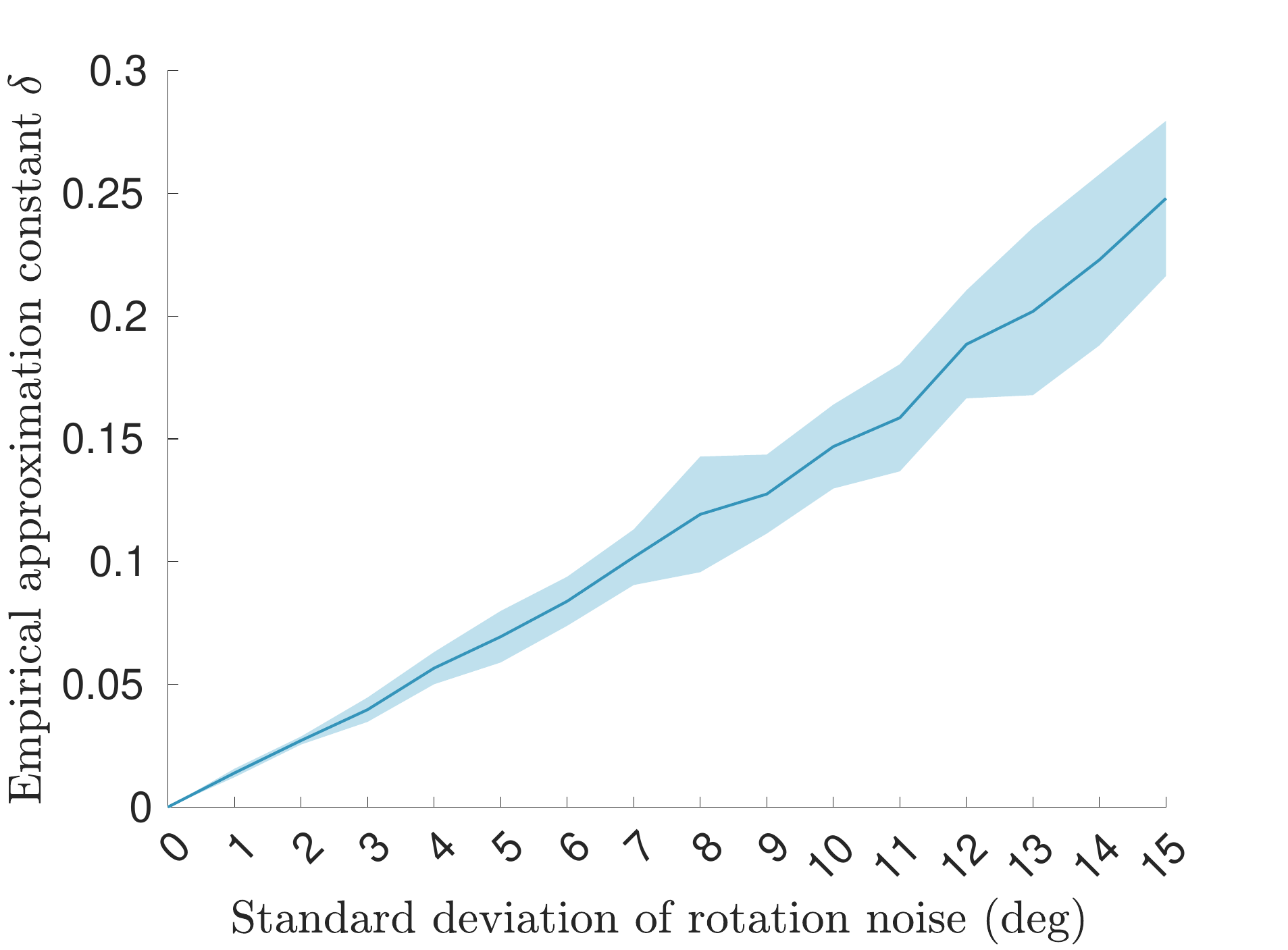}
		\caption{Empirical values of $\delta$}
		\label{fig:laplacian_apx_eval:delta}
	\end{subfigure}	
	\caption{
		Empirical validation of the Hessian approximation relation in \cref{thm:Hessian_approximation}.
		(a) Example synthetic chordal rotation averaging problem with 125 rotations.
		Each rotation is visualized as an oriented camera.
		Each blue edge shows a relative rotation measurement corrupted by Langevin noise.
		(b) Evolution of the empirical approximation constant $\delta$ such that $H(\Rstar) \approx_\delta L \otimes I_p$.
		We perform 20 random runs for each noise level.
		Solid line denotes the average value for $\delta$ and the surrounding shaded area shows one standard deviation.  
	} 
	\label{fig:laplacian_apx_eval}
        \vspace{-.3cm}
\end{figure}

The result in \cref{thm:Hessian_approximation} directly motivates an \emph{approximate Newton method} that replaces the Hessian with its Laplacian approximation.
Specifically, instead of solving \eqref{eq:Newton_step_quotient}, one solves the following \emph{approximate Newton} system,
\begin{equation}
	\left( L(G;w) \otimes I_p \right) v = -\gbar(R).
	\label{eq:rotation_laplacian_system}
\end{equation}
In the following, it would be more convenient to consider the matrix form of the above linear system.
For this purpose, let us define matrices $V, B(R) \in \Real^{n \times p}$, 
\begin{equation}
	V \triangleq \bmat
	v_1^\top \\
	\vdots \\
	v_n^\top
	\emat, \;\;
	B(R) \triangleq \bmat
	-\gbar_1(R)^\top \\
	\vdots \\
	-\gbar_n(R)^\top
	\emat.
	\label{eq:rotation_averaging_matrix_variables}
\end{equation}
Using properties of the Kronecker product, we can show that \eqref{eq:rotation_laplacian_system} is equivalent to, 
\begin{equation}
	L(G;w) V = B(R).
	\label{eq:rotation_laplacian_system_matrix_form}
\end{equation}
\cref{alg:laplacian_newton_method} shows the pseudocode of the approximate Newton algorithm.
Compared to the original Newton's method, \cref{alg:laplacian_newton_method} uses a \emph{constant matrix} 
across all iterations, and hence could be significantly more computationally efficient {by avoiding to re-compute and re-factorize the Hessian matrix at every iteration.}
For this reason, we believe that \cref{alg:laplacian_newton_method} could be of independent interest for standard (centralized) rotation averaging.
Furthermore, in \cref{sec:algorithm}, we show that \cref{alg:laplacian_newton_method} admits communication-efficient extensions in multi-robot settings.

\begin{remark}[Connections with prior work]
	{\cref{thm:Hessian_approximation} leverages prior theories developed by Tron~\cite{Tron2012Thesis} and Wilson~\etal~\cite{Wilson2016Convexity,Wilson2020Convexity} and 
	extend them to cover rotation averaging under both geodesic and chordal distance metrics.}
	Nasiri~\etal~\cite{Nasiri2021GaussNewton} first developed
	\cref{alg:laplacian_newton_method} for chordal rotation averaging using a Gauss-Newton formulation.
	In contrast,
	we motivate \cref{alg:laplacian_newton_method} by proving the theoretical approximation relation between the Hessian and the graph Laplacian (\cref{thm:Hessian_approximation}).
	Lastly, the theoretical approximation relation we establish also allows us to prove local linear convergence for our methods.
	\label{rm:connections_with_Nasiri} 
\end{remark}

\begin{remark}[Feasibility of the approximate Newton system]
	Using the properties of the graph Laplacian and the Kronecker product, 
	we see that $\ker(L(G;w) \otimes I_p) = \Ncal$ where $\Ncal$ is the vertical space
	defined in \eqref{eq:vertical_space_RA}. 
	Furthermore, in \cite[Chapter~9.8]{Boumal20Book}, it is shown that $\gbar(R) \perp \Ncal$. 
	Thus, we conclude that $\gbar(R) \in \image(L(G;w) \otimes I_p)$, 
	\ie, the linear system \eqref{eq:rotation_laplacian_system} and its equivalent matrix form \eqref{eq:rotation_laplacian_system_matrix_form} are always feasible.
	In fact, the system is singular and hence admits infinitely many solutions. 
	Similar to the original Newton's method on quotient manifold, 
	we will select the minimum norm solution $v$
	\edit{which guarantees that $v \in \Hcal$ \cite[Chapter~9.12]{Boumal20Book}.}
	\label{rm:approximate_newton_step_feasibility}
\end{remark}

\begin{algorithm}[!t]
	\caption{\textsc{Approximate Newton's Method for Rotation Averaging}}
	\label{alg:laplacian_newton_method}
	\begin{algorithmic}[1]
		\small
		\For{iteration $k = 0, 1, \hdots$}
		\State Compute approximate Newton update by solving
		$ L(G;w) V^k = B(R^k). $
		\label{alg:laplacian_newton_method:newton_step}
		\State Update iterate by $R^{k+1}_i = \Exp(v^k_i) R_i^k$, for all $i \in [n]$.
		\EndFor
	\end{algorithmic}
\end{algorithm}

\subsection{Translation Estimation}
\label{sec:laplacian_systems:translation}

{Unlike rotation averaging, translation estimation (\cref{prob:translation_recovery}) is a convex linear least squares problem.}
In particular, it can be shown that \cref{prob:translation_recovery} is equivalent to a linear system involving the graph Laplacian $L(G; \tau)$, {where $\tau: \Ecal \to \Real_{>0}$ is the edge weight function that assigns each edge $(i,j) \in \Ecal$ a weight given by the corresponding translation measurement weight $\tau_{ij}$ in \cref{prob:translation_recovery}.}
Denote 
$M_t =
\bmat
t_1 & \hdots & t_n
\emat^\top
\in \Real^{n \times d}$ 
as the matrix where each row corresponds to a translation vector to be estimated.
One can show that the optimal translations are solutions of,
\begin{equation}
	L(G; \tau) M_t = B_t,
	\label{eq:translation_laplacian_system_matrix_form}
\end{equation}
where $B_t \in \Real^{n \times d}$ is a constant matrix that only depends on the measurements.
Furthermore, each column of $B_t$ belongs to the image of the Laplacian $L(G; \tau)$, so \eqref{eq:translation_laplacian_system_matrix_form} is always feasible;
see \cite[Appendix~B.2]{Rosen19IJRR} for details.
To conclude this section,  
we note that similar to rotation averaging, translation estimation (\cref{prob:translation_recovery}) is subject to a gauge symmetry.
Specifically, two translation solutions $M_t$ and $M_t'$ are equivalent if they only differ by a global translation. 
\edit{Mathematically, this means that $M_t = M_t' + 1_n t_0^\top$ where $1_n \in \Real^n$ is the vector of all ones and $t_0 \in \Real^d$ is the constant global translation vector.}

\section{Algorithms and Performance Guarantees}
\label{sec:algorithm}

In \cref{sec:laplacian_systems}, we have shown that \emph{Laplacian systems} naturally arise when solving the rotation averaging and translation estimation problems; 
see \eqref{eq:rotation_laplacian_system_matrix_form} and \eqref{eq:translation_laplacian_system_matrix_form}, respectively.
Recall that we seek to find the solution $X \in \Real^{n \times p}$ to a linear system of the form,
\begin{equation}
	LX = B,
	\label{eq:laplacian_system_matrix_form}
 \vspace{-0.15cm}
\end{equation}
where $L \in \PSD^n$ is the Laplacian {of the multi-robot measurement graph (see \cref{fig:example:original_graph})},
and each column of $B \in \Real^{n \times p }$ is in the image of $L$ so that \eqref{eq:laplacian_system_matrix_form} is always feasible.
\edit{For rotation averaging, we have $p = \dim \SOd(d)$, and for translation estimation, we have $p = \dim \Real^d = d$.}
In \cref{sec:collaborative_laplacian_solver}, we develop a communication-efficient {solver for \eqref{eq:laplacian_system_matrix_form} under the server-client architecture described in \cref{sec:problem_definition:architecture}.} 
Then, in \cref{sec:collaborative_rotation_averaging} and \cref{sec:collaborative_translation_recovery}, we use the developed solver to design communication-efficient algorithms for collaborative rotation averaging and translation estimation, and establish convergence guarantees for both cases.
{Lastly, in \cref{sec:gnc}, we present extension to outlier-robust estimation based on GNC.}

\subsection{A Collaborative Laplacian Solver with Spectral Sparsification}
\label{sec:collaborative_laplacian_solver}

We propose to solve \eqref{eq:laplacian_system_matrix_form} using the \emph{domain decomposition} framework \cite[Chapter~14]{Saad2003Iterative},
which has been utilized in earlier works such as DDF-SAM \cite{Cunningham10DDFSAM,Cunningham12DataAssociation,Cunningham13DDFSAM2} to solve collaborative SLAM problems. 
This is motivated by the fact that in the multi-robot measurement graph with $m$ robots, 
there is a natural disjoint partitioning of the vertex set $\Vcal$:
\begin{equation}
	\Vcal = \Vcal_1 \uplus \hdots \uplus \Vcal_m,
	\label{eq:vertex_partition}
\end{equation}
where $\Vcal_\alpha$ contains all vertices (variables) of robot $\alpha \in [m]$ \edit{and $\uplus$ denotes the disjoint union}.
Furthermore, $\Vcal_\alpha$ can be partitioned as
$\Vcal_\alpha = \Fcal_\alpha \, \uplus \, \Ccal_\alpha$ 
where $\Ccal_\alpha$ denotes all separator (interface) vertices
and $\Fcal_\alpha$ denotes all interior vertices of robot $\alpha$.
In multi-robot SLAM, the separators are given by the set of variables that have inter-robot measurements; see \cref{fig:example:original_graph}.
Note that given the set of all separators 
$\Ccal = \Ccal_1 \, \uplus \, \hdots \, \uplus \, \Ccal_m$,
robots' interior vertices $\Fcal_\alpha$ become disconnected from each other.
The natural vertex partitioning in \eqref{eq:vertex_partition} further gives rise to a disjoint partitioning of the 
edge set,
\begin{equation}
	\Ecal = \Ecal_1 \uplus \hdots \uplus \Ecal_m \uplus \Ecal_c.
	\label{eq:edge_partition}
\end{equation}
For each robot $\alpha \in [m]$, its local edge set $\Ecal_\alpha$ consists of all edges that connect two vertices from $\Vcal_\alpha$.
In \cref{fig:example:original_graph}, the local edges are shown using colors corresponding to the robots.
The remaining \emph{inter-robot} edges form $\Ecal_c$,
which are highlighted as bold black edges in \cref{fig:example:original_graph}.

In domain decomposition, we adopt a variable ordering in which the interior nodes 
$\Fcal = \Fcal_1 \uplus \hdots \uplus \Fcal_m$ appear before the separators 
$\Ccal = \Ccal_1 \uplus \hdots \uplus \Ccal_m$.
With this variable ordering, the Laplacian system \eqref{eq:laplacian_system_matrix_form} can be rewritten as,
\begin{equation}
	\begin{bmatrix}
		L_{11}   &         &         & L_{1c}         \\
		         & \ddots  &         & \vdots         \\
		         &         & L_{mm}  & L_{mc}         \\
		L_{c1}   & \hdots  & L_{cm}  & L_{cc}
	\end{bmatrix}
	\begin{bmatrix}
		X_1 \\ 
		\vdots \\
		X_m \\
		X_c
	\end{bmatrix}
	= 
	\begin{bmatrix}
		B_1 \\ 
		\vdots \\
		B_m \\
		B_c
	\end{bmatrix}.
	\label{eq:laplacian_arrowhead_pattern}
\end{equation}
For $\alpha \in [m]$, $X_\alpha$ and $B_\alpha$ denote the rows of $X$ and $B$ in \eqref{eq:laplacian_system_matrix_form} that correspond to robot $\alpha$'s interior variables $\Fcal_\alpha$.
On the other hand, we treat separators \emph{from all robots} as a single block $\Ccal = \Ccal_1 \uplus \hdots \uplus \Ccal_m$.
\edit{In \eqref{eq:laplacian_arrowhead_pattern}, we use the subscript $c$ to index rows and columns of matrices that correspond to $\Ccal$.}

\begin{remark}[{Computation of \eqref{eq:laplacian_arrowhead_pattern} under the server-client architecture}]
	{Under the server-client architecture we consider, the overall Laplacian system  \eqref{eq:laplacian_arrowhead_pattern} is stored {distributedly} across the robots (clients) and the server. }
	Specifically, since each robot $\alpha$ knows the subgraph induced by its own vertices $\Vcal_\alpha$ (\eg, in \cref{fig:example:original_graph}, robot 2 knows all edges incident to the blue vertices), it independently computes and stores its Laplacian blocks $L_{\alpha\alpha}$ and $L_{\alpha c}$.
	Similarly, each robot $\alpha$ also independently computes and stores the block $B_\alpha$.
	Meanwhile, we assume that the blocks defined over separators $L_{cc}$ and $B_c$ are handled by the central server that performs additional computations.
	\label{rm:distributed_computation_of_laplacian}
\end{remark}

In \eqref{eq:laplacian_arrowhead_pattern}, the special ``arrowhead'' sparsity pattern motivates us to first solve the \emph{reduced system} defined over the separators, \edit{obtained by eliminating all interior nodes using the Schur complement \cite[Chapter~14.2]{Saad2003Iterative}:}
\begin{equation}
	\underbrace{
	\left(
	L_{cc} - \sum_{\alpha \in [m]} L_{c\alpha} L_{\alpha \alpha}^{-1} L_{\alpha c}	
	\right)
	}_{S = \Sc(L, \Fcal)}
	X_c
	= 
	\underbrace{
	B_c - \sum_{\alpha \in [m]} L_{c\alpha} L_{\alpha \alpha}^{-1} B_\alpha
	}_{U}.
	\label{eq:laplacian_reduced_system}
\end{equation}
In the following, let us define $U_\alpha \triangleq L_{c\alpha} L_{\alpha \alpha}^{-1} B_\alpha$ for each robot $\alpha \in [m]$.
Then, the matrix on the right-hand side of \eqref{eq:laplacian_reduced_system} can be written as,
\begin{equation}
	U \triangleq B_c - \sum_{\alpha\in [m]} U_\alpha.
	\label{eq:U_sum}
\end{equation}
Meanwhile, the matrix $S$ defined on the left-hand side of \eqref{eq:laplacian_reduced_system} is the Schur complement resulting from eliminating all interior nodes $\Fcal$ from the full Laplacian matrix $L$, denoted as $S = \Sc(L, \Fcal)$.
The next lemma shows $S$ is the sum of multiple smaller Laplacian matrices.
\begin{lemma}
	For each robot $\alpha \in [m]$, define $G_\alpha = (\Fcal_\alpha \uplus \Ccal, \Ecal_\alpha)$ as its local graph induced by its interior edges $\Ecal_\alpha$.
	Let $S_\alpha$ be the matrix resulting from eliminating robot $\alpha$'s interior vertices 
	$\Fcal_\alpha$ from the Laplacian of $G_\alpha$, \ie, 
	$S_\alpha = \Sc(L(G_\alpha), \Fcal_\alpha)$.
	Furthermore, define $G_c = (\Ccal, \Ecal_c)$ as the graph induced by inter-robot loop closures $\Ecal_c$.
	Then, the matrix $S$ that appears in \eqref{eq:laplacian_reduced_system} can be written as, 
	\begin{equation}
		S = L(G_c) + \sum_{\alpha \in [m]} S_\alpha.
		\label{eq:schur_complement_sum}
	\end{equation}
	\label{lem:schur_complement_sum}
        \vspace{-0.3cm}
\end{lemma}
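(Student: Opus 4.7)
The plan is to derive \eqref{eq:schur_complement_sum} by direct block computation on the arrowhead form \eqref{eq:laplacian_arrowhead_pattern}. First, I split the edges into inter- and intra-robot contributions using \eqref{eq:edge_partition}, yielding an additive decomposition $L = L^{(c)} + \sum_{\alpha} L^{(\alpha)}$, where $L^{(c)}$ is the Laplacian of $\Ecal_c$ (padded by zeros on $\Fcal$) and each $L^{(\alpha)}$ is the Laplacian of $\Ecal_\alpha$ (padded by zeros off $\Vcal_\alpha$). Two structural facts drive the proof: (i) $L^{(c)}$ has zeros in every row and column indexed by $\Fcal$, since separators are by definition the only endpoints of edges in $\Ecal_c$; and (ii) the supports of $L^{(\alpha)}$ and $L^{(\beta)}$ for $\alpha \ne \beta$ overlap only inside $\Ccal$, so the $\Fcal \times \Fcal$ block of $\sum_\alpha L^{(\alpha)}$ is block-diagonal with the $\alpha$-th block equal to $[L(G_\alpha)]_{\Fcal_\alpha, \Fcal_\alpha}$.

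Next, I pass to the Schur complement. From fact (i), the rows and columns of $L^{(c)}$ indexed by $\Fcal$ are zero, so $(M + L^{(c)})_{\Fcal, \Fcal} = M_{\Fcal, \Fcal}$ and $(M + L^{(c)})_{\Ccal, \Fcal} = M_{\Ccal, \Fcal}$ for any compatible $M$; expanding the Schur complement formula gives the elementary identity $\Sc(M + L^{(c)}, \Fcal) = \Sc(M, \Fcal) + [L^{(c)}]_{\Ccal, \Ccal}$. Applying this with $M = \sum_\alpha L^{(\alpha)}$ and noting $[L^{(c)}]_{\Ccal, \Ccal} = L(G_c)$ yields
\[
S = \Sc(L, \Fcal) = L(G_c) + \Sc\Bigl( \sum_{\alpha} L^{(\alpha)}, \, \Fcal \Bigr).
\]
Then, by the block-diagonal interior structure from fact (ii), the remaining Schur complement decouples across robots:
\[
\Sc\Bigl( \sum_{\alpha} L^{(\alpha)}, \, \Fcal \Bigr) = \sum_{\alpha \in [m]} \Bigl( [L(G_\alpha)]_{\Ccal, \Ccal} - [L(G_\alpha)]_{\Ccal, \Fcal_\alpha} [L(G_\alpha)]_{\Fcal_\alpha, \Fcal_\alpha}^{-1} [L(G_\alpha)]_{\Fcal_\alpha, \Ccal} \Bigr) = \sum_{\alpha} S_\alpha,
\]
since any sub-block of $L^{(\alpha)}$ touching an $\Fcal_\alpha$-index agrees with the corresponding sub-block of $L(G_\alpha)$ (interior vertices of robot $\alpha$ see only edges in $\Ecal_\alpha$). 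Chaining these two displays gives \eqref{eq:schur_complement_sum}.

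The main obstacle is bookkeeping rather than mathematical. One must be careful that $G_\alpha$ as defined in the lemma is indexed by $\Fcal_\alpha \uplus \Ccal$ (not $\Fcal_\alpha \uplus \Ccal_\alpha$), so that $[L(G_\alpha)]_{\Ccal, \Ccal}$ and $S_\alpha$ are naturally defined on the full separator set and the sum $\sum_\alpha S_\alpha$ in \eqref{eq:schur_complement_sum} is well-posed even though $S_\alpha$ is supported only on the $\Ccal_\alpha$ block. The invertibility of $[L(G_\alpha)]_{\Fcal_\alpha, \Fcal_\alpha}$ needed to define $S_\alpha$ is inherited from that of $L_{\alpha\alpha}$, which is already implicit in \eqref{eq:laplacian_reduced_system}, since these two matrices coincide by fact (ii).
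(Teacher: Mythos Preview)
Your proof is correct and takes essentially the same approach as the paper: both rely on the additive decomposition of the Laplacian over the edge partition \eqref{eq:edge_partition} and the block-diagonal structure of the interior block to split the Schur complement into the $L(G_c)$ term plus the per-robot Schur complements. The paper's proof is slightly more direct in that it only decomposes the $L_{cc}$ block (rather than all of $L$) and then substitutes into the definition of $S$ in \eqref{eq:laplacian_reduced_system}, but the underlying manipulation is the same as yours.
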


\techreport{{\cref{lem:schur_complement_sum} is proved in Appendix~\ref{sec:laplacian_appendix:schur_complement_sum}.}}
\mainpaper{{\cref{lem:schur_complement_sum} is proved in \cite[Appendix~III-A]{Tian2022Sparsification}.}}
Since Laplacian matrices are closed under Schur complements~\cite[Fact~4.2]{lee2015sparsified},
each $S_\alpha$ defined in \cref{lem:schur_complement_sum} is also a Laplacian matrix.\techreport{\footnote{
	In \cref{lem:schur_complement_sum}, 
	we can technically define $G_\alpha = (\Fcal_\alpha \uplus \Ccal_\alpha, \Ecal_\alpha)$ 
	since $\Ecal_\alpha$ only involves robot $\alpha$'s vertices.
	However, we choose to involve all separators and define $G_\alpha = (\Fcal_\alpha \uplus \Ccal, \Ecal_\alpha)$, where any separator from $\Ccal \setminus \Ccal_\alpha $ simply does not have any edges.
	This is done for notation simplicity, so that after eliminating $\Fcal_\alpha$ from $G_\alpha$, the resulting $S_\alpha$ matrix is defined over all separators and thus can be added together as in 
	\eqref{eq:schur_complement_sum}.
}}
Furthermore, as a result of \cref{rm:distributed_computation_of_laplacian}, each robot $\alpha$ can independently compute $S_\alpha = \Sc(L(G_\alpha), \Fcal_\alpha)$ and $U_\alpha = L_{c\alpha} L_{\alpha \alpha}^{-1} B_\alpha$.
This observation motivates a method in which robots first transmit their $S_\alpha$ and $U_\alpha$ to the server in parallel.
Upon collecting $S_\alpha$ and $U_\alpha$ from all robots,
the server can then form $S$ using \eqref{eq:schur_complement_sum} and $U$ using \eqref{eq:U_sum}.
It then solves the linear system $S X_c = U$ \eqref{eq:laplacian_reduced_system} and broadcasts the solution $X_c$ back to all robots.
Finally, once robots receive the separator solutions $X_c$, 
they can in parallel recover their interior solutions via back-substitution,
\begin{equation}
	X_\alpha = L_{\alpha \alpha}^{-1} \left( 
	B_\alpha - L_{\alpha c}X_c
	\right).
	\label{eq:laplacian_back_substitution}
\end{equation}
The aforementioned method is a multi-robot implementation of domain decomposition.
While it effectively exploits the separable structure in the problem,
this method can incur significant communication cost as it requires each robot $\alpha$ to transmit its Schur complement matrix $S_\alpha$ that is potentially dense. 
This issue is illustrated in \cref{fig:example:reduced_graph_dense}, 
where for robot 2 (blue) its $S_\alpha$ corresponds to a dense graph over its separators.

\begin{algorithm}[t]
	\caption{\footnotesize \textsc{Sparsified Schur Complement}}
	\label{alg:sparsified_schur_complement}
	\begin{algorithmic}[1]
		\small 
		\Function{$\Stilde$ = SparsifiedSchurComplement}{$L$, $\epsilon$}
		\For{each robot $\alpha$ \textbf{in parallel}}
		\State Compute a sparse approximation $\Stilde_\alpha$ such that $\Stilde_\alpha \approx_\epsilon S_\alpha$.
		\State Upload $\Stilde_\alpha$ to the server. \label{alg:sparsified_schur_complement:upload}
		\EndFor
		\State Server computes and stores $\Stilde = L(G_c) + \sum_{\alpha \in [m]} \Stilde_\alpha$.\label{alg:sparsified_schur_complement:server_sum}
		\EndFunction
	\end{algorithmic}
\end{algorithm}

\begin{algorithm}[t]
	\caption{\footnotesize \textsc{Sparsified Laplacian Solver}}
	\label{alg:sparsified_laplacian_solver}
	\begin{algorithmic}[1]
		\small 
		\Function{$X$ = SparsifiedLaplacianSolver}{$L$, $B$, $\Stilde$}
		\For{each robot $\alpha$ \textbf{in parallel}}
		\State Compute $U_\alpha = L_{c\alpha} L_{\alpha \alpha}^{-1} B_\alpha$.
		\State Upload $U_\alpha$ to the server.\label{alg:sparsified_laplacian_solver:upload}
		\EndFor
		\State Server collects $U_\alpha$ and computes 
		$U = B_c - \sum_{\alpha \in [m]} U_\alpha$.
		\State Server solves $\Stilde X_c = U$ (where $\Stilde$ is obtained from \cref{alg:sparsified_schur_complement}), and broadcasts solution $X_c$ to all robots.
		\label{alg:sparsified_laplacian_solver:apx_reduced_system}
		\For{each robot $\alpha$ \textbf{in parallel}}
		\State Compute interior solution
		$X_\alpha = L_{\alpha \alpha}^{-1} \left( 
		B_\alpha - L_{\alpha c}X_c
		\right).$\label{alg:sparsified_laplacian_solver:back_substitution}
		\EndFor
		\EndFunction
	\end{algorithmic}
\end{algorithm}

In the following, we propose an approximate domain decomposition algorithm that is significantly more communication-efficient while providing \emph{provable approximation guarantees}.
Our method is based on the facts that 
(i) each local Schur complement $S_\alpha$ is itself a graph Laplacian,
and (ii) graph Laplacians admit \emph{spectral sparsifications}~\cite{Batson2013SpectralSurvey}, 
\ie, for a given approximation threshold $\epsilon > 0$, 
one can compute a sparse Laplacian $\Stilde_\alpha$ such that $\Stilde_\alpha \approx_\epsilon S_\alpha$.
{Generally, a larger value of $\epsilon$ leads to a sparser $\Stilde_\alpha$.}
{In this work, we implement the method of Spielman and Srivastava~\cite{Spielman2011Sampling} that sparsifies $S_\alpha$ by sampling edges in the corresponding dense graph based on their \emph{effective resistances}.
Intuitively, the effective resistances measure the importance of edges to the overall graph connectivity. 
The sparse matrix $\Stilde_\alpha$ produced by this method has $O(|\Ccal|\log |\Ccal|)$ entries, as opposed to the worst case $O(|\Ccal|^2)$ entries in $S_\alpha$.}
\techreport{{Appendix~\ref{sec:spectral_sparsification} provides the complete description and pseudocode of the sparsification algorithm.}}
\mainpaper{{We provide the complete description and pseudocode of the sparsification algorithm in \cite[Appendix~I]{Tian2022Sparsification}.}}
\cref{fig:example:reduced_graph_sparse}  illustrates a spectral sparsification for robot 2's dense reduced graph.
In the proposed method, each robot transmits its sparse approximation $\Stilde_\alpha$ instead of the original Schur complement $S_\alpha$.
By summing together these $\Stilde_\alpha$ matrices, the server can obtain a sparse approximation to the original dense Schur complement $S$; see \cref{alg:sparsified_schur_complement}.
Then, we can follow the same procedure as standard domain decomposition to obtain an approximate solution to the Laplacian system \eqref{eq:laplacian_system_matrix_form}; see \cref{alg:sparsified_laplacian_solver}. 
Specifically, the server first solves an \emph{approximate} reduced system using $\Stilde$ obtained from \cref{alg:sparsified_schur_complement} (line~\ref{alg:sparsified_laplacian_solver:apx_reduced_system}).
Then, the interior solution for each robot is recovered using back-substitution (line~\ref{alg:sparsified_laplacian_solver:back_substitution}).

Together, \cref{alg:sparsified_schur_complement,alg:sparsified_laplacian_solver} provide a parallel procedure  for computing an approximate solution to the original Laplacian system \eqref{eq:laplacian_system_matrix_form} {in the server-client architecture}.
Crucially, the use of spectral sparsifiers allows us to establish theoretical guarantees on the accuracy of the approximate solution as stated in the following theorem.

\begin{theorem}[Approximation guarantees of \cref{alg:sparsified_schur_complement,alg:sparsified_laplacian_solver}]
	\label{thm:approximation_guarantees}
	Given a Laplacian system $LX = B$, \cref{alg:sparsified_schur_complement,alg:sparsified_laplacian_solver} together return a solution $\Xtilde \in \Real^{n \times p}$ such that
	$\Ltilde \Xtilde = B$, where $\Ltilde \in \PSD^n$ satisfies,
	\begin{equation}
		\Ltilde \approx_\epsilon L.
		\label{thm:approximation_guarantees:spectrum}
	\end{equation}
	Furthermore, let $\Xstar \in \Real^{n \times p}$ be an exact solution to the input linear system, \ie, $L\Xstar = B$. It holds that,
	\begin{equation}
		\norm{\Xtilde - \Xstar}_L \leq c(\epsilon) \norm{\Xstar}_L,
		\label{thm:approximation_guarantees:solution}
	\end{equation}
	where the constant $c(\epsilon)$ is defined as,
	\begin{equation}
		c(\epsilon) = \sqrt{1 + e^{2\epsilon} - 2e^{-\epsilon}}.
		\label{eq:c_epsilon_def}
	\end{equation}
\end{theorem}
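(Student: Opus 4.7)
The plan is to reduce the theorem to a spectral-approximation statement $\Ltilde \approx_\epsilon L$ about an \emph{implicit} Laplacian $\Ltilde$ that the algorithm effectively solves. I would first identify $\Ltilde$ by unwinding \cref{alg:sparsified_schur_complement,alg:sparsified_laplacian_solver}: the server solves $\Stilde X_c = U$ and the subsequent back-substitution reuses the \emph{original} blocks $L_{\alpha\alpha}$ and $L_{\alpha c}$. This is exactly block Gaussian elimination applied to $\Ltilde X = B$, where $\Ltilde$ agrees with $L$ in every block except the separator block, which we set to $\widetilde{L}_{cc} \triangleq \Stilde + \sum_{\alpha} L_{c\alpha} L_{\alpha\alpha}^{-1} L_{\alpha c}$, so that $\Sc(\Ltilde, \Fcal) = \Stilde$. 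Consequently, the output of the algorithm satisfies $\Ltilde \Xtilde = B$ exactly.

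Next I would establish $\Ltilde \approx_\epsilon L$ in two stages. Stage (i): combining \cref{lem:schur_complement_sum} with the sparsifier guarantee $\Stilde_\alpha \approx_\epsilon S_\alpha$ and using the elementary fact (immediate from \eqref{eq:spectral_approximation_def}) that $\approx_\epsilon$ is preserved under sums, one obtains $\Stilde = L(G_c) + \sum_\alpha \Stilde_\alpha \approx_\epsilon L(G_c) + \sum_\alpha S_\alpha = S$. Stage (ii): $L$ and $\Ltilde$ admit a common block factorization $L = N^\top \Diag(L_{FF}, S)\, N$ and $\Ltilde = N^\top \Diag(L_{FF}, \Stilde)\, N$, where $L_{FF}$ is the block-diagonal of $\{L_{\alpha\alpha}\}$ and $N$ is the unit upper-triangular matrix encoding elimination of $\Fcal$. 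Since $\approx_\epsilon$ is invariant under congruence by any common invertible matrix, $\Ltilde \approx_\epsilon L$ reduces to $\Diag(L_{FF}, S) \approx_\epsilon \Diag(L_{FF}, \Stilde)$, which is immediate from $\Stilde \approx_\epsilon S$ and the identical $L_{FF}$ blocks. This proves \eqref{thm:approximation_guarantees:spectrum}.

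With $\Ltilde \approx_\epsilon L$ (and the resulting identity $\ker\Ltilde = \ker L$), I would prove the error bound \eqref{thm:approximation_guarantees:solution} via the expansion
\begin{equation*}
\|\Xtilde - \Xstar\|_L^2 \;=\; \|\Xtilde\|_L^2 - 2\langle \Xtilde, L\Xstar\rangle + \|\Xstar\|_L^2 \;=\; \|\Xtilde\|_L^2 - 2\|\Xtilde\|_{\Ltilde}^2 + \|\Xstar\|_L^2,
\end{equation*}
where the second equality substitutes $L\Xstar = B = \Ltilde\Xtilde$. Restricting to the common image of $L$ and $\Ltilde$ (valid since both $L$-norms above are invariant under shifts in $\ker L$), the spectral approximation yields $\Ltilde^\dagger \preceq e^\epsilon L^\dagger$, and hence $L \Ltilde^\dagger L \preceq e^\epsilon L$ as well as $\Ltilde^\dagger L \Ltilde^\dagger \preceq e^{2\epsilon} L^\dagger$. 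This produces $\|\Xtilde\|_L^2 = \tr(\Xstar^\top L \Ltilde^\dagger L \Ltilde^\dagger L \Xstar) \leq e^{2\epsilon}\|\Xstar\|_L^2$. The matching lower bound $L\Ltilde^\dagger L \succeq e^{-\epsilon} L$ likewise gives $\|\Xtilde\|_{\Ltilde}^2 = \tr(\Xstar^\top L \Ltilde^\dagger L \Xstar) \geq e^{-\epsilon}\|\Xstar\|_L^2$. Substituting the two bounds yields exactly $\|\Xtilde - \Xstar\|_L^2 \leq (1 + e^{2\epsilon} - 2e^{-\epsilon})\|\Xstar\|_L^2 = c(\epsilon)^2 \|\Xstar\|_L^2$, closing the argument.

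The main obstacle I anticipate is stage (ii) of the spectral step: correctly identifying the common congruence factor $N$ and verifying invertibility of $L_{FF}$ (\ie, every connected component of each $\Fcal_\alpha$ is attached to $\Ccal$), which is what lets the entire spectral relation on the full $\Ltilde$ be pulled back from the relation on the Schur complement. Handling pseudoinverses carefully when restricting to the shared range of $L$ and $\Ltilde$ in the error-bound step also requires mild bookkeeping, but is routine given $\ker L = \ker \Ltilde$.
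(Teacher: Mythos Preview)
Your proposal is correct and follows essentially the same approach as the paper: the paper exhibits the identical block factorization $L = N^\top \Diag(L_{ff}, S)\, N$ and $\Ltilde = N^\top \Diag(L_{ff}, \Stilde)\, N$, invokes \cref{lem:schur_complement_sum} and additivity to get $\Stilde \approx_\epsilon S$, and then cites the congruence-invariance of $\approx_\epsilon$ (as a fact from \cite{lee2015sparsified}) rather than spelling it out as you do. For \eqref{thm:approximation_guarantees:solution} the paper isolates the identical three-term expansion in a separate lemma, working column by column with $L^{1/2}\Ltilde^\dagger L^{1/2}$ instead of your trace formulation, but the bounds $e^{2\epsilon}$ and $e^{-\epsilon}$ are obtained in the same way.
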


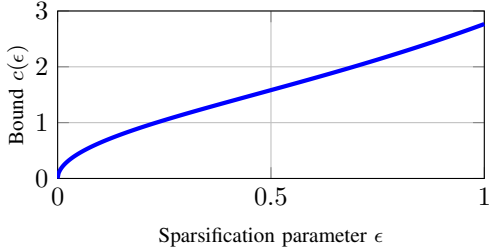
\begin{figure}[t]
	\centering
	\begin{tikzpicture}
		\begin{axis} [width=0.40\textwidth, height=1.5in, smooth, no markers, grid, domain=0:1, xmax=1, ymax=3, xmin=0, ymin=0,xtick={0,0.5,1},y label style={at={(axis description cs:0.17,.5)},rotate=0,anchor=south},
			xlabel={\footnotesize Sparsification parameter $\epsilon$},ylabel={\footnotesize Bound $c(\epsilon)$}]
			\addplot +[ultra thick, blue, samples=500] {pow(1+exp(2*x)-2*exp(-x),0.5)};
		\end{axis}
	\end{tikzpicture}
	\caption{Visualization of $c(\epsilon)$ in \cref{thm:approximation_guarantees}. }\label{fig:c_epsilon}
	\vspace{-.3cm}
\end{figure}

\techreport{{We prove \cref{thm:approximation_guarantees} in Appendix~\ref{sec:laplacian_appendix:approximation_guarantees}.}}
\mainpaper{{We prove \cref{thm:approximation_guarantees} in  \cite[Appendix~III-B]{Tian2022Sparsification}.}}
We have shown that the approximate solution $\Xtilde$ produced by \cref{alg:sparsified_schur_complement,alg:sparsified_laplacian_solver} remains close to the exact solution $\Xstar$ when measured 
	using the ``norm'' induced by the original Laplacian $L$.\techreport{\footnote{
		The reader might question the use of $\norm{\cdot}_L$ in \eqref{thm:approximation_guarantees:solution} because the Laplacian $L$ is singular.
		Indeed, due to the singularity of $L$, $||\Xstar - \Xtilde||_L$ ignores any component of $\Xstar - \Xtilde$ that lives on the kernel of $L$, which is spanned by the vector of all ones $1_n$.
		However, this does not create a problem for us	
		since we only seek to compare
		$\Xstar$ and $\Xtilde$ \emph{when considering both as solutions to the Laplacian system $LX = B$},
		and using $\norm{\cdot}_L$ naturally eliminates any difference on $\ker(L)$ that is inconsequential. 
	}}
	Furthermore, the quality of the approximation is controlled by the sparsification parameter $\epsilon$ through the function 
	$c(\epsilon)$ visualized in \cref{fig:c_epsilon}.
	Note that when $\epsilon = 0$, sparsification is effectively skipped and robots 
	transmit the original dense matrices $S_\alpha$.
	In this case, we have $c(\epsilon) = 0$ and the solution $\Xtilde$ produced by our methods is exact, \ie, $L\Xtilde = B$.
	Meanwhile, by increasing $\epsilon$, our methods smoothly trade off accuracy with communication efficiency.

\begin{remark}[Connections with existing Laplacian solvers \cite{lee2015sparsified,kyng2016sparsified}]
	Our {collaborative} Laplacian solver (\cref{alg:sparsified_schur_complement,alg:sparsified_laplacian_solver}) is inspired by the centralized solvers developed in 
	\cite{lee2015sparsified,kyng2016sparsified} for solving Laplacian systems in nearly linear time.
	However, our result differs from these works by focusing on the use of spectral sparsification in the multi-robot setting to achieve communication efficiency.
	Furthermore, in \cref{sec:collaborative_rotation_averaging},  
	we apply our Laplacian solver on the non-convex Riemannian optimization problem underlying rotation averaging,
	and establish provable convergence guarantees for the resulting Riemannian optimization algorithm.
	\label{rm:connections_with_laplacian_solvers}
\end{remark}

{\begin{remark}[Communication efficiency of \cref{alg:sparsified_schur_complement,alg:sparsified_laplacian_solver} ]
		\label{rm:comm_efficiency_laplacian_solver}
		We discuss the communication costs of \cref{alg:sparsified_schur_complement,alg:sparsified_laplacian_solver} under the server-client architecture.
		Denote the number of separators in the measurement graph as $|\Ccal|$.
		In \cref{alg:sparsified_schur_complement}, each robot uploads the sparsified matrix $\Stilde_\alpha$ to the server (\cref{alg:sparsified_schur_complement:upload}), 
		which is guaranteed to have $O(|\Ccal|\log |\Ccal|)$ entries \cite{Spielman2011Sampling}.
		Consequently, \cref{alg:sparsified_schur_complement} incurs a total \emph{upload} cost of $O(m|\Ccal|\log |\Ccal|)$, where $m$ is the number of robots.
		In \cref{alg:sparsified_laplacian_solver}, robots upload their block vectors $U_\alpha$ in parallel (\cref{alg:sparsified_laplacian_solver:upload})
		and the server broadcasts back the solution $X_c$ (\cref{alg:sparsified_laplacian_solver:apx_reduced_system}).
		Since both $U_\alpha$ and $X_c$ have a dimension of $|\Ccal|$-by-$p$ (where $p = \dim \SOd(d)$ is constant),
		\cref{alg:sparsified_laplacian_solver} uses  $O(m|\Ccal|)$ communication in both \emph{upload} and \emph{download} stages.
	\end{remark}}

\subsection{Collaborative Rotation Averaging}
\label{sec:collaborative_rotation_averaging}

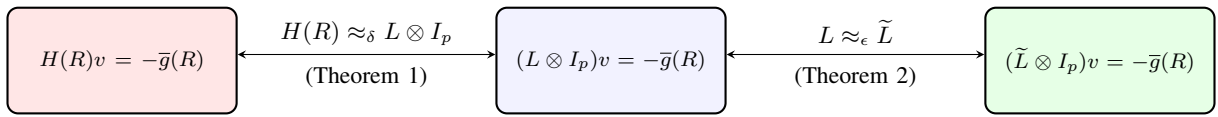
\begin{figure*}
	\centering
	\begin{tikzpicture}[node distance = 6.5cm, auto]
		\node [fblock,fill=red!10!white, thick] (hessian_system) {\footnotesize
			$H(R) v = -\gbar(R)$};
		\node[fblock, fill=blue!5!white, right of=hessian_system, thick] (laplacian_system) {\footnotesize 
			$(L \otimes I_p) v = -\gbar(R)$};
		\node[fblock, fill=green!10!white, right of=laplacian_system, thick] (apx_system) {\footnotesize 
			$(\Ltilde \otimes I_p) v = -\gbar(R)$};
		\draw [stealth-stealth] (hessian_system.3) -- node[above] (delta_apx) {\small $H(R) \approx_\delta L \otimes I_p$}  node[below of=delta_apx,node distance = 0.3cm] {\small (\cref{thm:Hessian_approximation})} (laplacian_system.177);
		\draw [stealth-stealth] (laplacian_system.3) -- node[above] (epsilon_apx) {\small $L \approx_\epsilon \Ltilde$} 
		node[below of=epsilon_apx,node distance = 0.3cm] {\small (\cref{thm:approximation_guarantees})} (apx_system.177);
	\end{tikzpicture}
	\caption
	{
		Intuitions behind the convergence rate in \cref{thm:rotation_averaging_convergence}.
		Recall from \cref{thm:Hessian_approximation} that under bounded measurement noise, the original Newton system (left box) is locally $\delta$-approximated by a linear system specified by a Laplacian $L$ (middle box).
		In addition, in \cref{thm:approximation_guarantees} we have shown that our distributed Laplacian solver approximates $L$ with $\Ltilde$ where $L \approx_\epsilon \Ltilde$ (right box).
		The composition of the two approximation relations thus gives $H(R) \approx_{\delta + \epsilon} (\Ltilde \otimes I_p)$, which intuitively explains why \eqref{thm:rotation_averaging_convergence:rate} depends on a function of $\delta + \epsilon$. 
	}
	\label{fig:rotation_averaging_convergence_illustration}
        \vspace{-0.3cm}
\end{figure*}

In this section, we utilize the Laplacian solver developed in the previous section to design a fast and communication-efficient solver for rotation averaging.
Recall the centralized method in \cref{alg:laplacian_newton_method}, where each iteration solves a Laplacian system 
$LV = B(R)$.
In the multi-robot setting, we can use the solver developed in \cref{sec:collaborative_laplacian_solver} to obtain an approximate solution to this system.
\cref{alg:collaborative_rotation_averaging} shows the pseudocode.   
First, an initial guess $R^0$ is computed (line~\ref{alg:collaborative_rotation_averaging:init}).
Then, at line~\ref{alg:collaborative_rotation_averaging:sparse_schur_complement}, robots first form the approximate Schur complement $\Stilde$ using \SparsifiedSchurComplement (\cref{alg:sparsified_schur_complement}).   
Each iteration consists of three main steps.
At the first step (line~\ref{alg:collaborative_rotation_averaging:B_start}-\ref{alg:collaborative_rotation_averaging:B_end}), robots compute and store the right-hand side $B(R)$. 
Specifically, recall from \cref{rm:distributed_computation_of_laplacian} that the overall $B(R)$ is divided into multiple blocks,
\begin{equation}
	B(R) = 
	\bmat
	B(R)_1^\top & \hdots & B(R)_m^\top & B(R)_c^\top
	\emat^\top.
\end{equation}
In our algorithm, 
each robot $\alpha \in [m]$ computes the block $B(R)_\alpha$ corresponding to its interior variables $\Fcal_\alpha$,
and the server computes the block $B(R)_c$ corresponding to all separators.
At the second step (line~\ref{alg:collaborative_rotation_averaging:solve}),
robots collaboratively solve for the update vector $V^k$ by calling \SparsifiedLaplacianSolver (\cref{alg:sparsified_laplacian_solver}).
Finally, at the last step (line~\ref{alg:collaborative_rotation_averaging:retraction_start}-\ref{alg:collaborative_rotation_averaging:retraction_end}),
we obtain the next iterate using the solutions $V^k$, where robots in parallel update the rotation variables they own.

\begin{algorithm}[t]
	\caption{\textsc{Collaborative Rotation Averaging}}
	\label{alg:collaborative_rotation_averaging}
	\begin{algorithmic}[1]
		\small
		\State Initialize rotation estimates $R^0$.\label{alg:collaborative_rotation_averaging:init}
		\State $\Stilde = \SparsifiedSchurComplement(L, \epsilon)$. \label{alg:collaborative_rotation_averaging:sparse_schur_complement}
		\For{iteration $k = 0, 1, \hdots$}
			\State {\small \textcolor{green!50!black}{// Distributed computation of $B(R^k)$}} \label{alg:collaborative_rotation_averaging:B_start}
			\State Server computes $B(R^k)_c$ that corresponds to all separators.
			\For{each robot $\alpha$ \textbf{in parallel}}
				\State Compute $B(R^k)_\alpha$ that corresponds to interior $\Fcal_\alpha$.
			\EndFor \label{alg:collaborative_rotation_averaging:B_end}
			\State {\small \textcolor{green!50!black}{// Single round of communication to compute $V^k$}}
			\State Solve $V^k = \SparsifiedLaplacianSolver(L, B(R^k), \Stilde).$\label{alg:collaborative_rotation_averaging:solve}
			\State {\small \textcolor{green!50!black}{// Distributed updates of all rotation variables}} \label{alg:collaborative_rotation_averaging:retraction_start}
			\For{each robot $\alpha$ \textbf{in parallel}}
				\State Update iterates by $R^{k+1}_i = \Exp(v^k_i) R_i^k$, for each rotation variable $R_i$ owned by robot $\alpha$.
				\label{alg:collaborative_rotation_averaging:retraction}
			\EndFor \label{alg:collaborative_rotation_averaging:retraction_end}
		\EndFor
	\end{algorithmic}
\end{algorithm}

In the following, we proceed to establish theoretical guarantees for our collaborative rotation averaging algorithm. 
We will show that starting from a suitable initial guess, \cref{alg:collaborative_rotation_averaging} converges to a global minimizer at a \emph{linear} rate.
One might be tempted to state the linear convergence result
on the total space, \ie, $\dist(R^{k+1}, \Rstar) \leq \gamma \dist(R^k, \Rstar)$ where $k$ is the iteration number, $\gamma \in (0,1)$ is a constant, and $\Rstar$ is a global minimizer.
However, it is challenging to prove this statement due to the gauge symmetry of rotation averaging.
The iterates $\{R^k\}$ might converge to a solution $R^\infty$ that is only equivalent to $\Rstar$ \emph{up to a global rotation}, \ie, 
\begin{equation}
	(SR^\infty_1, \hdots, SR^\infty_n) = (\Rstar_1, \hdots, \Rstar_n), \; \text{for some } S \in \SOd(d),
\end{equation}
and as a result $\dist(R^\infty, \Rstar) \neq 0$ in general.
Fortunately, this issue can be resolved using the machinery of Riemannian quotient manifolds.
Instead of measuring the distance on the total space $\dist(R^k, \Rstar)$,
we will compute the distance between the underlying equivalence classes $\dist([R^k], [\Rstar])$.
We note that $\dist([R^k], [\Rstar])$ is well-defined 
since a quotient manifold inherits the Riemannian metric from its total space \cite[Chapter~9]{Boumal20Book}.
Equipped with this distance metric, we are ready to formally state the convergence result for \cref{alg:collaborative_rotation_averaging}.

\begin{theorem}[Convergence rate of \cref{alg:collaborative_rotation_averaging}]
	\label{thm:rotation_averaging_convergence}
	Define $\gamma(x) = 2\sqrt{\kappa_H} c(x)$ 
	where $\kappa_H = L_H / \mu_H$ is the condition number in \cref{cor:Hessian_bound} 
	and $c(\cdot)$ is defined in \eqref{eq:c_epsilon_def}.
	Under the assumptions of \cref{thm:Hessian_approximation}, 
	suppose $\epsilon$ is selected such that $\gamma(\delta + \epsilon) < 1$.
	In addition, suppose at each iteration $k$, the update vector 
	$v^k \in \Real^{pn}$
	is orthogonal to the vertical space, \ie, $v^k \perp \Ncal$.
	Let $\Rstar$ be an optimal solution to \cref{prob:rotation_averaging}.
	There exists $r' > 0$ such that for any $R^0$
	where $\dist([R^0], [\Rstar]) < r'$, 
	\cref{alg:collaborative_rotation_averaging} generates an infinite sequence $\{R^k\}$ 
	where the corresponding sequence of equivalence classes $[R^k]$ converges linearly to $[\Rstar]$.
	Furthermore, the convergence rate factor is,
	\begin{equation}
		\underset{k \to \infty}{\lim \sup}
		\frac{\dist([R^{k+1}], [\Rstar])}{\dist([R^k], [\Rstar])} = 
		\gamma(\delta + \epsilon).
		\label{thm:rotation_averaging_convergence:rate}
	\end{equation}
\end{theorem}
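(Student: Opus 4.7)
The plan is to recognize \cref{alg:collaborative_rotation_averaging} as an \emph{inexact} Riemannian Newton method on the quotient manifold $\Mcal = \SOd(d)^n/\!\sim$, and to combine the local Hessian--Laplacian approximation from \cref{thm:Hessian_approximation} with the solver guarantee in \cref{thm:approximation_guarantees} to quantify the inexactness of each step. The standard contraction argument for Newton-type methods then delivers the asymptotic linear rate. Concretely, let $\tilde{H}(R) \triangleq \tilde{L} \otimes I_p$ denote the effective Hessian used by the algorithm, where $\tilde{L} \approx_\epsilon L(G;w)$ is the output of \cref{alg:sparsified_schur_complement}. Because spectral approximations compose additively in their parameters, \cref{thm:Hessian_approximation} combined with \eqref{thm:approximation_guarantees:spectrum} yields
\begin{equation*}
\tilde{H}(R) \approx_{\delta+\epsilon} H(R) \quad \text{for all } R \in B_r(\Rstar),
\end{equation*}
so in particular $\ker \tilde{H}(R) = \ker H(R) = \Ncal$, the system $\tilde{H}(R^k) v = -\gbar(R^k)$ is feasible, and the standing assumption $v^k \perp \Ncal$ uniquely picks out the minimum-norm solution, which lies in the horizontal space $\Hcal$.

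Next, I would compare the inexact step $v^k$ with the exact Riemannian Newton step $v^{\star,k}$ defined by $H(R^k) v^{\star,k} = -\gbar(R^k)$ and $v^{\star,k} \perp \Ncal$. Applying the same pseudoinverse argument that underlies \cref{thm:approximation_guarantees} --- now to the Hessian system rather than to the Laplacian system --- produces the bound $\|v^k - v^{\star,k}\|_{H(R^k)} \leq c(\delta+\epsilon)\,\|v^{\star,k}\|_{H(R^k)}$. Converting from the $H$-weighted norm to the ambient norm on $\Hcal$ via \cref{cor:Hessian_bound} costs a factor $\sqrt{\kappa_H}$, giving $\|v^k - v^{\star,k}\| \leq \sqrt{\kappa_H}\, c(\delta+\epsilon)\, \|v^{\star,k}\|$. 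This inequality is the key ``inexactness'' estimate that feeds the subsequent contraction argument.

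I would then invoke the standard contraction analysis for Riemannian Newton's method on a quotient manifold (see, e.g., \cite[Chapter~9]{Boumal20Book}). On the quotient, $\dist([R^k],[\Rstar])$ equals the length of the minimum horizontal lift, and the exact step satisfies $\|v^{\star,k}\| = \dist([R^k],[\Rstar]) + \mathcal{O}(\dist([R^k],[\Rstar])^2)$ near $[\Rstar]$. A Taylor expansion of the retraction $R^{k+1}_i = \Exp(v^k_i) R^k_i$, together with the quadratic convergence of the exact Newton step and the inexact-step bound above, yields
\begin{equation*}
\dist([R^{k+1}],[\Rstar]) \;\leq\; 2\sqrt{\kappa_H}\, c(\delta+\epsilon)\, \dist([R^k],[\Rstar]) + \mathcal{O}\!\left(\dist([R^k],[\Rstar])^2\right),
\end{equation*}
where the leading coefficient $\gamma(\delta+\epsilon) = 2\sqrt{\kappa_H}\, c(\delta+\epsilon)$ arises from combining the inexact-step error with the metric distortion induced by $\Exp$ on $\SOd(d)$. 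Choosing $r' > 0$ small enough that the quadratic remainder is strictly dominated by the slack $1 - \gamma(\delta+\epsilon)$ in the linear contraction, an induction on $k$ keeps $[R^k]$ inside $B_{r'}([\Rstar])$ and drives $\dist([R^k],[\Rstar]) \to 0$, with the ratio tending to $\gamma(\delta+\epsilon)$ as in \eqref{thm:rotation_averaging_convergence:rate}.

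The main technical obstacle is the bookkeeping imposed by the gauge symmetry. One has to verify carefully that (a) the inexact update $v^k$ computed by solving $\tilde{H} v = -\gbar$ is the horizontal lift of a well-defined tangent vector at $[R^k]$; (b) the post-retraction distance $\dist([R^{k+1}],[\Rstar])$ can genuinely be upper-bounded by $\|v^k - v^{\star,k}\|$ plus a quadratic-in-distance remainder, accounting both for the exponential-map distortion and for the freedom to pick the nearest representative in the equivalence class $[\Rstar]$; and (c) the constants in the $\mathcal{O}(\cdot)$ terms are uniform over $B_{r'}(\Rstar)$, so that $r'$ can be chosen independently of the iterate. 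Once these Riemannian geometric facts are in place, the contraction estimate is routine; the interesting content of the theorem is that the interplay between the \emph{two} layers of approximation (Hessian $\leftrightarrow$ Laplacian, and Laplacian $\leftrightarrow$ sparsified Laplacian) collapses additively into the single parameter $\delta + \epsilon$ that controls the final rate.
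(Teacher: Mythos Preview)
Your proposal is correct and takes essentially the same approach as the paper: compose the two spectral approximations to get $\tilde{H}(R)\approx_{\delta+\epsilon}H(R)$, pass to the quotient manifold via horizontal lifts, and run a Newton-type contraction argument that produces a $\sqrt{\kappa_H}\,c(\delta+\epsilon)$ linear term plus a quadratic remainder. The only organizational difference is that the paper first isolates a general lemma (an approximate Newton method on any Riemannian manifold with $M(x)\approx_\epsilon \Hess f(x)$ converges at rate $2\sqrt{\kappa_H}\,c(\epsilon)$), proving it by a normal-coordinate decomposition in which the factor $2$ arises concretely from the chart-distortion bounds $\|\Diff\varphi\|,\|\Diff\varphi^{-1}\|\le\sqrt 2$ rather than from a generic ``metric distortion of $\Exp$''; it then verifies the lemma's hypotheses on the quotient, which is exactly the bookkeeping you flag under (a)--(c).
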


\techreport{{We prove \cref{thm:rotation_averaging_convergence} in Appendix~\ref{sec:linear_convergence_RA}.}}
\mainpaper{{We prove \cref{thm:rotation_averaging_convergence} in \cite[Appendix~IV-B]{Tian2022Sparsification}.}}
\cref{thm:rotation_averaging_convergence} shows that using the distance metric on the quotient manifold, 
\cref{alg:collaborative_rotation_averaging} locally converges to the global minimizer at a linear rate.\footnote{
		In \cref{thm:rotation_averaging_convergence}, the orthogonality assumption $v^k \perp \Ncal$ is needed to ensure that the update vector $v^k$ corresponds to a valid tangent vector on the tangent space of the underlying quotient manifold; 
        \techreport{see Appendix~\ref{sec:linear_convergence_RA} for details.} 
        \mainpaper{see \cite[Appendix~IV-B]{Tian2022Sparsification} for details.} 
        One can satisfy this assumption by projecting $v^k$ to the horizontal space, which requires a single round of communication between the server and robots. However, in practice, we find that this has have negligible impact on the iterates and thus skip this step in our implementation.}
\cref{fig:rotation_averaging_convergence_illustration} provides intuitions behind the convergence rate in \eqref{thm:rotation_averaging_convergence:rate}.
Recall that $\delta$ appears in \cref{thm:Hessian_approximation} where we show $H(R) \approx_\delta (L \otimes I_p)$ under bounded measurement noise.
On the other hand, $\epsilon$ is the parameter for spectral sparsification and is controlled by the user.
In \cref{thm:approximation_guarantees}, we showed that our methods transform the input Laplacian $L$ into an approximation $\Ltilde$ such that $L \approx_\epsilon \Ltilde$.
The composition of the two approximation relations thus gives $H(R) \approx_{\delta + \epsilon} (\Ltilde \otimes I_p)$, which intuitively explains why the convergence rate depends on a function of $\delta + \epsilon$.
{Lastly, we note that while our theoretical convergence guarantees require $\gamma(\delta + \epsilon) < 1$, 
our experiments (\cref{sec:experiments}) show that \cref{alg:collaborative_rotation_averaging} is not sensitive to the choice of $\epsilon$ and converges under a wide range of parameter settings.}

{
	\begin{remark}[Communication efficiency of \cref{alg:collaborative_rotation_averaging}]
		In \cref{alg:collaborative_rotation_averaging}, note that only a single call to  \SparsifiedSchurComplement (\cref{alg:sparsified_schur_complement}) is needed,
		which incurs a total upload of $O(m|\Ccal| \log|\Ccal|)$; see \cref{rm:comm_efficiency_laplacian_solver}.
		In each iteration, a single call to \SparsifiedLaplacianSolver (\cref{alg:sparsified_laplacian_solver}) is made, which requires a single round of upload and download.
		Furthermore, by \cref{rm:comm_efficiency_laplacian_solver}, both upload and download costs are bounded by $O(m|\Ccal|)$.
		Therefore, after $K > 0$ iterations, 
		\cref{alg:collaborative_rotation_averaging} uses a total upload of $O(m|\Ccal| \log|\Ccal| + mK |\Ccal| )$
		and a total download of $O(mK |\Ccal|)$.
		In particular, the terms that involve the number of iterations $K$ scales \emph{linearly} with the number of separators $|\Ccal|$, which makes the algorithm very communication-efficient. 
		\label{rm:comm_requirements_rotation_averaging}
	\end{remark}
}

\subsection{Collaborative Translation Estimation}
\label{sec:collaborative_translation_recovery}

\begin{algorithm}[t]
	\caption{\textsc{Collaborative Translation Estimation}}
	\label{alg:collaborative_translation_recovery}
	\begin{algorithmic}[1]
		\small
		\State Initialize translation estimates $M^0_t = 0_{n \times d}$.
		\State $\Stilde = \SparsifiedSchurComplement(L, \epsilon)$.
		\For{iteration $k = 0, 1, \hdots$}
			\State {\small \textcolor{green!50!black}{// Distributed computation of $E^k$}} \label{alg:collaborative_translation_recovery:B_start}
			\State Server computes $E^k_{c}$ that corresponds to all separators.
			\For{each robot $\alpha$ \textbf{in parallel}}
				\State Compute $E^k_\alpha$ that corresponds to interior $\Fcal_\alpha$.
			\EndFor \label{alg:collaborative_translation_recovery:B_end}
			\State {\small \textcolor{green!50!black}{// Single round of communication to compute $D^k$}}
			\State Solve $D^k = \SparsifiedLaplacianSolver(L, E^k, \Stilde).$ \label{alg:collaborative_translation_recovery:solve}
			\State {\small \textcolor{green!50!black}{// Distributed updates of all translations: $M_t^{k+1} = M_t^k + D^k$}}\label{alg:collaborative_translation_recovery:retraction_start}
			\For{each robot $\alpha$ \textbf{in parallel}}
				\State Update iterates by $t^{k+1}_i = t_i^k + (D^{k}_{[i,:]})^\top$ for each translation variable $t_i$ owned by robot $\alpha$.
				\label{alg:collaborative_translation_recovery:retraction}
			\EndFor \label{alg:collaborative_translation_recovery:retraction_end}
		\EndFor
	\end{algorithmic}
\end{algorithm}

Similar to rotation averaging, we can develop a fast and communication-efficient method to solve translation estimation, which is equivalent to the Laplacian system \eqref{eq:translation_laplacian_system_matrix_form} as shown in \cref{sec:laplacian_systems:translation}.
Specifically, we employ our collaborative Laplacian solver (\cref{sec:collaborative_laplacian_solver}) in an \emph{iterative refinement} framework.
Let $M^k_t \in \Real^{n \times d}$ be our estimate for the translation variables at iteration $k$
(in practice $M_t^0$ can simply be initialized at zero).
We seek a correction $D^k$ to $M^k_t$ 
by solving the \emph{residual system} corresponding to \eqref{eq:translation_laplacian_system_matrix_form}:
\begin{equation}
	L (M_t^k + D^k)  = B_t  \iff L D^k = B_t - L M_t^k \triangleq E^k.
	\label{eq:translation_recovery_residual_system}
\end{equation}
Observing that the system on the right-hand side of \eqref{eq:translation_recovery_residual_system} is another Laplacian system in 
$L \equiv L(G; \tau)$, we can deploy our Laplacian solver to find an approximate solution $D^k$.
\cref{alg:collaborative_translation_recovery} shows the pseudocode, which shares many similarities with the proposed collaborative rotation averaging method \cref{alg:collaborative_rotation_averaging}.
In particular, the computation of the right-hand side $E^k$ (line~\ref{alg:collaborative_translation_recovery:B_start}-\ref{alg:collaborative_translation_recovery:B_end})
and the update step (line~\ref{alg:collaborative_translation_recovery:retraction_start}-\ref{alg:collaborative_translation_recovery:retraction_end}) are 
performed in a distributed fashion.
The two methods also share the same communication complexity; see \cref{rm:comm_requirements_rotation_averaging}.
The following theorem states the theoretical guarantees for \cref{alg:collaborative_translation_recovery}.

\begin{theorem}[Convergence rate of \cref{alg:collaborative_translation_recovery}]
	\label{thm:translation_recovery_convergence}
	Suppose $\epsilon$ is selected such that the constant $c(\epsilon)$ defined in \eqref{eq:c_epsilon_def} satisfies $c(\epsilon) < 1$.
	Let $M_t^\star$ be an optimal solution to \cref{prob:translation_recovery}
	and let $M_t^k$ denote the solution computed by \cref{alg:collaborative_translation_recovery} at iteration $k \geq 1$.
	It holds that, 
	\begin{equation}
		\norm{M_t^k - M_t^\star}_L \leq c(\epsilon)^k \norm{M_t^\star}_L,
		\label{eq:translation_recovery_convergence_rate}
	\end{equation}
	where $L \equiv L(G; \tau)$.
\end{theorem}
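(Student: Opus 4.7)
The plan is to establish a per-iteration contraction of the form $\|M_t^{k+1} - M_t^\star\|_L \le c(\epsilon)\,\|M_t^k - M_t^\star\|_L$ and then iterate $k$ times starting from $M_t^0 = 0_{n\times d}$. The crux is to relate one iteration of \cref{alg:collaborative_translation_recovery} to a single application of \cref{thm:approximation_guarantees} on the residual system \eqref{eq:translation_recovery_residual_system}.

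First, I would fix an iteration $k$ and introduce the exact correction $D^{k,\star}$ that solves $L D^{k,\star} = E^k = B_t - L M_t^k$. Because $L M_t^\star = B_t$, subtracting gives $L\bigl(D^{k,\star} - (M_t^\star - M_t^k)\bigr) = 0$, so $D^{k,\star} - (M_t^\star - M_t^k) \in \ker(L)$. Since $\|\cdot\|_L$ vanishes on $\ker(L)$, this immediately yields the two identities I will use repeatedly:
\begin{equation}
\|D^{k,\star}\|_L = \|M_t^\star - M_t^k\|_L, \qquad
\|D^k - (M_t^\star - M_t^k)\|_L = \|D^k - D^{k,\star}\|_L.
\end{equation}
The second identity follows from the triangle inequality applied to $D^k - (M_t^\star - M_t^k) = (D^k - D^{k,\star}) + (D^{k,\star} - (M_t^\star - M_t^k))$ and the fact that the last summand lies in $\ker(L)$.

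Second, I would invoke \cref{thm:approximation_guarantees} on the residual system $LD = E^k$ solved by \textsc{SparsifiedLaplacianSolver}. This gives $\|D^k - D^{k,\star}\|_L \le c(\epsilon)\,\|D^{k,\star}\|_L$. Combining with the identities above and the update rule $M_t^{k+1} = M_t^k + D^k$ from line~\ref{alg:collaborative_translation_recovery:retraction}, I obtain
\begin{equation}
\|M_t^{k+1} - M_t^\star\|_L = \|D^k - (M_t^\star - M_t^k)\|_L = \|D^k - D^{k,\star}\|_L \le c(\epsilon)\,\|M_t^\star - M_t^k\|_L,
\end{equation}
which is the desired one-step contraction. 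A straightforward induction from $M_t^0 = 0$ then gives $\|M_t^k - M_t^\star\|_L \le c(\epsilon)^k \|M_t^\star\|_L$, as claimed.

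The main subtlety, rather than any real obstacle, is handling the gauge symmetry: $M_t^\star$ is only defined up to addition of a rank-one term $1_n c^\top \in \ker(L)$, so one cannot hope to compare $M_t^k$ and $M_t^\star$ in an ordinary Euclidean norm. Measuring error in $\|\cdot\|_L$ is exactly the right choice because it quotients out $\ker(L)$ and matches the semi-norm in which \cref{thm:approximation_guarantees} already delivers its guarantee. Once this is recognized, the proof reduces to the two applications of the triangle inequality and the kernel identity above, together with the induction; no further assumptions on $\epsilon$ beyond $c(\epsilon) < 1$ are needed to ensure contraction.
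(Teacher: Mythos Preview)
Your proposal is correct and follows essentially the same approach as the paper: both establish the one-step contraction $\|M_t^{k+1}-M_t^\star\|_L \le c(\epsilon)\,\|M_t^k-M_t^\star\|_L$ by applying \cref{thm:approximation_guarantees} to the residual system and then induct from $M_t^0=0$. The only cosmetic difference is that the paper directly takes $D^\star = M_t^\star - M_t^k$ as the exact solution (which already satisfies $LD^\star = E^k$), whereas you introduce a generic exact solution $D^{k,\star}$ and separately argue that the kernel component is annihilated by $\|\cdot\|_L$; both routes are equivalent.
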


\techreport{{We prove \cref{thm:translation_recovery_convergence} in Appendix~\ref{sec:linear_convergence_translation}.}}
\mainpaper{{We prove \cref{thm:translation_recovery_convergence} in \cite[Appendix~IV-C]{Tian2022Sparsification}.}}
\cref{thm:translation_recovery_convergence} is simpler compared to its counterpart for rotation averaging (\cref{thm:rotation_averaging_convergence}).
The convergence rate \eqref{eq:translation_recovery_convergence_rate} only depends on the sparsification parameter $\epsilon$.
Furthermore, since the translation estimation problem is convex, the convergence guarantee is \emph{global} and holds for any initial guess.\techreport{\footnote{
	In \eqref{eq:translation_recovery_convergence_rate}, 
	the use of $\norm{\cdot}_L$ naturally accounts for the global translation symmetry of \cref{prob:translation_recovery} (see \cref{sec:laplacian_systems:translation}).
	Specifically, since $\ker(L) = \image(1_n)$, 
	$||M_t^k - M_t^\star||_L$ disregards any difference between $M_t^k$ and $M_t^\star$ that corresponds to a global translation.
}}
While \cref{thm:translation_recovery_convergence} requires $c(\epsilon) < 1$, 
our experiments show that \cref{alg:collaborative_translation_recovery} is not sensitive to the choice of sparsification parameter $\epsilon$ and converges under a wide range of parameter settings.

{

\subsection{Extension to Outlier-Robust Optimization}
\label{sec:gnc}

So far, we have considered estimation using the standard least squares cost function, which is sensitive to \emph{outlier measurements} that might arise in practice (\eg, due to incorrect loop closures in multi-robot SLAM).
In this section, we present an extension to \emph{outlier-robust} optimization by embedding the developed solvers in the graduated non-convexity (GNC) framework \cite{Yang2020Gnc,Black1996Gnc}.
\edit{We select GNC for its good performance as reported in recent works \cite{Yang2020Gnc,Tian21KimeraMulti}. However, similar robust optimization frameworks such as iterative reweighted least squares \cite{Chatterjee18pami-rotationAveraging} can also be used.}
Consider robust estimation using the truncated least squares (TLS) cost:\footnote{Other robust cost functions, such as the Geman McClure function, can also be used in the same framework; see \cite{Yang2020Gnc}.}
\begin{equation}
	\label{eq:robust_estimation}
	\underset{x \in \Xcal}{\minimize}
	\quad \sum_{(i,j) \in \Ecal} \TLS(e_{ij} (x)).
\end{equation}
In \eqref{eq:robust_estimation}, $x \in \Xcal$ is the model to be estimated, and $e_{ij} (x)$ is the measurement error associated with edge $(i,j) \in \Ecal$ in the measurement graph.
For the robust extension of rotation averaging (\cref{prob:rotation_averaging}), we define $x = (R_1, \hdots, R_n) \in \SOd(d)^n$, and $e_{ij} (x) = \sqrt{\kappa_{ij}/2} \dist(R_i \Rtilde_{ij}, R_j)$ where $\dist(\cdot, \cdot)$ is the geodesic or the chordal distance.
For the robust extension of translation estimation (\cref{prob:translation_recovery}), we define $x = (t_1, \hdots, t_n) \in \Real^{d \times n}$ and $e_{ij}(x) = \sqrt{\tau_{ij}/2} \norm{t_j - t_i - \that_{ij}}$.
Notice that $e_{ij}(x)$ is simply the square root of a single cost term in \cref{prob:rotation_averaging} or \cref{prob:translation_recovery}.
Finally, $\TLS(e) \triangleq \min(e^2, \ebar^2)$ denotes the TLS cost function, where $\ebar$ is a constant threshold that specifies the maximum acceptable error of inlier measurements.
Intuitively, the TLS cost function achieves robustness by eliminating the impact of any outliers with error larger than $\ebar$.

To mitigate the non-convexity introduced by robust cost functions, GNC solves \eqref{eq:robust_estimation} by optimizing a sequence of easier (\ie, less non-convex) surrogate functions $\TLS_\mu$ that gradually converges to the original, highly non-convex cost function $\TLS$.
Here, $\mu$ is the control parameter and for the TLS function, it satisfies that (i) $\TLS_\mu$ is convex for $\mu \to 0$, and (ii) $\TLS_\mu$ recovers $\TLS$ for $\mu \to +\infty$; see \cite[Example~2]{Yang2020Gnc}.
In practice, we initialize by setting $\mu \approx 0$, and gradually increase $\mu$ as optimization progresses.
Furthermore, leveraging the Black-Rangarajan duality \cite{Black1996Gnc}, each surrogate problem can be formulated as follows,
 \begin{equation}
 	\label{eq:gnc_surrogate_func}
 	\underset{x \in \Xcal, \wgnc_{ij} \in [0,1]}{\minimize}
 	\quad \sum_{(i,j) \in \Ecal} \left[\wgnc_{ij} e^2_{ij} (x) + \Phi_\mu (\wgnc_{ij})\right].
 \end{equation}
In \eqref{eq:gnc_surrogate_func}, $\wgnc_{ij}$ is a mutable weight attached to the measurement error $e_{ij}$,
and $\Phi_\mu$ acts as a regularization term on the weight whose expression depends on the control parameter $\mu$.

GNC leverages \eqref{eq:gnc_surrogate_func} by performing alternating updates on the model $x$ and the weights $\wgnc_{ij}$, while simultaneously updating the control parameter $\mu$.
Specifically, each GNC outer iteration consists of three steps:
\begin{enumerate}[leftmargin=0.3cm]
	\item \textbf{Variable update}: optimize the surrogate problem \eqref{eq:gnc_surrogate_func} with respect to $x$, under fixed weights $\wgnc_{ij}$. Notice that this amounts to a standard weighted least squares problem,
	\begin{equation}
		\label{eq:gnc_variable_update}
		\underset{x \in \Xcal}{\minimize}
		\quad \sum_{(i,j) \in \Ecal} \wgnc_{ij} e^2_{ij} (x).
	\end{equation}
	\item \textbf{Weight update}: optimize the surrogate problem \eqref{eq:gnc_surrogate_func} with respect to all $\wgnc_{ij}$, under fixed model $x$. For TLS, the resulting $\wgnc_{ij}$ has a \emph{closed-form} solution,
	\begin{equation}
		\wgnc_{ij} \leftarrow \begin{cases}
			0, 
			& \text{ if } e^2_{ij} \in \left[\frac{\mu+1}{\mu}\ebar^2,  +\infty \right], \\
			\frac{\ebar}{e_{ij}} \sqrt{\mu(\mu+1)} - \mu, 
			& \text{ if } e^2_{ij} \in \big [ \frac{\mu}{\mu + 1}\ebar^2, \frac{\mu+1}{\mu}\ebar^2 \big], \\
			1, 
			& \text{ if } e^2_{ij} \in \big [0, \frac{\mu}{\mu+1} \ebar^2],
		\end{cases}
		\label{eq:gnc_weight_update}
	\end{equation}
	where $e_{ij} \equiv e_{ij}(x)$ is the current measurement error.
	\item \textbf{Parameter update}: update control parameter $\mu$ via $\mu \leftarrow 1.4 \mu$ (recommended in \cite[Remark~5]{Yang2020Gnc}), and move on to the next surrogate problem.
\end{enumerate}
Initially, all measurement weights are initialized at one.

\begin{algorithm}[t]
	\caption{{Outlier-robust rotation averaging with GNC}}
	\label{alg:gnc}
	\begin{algorithmic}[1]
		\State Initialize control parameter $\mu$ and measurement weights by setting $w_{ij} = 1$ for all measurements $(i,j) \in \Ecal$.
		\While{not converged}
		\State \textbf{Variable update}:
			under fixed weights, solve the weighted rotation averaging problem by executing \cref{alg:collaborative_rotation_averaging} under the server-client architecture. \label{alg:gnc:var_update}
		\State \textbf{Weight update}:
			in parallel, server computes \eqref{eq:gnc_weight_update} for all inter-robot measurements $\Ecal_c$, and each robot $\alpha$ computes \eqref{eq:gnc_weight_update} for its local measurements $\Ecal_\alpha$. \label{alg:gnc:weight_update}
		\State \textbf{Parameter update}:
		in parallel, server and all robots updates the control parameter $\mu$.\label{alg:gnc:mu_update}
		\EndWhile
	\end{algorithmic}
\end{algorithm}

Next, we show that our algorithms developed in this work can be used within GNC to perform outlier-robust optimization.
\cref{alg:gnc} shows the pseudocode for robust rotation averaging (the case for translation estimation is analogous).
The main observation is that, in the context of robust rotation averaging and translation estimation, 
the weighted least squares problems \eqref{eq:gnc_variable_update} solved during the \textbf{variable update} step 
have identical forms as \cref{prob:rotation_averaging,prob:translation_recovery}.
The only difference is that each measurement is now discounted by the GNC weight $\wgnc_{ij}$, as shown in \eqref{eq:gnc_variable_update}.
Therefore, we can use \cref{alg:collaborative_rotation_averaging} to perform the variable update for rotation averaging (\cref{alg:gnc:var_update}),
and \cref{alg:collaborative_translation_recovery} for translation estimation.
Furthermore, the \textbf{weight update} step can also be executed under the server-client architecture,
where each robot $\alpha$ computes \eqref{eq:gnc_weight_update} for its local measurements $\Ecal_\alpha$,
and the server handles the inter-robot measurements $\Ecal_c$; see \cref{alg:gnc:weight_update}.
Lastly, the server and all robots can in parallel perform the \textbf{parameter update} step by updating their local copies of the control parameter $\mu$ (\cref{alg:gnc:mu_update}).

\begin{remark}[Implementation details of GNC]
	\label{rm:gnc_implementation}
We discuss several implementation details for GNC.
\begin{itemize}[leftmargin=0.3cm]
	\item \emph{Initialization}. Prior works (\eg, \cite{Tian21KimeraMulti}) have observed that using an outlier-free initial guess when solving the variable update step is critical to ensure good performance.
	For multi-robot SLAM, we adopt the method described in \cite[Section~V-B]{Tian21KimeraMulti} that aligns each robot's odometry 
	in the global reference frame by solving a robust single pose averaging problem.
	Notably, this method does not require iterative communication and hence is very efficient.
	
	\item \emph{Known inliers}.  In many cases, a subset of measurements $\Ecal_\text{in} \subseteq \Ecal$ are known to be inliers.
	For instance, $\Ecal_\text{in}$ may contain robots' odometry measurements. 
	In our implementation, we use the standard least squares cost for $\Ecal_\text{in}$ and only apply GNC on the remaining measurements.
	
	\item \emph{Approximate optimization}.  Recall that each outer iteration of GNC invokes \cref{alg:collaborative_rotation_averaging} or \cref{alg:collaborative_translation_recovery} 
	to perform the variable update step. 
	Thus, when the number of outer iterations is large, the resulting optimization might become expensive in terms of both runtime and communication.
	However, in practice, we observe that GNC only requires a few outer iterations before the resulting estimates stabilize (see \cref{sec:experiments:jackal_dataset}). 
	This suggests that instead of running GNC to full convergence (\ie, fully classifying each measurement as either inlier or outlier),
	we can perform approximate optimization by limiting the number of outer iterations while still achieving robust estimation.
	In our experiments, we set the maximum number of GNC outer iterations to 20. 
\end{itemize}
\end{remark}

We conclude this subsection by noting that the linear convergence results (\cref{thm:rotation_averaging_convergence,thm:translation_recovery_convergence}) we prove in this paper only hold for the outlier-free case.
Extending the linear convergence to the case with outliers is challenging because GNC (and the similar method of iterative reweighted least square) is itself a heuristic.
Nevertheless, our experiments demonstrate that in practice, the proposed outlier-robust extension is very effective and produces accurate solutions on real-world SLAM and SfM problems contaminated by outlier measurements.

}
\section{Experimental Results}
\label{sec:experiments}

{In this section, we extensively evaluate our proposed methods and demonstrate their fast convergence and communication efficiency.
In addition, we show that the combination of our rotation estimation and translation estimation algorithms can be used for accurate PGO initialization.
\cref{sec:experiments:basic,sec:experiments:pgo_datasets} show evaluations using synthetic and benchmark datasets.
Then, \cref{sec:experiments:jackal_dataset} and \cref{sec:experiments:sfm} demonstrate \emph{outlier-robust} estimation using our approach
on real-world collaborative SLAM and SfM problems.
Lastly, \cref{sec:experiments:discussion} provides additional discussions on the performance of our approach in real-world problem instances.}
\edit{All proposed algorithms (including the GNC extension in \cref{sec:gnc}) are implemented in MATLAB.}
\edit{Some experiments use GTSAM \cite{gtsam} and the Theia SfM library \cite{TheiaSfM} for comparison, where we run their original implementations in C++.}
\edit{All experiments are performed on  a computer with an Intel i7-7700K CPU and 16 GB RAM,
and communication is simulated in memory in MATLAB.}

\begin{figure*}[t]
	\centering
	\begin{subfigure}[t]{0.23\linewidth}
		\includegraphics[width=\textwidth, trim=0 0 50 0, clip]{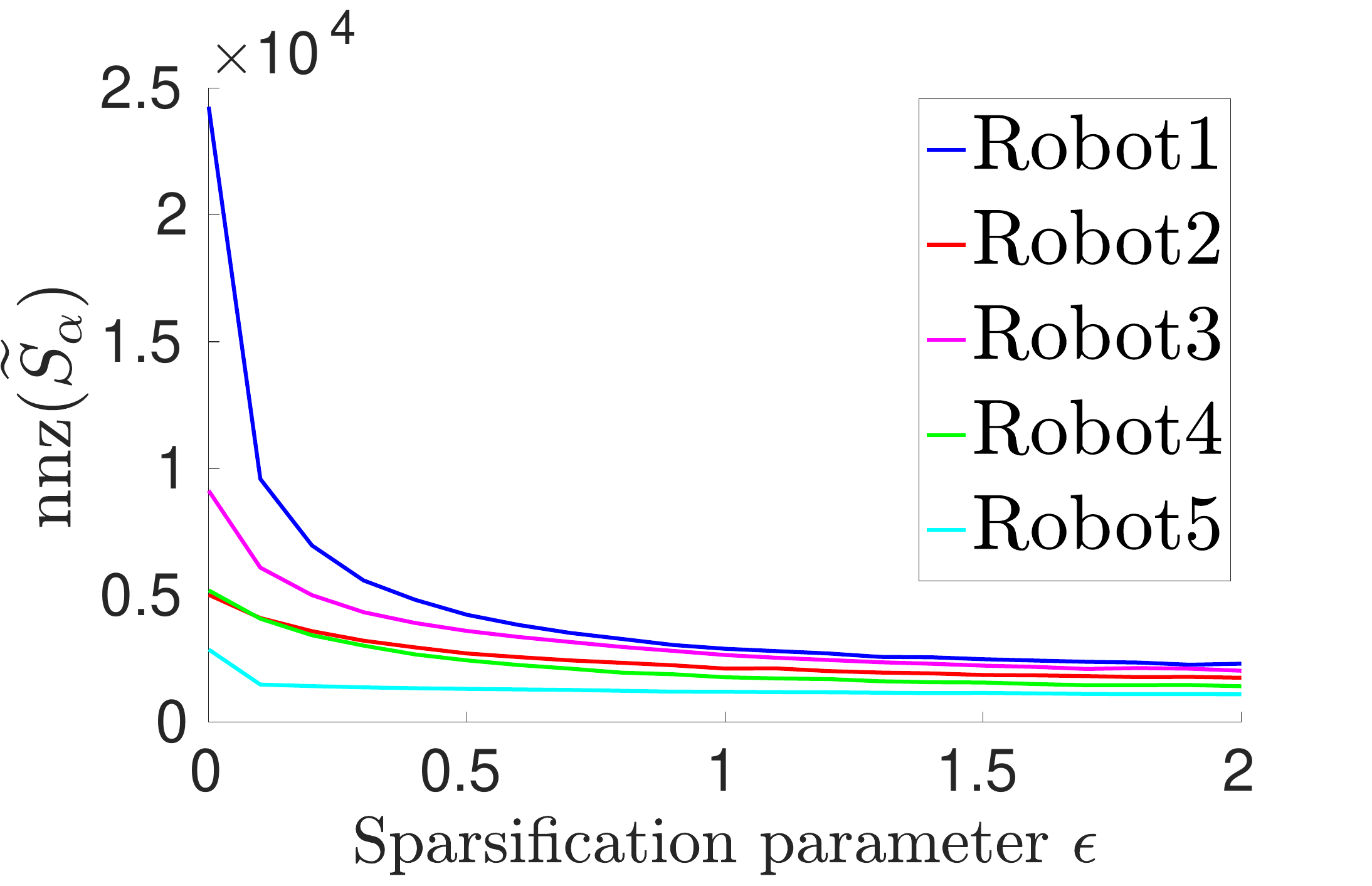}
		\caption{Sparsity of $\Stilde_\alpha$}
		\label{fig:cubicle_eval:nnzS}
	\end{subfigure}
	~
	\begin{subfigure}[t]{0.23\linewidth}
		\includegraphics[width=\textwidth, trim=0 0 80 50, clip]{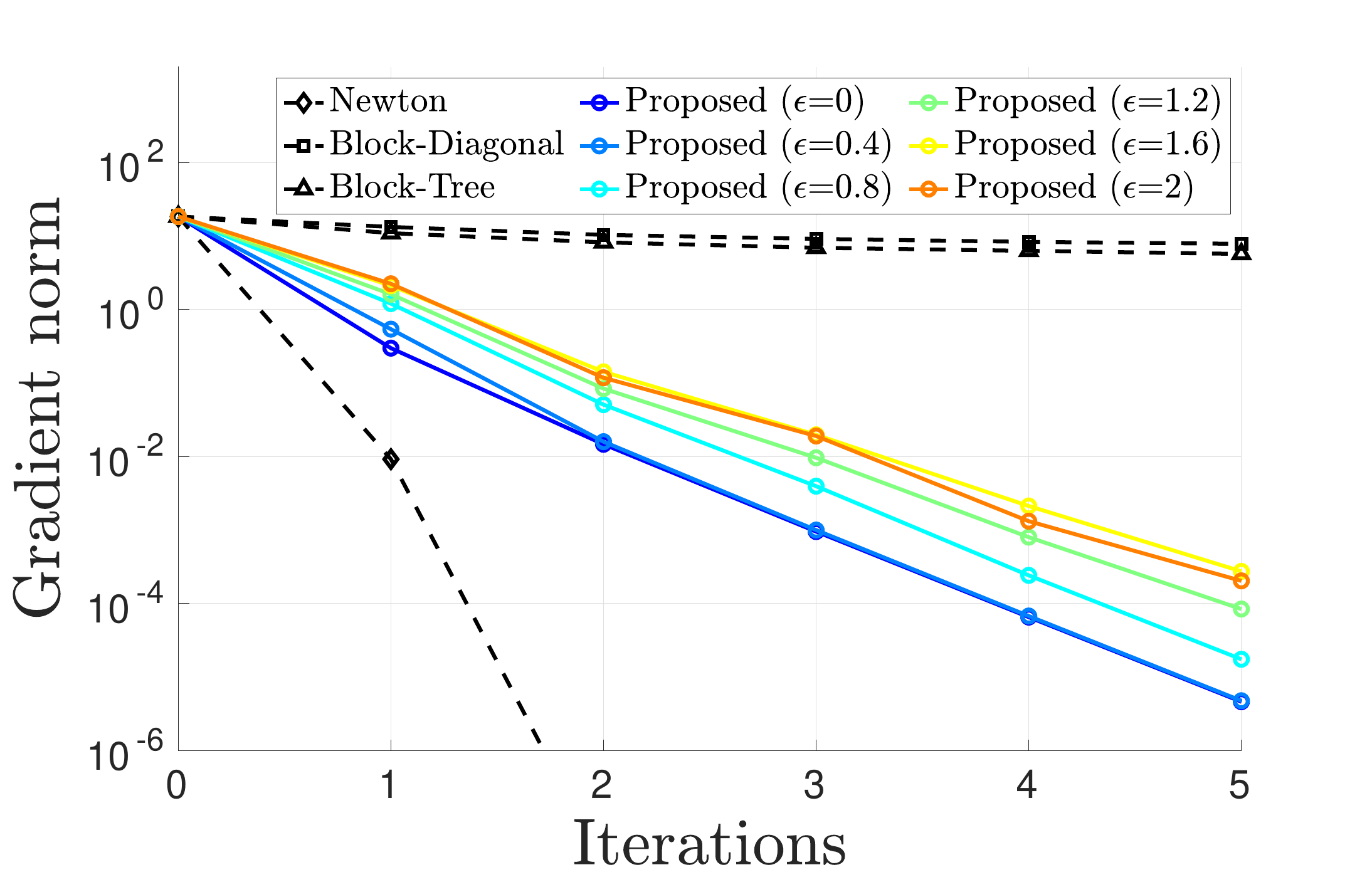}
		\caption{Gradient norm vs. iterations}
		\label{fig:cubicle_eval:iteration}
	\end{subfigure}
	~
	\begin{subfigure}[t]{0.23\linewidth}
		\includegraphics[width=\textwidth, trim=0 0 80 50, clip]{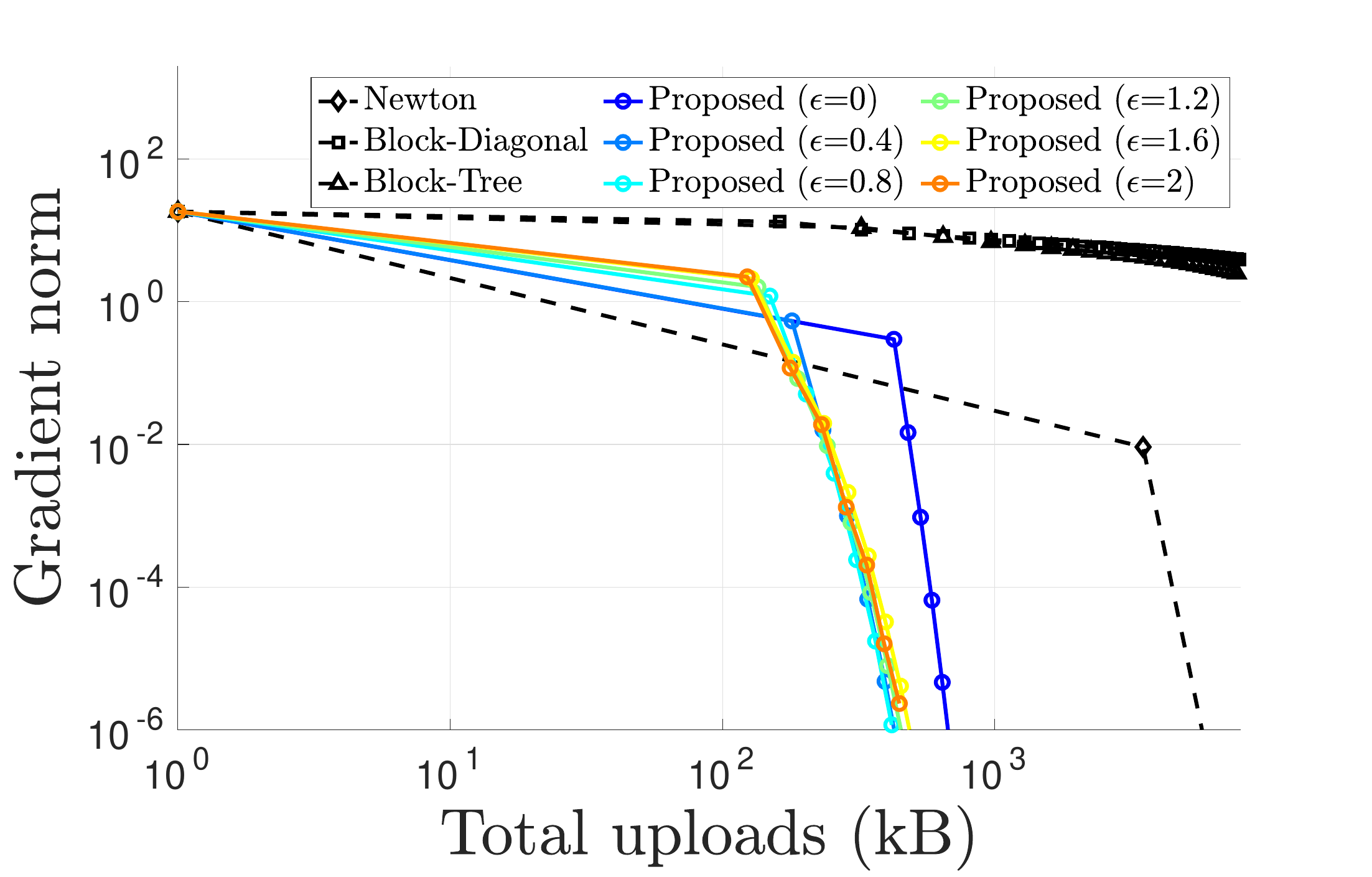}
		\caption{Gradient norm vs. uploads}
		\label{fig:cubicle_eval:communication}
	\end{subfigure}	
	~
	\begin{subfigure}[t]{0.23\linewidth}
		\includegraphics[width=\textwidth, trim=0 0 80 50, clip]{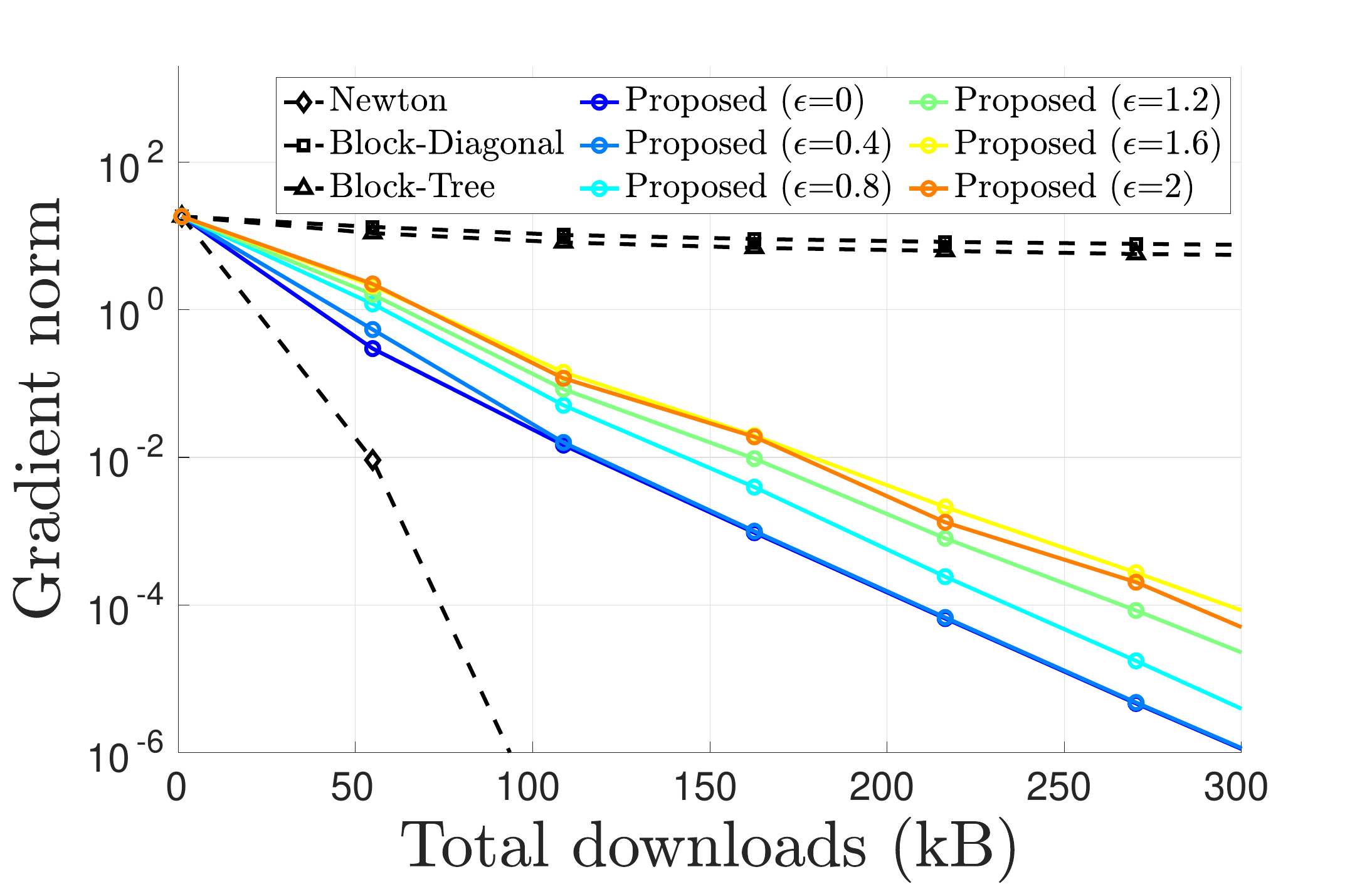}
		\caption{{Gradient norm vs. downloads}}
		\label{fig:cubicle_eval:download}
	\end{subfigure}		
	\caption{
			Evaluation of \cref{alg:collaborative_rotation_averaging} on the 5-robot rotation averaging problem from the \dataset{Cubicle} dataset.
			(a) For each robot $\alpha$, we show the number of nonzero entries (nnz) in its sparsified matrix $\Stilde_\alpha$ as a function sparsification parameter $\epsilon$.
			(b) Evolution of Riemannian gradient norm as a function of iterations.
			(c) Evolution of Riemannian gradient norm as a function of total uploads. 
			{(d) Evolution of Riemannian gradient norm as a function of total downloads.}} 
	\label{fig:cubicle_eval}
        \vspace{-0.5cm}
\end{figure*}

{\myParagraph{Performance Metrics}
	In the experiments, we use the following metrics to evaluate algorithm performance.
	First, we compute the evolution of \emph{gradient norm} that measures the rate of convergence.
	Second, to quantify communication efficiency, we record the \emph{total communication} used by an algorithm.
	For the server-client architecture, communication is reported for both the \emph{upload} and \emph{download} stages. 
	When evaluating the proposed PGO initialization method, we also compute the relative \emph{optimality gap} in the cost function,
	defined as $(f_\text{init} - f_\text{opt}) / f_\text{opt}$, where $f_\text{init}$ and $f_\text{opt}$ denote the cost achieved by our initialization and the global minimizer, respectively.
	Lastly, we also report the \emph{solution distance} to the global minimizer and optionally to the ground truth (the latter is only available in our synthetic experiments).
	Specifically, for rotation estimation, we compute the distance between our solution $\Rhat \in \SOd(d)^n$ and the reference $\Rref \in \SOd(d)^n$ (either global minimizer or ground truth)
	using the orbit distance:
	\begin{equation}
		\RMSE(\Rhat, \Rref) \triangleq \min_{S \in \SOd(d)} \sqrt{\frac{1}{n} \sum_{i=1}^n \norm{S \Rhat_i - \Rref_i}^2_F}.
		\label{eq:rotation_RMSE}
	\end{equation}
	Intuitively, \eqref{eq:rotation_RMSE} computes the root-mean-square error (RMSE) between two sets of rotations after alignment by a global rotation. 
	The optimal alignment $S$ in \eqref{eq:rotation_RMSE} has a closed-form expression; see \cite[Appendix~C.1]{Rosen19IJRR}.
	Similarly, for translations, we report the RMSE between our solution and the reference after a global alignment.}

{\subsection{Evaluation of Estimation Accuracy and Communication Efficiency}\label{sec:experiments:basic}}

{In this section, we evaluate the estimation accuracy and communication efficiency of the proposed methods under varying problem setups and algorithm parameters.}
Unless otherwise mentioned, we initialize \cref{alg:collaborative_rotation_averaging} using the distributed chordal initialization approach in \cite{Choudhary17IJRR}, where the number of iterations is limited to 50.
Our experiments mainly consider rotation averaging problems under the chordal distance metric.
\techreport{{Appendix~\ref{sec:additional_experiments} provides additional results using the geodesic distance.}}
\mainpaper{{We provide additional results using the geodesic distance in \cite[Appendix~VI]{Tian2022Sparsification}.}}

\begin{figure*}[t]
	\centering
	\begin{subfigure}[t]{0.23\linewidth}
		\includegraphics[width=\textwidth, trim=100 100 100 50, clip]{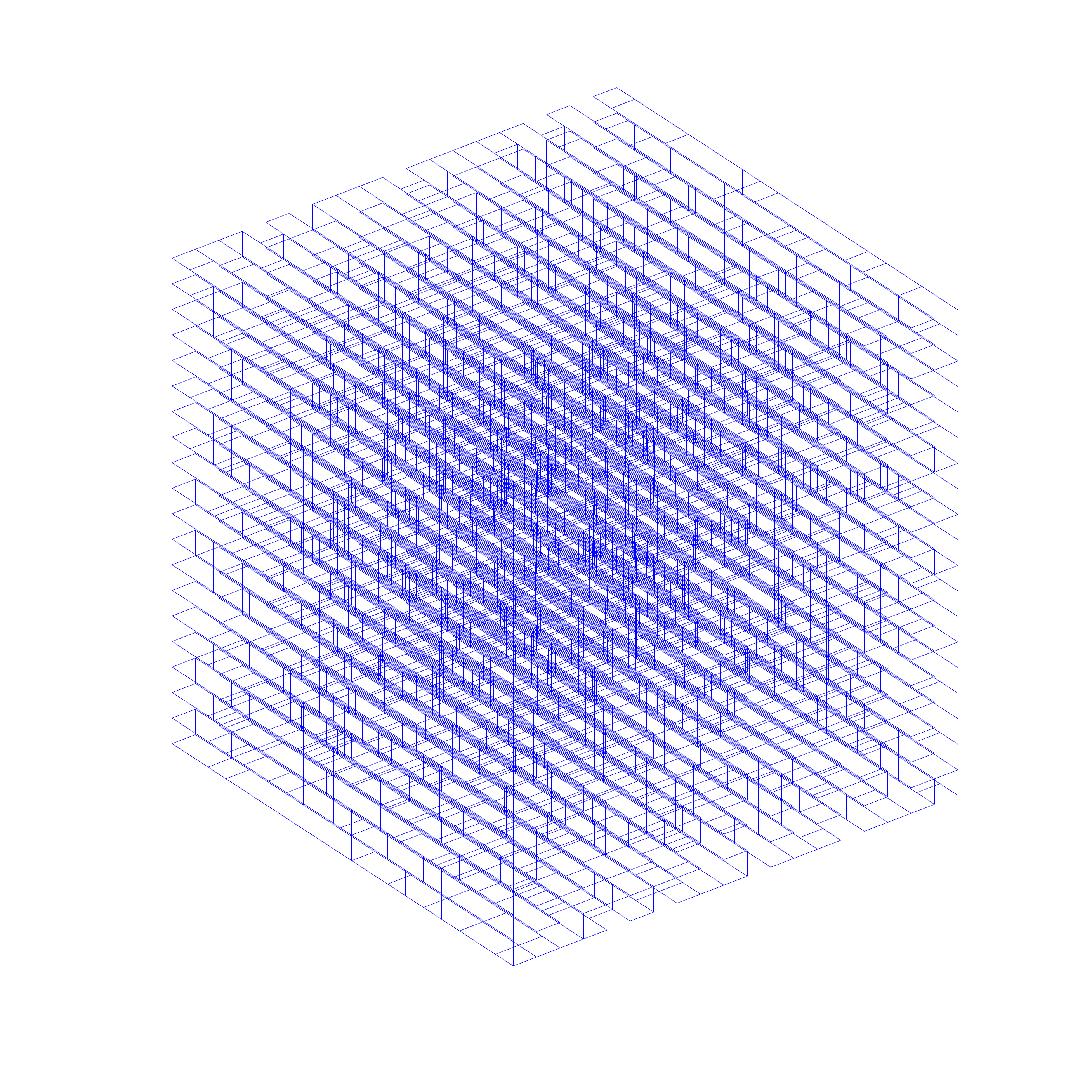}
		\caption{Synthetic rotation averaging}
		\label{fig:num_robots:dataset}
	\end{subfigure}
	~
	\begin{subfigure}[t]{0.23\linewidth}
		\includegraphics[width=\textwidth, trim=0 0 0 0, clip]{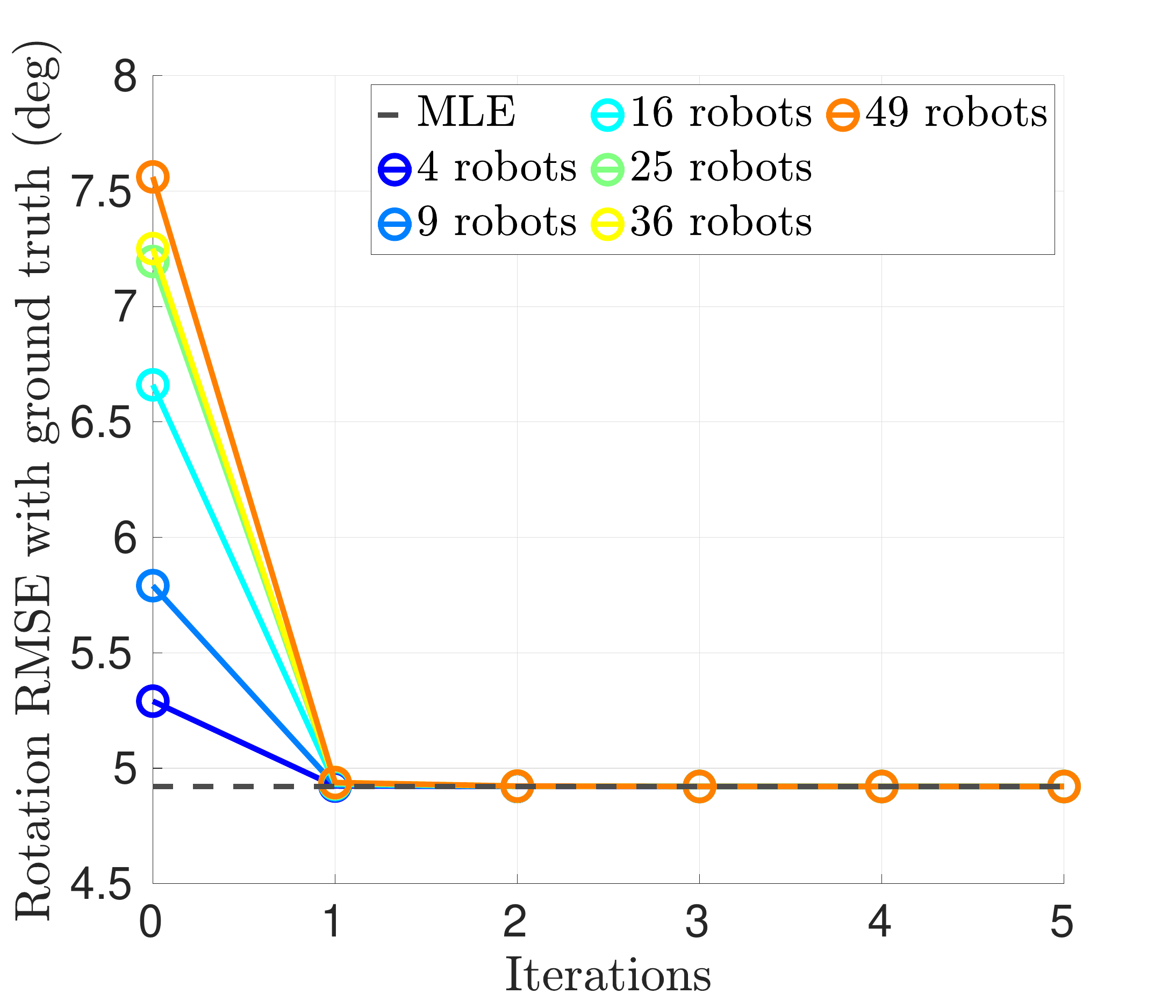}
		\caption{RMSE vs. iterations}
		\label{fig:num_robots:rmse_with_gt}
	\end{subfigure}
	~
	\begin{subfigure}[t]{0.23\linewidth}
		\includegraphics[width=\textwidth, trim=0 0 0 0, clip]{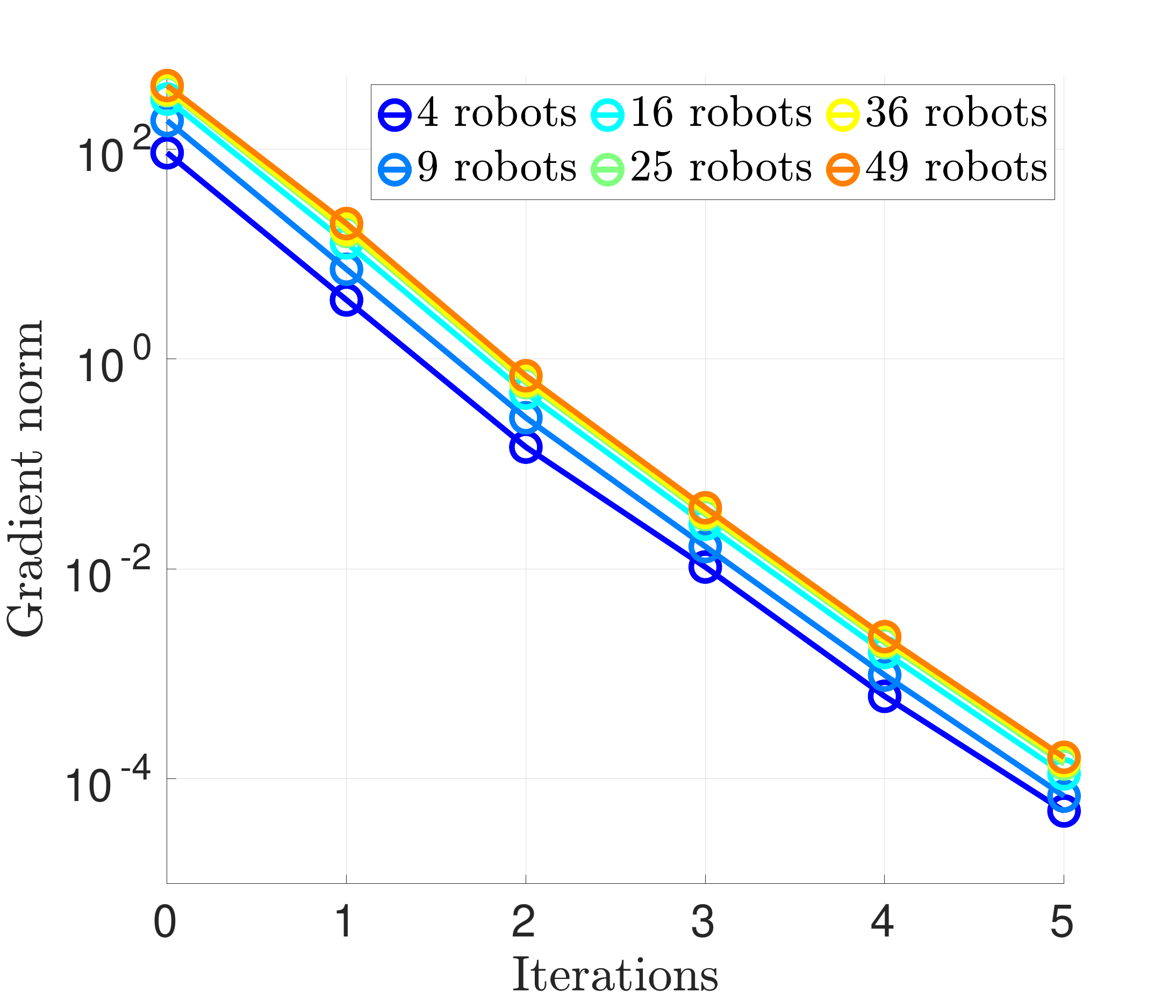}
		\caption{Gradient norm vs. iterations}
		\label{fig:num_robots:iterations}
	\end{subfigure}
	~
	\begin{subfigure}[t]{0.23\linewidth}
		\includegraphics[width=\textwidth, trim=0 0 0 0, clip]{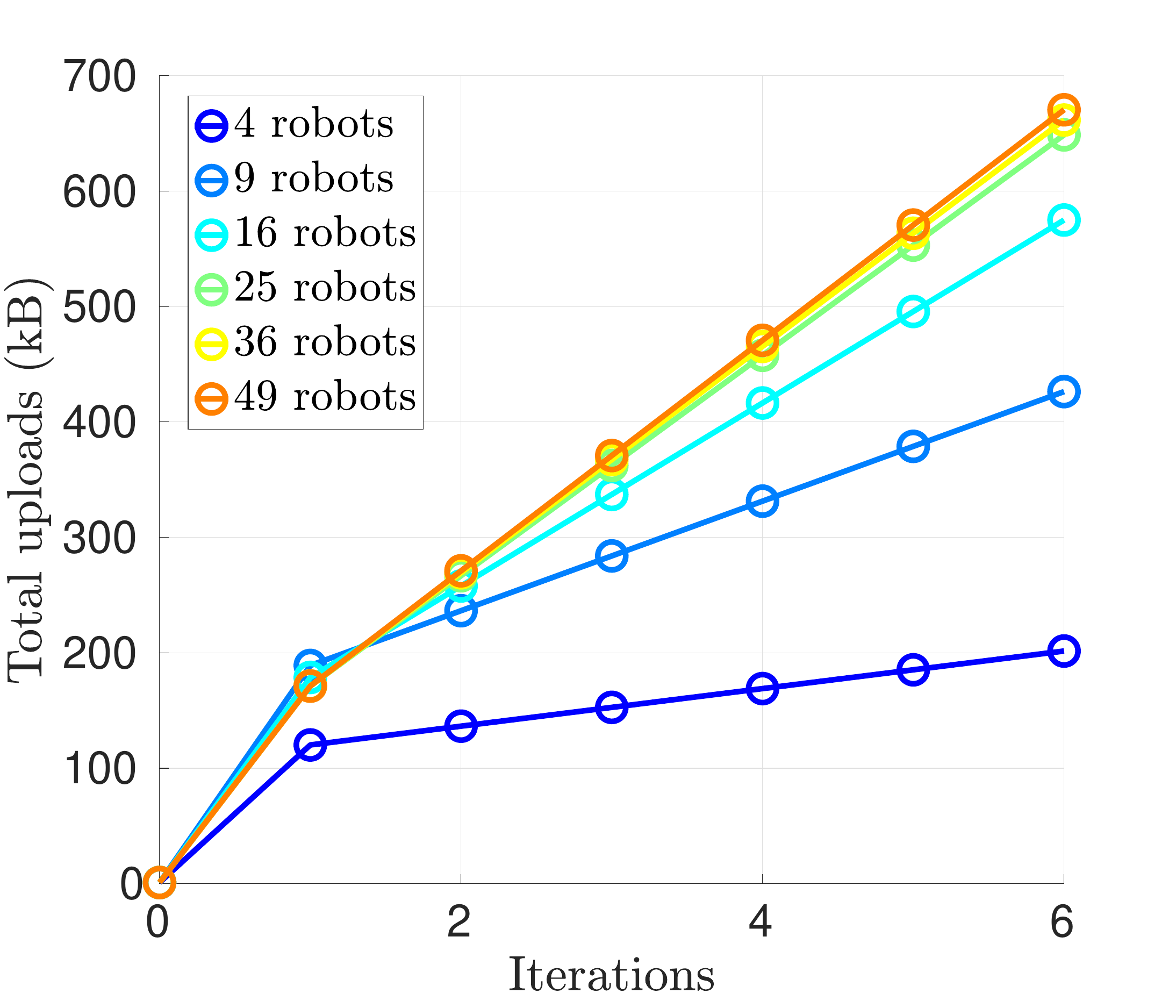}
		\caption{Uploads vs. iterations}
		\label{fig:num_robots:uploads}
	\end{subfigure}	
	\caption{
		{Scalability of \cref{alg:collaborative_rotation_averaging} as the number of robots increases.
			(a) Synthetic chordal rotation averaging problem with 8000 total rotation variables arranged in a 3D grid.
			Each edge indicates a relative rotation measurement corrupted by Langevin noise.
			(b) Evolution of RMSE (in degree) with respect to ground truth rotations.
			(c) Evolution of Riemannian gradient norm as a function of iterations.
			(d) Evolution of total uploads as a function of iterations. }} 
	\label{fig:num_robots}
        \vspace{-0.5cm}
\end{figure*}

{\myParagraph{Impact of Spectral Sparsification on Convergence and Communication}
First, we evaluate the impact of spectral sparsification on convergence rate and communication efficiency.}
{We start by evaluating the proposed collaborative rotation averaging solver (\cref{alg:collaborative_rotation_averaging}), by simulating a 5-robot problem using the \dataset{Cubicle} dataset.}
\techreport{{In Appendix~\ref{sec:additional_experiments}, we present similar analysis for translation estimation.}}
\mainpaper{{In \cite[Appendix~VI]{Tian2022Sparsification}, we present similar analysis for translation estimation.}}
Recall that \cref{alg:collaborative_rotation_averaging} calls the \SparsifiedSchurComplement procedure (\cref{alg:sparsified_schur_complement}), which requires each robot $\alpha$ to transmit its sparsified matrix $\Stilde_\alpha$.
\cref{fig:cubicle_eval:nnzS} shows the number of nonzero entries in $\Stilde_\alpha$ as a function of the sparsification parameter $\epsilon$.
Note that when $\epsilon = 0$, sparsification is effectively skipped and each robot transmits its exact $S_\alpha$ matrix that is potentially large and dense.
In \cref{fig:cubicle_eval:nnzS}, this is reflected on robot 1 (blue curve) whose exact $S_\alpha$ matrix has more than $2 \times 10^4$ nonzero entries and hence is expensive to transmit.
However, spectral sparsification significantly reduces the density of the matrix and hence improves communication efficiency.
In particular, for robot 1, applying sparsification with $\epsilon=2$ creates a sparse $\Stilde_\alpha$ with 2300 nonzero entries, which is much sparser than the original $S_\alpha$.

Next, we evaluate the convergence rate and communication efficiency of \cref{alg:collaborative_rotation_averaging} with varying sparsification parameter $\epsilon$.
We introduce 
three baseline methods for the purpose of comparison.
The first baseline, called \method{Newton} in \cref{fig:cubicle_eval}, implements the exact Newton update using domain decomposition, \edit{where each robot communicates its exact (dense) Schur complement similar to DDF-SAM \cite{Cunningham10DDFSAM}.}
\edit{In addition, we also implement two baselines that apply heuristic sparsification to \method{Newton}:}
in \method{Block-Diagonal}, each robot only transmits the diagonal blocks of its Schur complement,
whereas in \method{Block-Tree}, each robot transmits both diagonal blocks and off-diagonal blocks that form a tree sparsity pattern.
\edit{These two baselines are similar to the Jacobi and tree preconditioning \cite{Kushal12Visibility},
as well as the approximate summarization strategy in DDF-SAM~2.0 \cite{Cunningham13DDFSAM2}.}
\cref{fig:cubicle_eval:iteration} shows the accuracy achieved by all methods (measured by norm of the Riemannian gradient)
as a function of iterations.
As expected, \method{Newton} achieves the best convergence speed and converges to a high-precision solution in two iterations.
However, when combined with heuristic sparsifications in \method{Block-Diagonal} and \method{Block-Tree}, 
the resulting methods have very slow convergence.
Intuitively, this result shows that a diagonal or tree sparsity pattern is not sufficient for preserving the spectrum of the original dense matrix.\techreport{\footnote{
	In centralized optimization (\eg, \cite{Agarwal10BAL,Dellaert10PCG,Wu11MulticoreBA,Kushal12Visibility}), these heuristic sparsifications often serve as preconditioners and need to be used within iterative methods such as conjugate gradient to provide the best performance.}}
In contrast, our proposed method achieves fast convergence under a wide range of sparsification parameter $\epsilon$.
Furthermore, by varying $\epsilon$, the proposed method provides a principled way to trade off convergence speed with communication efficiency.

\cref{fig:cubicle_eval:communication} visualizes the accuracy as a function of total uploads to the server.
Since both the Hessian and Laplacian matrices are symmetric, we only record the communication when uploading their upper triangular parts as sparse matrices.
To convert the result to kilobyte ({\kB}), we assume each scalar is transmitted in double precision.
Our results show that the proposed method achieves the best communication efficiency under various settings of the sparsification parameter $\epsilon$.
Moreover, even without sparsification (\ie, $\epsilon = 0$), the proposed method is still more communication-efficient than \method{Newton}.
This result is due to the following reasons.
First, since the Hessian matrix varies across iterations, \method{Newton} requires communication of the updated Hessian Schur complements at every iteration.
In contrast, the proposed method works with a \emph{constant} graph Laplacian, and hence only requires a one-time communication of its Schur complements; see \cref{alg:collaborative_rotation_averaging:sparse_schur_complement} in \cref{alg:collaborative_rotation_averaging}.
Second, \method{Newton} requires communication to form the Schur complement of the original $pn$-by-$pn$ Hessian matrix, where $n$ is the number of rotation variables and $p = \dim \SOd(d)$ is the intrinsic dimension of the rotation group (for the \dataset{Cubicle} dataset, $n = 5750$ and $p = 3$).
In contrast, the proposed method operates on the smaller \emph{$n$-by-$n$} Laplacian matrix, 
and the decrease in matrix size directly translates to communication reduction.

{Lastly, \cref{fig:cubicle_eval:download} visualizes the accuracy as a function of total communication in the download stage.
	Notice that the evolution follows the same trend as \cref{fig:cubicle_eval:iteration}, where the horizontal axis shows the number of iterations.
	This observation is expected as a result of \cref{rm:comm_requirements_rotation_averaging}, which shows that the 
	communication complexity in the download stage is $O(mK|\Ccal|)$, \ie, the total downloads grows \emph{linearly} with respect to the number of iterations $K$.
}

{
\myParagraph{Scalability with Number of Robots}
In this experiment, we evaluate the scalability of \cref{alg:collaborative_rotation_averaging}.
For this purpose, we generate a large-scale synthetic rotation averaging problem with 8000 rotations arranged in a 3D grid (\cref{fig:num_robots:dataset}).
With probability 0.3, we add relative measurements between nearby rotations, which are corrupted by Langevin noise with a standard deviation of 5~deg.
Then, we divide the dataset to simulate increasing number of robots, and 
run \cref{alg:collaborative_rotation_averaging} with sparsification parameter $\epsilon = 0.5$ until the Riemannian gradient norm reaches $10^{-5}$.
\cref{fig:num_robots:rmse_with_gt} shows the evolution of the estimation RMSE with respect to the ground truth rotations.
For reference, we also show the RMSE achieved by the global minimizer to \cref{prob:rotation_averaging} (denoted as ``MLE'' in the figure).
Note that due to measurement noise, the MLE is in general different from the ground truth.
The proposed method is able to achieve an RMSE similar to the MLE after a single iteration, despite the worse initialization as the number of robots increases.
\cref{fig:num_robots:iterations} shows the evolution of gradient norm as a function of iterations.
Note that all curves in \cref{fig:num_robots:iterations} have similar slopes, 
which suggests that the empirical convergence rate of our method is not sensitive to the number of robots.
This observation is compatible with the (local) convergence rate established in Theorem~3, which does not depend on the number of robots $m$.
This property makes our method more appealing than existing fully distributed methods, whose convergence speed typically degrades as the number of robots increases (\eg, see \cite[Fig.~8]{Tian21DC2PGO}).
Lastly, \cref{fig:num_robots:uploads} shows the evolution of total uploads as a function of iterations.
As we divide the dataset to simulate more robots, both the number of inter-robot measurements and the number of separators $|\Ccal|$ increase, and thus each iteration requires more communication.}

\begin{figure}[t]
	\centering
	\begin{subfigure}[t]{0.48\linewidth}
		\includegraphics[width=\textwidth, trim=0 0 0 0, clip]{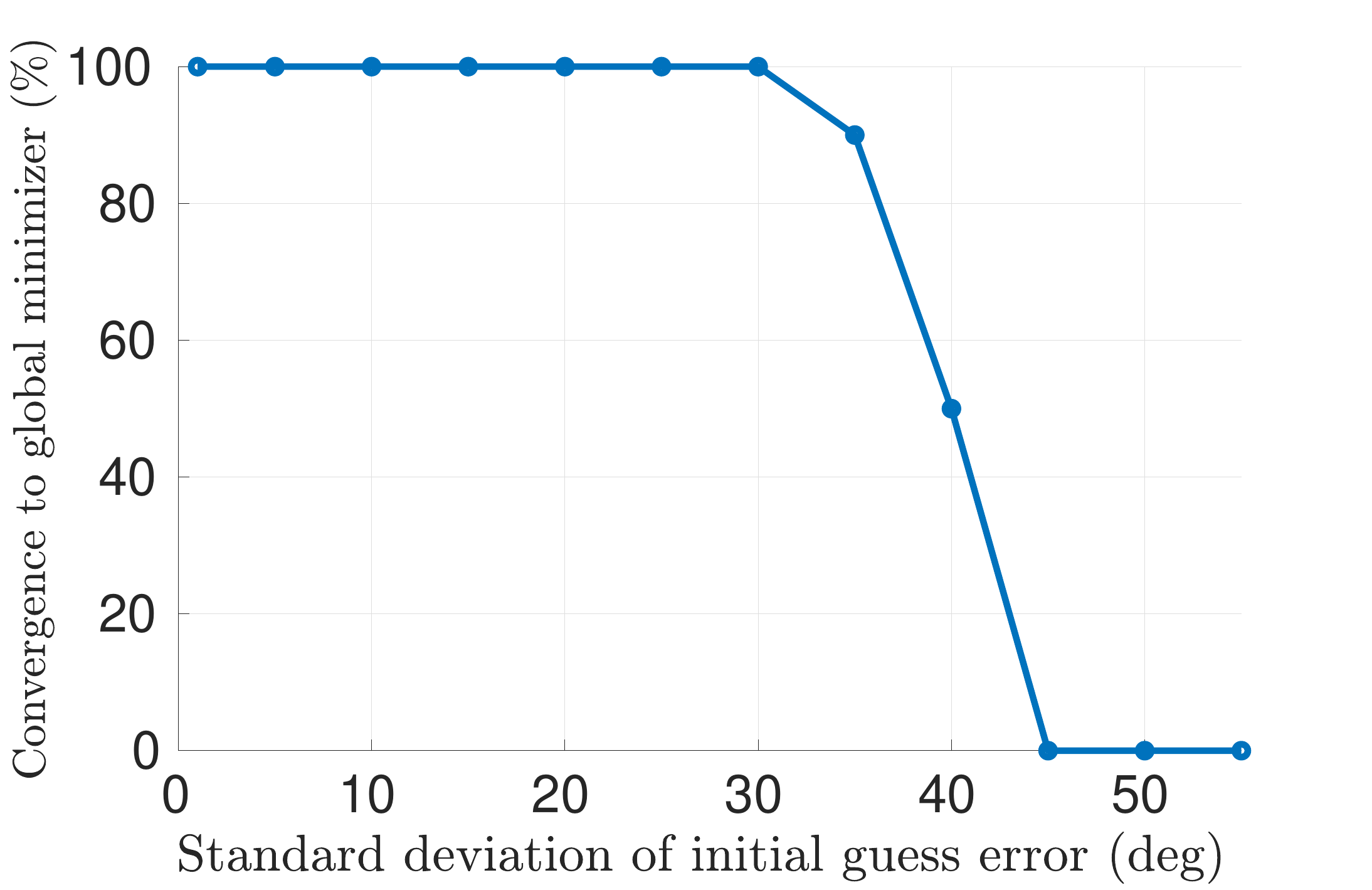}
		\caption{Success rate}
		\label{fig:convergence_basin:success_rate}
	\end{subfigure}
	~
	\begin{subfigure}[t]{0.48\linewidth}
		\includegraphics[width=\textwidth]{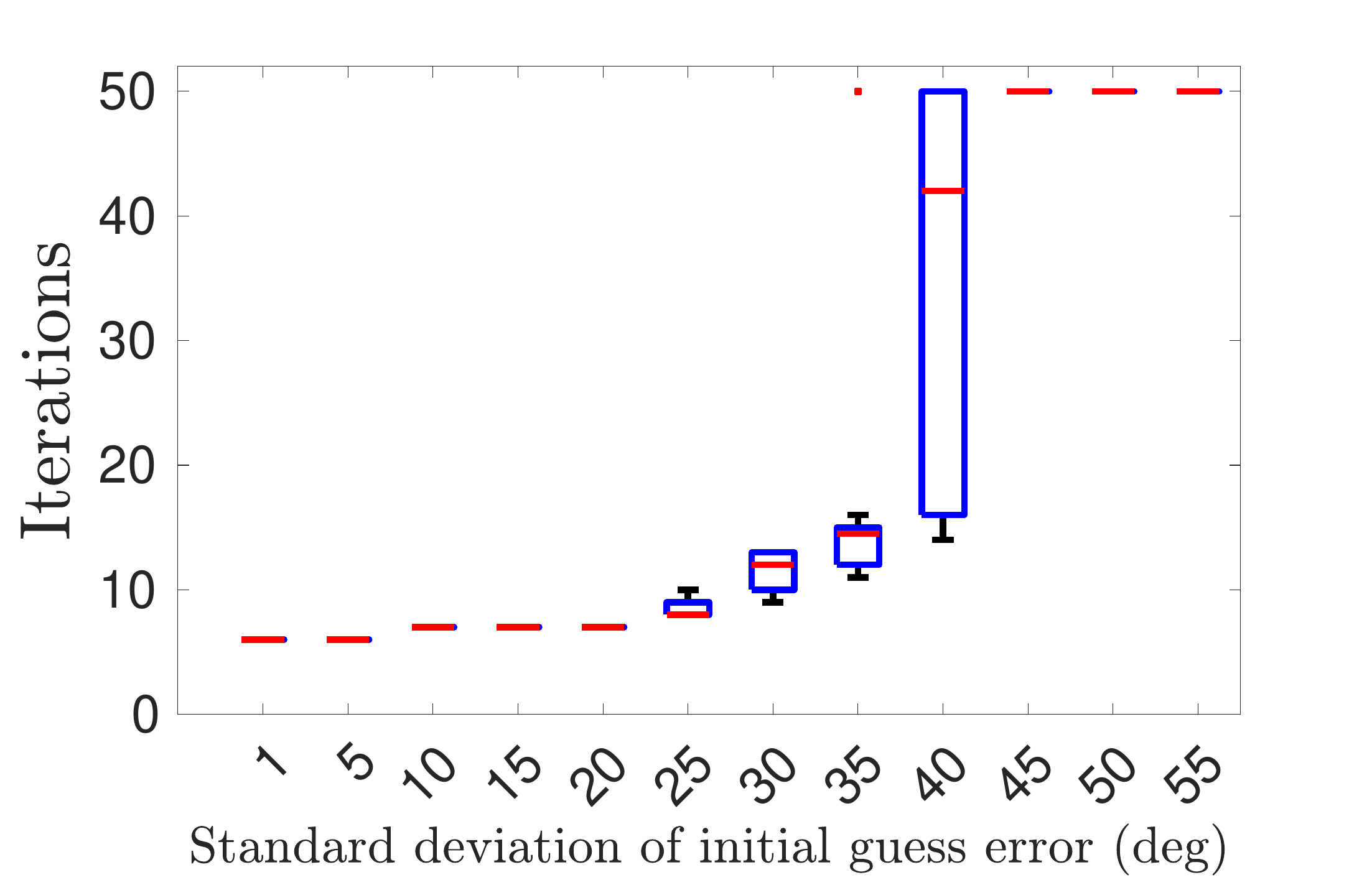}
		\caption{Number of iterations}
		\label{fig:convergence_basin:iterations}
	\end{subfigure}	
	\caption{
		{Sensitivity of \cref{alg:collaborative_rotation_averaging} to accuracy of initial guess.}
		{We generate synthetic initial guesses with degrading accuracy by perturbing the global minimizer with increasing levels of Langevin noise.}
		At each level of perturbation, 10 random runs are performed.
		(a) Percentage of runs that converge to the global minimizer.
		(b) Boxplot of number of iterations used by \cref{alg:collaborative_rotation_averaging}.} 
	\label{fig:convergence_basin}
        \vspace{-0.5cm}
\end{figure}

{\myParagraph{Sensitivity to Initial Guess}}
So far, we have used the distributed chordal initialization technique \cite{Choudhary17IJRR} to initialize  \cref{alg:collaborative_rotation_averaging}.
In the next experiment, we test the sensitivity of our proposed method to poor initial guesses.
For this purpose, we use a 9-robot simulation where each robot owns 512 rotation variables, and generate synthetic initial guesses by perturbing the global minimizer with increasing level of Langevin noise.
Using the synthetic initialization, we run \cref{alg:collaborative_rotation_averaging} with sparsification parameter $\epsilon=0.5$ until the Riemannian gradient norm reaches $10^{-5}$ or the number of iterations exceeds 50.
At each noise level, 10 random runs are performed.
\cref{fig:convergence_basin:success_rate}  shows the fraction of trials that successfully converge to the global minimizer.
We observe that \cref{alg:collaborative_rotation_averaging} enjoys a large convergence basin:
the success rate only begins to decrease at a large {initial guess error} of $35$~deg.
\cref{fig:convergence_basin:iterations} shows the number of iterations used by \cref{alg:collaborative_rotation_averaging} to reach convergence.
Our results suggest that the proposed method is not sensitive to the quality of initialization and usually requires a small number of iterations to converge.

\begin{figure}[t]
	\centering
	\begin{subfigure}[t]{0.48\linewidth}
		\includegraphics[width=\textwidth, trim=0 0 0 0, clip]{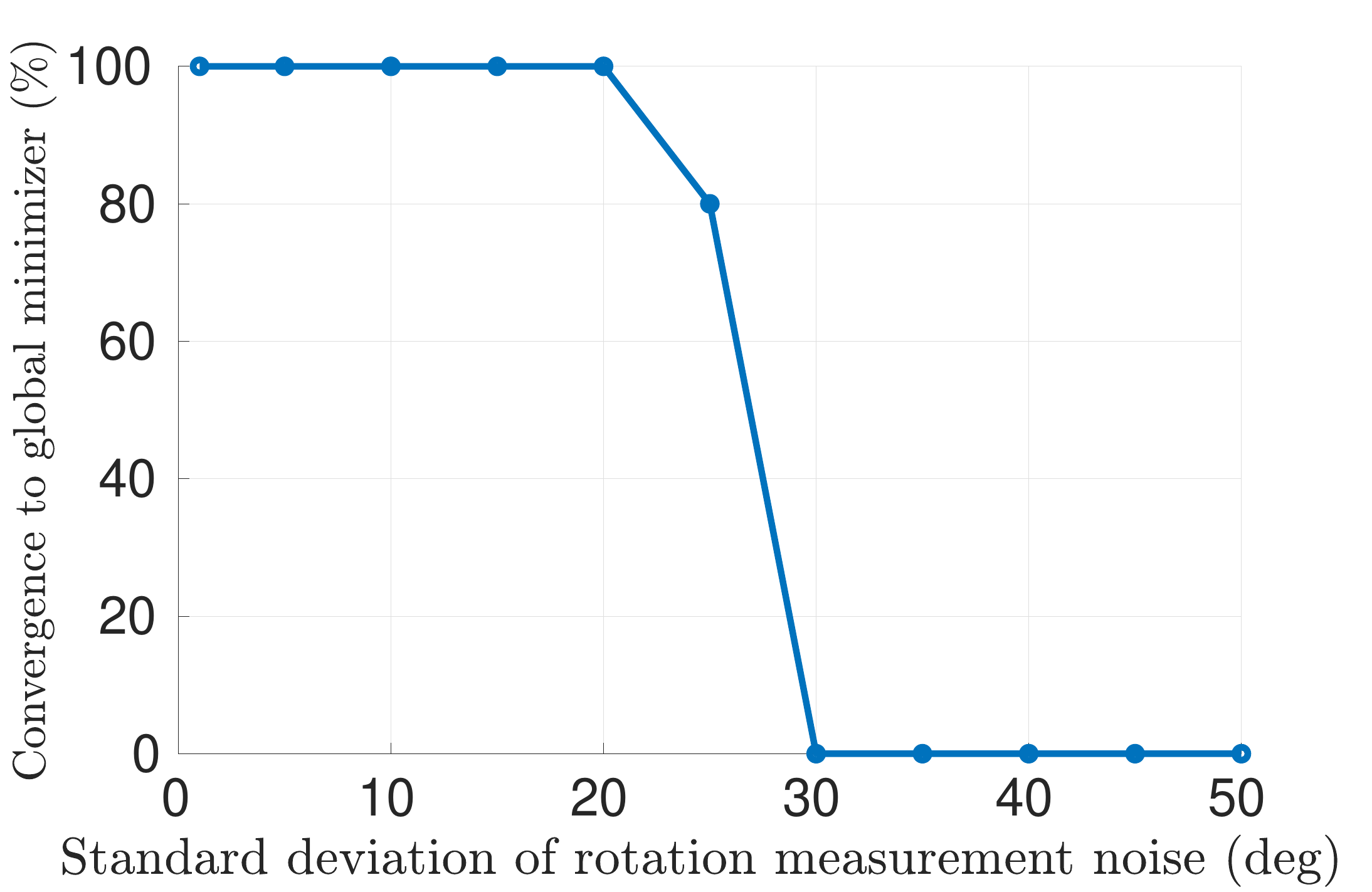}
		\caption{Success rate}
		\label{fig:noise_level:success_rate}
	\end{subfigure}
	~
	\begin{subfigure}[t]{0.48\linewidth}
		\includegraphics[width=\textwidth]{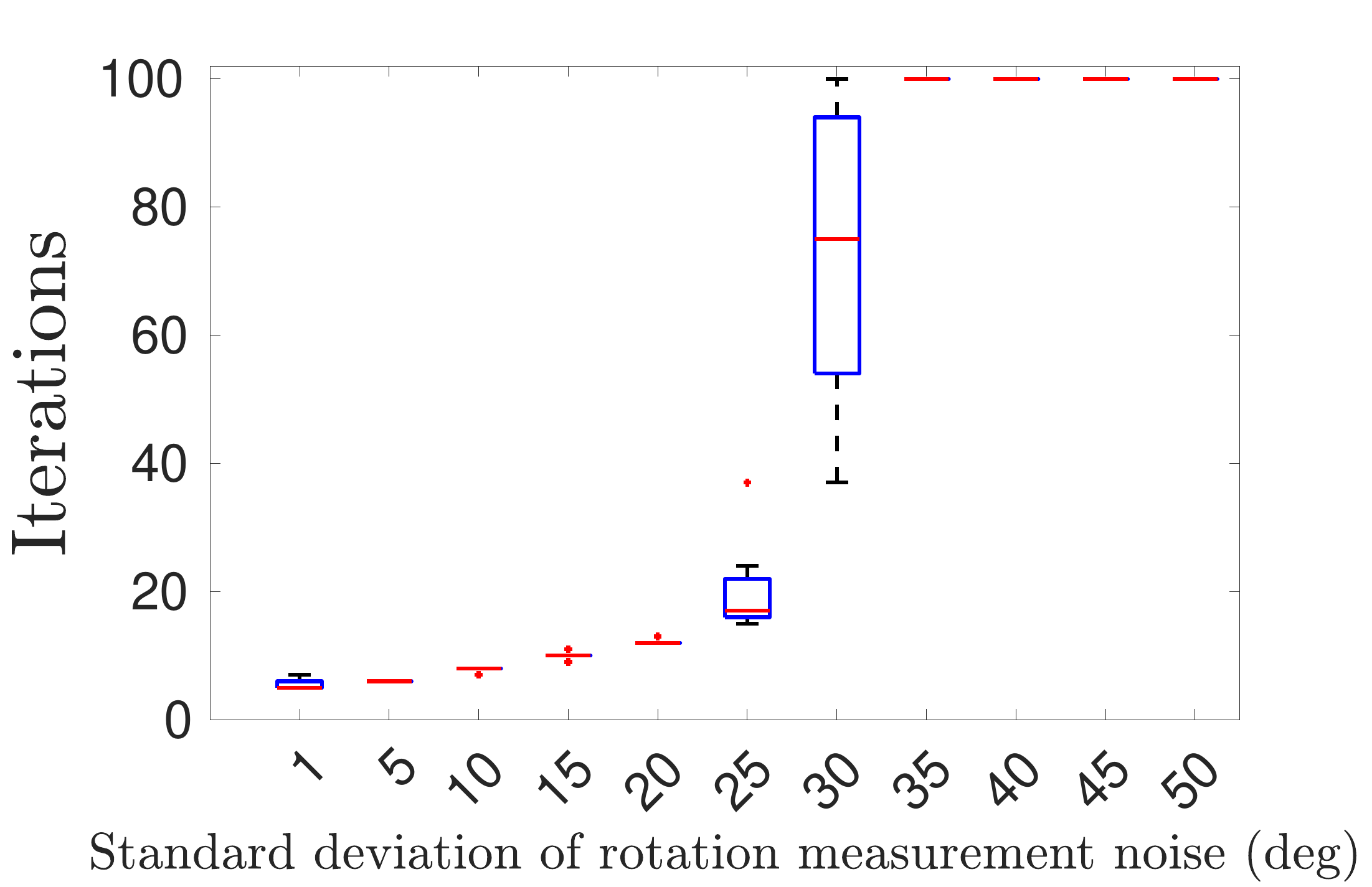}
		\caption{Number of iterations}
		\label{fig:noise_level:iterations}
	\end{subfigure}	
	\caption{
		{Sensitivity of \cref{alg:collaborative_rotation_averaging} to rotation measurement noise.
			We generate synthetic chordal rotation averaging problems with increasing magnitude of measurement noise.
			At each noise level, 10 random runs are performed.
			(a) Percentage of runs that converge to the global minimizer.
			(b) Boxplot of number of iterations used by \cref{alg:collaborative_rotation_averaging}.}} 
	\label{fig:noise_level}
        \vspace{-0.5cm}
\end{figure}

{\myParagraph{Sensitivity to Measurement Noise}
Next, we analyze the sensitivity of \cref{alg:collaborative_rotation_averaging} to increasing levels of measurement noise.
The setup is similar to the previous experiment, where we use a 9-robot simulation and each robot owns 512 rotations.
However, instead of varying the quality of the initial guess, we vary the noise level when generating the synthetic problem.
\cref{fig:noise_level} shows the results.
We find that \cref{alg:collaborative_rotation_averaging} is relatively more sensitive to the measurement noise, and start to converge to suboptimal local minima as the noise level increases above $25$~deg.
Nevertheless, we note that the level of rotation noise encountered in practice is usually much lower,\footnote{Here we only consider rotation noise of \emph{inlier} measurements. \emph{Outlier} measurements will be handled using the robust optimization framework presented in \cref{sec:gnc}.} and thus we expect our algorithm to still provide effective estimation (see real-world evaluations in \cref{sec:experiments:jackal_dataset,sec:experiments:sfm}).}

{\myParagraph{Outlier-Robust Optimization}
Lastly, we evaluate the proposed outlier-robust optimization method to solve robust rotation averaging problems.
In this experiment, we use a 9-robot simulation where each robot owns 512 rotations.
In \cref{sec:experiments:jackal_dataset,sec:experiments:sfm}, we demonstrate our method on real-world SLAM and SfM problems.
As in common SLAM scenarios, we assume each robot has a backbone of odometry measurements within its own trajectory that are free of outliers.
Then, with increasing probability, we replace the remaining measurements (corresponding to intra-robot and inter-robot loop closures) with gross outliers.
{All inlier measurements (including odometry) are corrupted by Langevin noise with a standard deviation of $3$~deg,
and we set the TLS threshold $\ebar$ to correspond to $10$~deg.}
\cref{fig:outlier:rmse} visualizes the RMSE with respect to ground truth rotations.
As expected, \cref{alg:collaborative_rotation_averaging} without GNC is not robust to outliers and shows significant error as soon as outlier measurements are introduced.
Nevertheless, by using \cref{alg:collaborative_rotation_averaging} within GNC as described in \cref{sec:gnc}, the resulting approach becomes robust and is able to tolerate {up to 70\% of outlier loop closures}. 
In \cref{fig:outlier:iterations}, we study the efficiency of our approach by showing the total number of inner iterations of \cref{alg:collaborative_rotation_averaging} used by GNC.
Recall that each inner iteration also corresponds to a single round of communication.
When the outlier ratio is zero, GNC reduces to the standard \cref{alg:collaborative_rotation_averaging} and only requires a few iterations to converge.
When outliers are added, GNC requires multiple outer iterations and thus multiple calls to \cref{alg:collaborative_rotation_averaging}, resulting in increased communication rounds.
Nevertheless, for all test cases with less than 70\% outlier measurements, the number of communication rounds is approximately 100, which is a reasonable requirement for a real system.}

\begin{figure}[t]
	\centering
	\begin{subfigure}[t]{0.48\linewidth}
		\includegraphics[width=\textwidth, trim=0 0 0 0, clip]{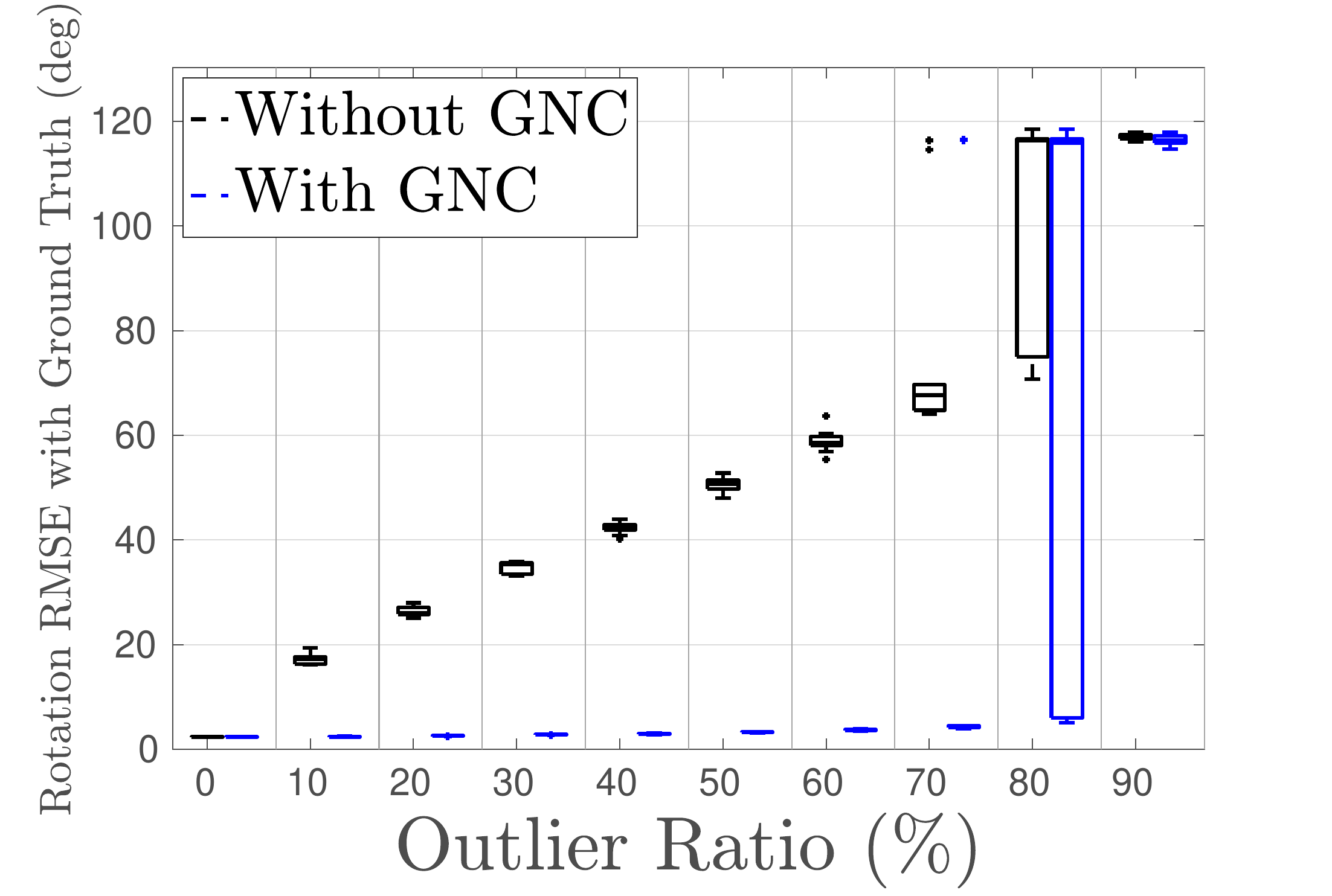}
		\caption{RMSE with ground truth}
		\label{fig:outlier:rmse}
	\end{subfigure}
	~
	\begin{subfigure}[t]{0.48\linewidth}
		\includegraphics[width=\textwidth]{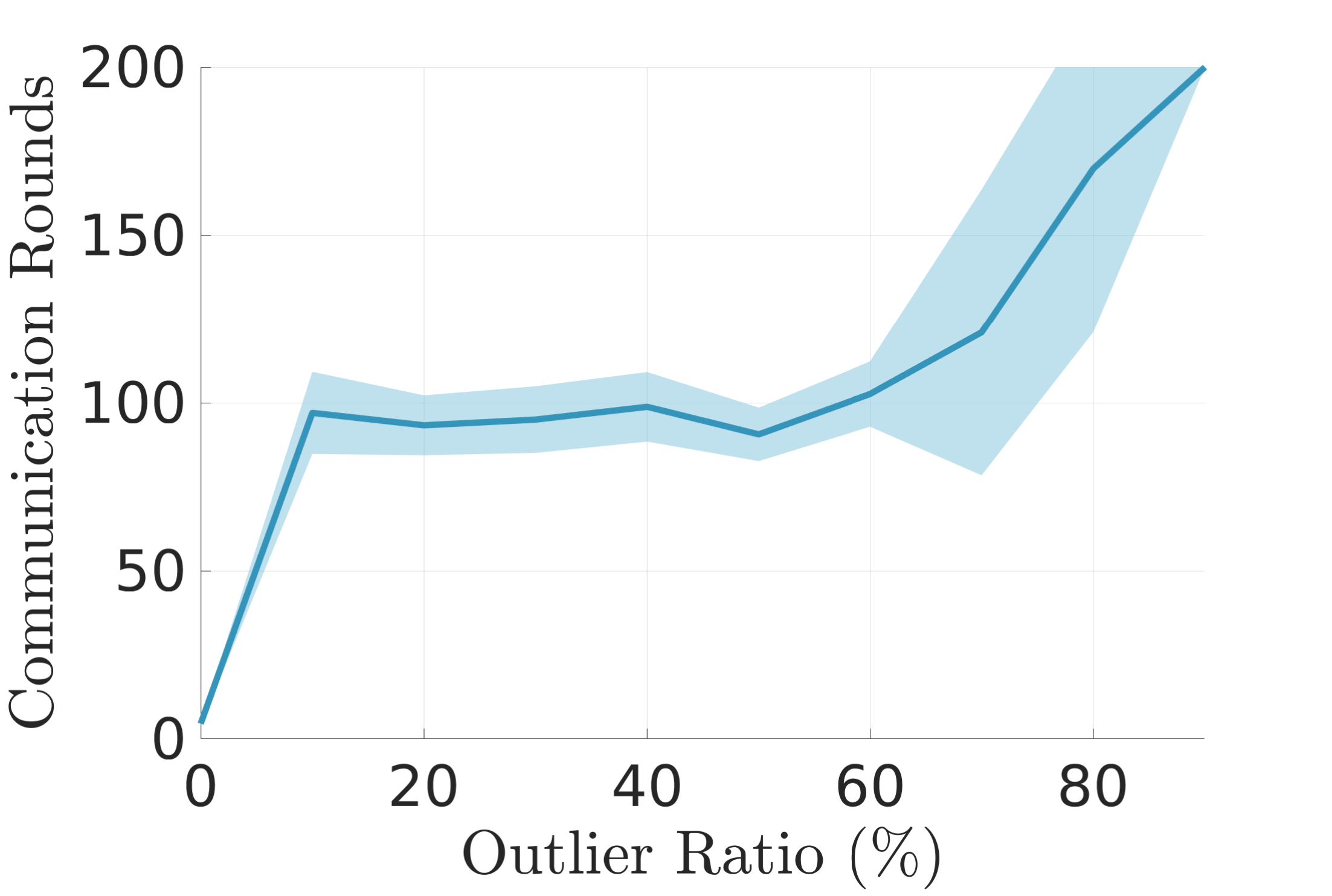}
		\caption{Communication rounds}
		\label{fig:outlier:iterations}
	\end{subfigure}	
	\caption{
		{Evaluation of robust optimization on synthetic rotation averaging problems corrupted by increasing percentage of outlier loop closures.
			At each outlier percentage, 10 random runs are performed.
		(a) RMSE with respect to ground truth rotations.
		(b) Communication rounds used by GNC. Sold line and shaded area correspond to the mean and one standard deviation, respectively.}
	} 
	\label{fig:outlier}
    \vspace{-0.5cm}
\end{figure}


\begin{table*}[t]
	\centering
	\renewcommand{\arraystretch}{1.1}
	\caption{Rotation averaging on benchmark SLAM datasets with 5 robots.
		$|\Vcal|$ and $|\Ecal|$ denote the total number of rotation variables and measurements, respectively.
		We run the baseline \method{Newton} method and the proposed method (\cref{alg:collaborative_rotation_averaging}) with sparsification parameter $\epsilon=1.5$,
		and compare the number of iterations, uploads, and downloads to reach a Riemannian gradient norm of $10^{-5}$.
		{For the proposed method, we also show the sparsity achieved by sparsification (lower is better).}
		Results averaged across 5 runs.
	}
	\label{tab:rotation_averaging}
	\resizebox{\textwidth}{!}{%
		\begin{tabular}{|l|r|r| rr|rr|rr|r|}
			\hline
			\multirow{2}{*}{Dataset} &
			\multicolumn{1}{c|}{\multirow{2}{*}{$|\Vcal|$}} &
			\multicolumn{1}{c|}{\multirow{2}{*}{$|\Ecal|$}} &
			\multicolumn{2}{c|}{Iterations} &
			\multicolumn{2}{c|}{Upload (kB)} &
			\multicolumn{2}{c|}{Download (kB)} &
			\multicolumn{1}{c|}{\multirow{2}{*}{\begin{tabular}[c]{@{}c@{}}Achieved sparsity \\ by proposed (\%)\end{tabular}}} \\ \cline{4-9}
			&
			\multicolumn{1}{c|}{} &
			\multicolumn{1}{c|}{} &
			\multicolumn{1}{c|}{Newton} &
			\multicolumn{1}{c|}{Proposed} &
			\multicolumn{1}{c|}{Newton} &
			\multicolumn{1}{c|}{Proposed} &
			\multicolumn{1}{c|}{Newton} &
			\multicolumn{1}{c|}{Proposed} &
			\multicolumn{1}{c|}{} \\ \hline
			Killian Court (2D) & 808   & 827   & \multicolumn{1}{r|}{2} & 3    & \multicolumn{1}{r|}{1.6}      & 1.1    & \multicolumn{1}{r|}{0.5}   & 0.8    & 100  \\ \hline
			CSAIL (2D)         & 1045  & 1171  & \multicolumn{1}{r|}{2} & 4    & \multicolumn{1}{r|}{7.2}      & 5.9    & \multicolumn{1}{r|}{2.3}   & 4.6    & 97.3 \\ \hline
			INTEL (2D)         & 1228  & 1483  & \multicolumn{1}{r|}{3} & 4.2  & \multicolumn{1}{r|}{10.5}     & 5.8    & \multicolumn{1}{r|}{3.3}   & 4.6    & 96.4 \\ \hline
			Manhattan (2D)     & 3500  & 5453  & \multicolumn{1}{r|}{2} & 5    & \multicolumn{1}{r|}{118.9}    & 49.5   & \multicolumn{1}{r|}{12.5}  & 31.3   & 38.7 \\ \hline
			KITTI 00 (2D)      & 4541  & 4676  & \multicolumn{1}{r|}{2} & 2    & \multicolumn{1}{r|}{13.2}     & 6.6    & \multicolumn{1}{r|}{4.4}   & 4.4    & 100  \\ \hline
			City (2D)          & 10000 & 20687 & \multicolumn{1}{r|}{2} & 4    & \multicolumn{1}{r|}{450.3}    & 351.5  & \multicolumn{1}{r|}{129}   & 258.1  & 97.3 \\ \hline
			Garage (3D)        & 1661  & 6275  & \multicolumn{1}{r|}{1} & 2    & \multicolumn{1}{r|}{274.4}    & 88.9   & \multicolumn{1}{r|}{35.8}  & 71.6   & 93.2 \\ \hline
			Sphere (3D)        & 2500  & 4949  & \multicolumn{1}{r|}{2} & 8.6  & \multicolumn{1}{r|}{2548.8}   & 106    & \multicolumn{1}{r|}{19.2}  & 82.6   & 16.9 \\ \hline
			Torus (3D)         & 5000  & 9048  & \multicolumn{1}{r|}{3} & 9.6  & \multicolumn{1}{r|}{10423.7}  & 229.5  & \multicolumn{1}{r|}{57}    & 182.2  & 12.4 \\ \hline
			Grid (3D)          & 8000  & 22236 & \multicolumn{1}{r|}{3} & 9.2  & \multicolumn{1}{r|}{206871.6} & 886.6  & \multicolumn{1}{r|}{220.8} & 677    & 2.7  \\ \hline
			Cubicle (3D)       & 5750  & 16869 & \multicolumn{1}{r|}{2} & 6.8  & \multicolumn{1}{r|}{7015}     & 440.3  & \multicolumn{1}{r|}{107.7} & 366.2  & 19.9 \\ \hline
			Rim (3D)           & 10195 & 29743 & \multicolumn{1}{r|}{4} & 23.4 & \multicolumn{1}{r|}{53657.9}  & 1320.9 & \multicolumn{1}{r|}{209.1} & 1223.2 & 6.6  \\ \hline
		\end{tabular}%
	}
\end{table*}

\subsection{Evaluation on Benchmark PGO Datasets}
\label{sec:experiments:pgo_datasets}

{In this subsection, we evaluate our approach on 12 benchmark pose graph SLAM datasets.
For these datasets, we do not explicitly handle outliers. 
Outlier-robust estimation will be evaluated in
\cref{sec:experiments:jackal_dataset,sec:experiments:sfm}.
}

{\myParagraph{Evaluation on Rotation Averaging Subproblem}
We first evaluate \cref{alg:collaborative_rotation_averaging} on the rotation averaging subproblems extracted from the benchmark datasets.
For each problem, we simulate a scenario with 5 robots, and run the proposed method (\cref{alg:collaborative_rotation_averaging}) with sparsification parameter $\epsilon=1.5$ and the baseline \method{Newton} method.}
Both methods are terminated when the Riemannian gradient norm is smaller than $10^{-5}$.
Since the spectral sparsification method we use \cite{Spielman2011Sampling} is randomized, 
we perform 5 random runs of our method.
\cref{tab:rotation_averaging} shows the {average number of iterations, uploads, and downloads} to reach the desired precision.
On all datasets, we are able to verify that all methods converge to the global minima of the considered rotation averaging problems.
The proposed method achieves an empirical convergence speed that is close to \method{Newton}
and typically converges in a few iterations.\footnote{
	One notable exception is the \dataset{Rim} dataset, for which our method uses more than 20 iterations to converge. 
	A closer investigation reveals that this dataset actually contains some outlier measurements.
	Specifically, at the global minimizer $\Rstar$, there are 28 measurements $\Rtilde_{ij}$ for which $\dist(\Rstar_i \Rtilde_{ij}, \Rstar_j) > 60~\text{deg}$.
	Since these outliers have large residuals, their contributions to the Hessian can no longer be well approximated by the corresponding Laplacian terms.
	As a result, the performance of our method is negatively impacted.}
We note that this is significantly faster than existing fully distributed methods (\eg, \cite{Tian21DC2PGO,Fan2021MajorizationJournal}) that often require hundreds of iterations to achieve moderate precision.
{For both \method{Newton} and the proposed method, the total download is proportional to the number of iterations (see \cref{rm:comm_requirements_rotation_averaging}). Thus, our method uses more downloads since it requires more iterations.
However, we note that compared to the download stage, the upload stage is more communication-intensive since robots need to transmit (potentially dense) Schur complements to the server. Using spectral sparsification, the proposed approach achieves significant reduction in uploads, especially on challenging datasets such as \dataset{Grid} and \dataset{Rim}. 
Finally, the last column of \cref{tab:rotation_averaging} shows the achieved sparsification as the percentage of nonzero elements that remain after spectral sparsification.}
We observe that the benefit of sparsification varies across datasets.
For example, on \dataset{Killian Court} and \dataset{INTEL}, the effect of sparsification is limited because the exact Schur complement $S$ is already sparse.
Meanwhile, on datasets such as \dataset{Grid} and \dataset{Rim}, the benefit of sparsification is substantial and the results have less than 10\% nonzero elements.
{In \cref{sec:experiments:discussion}, we provide a thorough discussion on the impact of problem properties on sparsification performance.}

\begin{figure}[t]
	\centering
	\includegraphics[width=0.25\textwidth,trim=0 0 50 0, clip]{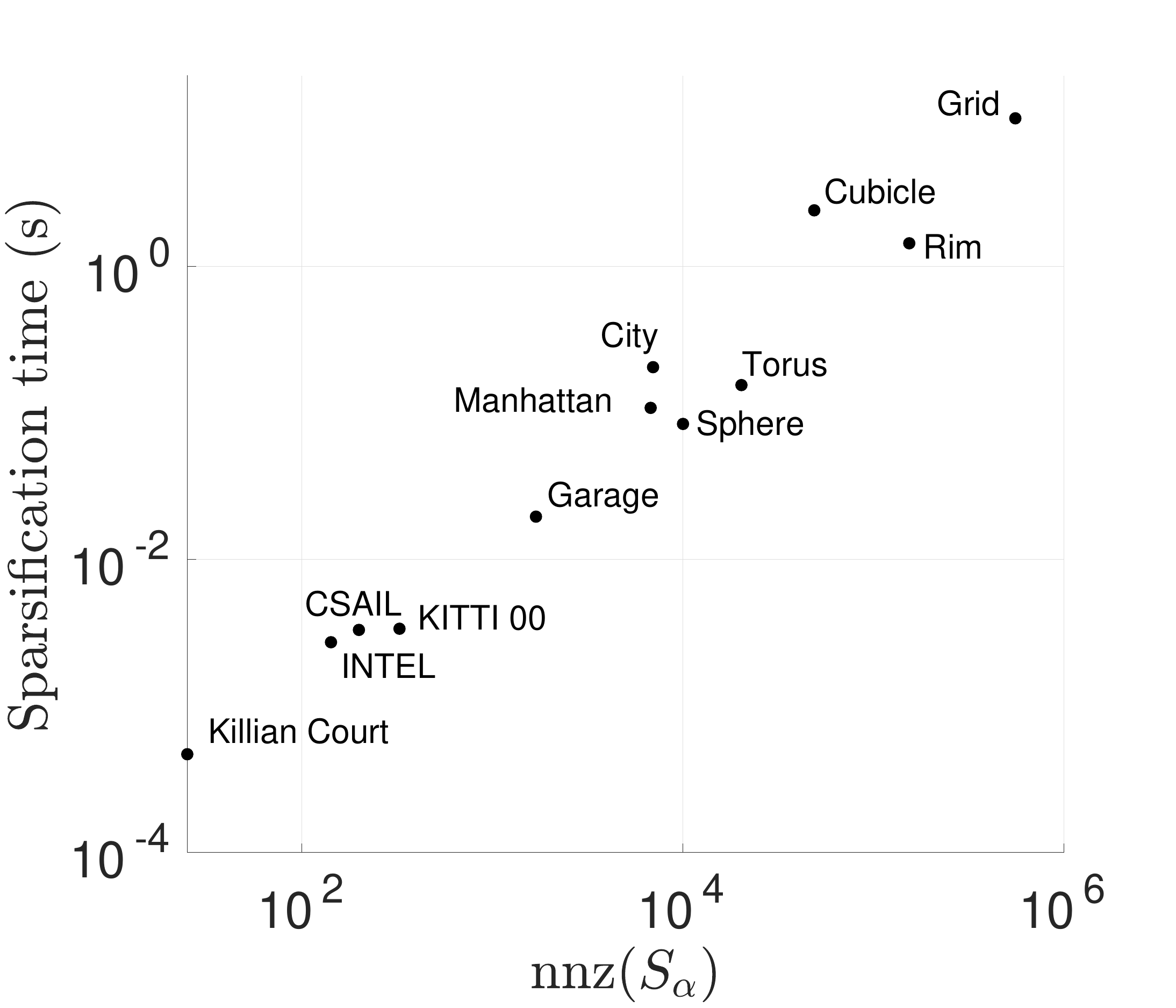}
	\caption{Spectral sparsification runtime on benchmark datasets.} 
	\label{fig:sparsification_time}
        \vspace{-0.5cm}
\end{figure}

\myParagraph{Sparsification Runtime}
Recall that in the \SparsifiedSchurComplement step in \cref{alg:collaborative_rotation_averaging}, each robot $\alpha$ sparsifies its $S_\alpha$ matrix and transmits the result $\Stilde_\alpha$ to the server.
This step uses the majority of robots' local computation time. 
In \cref{fig:sparsification_time}, we evaluate the runtime of the sparsification algorithm \cite{Spielman2011Sampling} on the 12 benchmark datasets shown in \cref{tab:rotation_averaging}. 
For each dataset, we record the maximum sparsification time among all robots, and visualize the result
as a function of the number of nonzero entries in the input matrix $S_\alpha$.
On most datasets, the maximum runtime is below one second.
On the \dataset{Grid} dataset, the input matrix has more than $5 \times 10^5$ nonzero entries and our implementation uses $10.2$ seconds.
Overall, we conclude that the runtime of our implementation is still reasonable.
However, we believe that further improvements are possible, \eg, by approximately computing effective resistances during spectral sparsification as suggested in \cite{Spielman2011Sampling}.


\begin{table*}[t]
	\centering
	\renewcommand{\arraystretch}{1.1}
	\caption{{PGO initialization on benchmark SLAM datasets with 5 robots. 
			$e^\Sigma$ measures the average error in marginal covariance due to decoupled rotation and translation estimation.
			Optimality gap and RMSE are computed with respect to optimal solutions from \method{SE-Sync}~\cite{Rosen19IJRR}.
			In addition, we also show the number of iterations and communication (both upload and download)
			used by the rotation and translation estimation stages in our approach,
			and compare the results with \method{RBCD++}~\cite{Tian21DC2PGO} to achieve the same optimality gap.}}
	\label{tab:pgo_initialization}
	\resizebox{\textwidth}{!}{%
		\begin{tabular}{|l|r|r|r|r|rr|rrr|rrr|}
			\hline
			\multicolumn{1}{|c|}{\multirow{2}{*}{Dataset}} & \multicolumn{1}{c|}{\multirow{2}{*}{$|\Vcal|$}} & \multicolumn{1}{c|}{\multirow{2}{*}{$|\Ecal|$}} & \multicolumn{1}{c|}{\multirow{2}{*}{$e^\Sigma$}} & \multicolumn{1}{c|}{\multirow{2}{*}{\begin{tabular}[c]{@{}c@{}}Optimality\\ Gap\end{tabular}}} & \multicolumn{2}{c|}{RMSE with optimal PGO solution} & \multicolumn{3}{c|}{Iterations} & \multicolumn{3}{c|}{Total communication (kB)} \\ \cline{6-13} 
			\multicolumn{1}{|c|}{} & \multicolumn{1}{c|}{} & \multicolumn{1}{c|}{} & \multicolumn{1}{c|}{} & \multicolumn{1}{c|}{} & \multicolumn{1}{c|}{Rotation (deg)} & \multicolumn{1}{c|}{Translation (m)} & \multicolumn{1}{c|}{Rot.} & \multicolumn{1}{c|}{Tran.} & \multicolumn{1}{c|}{RBCD++} & \multicolumn{1}{c|}{Rot.} & \multicolumn{1}{c|}{Tran.} & \multicolumn{1}{c|}{RBCD++} \\ \hline
			Killian Court (2D) & 808 & 827 & 0.76 & 0.12 & \multicolumn{1}{r|}{4.48} & 4.12 & \multicolumn{1}{r|}{5} & \multicolumn{1}{r|}{2} & \edit{141} & \multicolumn{1}{r|}{3.0} & \multicolumn{1}{r|}{2.4} & \edit{202} \\ \hline
			CSAIL (2D) & 1045 & 1171 & 0.33 & $4.6 \times 10^{-4}$ & \multicolumn{1}{r|}{0.06} & 0.01 & \multicolumn{1}{r|}{3} & \multicolumn{1}{r|}{3} & \edit{367} & \multicolumn{1}{r|}{8.3} & \multicolumn{1}{r|}{15.2} & \edit{$1.9 \times 10^3$} \\ \hline
			INTEL (2D) & 1228 & 1483 & 0.21 & $2.2 \times 10^{-3}$ & \multicolumn{1}{r|}{0.36} & 0.03 & \multicolumn{1}{r|}{4} & \multicolumn{1}{r|}{4} & \edit{109} & \multicolumn{1}{r|}{10.0} & \multicolumn{1}{r|}{18.7} & \edit{589} \\ \hline
			Manhattan (2D) & 3500 & 5453 & 0.92 & 0.15 & \multicolumn{1}{r|}{1.75} & 0.47 & \multicolumn{1}{r|}{4} & \multicolumn{1}{r|}{5} & \edit{42} & \multicolumn{1}{r|}{72.8} & \multicolumn{1}{r|}{147.8} & \edit{982} \\ \hline
			KITTI 00 (2D) & 4541 & 4676 & 0.86 & 0.33 & \multicolumn{1}{r|}{0.46} & 0.64 & \multicolumn{1}{r|}{3} & \multicolumn{1}{r|}{2} & \edit{1000} & \multicolumn{1}{r|}{15.4} & \multicolumn{1}{r|}{19.8} & \edit{$1.1 \times 10^4$} \\ \hline
			City (2D) & 10000 & 20687 & 0.95 & 0.12 & \multicolumn{1}{r|}{0.63} & 0.18 & \multicolumn{1}{r|}{4} & \multicolumn{1}{r|}{4} & \edit{44} & \multicolumn{1}{r|}{611} & \multicolumn{1}{r|}{$1.1 \times 10^3$} & \edit{$8.4 \times 10^3$} \\ \hline
			Garage (3D) & 1661 & 6275 & 0.99 & 0.12 & \multicolumn{1}{r|}{0.43} & 0.33 & \multicolumn{1}{r|}{2} & \multicolumn{1}{r|}{2} & \edit{66} & \multicolumn{1}{r|}{161} & \multicolumn{1}{r|}{162} & \edit{$4.0 \times 10^3$} \\ \hline
			Sphere (3D) & 2500 & 4949 & 0.87 & 0.17 & \multicolumn{1}{r|}{1.39} & 0.38 & \multicolumn{1}{r|}{7} & \multicolumn{1}{r|}{7} & \edit{1} & \multicolumn{1}{r|}{185} & \multicolumn{1}{r|}{185} & \edit{32} \\ \hline
			Torus (3D) & 5000 & 9048 & 0.25 & 0.01 & \multicolumn{1}{r|}{2.15} & 0.07 & \multicolumn{1}{r|}{8} & \multicolumn{1}{r|}{6} & \edit{7} & \multicolumn{1}{r|}{394} & \multicolumn{1}{r|}{317} & \edit{305} \\ \hline
			Grid (3D) & 8000 & 22236 & 0.43 & 0.03 & \multicolumn{1}{r|}{1.22} & 0.06 & \multicolumn{1}{r|}{8} & \multicolumn{1}{r|}{8} & \edit{5} & \multicolumn{1}{r|}{$1.7 \times 10^3$} & \multicolumn{1}{r|}{$1.7 \times 10^3$} & \edit{$1.2 \times 10^3$} \\ \hline
			Cubicle (3D) & 5750 & 16869 & 0.86 & 0.18 & \multicolumn{1}{r|}{1.53} & 0.16 & \multicolumn{1}{r|}{7} & \multicolumn{1}{r|}{6} & \edit{101} & \multicolumn{1}{r|}{869} & \multicolumn{1}{r|}{722} & \edit{$9.3 \times 10^3$} \\ \hline
			Rim (3D) & 10195 & 29743 & 0.79 & 0.63 & \multicolumn{1}{r|}{4.95} & 0.78 & \multicolumn{1}{r|}{25} & \multicolumn{1}{r|}{6} & \edit{179} & \multicolumn{1}{r|}{$2.8 \times 10^3$} & \multicolumn{1}{r|}{748} & \edit{$1.3 \times 10^4$} \\ \hline
		\end{tabular}%
	}
\end{table*}

{\myParagraph{Initialization for PGO}
	Lastly, we evaluate the use of our methods to initialize PGO.
	Recall from \cref{sec:problem_definition:applications} that our initialization scheme involves two stages.
	First, we initialize rotations by solving the rotation averaging subproblem in PGO.
	\edit{In particular, we run \cref{alg:collaborative_rotation_averaging} using an initial guess computed from a spanning tree of the pose graph.}
	This also demonstrates that our method does not need to rely on distributed chordal initialization \cite{Choudhary17IJRR}, which is itself an iterative procedure.
	\edit{Then, fixing the rotation estimates in PGO,
	we initialize translations by solving the resulting translation estimation subproblem using \cref{alg:collaborative_translation_recovery}, where \cref{alg:collaborative_translation_recovery} is simply initialized at zero.}
	\cref{tab:pgo_initialization} reports the optimality gap and estimation RMSE of our initialization method compared to the optimal PGO solutions computed using \method{SE-Sync} \cite{Rosen19IJRR}.
	Our results show that the quality of initialization varies across datasets.
	In general,
	since our initialization method decouples the estimation of rotations and translations,
	we expect its performance to degrade when there is significant coupling between rotation and translation terms in the full PGO problem.
	To investigate this hypothesis, 
	we treat PGO as an inference problem over factor graphs \cite{Dellaert17FactorGraph}
	and consider the covariance $\SigmaPGO$ of the pose estimates at the optimal solution.
	We compare $\SigmaPGO$ with the corresponding covariance $\SigmaInit$ produced by our two-stage initialization,
	where the rotation and translation blocks of $\SigmaInit$ are extracted from rotation averaging and translation estimation, respectively.
	Since both covariance matrices are large and dense, we only compute their diagonal blocks $\SigmaPGO_i$ and $\SigmaInit_i$ that correspond to the marginal covariances of pose $i$.
	We quantify the error introduced by  decoupled rotation and translation estimation by computing the normalized error
	$e^{\Sigma}_i = \norm{\SigmaPGO_i - \SigmaInit_i }_F / \norm{\SigmaPGO_i}_F$.
	\cref{tab:pgo_initialization} reports $e^\Sigma$, which is the average of $e^{\Sigma}_i $  over all poses.
	We find that the results separate all datasets into two groups.
	\dataset{INTEL}, \dataset{CSAIL}, \dataset{Torus}, and \dataset{Grid} have small values of $e^\Sigma$, and our initialization achieves the best performance, especially in terms of optimality gap.
	On the remaining datasets with larger values of $e^\Sigma$, the two-stage initialization produces worse results.
	Lastly, we note that \dataset{Rim} is a special case due to the presence of outlier measurements.}

\begin{figure*}[t]
	\centering
	\begin{subfigure}[t]{0.23\linewidth}
		\includegraphics[width=\textwidth, trim=0 120 90 150, clip]{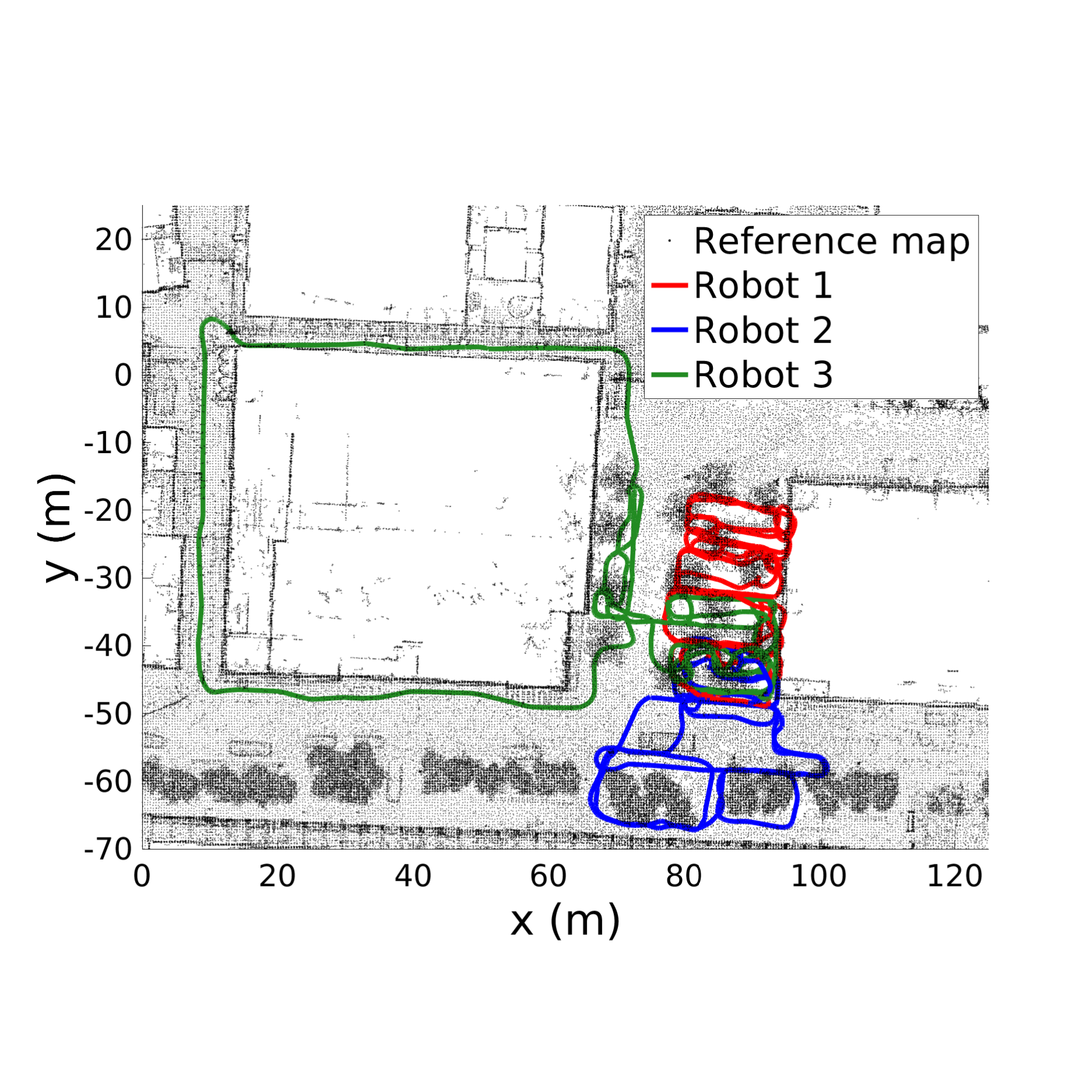}
		\caption{{Dataset visualization}}
		\label{fig:jackal_dataset:visualization}
	\end{subfigure}
	~
	\begin{subfigure}[t]{0.23\linewidth}
		\includegraphics[width=\textwidth, trim=0 0 50 0, clip]{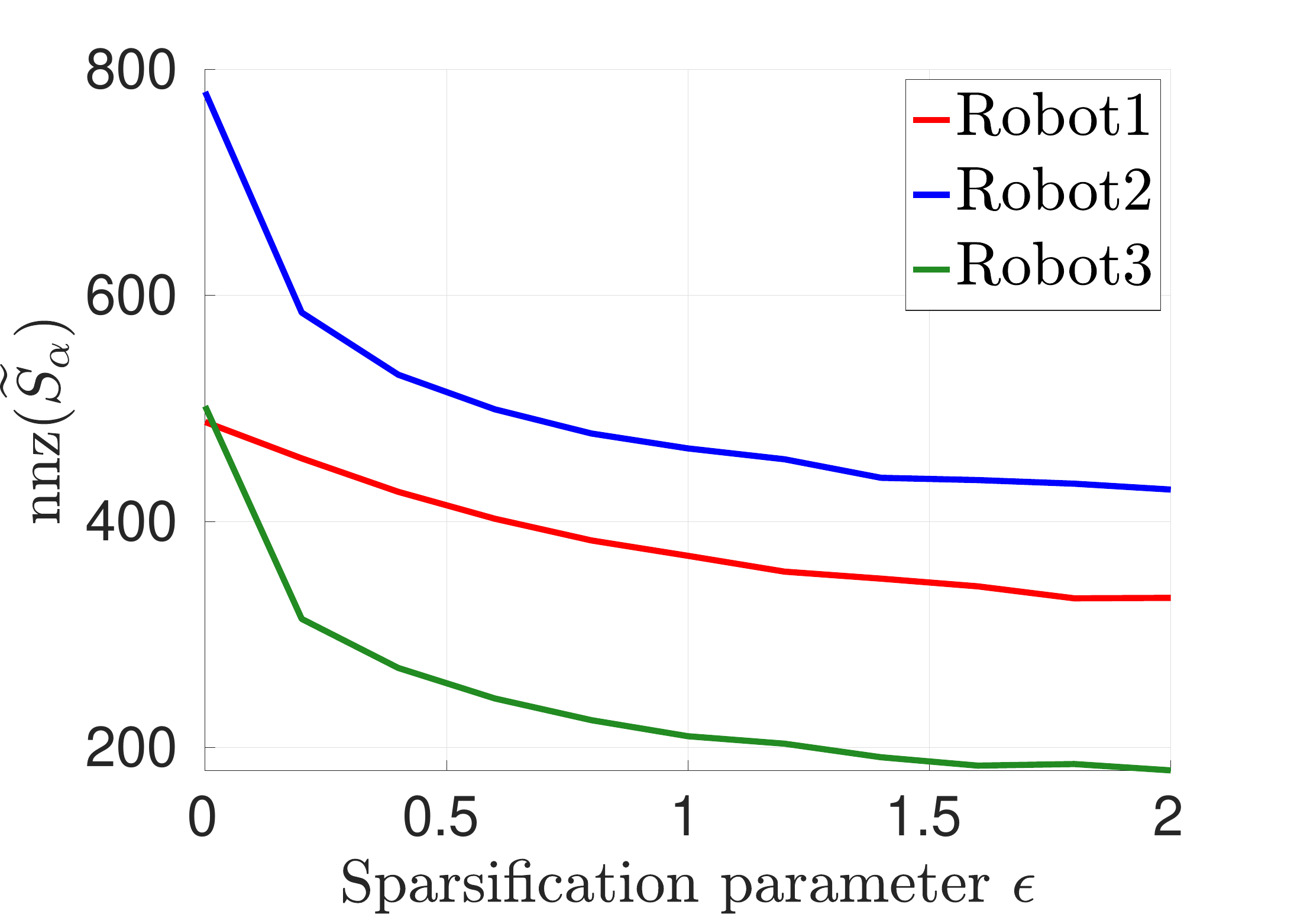}
		\caption{{Evaluation of sparsification}}
		\label{fig:jackal_dataset:sparsity}
	\end{subfigure}
	~
	\begin{subfigure}[t]{0.23\linewidth}
		\includegraphics[width=\textwidth, trim=0 0 0 0, clip]{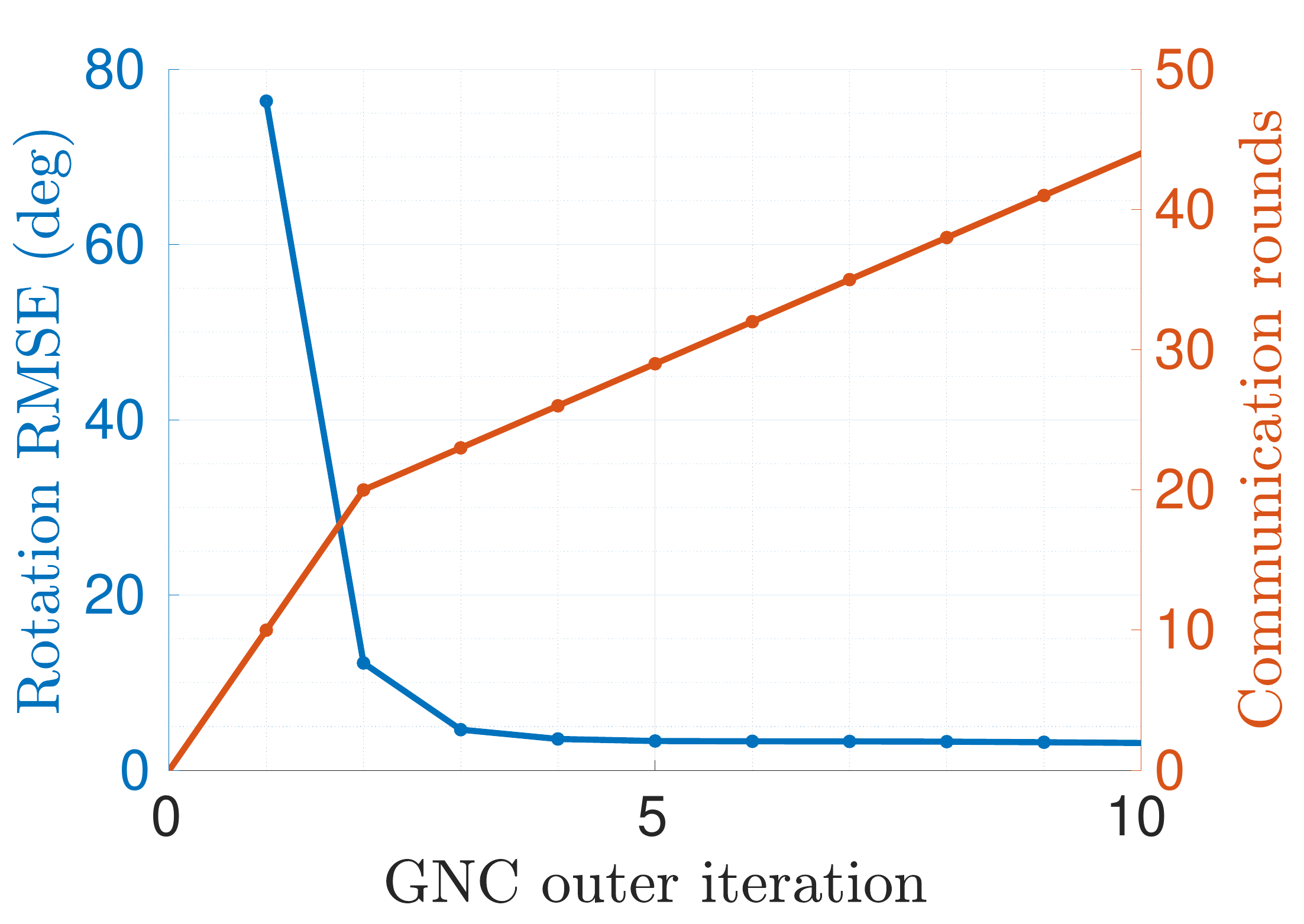}
		\caption{{Rotation estimation}}
		\label{fig:jackal_dataset:rotation_rmse}
	\end{subfigure}
	~
	\begin{subfigure}[t]{0.23\linewidth}
		\includegraphics[width=\textwidth, trim=0 0 0 0, clip]{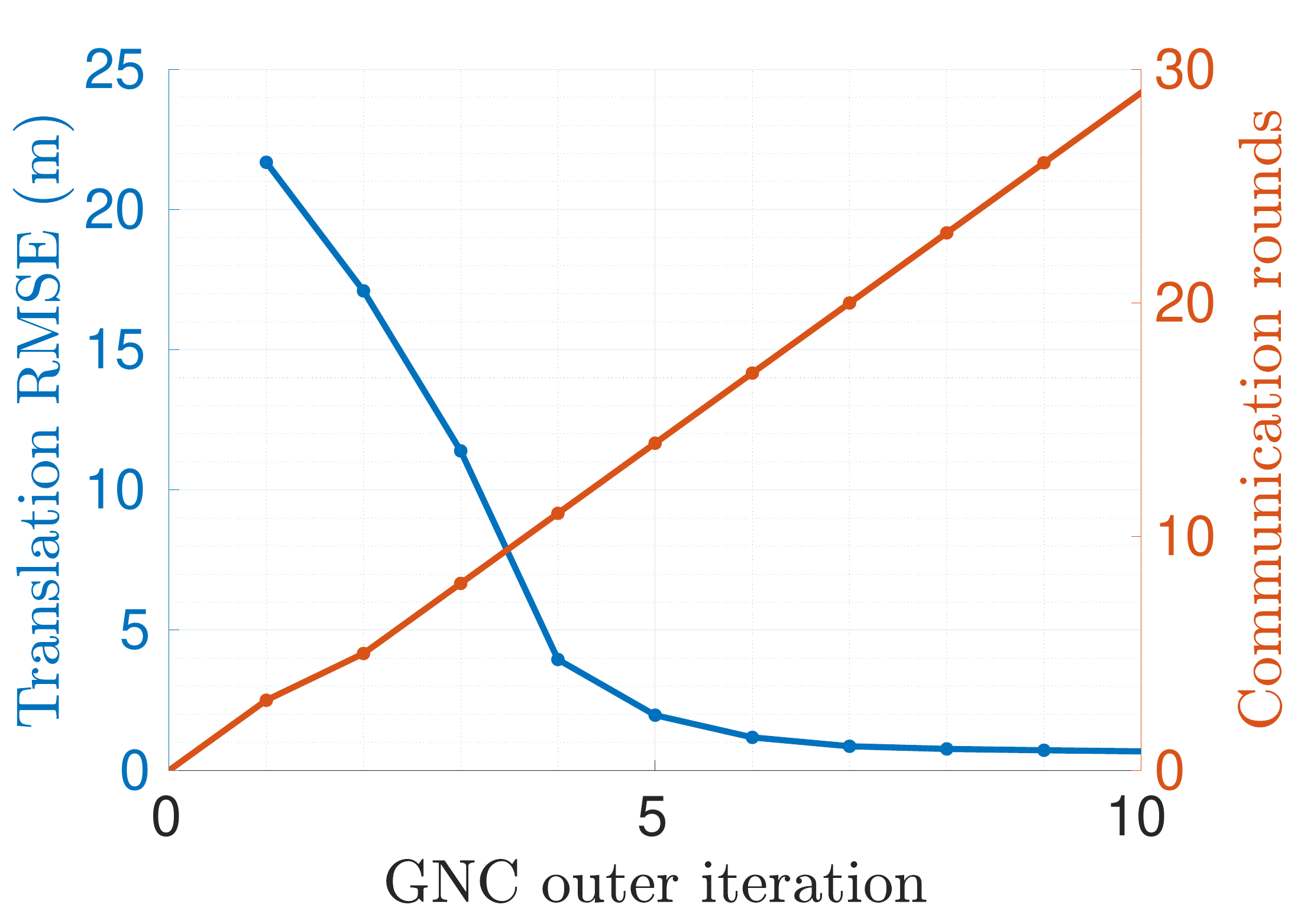}
		\caption{{Translation estimation}}
		\label{fig:jackal_dataset:translation_rmse}
	\end{subfigure}	
	\caption{
		{Robust PGO initialization on real-world collaborative SLAM dataset. 
			(a) Trajectory estimates produced by the proposed robust PGO initialization, which are
			qualitatively overlaid on top of a point cloud map of the experiment area. 
			The point cloud map was created at an earlier time (certain objects such as cars have changed) and is included only for visualization.
			(b) Evaluation of spectral sparsification. 
			(c) Rotation RMSE as a function of communication rounds during rotation estimation.
			(d) Translation RMSE as a function of communication rounds during translation estimation.}} 
	\label{fig:jackal_dataset}
        \vspace{-0.2cm}
\end{figure*}

{In addition, \cref{tab:pgo_initialization} also reports the number of iterations and total communication (both uploads and downloads) used by our initialization during rotation estimation and translation estimation.
	\edit{To provide additional context, we report corresponding results for the state-of-the-art \method{RBCD++} solver \cite{Tian21DC2PGO} to achieve the same optimality gap.}
	\mainpaper{\edit{We start \method{RBCD++} using an initial guess computed by aligning trajectory estimates obtained from robots' local PGO; see \cite[Appendix~VI]{Tian2022Sparsification} for details.}}
	\techreport{\edit{We start \method{RBCD++} using an initial guess computed by aligning trajectory estimates obtained from robots' local PGO; see Appendix~\ref{sec:additional_experiments} for details.}}
	We note that the \method{RBCD++} results are only included for reference since this method is fully distributed whereas our method assumes a server-client architecture.
	Furthermore, given more iterations, \method{RBCD++} will eventually achieve better accuracy because the method is solving the full PGO problem.
	However, our results still suggest that when a server-client architecture is available, 
	our method is favorable and provides high-quality initialization using only a few iterations.}

{\subsection{Robust PGO Initialization for Real-World Collaborative SLAM}
\label{sec:experiments:jackal_dataset}}

{In this section, we show that our approach can be used to achieve \emph{robust} PGO initialization in a real-world collaborative SLAM scenario with outlier measurements.
For this purpose, we collected three sets of trajectories using a Clearpath Jackal robot equipped with a front-facing RealSense D455 RGBD camera and IMU.
Each trajectory covers a different area outside a building on the MIT campus, with the robot making multiple loops within the designated area.
The three trajectories also overlap in a small region such that common features are observed (\cref{fig:jackal_dataset:visualization}).
We run \method{Kimera-Multi}~\cite{Tian21KimeraMulti} to process the dataset as a 3-robot collaborative visual SLAM mission.
The resulting multi-robot pose graph contains a 3D pose variable for each keyframe generated by visual-inertial odometry,
and each robot has a backbone of odometry measurements that are free of outliers.
However, there are many outlier loop closures (both within each robot's trajectory and between different robots), due to incorrect visual feature matching.}

{We demonstrate the two-stage PGO initialization as described in \cref{sec:problem_definition:applications}.
To account for outliers, we use the GNC-based robust optimization during both rotation estimation and translation estimation stages.
In our experiment, we observe that setting the TLS threshold to a smaller value of $0.5$~deg for rotation estimation leads to better performance.
The TLS threshold for translation estimation is set to $0.25$~m.
{With this setting, the two-stage initialization rejects $1090$ out of $1540$ loop closures ($71\%$).}
\techreport{In Appendix~\ref{sec:additional_experiments}, we provide additional evaluation on the sensitivity to the TLS thresholds.}
\mainpaper{In \cite[Appendix~VI]{Tian2022Sparsification}, we provide additional evaluation on the sensitivity to the TLS thresholds.}
\cref{tab:jackal_dataset} reports statistics and the accuracy achieved by our robust initialization for each robot.
As ground truth trajectories are not available, we compare against a reference solution computed by the GNC-based robust PGO solver implemented in GTSAM \cite{gtsam}.
While standard initialization (without GNC) has large errors, 
using GNC achieves robust initialization, and
the final rotation and translation RMSE over all robots are $3.0$~deg and $0.57$~m, respectively.
\cref{fig:jackal_dataset:sparsity} evaluates the effects of spectral sparsification on the real-world dataset.
We observe similar benefits as in previous experiments, where enabling sparsification ($\epsilon > 0$) significantly reduces the number of nonzero entries each robot needs to communicate.
Lastly, \cref{fig:jackal_dataset:rotation_rmse,fig:jackal_dataset:translation_rmse} evaluate the efficiency of our two-stage robust initialization, by visualizing the evolution of RMSE and number of communication rounds as a function of GNC outer iterations.
Recall that each communication round also corresponds to a single iteration of \cref{alg:collaborative_rotation_averaging} or \cref{alg:collaborative_translation_recovery}.
For this experiment, the sparsification parameter is fixed at $\epsilon=2$.
Overall, for both rotation estimation and translation estimation, 
the RMSE converges after a few GNC outer iterations, and consequently, only a small number of communication rounds is needed.}

\begin{table}[t]
	\centering
	\renewcommand{\arraystretch}{1.1}
	\caption{{Evaluation of robust PGO initialization on real-world collaborative SLAM dataset. }}
	\label{tab:jackal_dataset}
	\resizebox{\columnwidth}{!}{%
		\begin{tabular}{|l|r|r|rr|rr|}
			\hline
			\multicolumn{1}{|c|}{\multirow{2}{*}{Robot}} &
			\multicolumn{1}{c|}{\multirow{2}{*}{Length (m)}} &
			\multicolumn{1}{c|}{\multirow{2}{*}{Keyframes}} &
			\multicolumn{2}{c|}{RMSE without GNC} &
			\multicolumn{2}{c|}{RMSE with GNC} \\ \cline{4-7} 
			\multicolumn{1}{|c|}{} &
			\multicolumn{1}{c|}{} &
			\multicolumn{1}{c|}{} &
			\multicolumn{1}{c|}{Rot. (deg)} &
			\multicolumn{1}{c|}{Tran. (m)} &
			\multicolumn{1}{c|}{Rot. (deg)} &
			\multicolumn{1}{c|}{Tran. (m)} \\ \hline
			1 & 483 & 3192 & \multicolumn{1}{r|}{67.5}  & 11.9 & \multicolumn{1}{r|}{1.8} & 0.5 \\ \hline
			2 & 458 & 2518 & \multicolumn{1}{r|}{73.1} & 17.6 & \multicolumn{1}{r|}{3.3} & 0.7 \\ \hline
			3 & 524 & 3374 & \multicolumn{1}{r|}{86.8}  & 23.6 & \multicolumn{1}{r|}{3.7}  & 0.6 \\ \hline
		\end{tabular}%
	}
    \vspace{-0.3cm}
\end{table}

\begin{table*}[t]
\centering
\renewcommand{\arraystretch}{0.9}
\caption{Robust rotation averaging on real-world SfM datasets.
Each dataset is divided to simulate 5 robots.
$|\Vcal|$ and $|\Ecal|$ denote the total number of rotation variables and measurements, respectively.
Using the reference solution, we quantify the difficulty of each dataset by computing the percentage of high-quality inlier measurements (measurement error $< 5$~deg)	and gross outliers (measurement error $> 45$~deg).			For the proposed method, we show the sparsity achieved by sparsification (lower is better) and total communication.}
\label{tab:sfm}
\resizebox{\textwidth}{!}{%
\begin{tblr}{
  column{even} = {r},
  column{3} = {r},
  column{5} = {r},
  column{7} = {r},
  column{9} = {r},
  column{11} = {r},
  cell{1}{1} = {r=2}{},
  cell{1}{2} = {r=2}{c},
  cell{1}{3} = {r=2}{c},
  cell{1}{4} = {c=3}{c},
  cell{1}{7} = {c=4}{c},
  cell{1}{11} = {r=2}{c},
  cell{1}{12} = {c=2}{c},
  cell{2}{4} = {c},
  cell{2}{5} = {c},
  cell{2}{6} = {c},
  cell{2}{7} = {c},
  cell{2}{8} = {c},
  cell{2}{9} = {c},
  cell{2}{10} = {c},
  cell{2}{12} = {c},
  cell{3}{13} = {r},
  cell{4}{13} = {r},
  cell{5}{13} = {r},
  cell{6}{13} = {r},
  cell{7}{13} = {r},
  cell{8}{13} = {r},
  cell{9}{13} = {r},
  cell{10}{13} = {r},
  cell{11}{13} = {r},
  cell{12}{13} = {r},
  cell{13}{13} = {r},
  cell{14}{13} = {r},
  cell{15}{13} = {r},
  cell{16}{13} = {r},
  cell{17}{13} = {r},
  vlines,
  hline{1,3-18} = {-}{},
  hline{2} = {4-10,12-13}{},
  stretch = 0.8,
}
DATASETS            & $|\Vcal|$ & $|\Ecal|$ & Measurement Quality (\%) &         &       & Mean Error (deg) &        &          &       & {Achieved \\ sparsity (\%)} & Communication (MB) &        \\
                    &           &           & Inlier                   & Outlier & Other & Initial          & No GNC & With GNC & \edit{Theia} &                             & Download           & Upload \\
Montreal Notre Dame & 468       & 49705     & 81                       & 4       & 15    & 4.2              & 3.3    & 1.1      & \edit{1.1}   & 55.1                        & 2.09               & 1.21   \\
Ellis Island        & 241       & 19507     & 63                       & 1       & 26    & 7.0              & 5.6    & 2.3      & \edit{2.4}   & 66.0                        & 0.95               & 0.57   \\
NYC Library         & 355       & 17579     & 61                       & 6       & 33    & 4.3              & 4.3    & 2.3      & \edit{2.5}   & 75.8                        & 1.24               & 0.83   \\
Notre Dame          & 553       & 97764     & 70                       & 9       & 21    & 3.5              & 4.5    & 2.4      & \edit{2.6}   & 41.8                        & 2.85               & 1.54   \\
Roman Forum         & 1099      & 53989     & 74                       & 3       & 23    & 16.5             & 5.1    & 2.5      & \edit{2.5}   & 68.1                        & 4.43               & 2.73   \\
Alamo               & 606       & 87725     & 74                       & 3       & 23    & 8.0              & 4.5    & 2.9      & \edit{2.8}   & 44.7                        & 2.62               & 1.39   \\
Madrid Metropolis   & 379       & 18811     & 47                       & 20      & 33    & 7.7              & 8.0    & 3.4      & \edit{3.4}   & 70.3                        & 1.48               & 1.08   \\
Yorkminster         & 448       & 24416     & 73                       & 5       & 22    & 8.3              & 4.5    & 3.4      & \edit{3.3}   & 76.5                        & 1.69               & 1.1    \\
Tower of London     & 493       & 19798     & 76                       & 3       & 21    & 7.4              & 4.7    & 3.5      & \edit{3.4}   & 75.3                        & 2.00               & 1.35   \\
Trafalgar           & 5433      & 680012    & 63                       & 7       & 30    & 20.0             & 6.4    & 3.5      & \edit{3.3}   & 40.7                        & 35.89              & 24.97  \\
Piazza del Popolo   & 343       & 22342     & 82                       & 4       & 14    & 5.4              & 7.8    & 3.6      & \edit{3.5}   & 70.0                        & 1.29               & 0.8    \\
Piccadilly          & 2436      & 254175    & 58                       & 10      & 32    & 13.9             & 14.6   & 4.9      & \edit{5.0}   & 53.1                        & 14.23              & 9.79   \\
Union Square        & 930       & 25561     & 57                       & 6       & 37    & 11.9             & 10.9   & 6.0      & \edit{8.6}   & 82.1                        & 4.15               & 3.37   \\
Vienna Cathedral    & 900       & 96546     & 70                       & 6       & 24    & 13.9             & 9.6    & 8.9      & \edit{8.6}   & 51.9                        & 4.62               & 3.13   \\
Gendarmenmarkt      & 723       & 42980     & 36                       & 27      & 37    & 45.0             & 40.8   & 38.1     & \edit{38.1}  & 63.8                        & 4.13               & 3.03   
\end{tblr}
}
\end{table*}

{\subsection{Evaluation on Real-World Structure-from-Motion Datasets}
	\label{sec:experiments:sfm}
}

{Lastly, we evaluate our method on rotation averaging problems extracted from $15$ real-world structure-from-motion (SfM) datasets \cite{Wilson141dsfm}.
Each dataset is a collection of many internet images taken at a particular location.
We use Theia~\cite{TheiaSfM} to process each dataset and extract a rotation averaging problem with outliers (caused by incorrect feature matching).
As ground truth is not available, we follow \cite{TheiaSfM} and use 3D reconstructions produced by the incremental SfM pipeline \cite{Snavely06PhotoTourism} as reference solutions.
\cref{tab:sfm} reports full dataset statistics.}

\edit{Based on the image IDs, we equally partition each dataset to
simulate 5 robots and run our GNC-based rotation averaging solver, with the TLS threshold set to $5$~deg.
Compared to collaborative SLAM, in SfM there is a significantly larger number of inter-robot measurements.
Furthermore,}
each robot no longer has an outlier-free odometry backbone in SfM.
This means that we cannot use the approach of \cite{Tian21KimeraMulti} to compute an outlier-free initial guess for the variable update step in GNC (see \cref{rm:gnc_implementation}).
Instead, we use the initial guess from Theia that is computed using a spanning tree of the measurement graph.
\cref{tab:sfm} reports the mean estimation error. 
\edit{On $14$ out of the $15$ datasets, our GNC-based method produces accurate results with performance on par with the centralized Theia library.}
The only failure case, \Gendarmenmarkt, is known to be a very challenging case in which the underlying 3D scene is highly symmetric, leading to a significantly lower percentage of inlier measurements.
\edit{The initial guess has a large error, which both GNC and Theia are unable to recover from.}
\edit{This issue could potentially be addressed with a better initialization method (\eg, using pairwise consistency maximization \cite{Mangleson2018PCM}), which we leave for future work.}
In summary, we conclude that on most datasets, our proposed rotation averaging solver combined with GNC is able to achieve robust rotation estimation, despite outlier measurements and the increased noise level present in internet images.

\begin{figure*}[t]
	\centering
	\begin{subfigure}[t]{0.32\linewidth}
		\includegraphics[width=\textwidth, trim=100 0 80 0, clip]{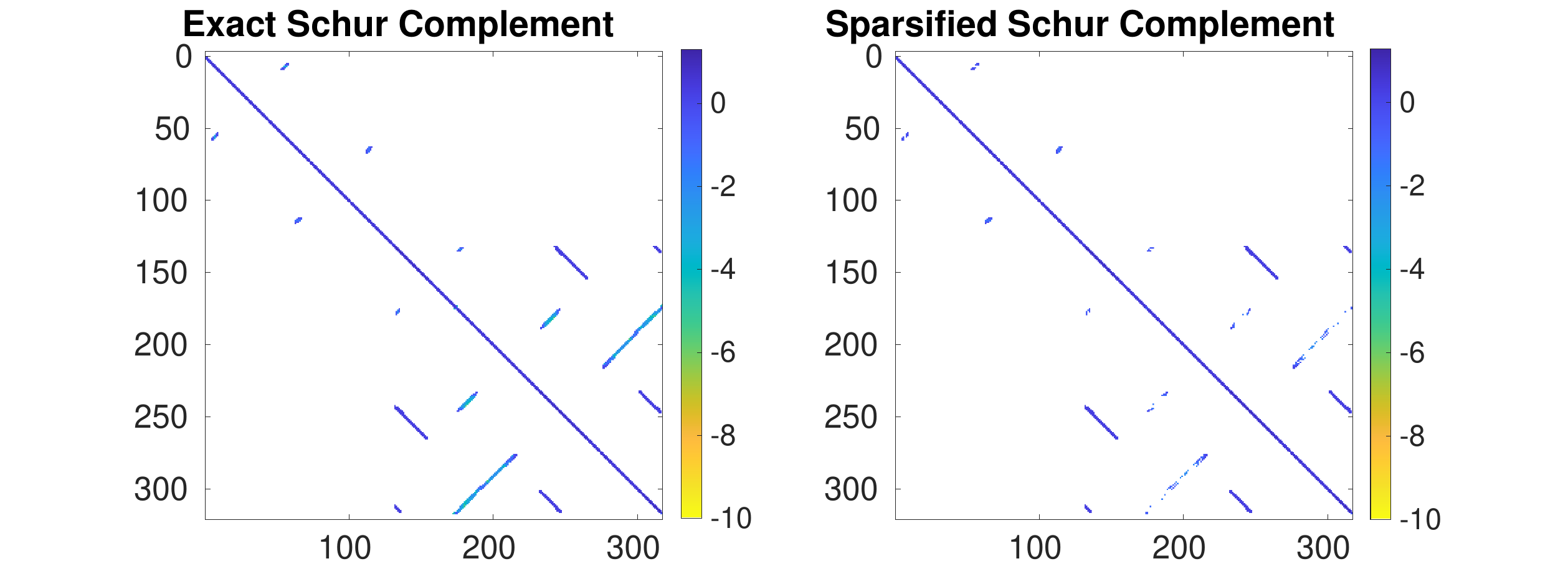}
		\caption{{\dataset{Garage} ($\epsilon=1.5$)}}
		\label{fig:discussion:garage}
	\end{subfigure}
	~
	\begin{subfigure}[t]{0.32\linewidth}
		\includegraphics[width=\textwidth, trim=100 0 80 0, clip]{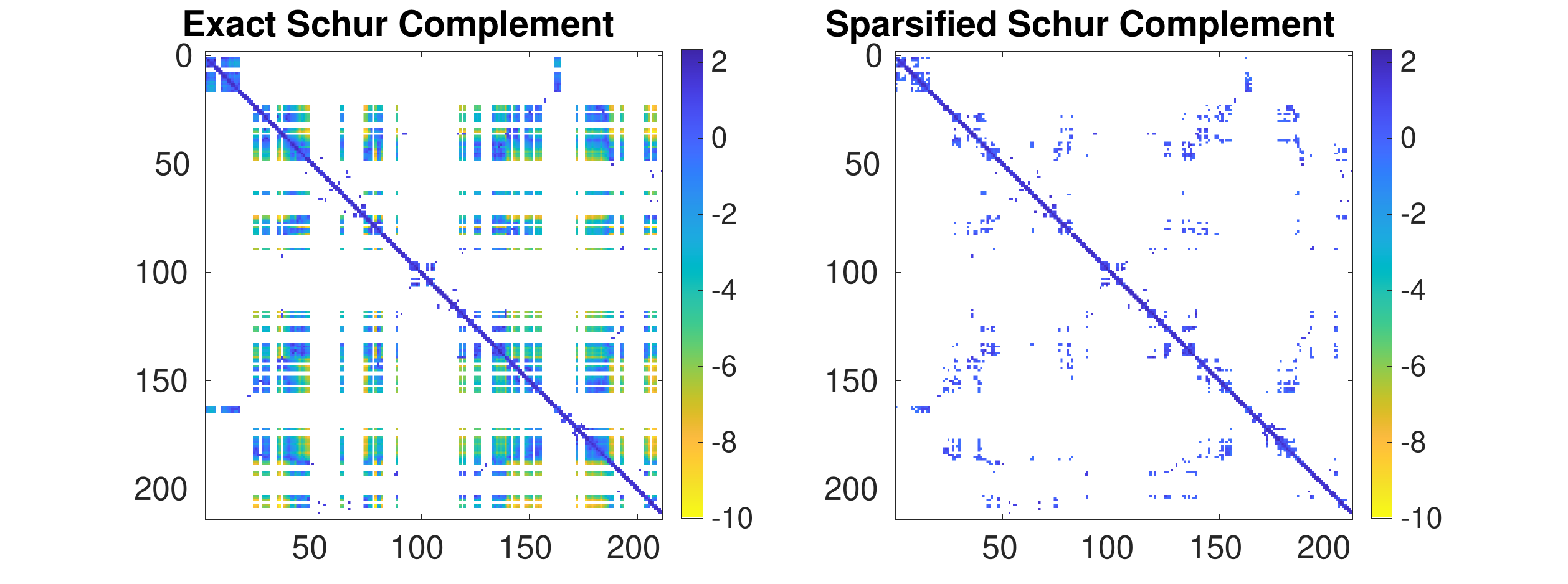}
		\caption{{\dataset{Manhattan} ($\epsilon=1.5$)}}
		\label{fig:discussion:manhattan}
	\end{subfigure}
	~
	\begin{subfigure}[t]{0.32\linewidth}
		\includegraphics[width=\textwidth, trim=100 0 100 0, clip]{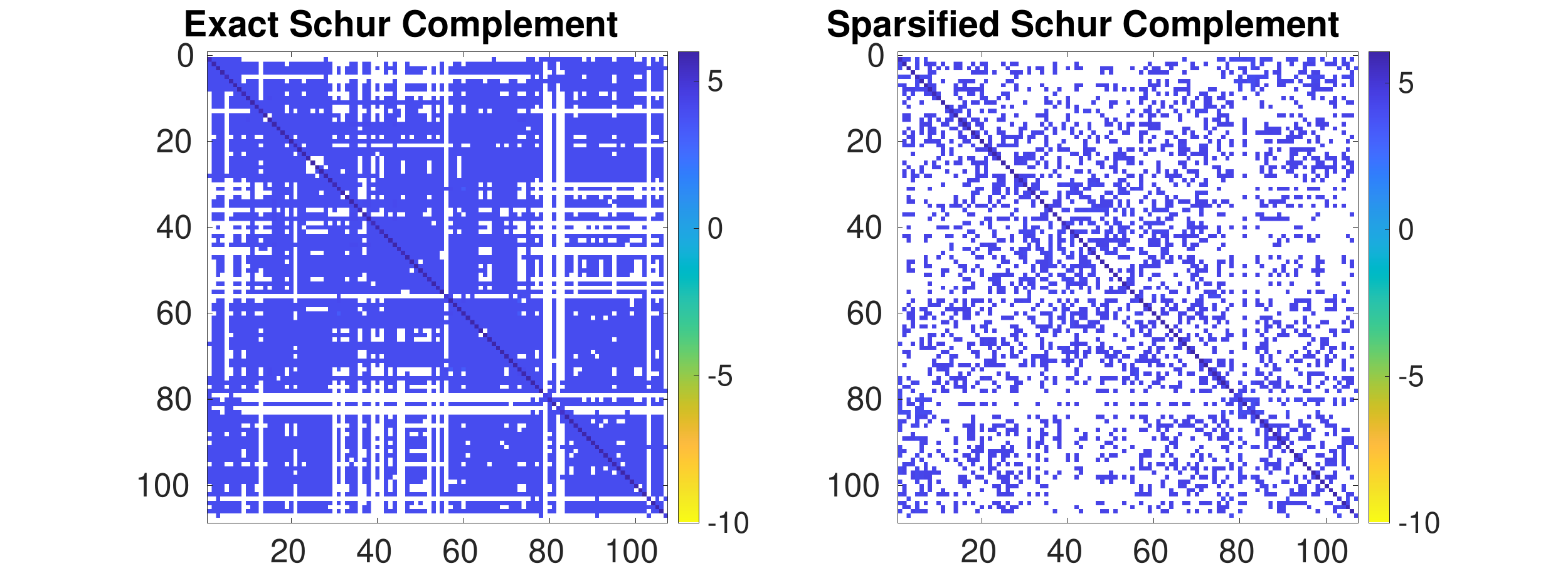}
		\caption{{\dataset{Notre Dame} ($\epsilon=5.0$)}}
		\label{fig:discussion:notre_dame}
	\end{subfigure}
	\caption{{Impact of the density of exact Schur complements on the performance of spectral sparsification. For each dataset, we select one robot and visualize the sparsity pattern of its exact Schur complement (corresponding to $S_\alpha$ in \cref{alg:sparsified_schur_complement}), and the result after spectral sparsification (corresponding to $\Stilde_\alpha$ in \cref{alg:sparsified_schur_complement}).
		Entries in the matrix are color-coded based on their magnitude in log scale.} } 
	\label{fig:discussion}
    \vspace{-0.5cm}
\end{figure*}

{We report the performance of spectral sparsification and the total communication costs of our method.
For our SfM experiment, we increase the sparsification parameter to $\epsilon=5$.
{In \cref{sec:experiments:discussion}, we explain the reasons behind using the increased value for $\epsilon$. }
\cref{tab:sfm} shows the achieved sparsity as the average ratio between the number of nonzero elements in the sparsified matrix and the input (dense) matrix.
On all datasets, spectral sparsification significantly improves sparsity to as low as $40.7\%$ on the largest \dataset{Trafalgar} dataset.
These results, together with the total amounts of uploads and downloads, demonstrate the effectiveness of our approach to achieve communication efficiency. }

{\subsection{Discussion} 
\label{sec:experiments:discussion}}

{We conclude our experimental evaluations by discussing the impacts of real-world problem properties on the performance of the proposed algorithms.}

{\myParagraph{Effectiveness of Laplacian Approximation in the Presence of Outliers}}
{Our rotation averaging method exploits the fact that under small measurement residuals, the Laplacian is an effective approximation of the Hessian (\cref{thm:Hessian_approximation}).
	When there are outlier measurements, we have seen that the approximation quality degrades, leading to increased number of iterations.
	An example is the \dataset{Rim} dataset in \cref{tab:rotation_averaging}, which is contaminated by outliers.
	Nonetheless, we note that this issue is mitigated when using a robust optimization framework such as GNC, since outliers will be gradually discounted and eventually rejected from the measurement graph.
	This is shown in \cref{fig:jackal_dataset:rotation_rmse}.
	During the first two GNC outer iterations, outliers have a substantial influence on the problem, causing our method (\cref{alg:collaborative_rotation_averaging}) to use more communication rounds.
	However, as GNC proceeds, outliers receive increasingly small weights, and our method recovers its fast convergence.
	In \cref{fig:jackal_dataset:rotation_rmse}, this is shown as the slower increase in communication rounds starting from the third GNC outer iteration.}

{\myParagraph{Impact of Problem Density on Sparsification Performance}}
{As we have seen (\eg, from \cref{tab:rotation_averaging}), spectral sparsification achieves different levels of sparsity improvement on the various SLAM and SfM datasets.
	This is because in our method, sparsification is applied to the Schur complements that the robots form after eliminating their interior variables (see \cref{alg:sparsified_schur_complement}).
	Thus, we expect the performance of sparsification to vary depending on the density of the Schur complements.
	To make the discussion more concrete, we identify three types of problems and \cref{fig:discussion} shows a representative sparsification result for each case.
	In the first case (\cref{fig:discussion:garage}), the multi-robot measurement graph is extremely sparse; consequently, the resulting Schur complements are already sparse and sparsification is not necessary.
	In the second case, the original measurement graph is still sparse, but the robots' Schur complements become dense due to \emph{fill-in} introduced during the elimination of interior variables.
	For the example in \cref{fig:discussion:manhattan}, the fill-in is visualized as patches of dense entries in the exact Schur complement, and our method is highly effective at sparsifying these dense blocks.
        Moreover, notice that the dense fill-ins have relatively smaller magnitudes (\eg, compared to the diagonal), and thus they can be sparsified with a smaller value of
        the sparsification parameter $\epsilon$.
	In the last case, the original measurement graph is already dense and so are the resulting Schur complements (\cref{fig:discussion:notre_dame}).
	All of the SfM datasets in \cref{tab:sfm} belong to this category because there are many images viewing a common landmark (\eg, the Notre Dame cathedral), albeit from different locations or angles.
	Consequently, a relative rotation can be estimated for many image pairs, which makes the input measurement graph dense.
        Since there is no significant difference in the magnitudes of different matrix entries, 
        a larger value of $\epsilon$ is needed.
	Similar to the second case, sparsification is highly effective at promoting sparsity in each robot's transmitted matrix in this case.}

\section{Conclusion}

We presented fast and communication-efficient methods for solving rotation averaging and translation estimation in multi-robot SLAM, {SfM}, and camera network localization applications.
Our algorithms leverage theoretical relations between the Hessians of the optimization problems and the Laplacians of the underlying graphs.
{At each iteration, robots coordinate with a central server to perform approximate second-order optimization, while using spectral sparsification to achieve communication efficiency.}
We performed rigorous analysis of our methods and proved that they achieve (local) linear rate of convergence.
{Furthermore, we proposed the combination of our solvers with GNC to achieve outlier-robust estimation.}
Extensive experiments {in real-world collaborative SLAM and SfM scenarios} validate our theoretical results and demonstrate the superior convergence rate and communication efficiency of our proposed methods.

\edit{While results are promising, this work also suggests several directions for future research. First,}
it remains an open problem whether a similar approach can be developed for the full PGO problem.
Our preliminary analysis shows that, unlike rotation averaging, 
the Hessian of PGO is no longer well approximated by the corresponding graph Laplacian due to the coupling between rotation and translation terms.
As a result, our current approach cannot be directly applied and additional techniques need to be considered.
\edit{Secondly, the proposed algorithms assume the availability of communication links that allow all robots to participate in collaborative optimization.
When some robots go offline, a practical remedy is to temporarily exclude them from optimization, 
but this could lead to decrease in the overall accuracy.
A principled extension to cope with communication failures and evaluations under more realistic scenarios (\eg, using real-world communication protocols) would be valuable.}
\edit{Lastly,} extending the algorithm to leverage the incremental nature of real-world SLAM problems is another interesting direction for future research.

{\small
	\bibliographystyle{unsrt}
	\bibliography{ref/main,ref/yulun}

\begin{thebibliography}{10}

\bibitem{Schmuck21covins}
Patrik Schmuck, Thomas Ziegler, Marco Karrer, Jonathan Perraudin, and Margarita
  Chli.
\newblock {COVINS: Visual-Inertial SLAM for Centralized Collaboration}.
\newblock In {\em IEEE International Symposium on Mixed and Augmented Reality
  Adjunct}, 2021.

\bibitem{Chang2022lamp2}
Yun Chang, Kamak Ebadi, Christopher~E Denniston, Muhammad~Fadhil Ginting,
  Antoni Rosinol, Andrzej Reinke, Matteo Palieri, Jingnan Shi, Arghya
  Chatterjee, Benjamin Morrell, et~al.
\newblock {LAMP 2.0: A robust multi-robot SLAM system for operation in
  challenging large-scale underground environments}.
\newblock {\em {IEEE} Robotics and Automation Letters}, 2022.

\bibitem{Cramariuc2022maplab}
Andrei Cramariuc, Lukas Bernreiter, Florian Tschopp, Marius Fehr, Victor
  Reijgwart, Juan Nieto, Roland Siegwart, and Cesar Cadena.
\newblock {MAPLAB 2.0--A Modular and Multi-Modal Mapping Framework}.
\newblock {\em {IEEE} Robotics and Automation Letters}, 8(2):520--527, 2022.

\bibitem{Cieslewski18DataEfficient}
Titus Cieslewski, Siddharth Choudhary, and Davide Scaramuzza.
\newblock {Data-Efficient Decentralized Visual SLAM}.
\newblock In {\em IEEE Intl. Conf. on Robotics and Automation (ICRA)}, pages
  2466--2473, 2018.

\bibitem{Lajoie20DOOR}
Pierre-Yves Lajoie, Benjamin Ramtoula, Yun Chang, Luca Carlone, and Giovanni
  Beltrame.
\newblock {DOOR-SLAM: Distributed, Online, and Outlier Resilient SLAM for
  Robotic Teams}.
\newblock {\em {IEEE} Robotics and Automation Letters}, 2020.

\bibitem{Tian21KimeraMulti}
Yulun Tian, Yun Chang, Fernando~Herrera Arias, Carlos Nieto-Granda, Jonathan~P
  How, and Luca Carlone.
\newblock {Kimera-multi: Robust, distributed, dense metric-semantic slam for
  multi-robot systems}.
\newblock {\em {IEEE} Trans. Robotics}, 38(4), 2022.

\bibitem{Dellaert17FactorGraph}
Frank Dellaert, Michael Kaess, et~al.
\newblock Factor graphs for robot perception.
\newblock {\em Foundations and Trends{\textregistered} in Robotics},
  6(1-2):1--139, 2017.

\bibitem{Tron2014CameraNetwork}
Roberto Tron and Rene Vidal.
\newblock {Distributed 3-D Localization of Camera Sensor Networks From 2-D
  Image Measurements}.
\newblock {\em IEEE Trans. on Automatic Control}, 59(12):3325--3340, Dec 2014.

\bibitem{Choudhary17IJRR}
Siddharth Choudhary, Luca Carlone, Carlos Nieto, John Rogers, Henrik~I
  Christensen, and Frank Dellaert.
\newblock {Distributed mapping with privacy and communication constraints:
  Lightweight algorithms and object-based models}.
\newblock {\em Intl. J. of Robotics Research}, 2017.

\bibitem{Tian21DC2PGO}
Yulun Tian, Kasra Khosoussi, David~M Rosen, and Jonathan~P How.
\newblock {Distributed certifiably correct pose-graph optimization}.
\newblock {\em {IEEE} Trans. Robotics}, 37(6):2137--2156, 2021.

\bibitem{Tian20ASAPP}
Yulun Tian, Alec Koppel, Amrit~Singh Bedi, and Jonathan~P How.
\newblock {Asynchronous and Parallel Distributed Pose Graph Optimization}.
\newblock {\em {IEEE} Robotics and Automation Letters}, 5(4):5819--5826, 2020.

\bibitem{Fan2021MajorizationJournal}
Taosha Fan and Todd Murphey.
\newblock {Majorization Minimization Methods for Distributed Pose Graph
  Optimization}.
\newblock {\em arXiv preprint arXiv:2108.00083}, 2021.

\bibitem{Murai2022RobotWeb}
Riku Murai, Joseph Ortiz, Sajad Saeedi, Paul~HJ Kelly, and Andrew~J Davison.
\newblock {A robot web for distributed many-device localisation}.
\newblock {\em arXiv preprint arXiv:2202.03314}, 2022.

\bibitem{Cunningham10DDFSAM}
Alexander Cunningham, Manohar Paluri, and Frank Dellaert.
\newblock {DDF-SAM: Fully distributed SLAM using Constrained Factor Graphs}.
\newblock In {\em IEEE/RSJ Intl. Conf. on Intelligent Robots and Systems
  (IROS)}, 2010.

\bibitem{Cunningham12DataAssociation}
Alexander Cunningham, Kai~M. Wurm, Wolfram Burgard, and Frank Dellaert.
\newblock {Fully distributed scalable smoothing and mapping with robust
  multi-robot data association}.
\newblock In {\em IEEE Intl. Conf. on Robotics and Automation (ICRA)}, 2012.

\bibitem{Cunningham13DDFSAM2}
Alexander Cunningham, Vadim Indelman, and Frank Dellaert.
\newblock {DDF-SAM 2.0: Consistent distributed smoothing and mapping}.
\newblock In {\em IEEE Intl. Conf. on Robotics and Automation (ICRA)}, 2013.

\bibitem{Carlone2015Initialization}
Luca Carlone, Roberto Tron, Kostas Daniilidis, and Frank Dellaert.
\newblock {Initialization techniques for 3D SLAM: A survey on rotation
  estimation and its use in pose graph optimization}.
\newblock In {\em IEEE Intl. Conf. on Robotics and Automation (ICRA)}, May
  2015.

\bibitem{Zhu18DistMotionAverage}
Siyu Zhu, Runze Zhang, Lei Zhou, Tianwei Shen, Tian Fang, Ping Tan, and Long
  Quan.
\newblock {Very Large-Scale Global SfM by Distributed Motion Averaging}.
\newblock In {\em IEEE Conf. on Computer Vision and Pattern Recognition
  (CVPR)}, June 2018.

\bibitem{Batson2013SpectralSurvey}
Joshua Batson, Daniel~A Spielman, Nikhil Srivastava, and Shang-Hua Teng.
\newblock {Spectral sparsification of graphs: theory and algorithms}.
\newblock {\em Communications of the ACM}, 56(8):87--94, 2013.

\bibitem{Spielman2011Sampling}
Daniel~A. Spielman and Nikhil Srivastava.
\newblock {Graph Sparsification by Effective Resistances}.
\newblock {\em SIAM Journal on Computing}, 2011.

\bibitem{Yang2020Gnc}
Heng Yang, Pasquale Antonante, Vasileios Tzoumas, and Luca Carlone.
\newblock {Graduated Non-Convexity for Robust Spatial Perception: From
  Non-Minimal Solvers to Global Outlier Rejection}.
\newblock {\em {IEEE} Robotics and Automation Letters}, 5(2):1127--1134, 2020.

\bibitem{Black1996Gnc}
Michael~J Black and Anand Rangarajan.
\newblock {On the unification of line processes, outlier rejection, and robust
  statistics with applications in early vision}.
\newblock {\em Intl. J. of Computer Vision}, 1996.

\bibitem{lee2015sparsified}
Yin~Tat Lee, Richard Peng, and Daniel~A Spielman.
\newblock {Sparsified cholesky solvers for SDD linear systems}.
\newblock {\em arXiv preprint arXiv:1506.08204}, 2015.

\bibitem{kyng2016sparsified}
Rasmus Kyng, Yin~Tat Lee, Richard Peng, Sushant Sachdeva, and Daniel~A
  Spielman.
\newblock {Sparsified cholesky and multigrid solvers for connection
  laplacians}.
\newblock In {\em Proceedings of the forty-eighth annual ACM symposium on
  Theory of Computing}, pages 842--850, 2016.

\bibitem{Absil2009Book}
P-A Absil, Robert Mahony, and Rodolphe Sepulchre.
\newblock {\em {Optimization algorithms on matrix manifolds}}.
\newblock Princeton University Press, 2009.

\bibitem{Boumal20Book}
Nicolas Boumal.
\newblock {An introduction to optimization on smooth manifolds}, 2020.

\bibitem{Tian2022Lazy}
Yulun Tian, Amrit~Singh Bedi, Alec Koppel, Miguel Calvo-Fullana, David~M Rosen,
  and Jonathan~P How.
\newblock {Distributed riemannian optimization with lazy communication for
  collaborative geometric estimation}.
\newblock In {\em IEEE/RSJ Intl. Conf. on Intelligent Robots and Systems
  (IROS)}, pages 4391--4398. IEEE, 2022.

\bibitem{kretzschmar2012compression}
Henrik Kretzschmar and Cyrill Stachniss.
\newblock {Information-theoretic compression of pose graphs for laser-based
  SLAM}.
\newblock {\em Intl. J. of Robotics Research}, 31(11):1219--1230, 2012.

\bibitem{carlevaris2014generic}
Nicholas Carlevaris-Bianco, Michael Kaess, and Ryan~M Eustice.
\newblock {Generic node removal for factor-graph SLAM}.
\newblock {\em {IEEE} Trans. Robotics}, 30(6):1371--1385, 2014.

\bibitem{mazuran2014nonlinear}
Mladen Mazuran, Gian~Diego Tipaldi, Luciano Spinello, and Wolfram Burgard.
\newblock {Nonlinear Graph Sparsification for SLAM.}
\newblock In {\em Robotics: Science and Systems (RSS)}, pages 1--8, 2014.

\bibitem{Paull15Sparsification}
Liam Paull, Guoquan Huang, Mae Seto, and John~J. Leonard.
\newblock {Communication-constrained multi-AUV cooperative SLAM}.
\newblock In {\em IEEE Intl. Conf. on Robotics and Automation (ICRA)}, 2015.

\bibitem{Saad2003Iterative}
Yousef Saad.
\newblock {\em {Iterative methods for sparse linear systems}}.
\newblock SIAM, 2003.

\bibitem{Agarwal10BAL}
Sameer Agarwal, Noah Snavely, Steven~M. Seitz, and Richard Szeliski.
\newblock {Bundle Adjustment in the Large}.
\newblock In Kostas Daniilidis, Petros Maragos, and Nikos Paragios, editors,
  {\em European Conf. on Computer Vision (ECCV)}, pages 29--42, Berlin,
  Heidelberg, 2010. Springer Berlin Heidelberg.

\bibitem{Dellaert10PCG}
Frank Dellaert, Justin Carlson, Viorela Ila, Kai Ni, and Charles~E. Thorpe.
\newblock {Subgraph-preconditioned conjugate gradients for large scale SLAM}.
\newblock In {\em IEEE/RSJ Intl. Conf. on Intelligent Robots and Systems
  (IROS)}, pages 2566--2571, 2010.

\bibitem{Wu11MulticoreBA}
Changchang Wu, Sameer Agarwal, Brian Curless, and Steven~M. Seitz.
\newblock {Multicore bundle adjustment}.
\newblock In {\em IEEE Conf. on Computer Vision and Pattern Recognition
  (CVPR)}, pages 3057--3064, 2011.

\bibitem{Kushal12Visibility}
Avanish Kushal and Sameer Agarwal.
\newblock {Visibility Based Preconditioning for bundle adjustment}.
\newblock In {\em IEEE Conf. on Computer Vision and Pattern Recognition
  (CVPR)}, pages 1442--1449, 2012.

\bibitem{Zhang21MRiSAM2}
Yetong Zhang, Ming Hsiao, Jing Dong, Jakob Engel, and Frank Dellaert.
\newblock {MR-iSAM2: Incremental Smoothing and Mapping with Multi-Root Bayes
  Tree for Multi-Robot SLAM}.
\newblock In {\em IEEE/RSJ Intl. Conf. on Intelligent Robots and Systems
  (IROS)}, pages 8671--8678, 2021.

\bibitem{Lajoie21Survey}
Pierre-Yves Lajoie, Benjamin Ramtoula, Fang Wu, and Giovanni Beltrame.
\newblock Towards collaborative simultaneous localization and mapping: a survey
  of the current research landscape.
\newblock {\em Field Robotics}, 2021.

\bibitem{Mateos10CADMM}
Gonzalo Mateos, Juan~Andrés Bazerque, and Georgios~B. Giannakis.
\newblock {Distributed Sparse Linear Regression}.
\newblock {\em IEEE Transactions on Signal Processing}, 58(10):5262--5276,
  2010.

\bibitem{Shi2015Extra}
Wei Shi, Qing Ling, Gang Wu, and Wotao Yin.
\newblock Extra: An exact first-order algorithm for decentralized consensus
  optimization.
\newblock {\em SIAM Journal on Optimization}, 25(2):944--966, 2015.

\bibitem{Nedic2017DIGing}
Angelia Nedic, Alex Olshevsky, and Wei Shi.
\newblock Achieving geometric convergence for distributed optimization over
  time-varying graphs.
\newblock {\em SIAM Journal on Optimization}, 27(4):2597--2633, 2017.

\bibitem{Di2016Next}
Paolo Di~Lorenzo and Gesualdo Scutari.
\newblock Next: In-network nonconvex optimization.
\newblock {\em IEEE Transactions on Signal and Information Processing over
  Networks}, 2(2):120--136, 2016.

\bibitem{Boumal2014CramerRao}
Nicolas Boumal, Amit Singer, P-A Absil, and Vincent~D Blondel.
\newblock {Cram\'er--Rao bounds for synchronization of rotations}.
\newblock {\em Information and Inference: A Journal of the IMA}, pages 1--39,
  2014.

\bibitem{Khosoussi2019Reliable}
Kasra Khosoussi, Matthew Giamou, Gaurav~S Sukhatme, Shoudong Huang, Gamini
  Dissanayake, and Jonathan~P How.
\newblock {Reliable graphs for SLAM}.
\newblock {\em Intl. J. of Robotics Research}, 2019.

\bibitem{Chen2021CramerRao}
Yongbo Chen, Shoudong Huang, Liang Zhao, and Gamini Dissanayake.
\newblock {Cram\'er--Rao bounds and optimal design metrics for pose-graph
  SLAM}.
\newblock {\em {IEEE} Trans. Robotics}, 2021.

\bibitem{Eriksson2018Duality}
Anders~P. Eriksson, Carl Olsson, Fredrik Kahl, and Tat{-}Jun Chin.
\newblock {Rotation Averaging and Strong Duality}.
\newblock In {\em IEEE Conf. on Computer Vision and Pattern Recognition
  (CVPR)}, 2018.

\bibitem{bernreiter2022collaborative}
Lukas Bernreiter, Shehryar Khattak, Lionel Ott, Roland Siegwart, Marco Hutter,
  and Cesar Cadena.
\newblock {Collaborative Robot Mapping using Spectral Graph Analysis}.
\newblock {\em arXiv preprint arXiv:2203.00308}, 2022.

\bibitem{Doherty2022Spectral}
Kevin~J Doherty, David~M Rosen, and John~J Leonard.
\newblock {Spectral Measurement Sparsification for Pose-Graph SLAM}.
\newblock {\em arXiv preprint arXiv:2203.13897}, 2022.

\bibitem{Carlone2013Convergence}
Luca Carlone.
\newblock {A convergence analysis for pose graph optimization via Gauss-Newton
  methods}.
\newblock In {\em IEEE Intl. Conf. on Robotics and Automation (ICRA)}, pages
  965--972, 2013.

\bibitem{Tron2012Thesis}
Roberto Tron.
\newblock {\em {Distributed optimization on manifolds for consensus algorithms
  and camera network localization}}.
\newblock The Johns Hopkins University, 2012.

\bibitem{Wilson2016Convexity}
Kyle Wilson, David Bindel, and Noah Snavely.
\newblock {When is rotations averaging hard?}
\newblock In {\em European Conf. on Computer Vision (ECCV)}, 2016.

\bibitem{Wilson2020Convexity}
Kyle Wilson and David Bindel.
\newblock {On the distribution of minima in intrinsic-metric rotation
  averaging}.
\newblock In {\em IEEE Conf. on Computer Vision and Pattern Recognition
  (CVPR)}, 2020.

\bibitem{Nasiri2021GaussNewton}
Seyed-Mahdi Nasiri, Reshad Hosseini, and Hadi Moradi.
\newblock {Novel Parameterization for Gauss-Newton Methods in 3-D Pose Graph
  Optimization}.
\newblock {\em {IEEE} Trans. Robotics}, 37(3):780--797, 2021.

\bibitem{Batson2009twice}
Joshua~D Batson, Daniel~A Spielman, and Nikhil Srivastava.
\newblock {Twice-ramanujan sparsifiers}.
\newblock In {\em Proceedings of the forty-first annual ACM symposium on Theory
  of computing}, pages 255--262, 2009.

\bibitem{kyng2016approximate}
Rasmus Kyng and Sushant Sachdeva.
\newblock {Approximate gaussian elimination for laplacians-fast, sparse, and
  simple}.
\newblock In {\em IEEE 57th Annual Symposium on Foundations of Computer Science
  (FOCS)}. IEEE, 2016.

\bibitem{durfee2017spanning}
David Durfee, Rasmus Kyng, John Peebles, Anup~B Rao, and Sushant Sachdeva.
\newblock {Sampling random spanning trees faster than matrix multiplication}.
\newblock In {\em Proceedings of the 49th Annual ACM SIGACT Symposium on Theory
  of Computing}, pages 730--742, 2017.

\bibitem{Vishnoi2013LaplacianSolver}
Nisheeth~K Vishnoi.
\newblock {Lx=b Laplacian Solvers and their Algorithmic Applications}.
\newblock {\em Foundations and Trends in Theoretical Computer Science}, 2013.

\bibitem{peng2014parallel}
Richard Peng and Daniel~A Spielman.
\newblock {An efficient parallel solver for SDD linear systems}.
\newblock In {\em Proceedings of the forty-sixth annual ACM symposium on Theory
  of computing}, pages 333--342, 2014.

\bibitem{Tutunov19DistNewton}
Rasul Tutunov, Haitham Bou-Ammar, and Ali Jadbabaie.
\newblock {Distributed Newton Method for Large-Scale Consensus Optimization}.
\newblock {\em IEEE Trans. on Automatic Control}, 64(10):3983--3994, 2019.

\bibitem{Rosen19IJRR}
David~M Rosen, Luca Carlone, Afonso~S Bandeira, and John~J Leonard.
\newblock {SE-Sync: A certifiably correct algorithm for synchronization over
  the special Euclidean group}.
\newblock {\em Intl. J. of Robotics Research}, 38(2-3):95--125, 2019.

\bibitem{Chatterjee18pami-rotationAveraging}
Avishek Chatterjee and Venu~Madhav Govindu.
\newblock {Robust Relative Rotation Averaging}.
\newblock {\em {IEEE} Trans. Pattern Anal. Machine Intell.}, 40(4):958--972,
  2018.

\bibitem{gtsam}
{Frank Dellaert et al.}
\newblock {Georgia Tech Smoothing And Mapping (GTSAM)}.
\newblock \url{https://gtsam.org/}, 2019.

\bibitem{TheiaSfM}
Chris Sweeney.
\newblock {Theia Multiview Geometry Library: Tutorial \& Reference}.
\newblock \url{http://theia-sfm.org}.

\bibitem{Wilson141dsfm}
Kyle Wilson and Noah Snavely.
\newblock {Robust Global Translations with 1DSfM}.
\newblock In {\em European Conf. on Computer Vision (ECCV)}, 2014.

\bibitem{Snavely06PhotoTourism}
Noah Snavely, Steven~M. Seitz, and Richard Szeliski.
\newblock {Photo tourism: exploring photo collections in 3D}.
\newblock {\em {ACM} Trans. Graph.}, 2006.

\bibitem{Mangleson2018PCM}
Joshua~G. Mangelson, Derrick Dominic, Ryan~M. Eustice, and Ram Vasudevan.
\newblock {Pairwise Consistent Measurement Set Maximization for Robust
  Multi-robot Map Merging}.
\newblock In {\em IEEE Intl. Conf. on Robotics and Automation (ICRA)}, pages
  1--8, Brisbane, Australia, May 2018.

\bibitem{Spielman2019Notes}
Daniel~A Spielman.
\newblock {Spectral and Algebraic Graph Theory}.
\newblock \url{http://cs-www.cs.yale.edu/homes/spielman/sagt/sagt.pdf}, 2019.

\bibitem{Hartley2013RA}
Richard Hartley, Jochen Trumpf, Yuchao Dai, and Hongdong Li.
\newblock {Rotation Averaging}.
\newblock {\em Intl. J. of Computer Vision}, 2013.

\bibitem{Horn2012MatrixAnalysis}
Roger~A Horn and Charles~R Johnson.
\newblock {\em {Matrix analysis}}.
\newblock Cambridge university press, 2012.

\end{thebibliography}
}

\clearpage
\onecolumn
{
\appendices
\begin{table}[t]
	\centering
	\renewcommand{\arraystretch}{1.2}
	\caption{Summary of key notations used in this work (organized by sections).}
	\label{tab:notations}
	\resizebox{1.0\textwidth}{!}{%
		\begin{tabular}{cll}
			\hline
			{\bfseries Notation}                    & \multicolumn{1}{c}{ \bfseries Description} & \multicolumn{1}{c}{ \bfseries Reference} \\ \hline 
			\multicolumn{3}{c}{\bfseries Section~\ref{sec:problem_definition}}                                                             \\ \hline
			\multicolumn{1}{l}{$G=(\Vcal,\Ecal)$}       & Multi-robot measurement graph with vertex (variable) set $\Vcal$ and edge  (measurement) set $\Ecal$     &                              \\
			\multicolumn{1}{l}{$R_i$}                              & The $i$th rotation variable to be estimated in rotation averaging                                      &             \eqref{eq:rotation_averaging}                 \\
			\multicolumn{1}{l}{$\Rtilde_{ij}$}                 & Noisy relative rotation measurement in rotation averaging                               &             \eqref{eq:rotation_averaging}                 \\
			\multicolumn{1}{l}{$\varphi(\cdot, \cdot)$}      &  Squared geodesic or chordal distance  function between two rotations                                                      &             \eqref{eq:squared_geodesic_distance}-\eqref{eq:squared_chordal_distance}                 \\
			\multicolumn{1}{l}{$t_i$}                               & The $i$th translation variable to be estimated in translation estimation                          &             \eqref{eq:translation_recovery}                 \\
			\multicolumn{1}{l}{$\that_{ij}$}                    & Noisy relative translation measurement in translation estimation                               &     \eqref{eq:translation_recovery}                 \\
			\multicolumn{1}{l}{$\ttilde_{ij}$}                  & Noisy relative translation measurement in PGO                                                    &     \eqref{eq:pose_graph_optimization}                 \\
			\multicolumn{1}{l}{$\kappa_{ij}$}                & Weight (precision) associated with the relative rotation measurement between vertex $i$ and $j$                     &     \eqref{eq:rotation_averaging}, \eqref{eq:pose_graph_optimization}                 \\
			\multicolumn{1}{l}{$\tau_{ij}$}                    & Weight (precision) associated with the relative translation measurement between vertex $i$ and $j$                     &     \eqref{eq:translation_recovery}, \eqref{eq:pose_graph_optimization}                 \\ \hline
			\multicolumn{3}{c}{\bfseries Section~\ref{sec:laplacian_systems}}           \\ \hline
			\multicolumn{1}{l}{$p$}                     & $p \triangleq \dim \SOd(d)$ is the intrinsic dimension of the rotation group $\SOd(d)$                              &        \\ 
			\multicolumn{1}{l}{$[R]$}                    & The equivalent class corresponding to $n$ rotations $R = (R_1, \hdots, R_n) \in \SOd(d)^n$                              &     \eqref{eq:rotation_averaging_equivalent_class}     \\  
			\multicolumn{1}{l}{$v_i$}                    & $v_i \in \Real^{p}$ is the correction vector (to be optimized) for rotation variable $R_i$                          &       \\ 
			\multicolumn{1}{l}{$v$}                    & $v \in \Real^{pn}$ is formed by concatenating the $v_i$ vectors of all $n$ rotation variables                            &     \eqref{eq:RA_full_pullback_global_def}     \\ 
			\multicolumn{1}{l}{$V$}                    & $V \in \Real^{n \times p}$ is the matrix representation of $v$                             &     \eqref{eq:rotation_averaging_matrix_variables}     \\ 
			\multicolumn{1}{l}{$\Ncal, \Hcal$}         & Subspaces of $\Real^{pn}$ corresponding to the vertical space and horizontal space  in rotation averaging  ($\Hcal \triangleq \Vcal^\perp$)            &     \eqref{eq:vertical_space_RA}     \\ 
			\multicolumn{1}{l}{$P_H$}         & Orthogonal projection onto the horizontal space $\Hcal$                            &     \eqref{eq:Newton_step_quotient}     \\ 
			\multicolumn{1}{l}{$\gbar(R)$}       & $\gbar(R) \in \Real^{pn}$ is the vector corresponding to the Riemannian  gradient of rotation averaging in the total space        &     \eqref{eq:gradient_and_hessian_def}     \\  
			\multicolumn{1}{l}{$\Hbar(R)$}       & $\Hbar(R) \in \Sym^{pn}$ is the matrix corresponding to the Riemannian  Hessian of rotation averaging in the total space        &     \eqref{eq:gradient_and_hessian_def}     \\  
			\multicolumn{1}{l}{$H(R)$}        & $H(R) \in \Sym^{pn}$ is the matrix corresponding to the Riemannian Hessian in the quotient space      &     \eqref{eq:Newton_step_quotient}     \\  
			\multicolumn{1}{l}{$w$}             & $w: \Ecal \to \Real_{>0}$ is the edge weight function that appears in \cref{thm:Hessian_approximation}   &   \eqref{thm:Hessian_approximation:spectral_approximation}     \\          
			\multicolumn{1}{l}{$\delta$}       & The approximation constant in \cref{thm:Hessian_approximation} between $H(R)$ and the Laplacian $L(G; w) \otimes I_p$      &   \eqref{thm:Hessian_approximation:spectral_approximation}     \\                               
			\multicolumn{1}{l}{$\mu_H$}       & Lower bound of $H(R)$  in  \cref{cor:Hessian_bound} &   \eqref{eq:Hessian_bound}     \\   
			\multicolumn{1}{l}{$L_H$}            & Upper bound of $H(R)$  in  \cref{cor:Hessian_bound} &   \eqref{eq:Hessian_bound}     \\     
			\multicolumn{1}{l}{$\kappa_H$}       & Condition number of $H(R)$  as defined by $\kappa_H = L_H / \mu_H$         &  Cor.~\ref{cor:Hessian_bound}     \\                                  
			\multicolumn{1}{l}{$B(R)$}         & $B(R) \in \Real^{n \times p}$ is the matrix representation of the negative Riemannian gradient $-\gbar(R)$                      &     \eqref{eq:rotation_averaging_matrix_variables}     \\  \hline
			\multicolumn{3}{c}{\bfseries Section~\ref{sec:algorithm}}           \\ \hline
			\multicolumn{1}{l}{$\Vcal_\alpha$}     & The set of vertices (variables) of robot $\alpha \in [m]$, and  $\Vcal_\alpha = \Fcal_\alpha \uplus \Ccal_\alpha$       &  \eqref{eq:vertex_partition}        \\   
			\multicolumn{1}{l}{$\Fcal_\alpha$}     &  $\Fcal_\alpha \subseteq \Vcal_\alpha$ is the set of interior vertices of robot $\alpha$   that does not have inter-robot measurement    &     \\   
			\multicolumn{1}{l}{$\Ccal_\alpha$}     &  $\Ccal_\alpha \subseteq \Vcal_\alpha$ is the set of separator vertices of robot $\alpha$  hat have inter-robot measurement    &     \\  
			\multicolumn{1}{l}{$\Ccal$}     &  $\Ccal = \Ccal_1 \uplus \hdots \uplus \Ccal_m$ is the union of separator vertices of all $m$ robots  &     \\  
			\multicolumn{1}{l}{$\Ecal_\alpha$}       & The set of local edges (measurements) of robot $\alpha \in [m]$     &  \eqref{eq:edge_partition}      \\   
			\multicolumn{1}{l}{$\Ecal_c$}       & The set of inter-robot edges (measurements)      &  \eqref{eq:edge_partition}      \\   
			\multicolumn{1}{l}{$S_\alpha$}       & $S_\alpha \in \PSD^{|\Ccal|}$ is the exact Schur complement of robot $\alpha$'s local graph $G_\alpha$  &     \\   
			\multicolumn{1}{l}{$S$}       & $S \in \PSD^{|\Ccal|}$ is the exact Schur complement of the multi-robot measurement graph &  \eqref{eq:schur_complement_sum}    \\   
			\multicolumn{1}{l}{$\Stilde_\alpha$}   & The sparsified version of $S_\alpha$ robot $\alpha$ transmits to the server in \cref{alg:sparsified_schur_complement}  &  Alg.~\ref{alg:sparsified_schur_complement}, line~\ref{alg:sparsified_schur_complement:upload}   \\   
			\multicolumn{1}{l}{$\Stilde$}   & The sparsified version of $S$ computed by the server in \cref{alg:sparsified_schur_complement}  &  Alg.~\ref{alg:sparsified_schur_complement}, line~\ref{alg:sparsified_schur_complement:server_sum}   \\  
			\multicolumn{1}{l}{$U_\alpha$}       & $U_\alpha \in \Real^{|\Ccal| \times p}$ is the block vector robot $\alpha$ transmits to the server in \cref{alg:sparsified_laplacian_solver}  &   Alg.~\ref{alg:sparsified_laplacian_solver}, line~\ref{alg:sparsified_laplacian_solver:upload}  \\   
			\multicolumn{1}{l}{$\epsilon$}   & The spectral sparsification parameter that is used in the algorithm and appears in \cref{thm:approximation_guarantees} &   \\  
			\multicolumn{1}{l}{$\TLS$}   & The truncated least squares (TLS) cost function for outlier-robust estimation &  \eqref{eq:robust_estimation}  \\  
			\multicolumn{1}{l}{$e_{ij}$}   & Measurement error corresponding to the measurement $(i,j) \in \Ecal$ in the measurement graph \\  
			\multicolumn{1}{l}{$\ebar$}   & Threshold that specifies the maximum error of inlier measurement in TLS &    \\  
			\multicolumn{1}{l}{$\mu$}   & Control parameter of graduated non-convexity  (GNC) &    \\  
			\multicolumn{1}{l}{$\wgnc_{ij}$}   & GNC weight for measurement $(i,j) \in \Ecal$ in the measurement graph &  \eqref{eq:gnc_surrogate_func}   \\  
			\hline
		\end{tabular}%
	}
\end{table}

\clearpage
\section{Details of Spectral Sparsification Algorithm}
\label{sec:spectral_sparsification}

\begin{algorithm}[t]
	\caption{\textsc{Spectral Sparsification by Effective Resistance Sampling}}
	\label{alg:effective_resistance_sampling}
	\begin{algorithmic}[1]
		\small
		\For{each edge $(i,j) \in \Ecal$ in the graph $G$ corresponding to the input Laplacian $L$ (in parallel)}
		\State Compute leverage score $\ell_{ij} = w_{ij} (\indvector_i - \indvector_j)^\top L\pinv (\indvector_i - \indvector_j).$
		\State Select this edge with probability $p_{ij} = \min(1, \; {3.5 \log n}  \ell_{ij} / {\epsilon_l^2} )$.
		\State If this edge is selected, add it to the sparsified graph $\Gtilde$ with increased edge weight $w_{ij} / p_{ij}$.
		\EndFor
		\State \Return The Laplacian $\Ltilde$ of the sparse graph $\Gtilde$.
	\end{algorithmic}
\end{algorithm}

In this appendix,  we provide details of the spectral sparsification algorithm used in this work.
Given the Laplacian matrix $L$ of a dense graph $G$, recall that the goal of spectral sparsification is to find a \emph{sparse} Laplacian $\Ltilde$ such that,
\begin{equation}
	e^{-\epsilon} L \preceq \Ltilde \preceq e^\epsilon L,
	\label{eq:spectral_approximation_def_copy}
\end{equation}
where $\epsilon > 0$ is the desired sparsification parameter. 
Note that \eqref{eq:spectral_approximation_def_copy} is the same definition as \eqref{eq:spectral_approximation_def} in the main paper.
In graph terms, this is the same as finding a sparse graph $\Gtilde$ whose Laplacian approximates that of the dense graph $G$.

In this work, we use the random sampling approach developed by Spielman and Srivastava \cite{Spielman2011Sampling}.
In particular, we implement the improved version presented in \cite[Chapter~32]{Spielman2019Notes}, which is more suitable for batch computation since it avoids sampling with replacement and the decision to keep or remove each edge can be made in parallel.
Given a constant $\epsilon_l \in (0,1)$, this method produces a sparse $\Ltilde$ with $O(n\log n)$ entries (where $n$ is the number of vertices in the graph) such that with high probability,
\begin{equation}
	(1-\epsilon_l) L \preceq \Ltilde \preceq (1+\epsilon_l) L.
	\label{eq:spectral_approximation_def_linear_scale}
\end{equation}
Note that \eqref{eq:spectral_approximation_def_linear_scale} can be used to ensure that \eqref{eq:spectral_approximation_def_copy} holds:
in our implementation, given $\epsilon$, we find the smallest $\epsilon_l$ such that \eqref{eq:spectral_approximation_def_copy} holds, which is given by,
\begin{equation}
	\epsilon_l = \min(e^\epsilon - 1, 1 - e^{-\epsilon}).
\end{equation}
The sparsification algorithm works by selecting edges in the input dense graph $G$ based on their \emph{leverage scores}.
Recall that each edge corresponds to a non-zero off-diagonal term in the Laplacian $L$, and thus selecting a small subset of edges leads to a sparse output Laplacian $\Ltilde$.
For each edge $(i,j) \in \Ecal$, its leverage score is defined as,
\begin{equation}
	\ell_{ij} \triangleq w_{ij} (\indvector_i - \indvector_j)^\top L\pinv (\indvector_i - \indvector_j),
	\label{eq:leverage_score}
\end{equation}
\begin{wrapfigure}{r}{0.3\textwidth}
	\begin{center}
		\centering
		\includegraphics[width=0.25\textwidth, trim=0 0 0 0, clip]{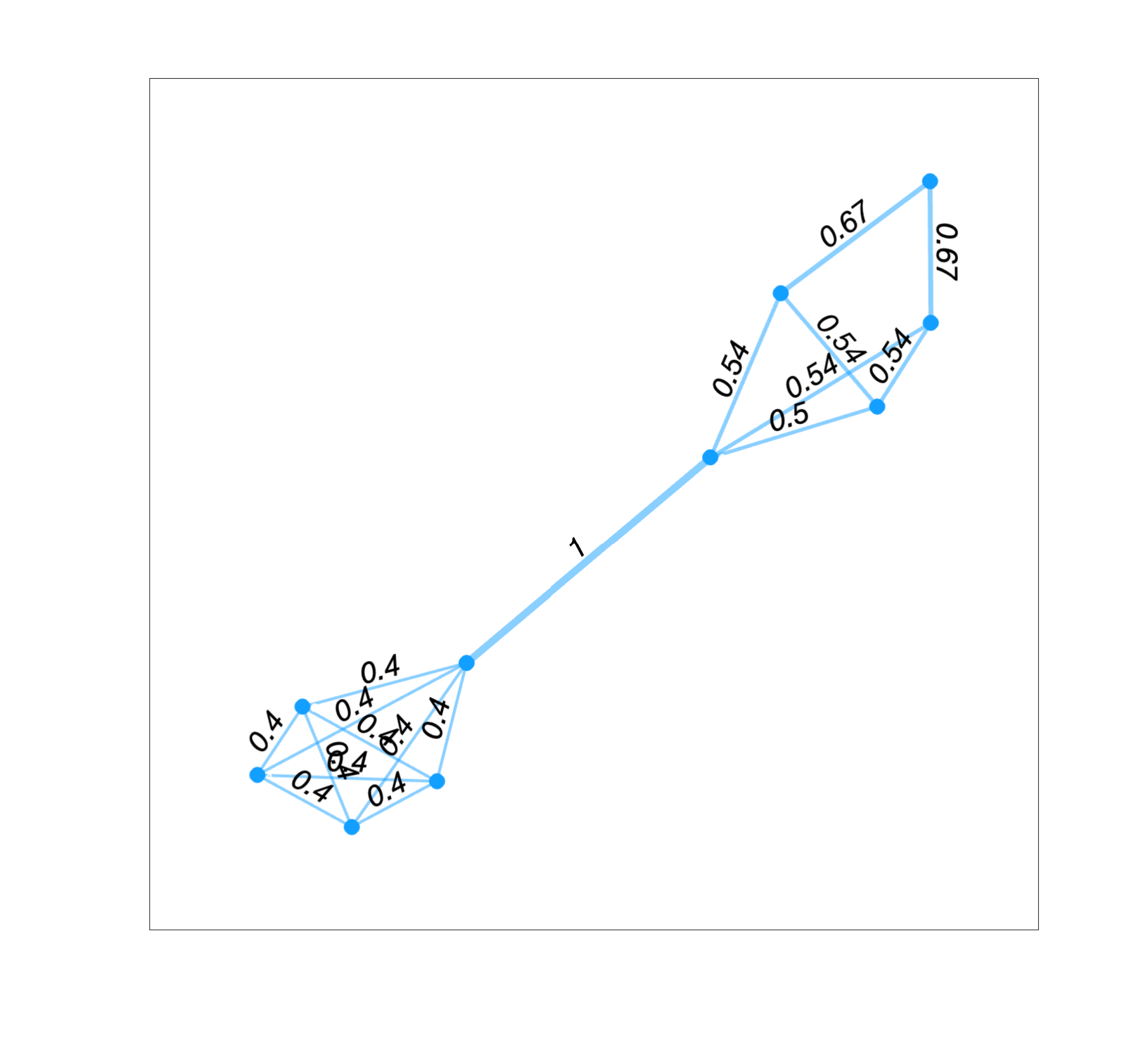}
		\caption{{Leverage scores on toy graph.}}
		\label{fig:leverage_score_example}
	\end{center}
\end{wrapfigure}
where $\indvector_i \in \Real^n$ is the $i$-th basis vector with a one in coordinate $i$, and $w_{ij} > 0$ is the edge weight.
The quantity $\ell_{ij} / w_{ij}$ is also known as the \emph{effective resistance}.
Intuitively, the leverage score measures the importance of each edge to the connectivity of the overall graph. 
\cref{fig:leverage_score_example} shows an illustration on a toy graph consisting of two clusters of vertices connected by a single edge.
All edges have unit weights, and each edge is labeled by its leverage score computed according to \eqref{eq:leverage_score}.
Notice that the middle edge has the highest leverage score, since it is critical to keep the overall graph connected.
In comparison, the remaining edges have lower leverage scores, due to the redundancy of edges in each cluster.
The actual sparsification algorithm is a remarkably simple method, which independently selects each edge with a probability proportional to its leverage score:
\begin{equation}
	p_{ij} = \min\left(1,  \; \frac{3.5 \log n}{\epsilon_l^2}   \ell_{ij}\right).
\end{equation}
If edge $(i,j) \in \Ecal$ is selected, we add it to the sparsified graph $\Gtilde$ with an \emph{increased edge weight} $w_{ij} / p_{ij}$.
The reason behind increasing the edge weight is to ensure that we can recover $L$ in expectation, \ie, $\mathbb{E}(\Ltilde) = L$.
\cref{alg:effective_resistance_sampling} shows the pseudocode.
The majority of computation is spent on factorizing $L$ to compute the leverage scores in \eqref{eq:leverage_score}.
The approximation guarantee \eqref{eq:spectral_approximation_def_linear_scale} of the resulting $\Ltilde$ is proved using certain concentration bounds of random matrices.
The interested reader is referred to \cite[Chapter~32]{Spielman2019Notes} for the complete proof.

Lastly, we refer the reader to \cref{fig:discussion} in the main paper, which demonstrates spectral sparsification on Laplacian matrices encountered in our application.
Recall that in our case, we apply sparsification on the Schur complement $S_\alpha$ of robot $\alpha$'s local Laplacian.
Since Laplacians are closed under Schur complements~\cite[Fact~4.2]{lee2015sparsified}, $S_\alpha$ is still a Laplacian matrix and thus the sparsified result $\Stilde_\alpha$ retains all the theoretical guarantees.

\section{Analysis of Riemannian Hessian of Rotation Averaging}
\label{sec:rotation_averaging_hessian_analysis}

In this appendix, we use $\distang \equiv \dist$ to denote the geodesic distance on the rotation group, 
and use $\distchr$ to denote the chordal distance.
Recall the definition of $\varphi$ in \cref{prob:rotation_averaging} as either 
the squared geodesic distance  or the squared chordal distance,
\begin{subnumcases}{\varphi(R_1, R_2) =}
	\frac{1}{2} \distang(R_1, R_2)^2 =
	\frac{1}{2} \norm{\Log(R_{1}^\top R_2)}^2, \quad &\text{squared geodesic distance}, 
	\label{eq:squared_geodesic_distance_apx}
	\\
	\frac{1}{2} \distchr(R_1, R_2)^2 =
	\frac{1}{2} \norm{R_1 - R_2}^2_F, &\text{squared chordal distance}.
	\label{eq:squared_chordal_distance_apx}
\end{subnumcases}
Using the notion of ``reshaped distance'' introduced in \cite{Tron2012Thesis},
we may express both cases as a function of the geodesic distance as follows,
\begin{equation}
	\varphi(R_1, R_2) = \rho(\distang(R_1,R_2)).
	\label{eq:reshaped_distance}
\end{equation}
{Note that the notation for reshaped distance $\rho$ is not to be confused with $\TLS$ in \cref{sec:gnc}, which instead denotes the truncated least squares function.}
It can be shown that the scalar function $\rho(\cdot)$ is defined as,
\begin{subnumcases}{\rho(\theta) = }
	\theta^2 / 2, \quad &\text{for squared geodesic distance \eqref{eq:squared_geodesic_distance_apx}}, 
	\label{eq:reshape_func_angular}
	\\
	2 - 2\cos(\theta), &\text{for squared chordal distance \eqref{eq:squared_chordal_distance_apx}}.
	\label{eq:reshape_func_chordal}
\end{subnumcases}
The first case \eqref{eq:reshape_func_angular} is readily verified.
The second case \eqref{eq:reshape_func_chordal} makes use of the relation between chordal and geodesic distances; see \cite[Table~2]{Hartley2013RA}.
To analyze the Hessian of rotation averaging \eqref{eq:rotation_averaging}, 
we start by considering the cost associated with a single relative rotation measurement,
\begin{equation}
	f_{ij}(R_i, R_j) = \varphi(R_i \Rtilde_{ij}, R_j).
	\label{eq:RA_term}
\end{equation}
In addition to \eqref{eq:RA_term}, we also consider its approximation defined on the Lie algebra,
\begin{equation}
	h_{ij}(v_i, v_j) \triangleq \varphi(\Exp(v_i) R_i \Rtilde_{ij}, \Exp(v_j) R_j).
	\label{eq:RA_term_pullback_global}
\end{equation}
Note that \eqref{eq:RA_term_pullback_global} corresponds to a single term in the overall approximation 
defined in \eqref{eq:RA_full_pullback_global_def}. 
Similar to \eqref{eq:RA_full_pullback_global_def}, $h_{ij}$ depends on the current rotation estimates $R$, but we omit this from our notation for simplicity. 
Define the gradient and Hessian of $h_{ij}$ as follows,
\begin{align}
	\gbar_{ij}  &\triangleq \nabla h_{ij}(v_i, v_j) \rvert_{v_i = v_j = 0}, \\
	\Hbar_{ij} &\triangleq \nabla^2 h_{ij}(v_i, v_j) \rvert_{v_i = v_j = 0}.
\end{align}

In the following, we first derive auxiliary results that characterize $\gbar_{ij}$ and $\Hbar_{ij}$.
Once we understand the properties of $\gbar_{ij}$ and $\Hbar_{ij}$, 
understanding the full rotation averaging problem becomes straightforward thanks to the additive structure in the cost function \eqref{eq:rotation_averaging}.
\edit{Note that we prove the main theoretical results \cref{thm:Hessian_approximation} and \cref{cor:Hessian_bound} for 3D rotation averaging.
The case of $d=2$ can be proved using the exact same arguments, and some steps would simplify due to the fact that 2D rotations commute.}

\subsection{Auxiliary Results for 3D Rotation Averaging}
\label{sec:rotation_averaging_hessian_analysis:auxliary}

\begin{lemma}
	Consider a 3D rotation averaging problem.
	Let 
	\begin{align}
	\theta_{ij} &= \norm{\Log(\Rtilde_{ij}^\top R_i^\top R_j)}, \\
	u_{ij} &= \Log(\Rtilde_{ij}^\top R_i^\top R_j) / \theta_{ij},
	\end{align}
	denote the angle-axis representation of the current rotation error.
	Then the gradient is given by,
	\begin{equation}
		\gbar_{ij} = 
		\dot{\rho}(\theta_{ij}) 
		\begin{bmatrix}
		R_i \Rtilde_{ij}  & 0 \\
		0 & R_j
		\end{bmatrix}
		\begin{bmatrix}
		-u_{ij} \\
		u_{ij}
		\end{bmatrix}.
		\label{eq:RA_term_gradient_3d}
	\end{equation}
	The Hessian is given by,
	\begin{equation}
		\Hbar_{ij} = 
		\begin{bmatrix}
		R_i \Rtilde_{ij}  & 0 \\
		0 & R_j
		\end{bmatrix}
		\begin{bmatrix}
		\Sym(\Htilde_{ij})  & -\Htilde_{ij} \\
		-\Htilde_{ij}^\top  & \Sym(\Htilde_{ij})
		\end{bmatrix}
		\begin{bmatrix}
		R_i \Rtilde_{ij}  & 0 \\
		0 & R_j
		\end{bmatrix}^\top,
		\label{eq:RA_term_hessian_3d}
	\end{equation}
	where $\Htilde_{ij} = \alpha(\theta_{ij}) I
	+ \gamma(\theta_{ij})  u_{ij}u_{ij}^\top 
	+ \beta(\theta_{ij}) \hatop{u_{ij}}$ with
	\begin{align}
		\alpha(\theta_{ij}) &= \frac{\dot{\rho}(\theta_{ij})\cot(\theta_{ij}/2)}{2}, 
		\label{eq:alpha_func} \\
		\gamma(\theta_{ij}) &= \ddot{\rho}(\theta_{ij}) - \alpha(\theta_{ij}), 
		\label{eq:gamma_func}\\
		\beta(\theta_{ij}) &= \frac{\dot{\rho}(\theta_{ij})}{2}.
		\label{eq:beta_func}
	\end{align}
	\label{lem:RA_term_3d}
\end{lemma}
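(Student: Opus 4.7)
The plan is to reduce $h_{ij}$ to a canonical form via change of variables, expand to second order using the left/right Jacobians of the exponential map on $\SOd(3)$, and then transform back.

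\textbf{Setup and change of variables.} Using the identity $R\tinv \Exp(v) R = \Exp(R\tinv v)$, I rewrite
\[
(\Exp(v_i) R_i \Rtilde_{ij})\transpose \Exp(v_j) R_j = \Exp(-u_i') R_{ij} \Exp(u_j'),
\]
with $u_i' = \Rtilde_{ij}\transpose R_i\transpose v_i$, $u_j' = R_j\transpose v_j$, and $R_{ij} = \Rtilde_{ij}\transpose R_i\transpose R_j = \Exp(\omega_{ij})$, $\omega_{ij} = \theta_{ij} u_{ij}$. Hence $h_{ij}(v_i, v_j) = \rho(\theta(u_i', u_j'))$ with $\theta = \norm{F}$ and $F(u_i', u_j') \triangleq \Log(\Exp(-u_i') R_{ij} \Exp(u_j'))$. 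Because the maps $v_i \mapsto u_i'$ and $v_j \mapsto u_j'$ are orthogonal, I can compute everything in $(u_i', u_j')$ and conjugate back at the end by $\Diag(R_i\Rtilde_{ij}, R_j)$, which directly produces the outer ``sandwich'' structure seen in \eqref{eq:RA_term_gradient_3d}--\eqref{eq:RA_term_hessian_3d}.

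\textbf{First-order expansion and gradient.} Using the standard derivatives of $\Log$ on $\SOd(3)$,
\[
F = \omega_{ij} - J_\ell(\omega_{ij})\inv u_i' + J_r(\omega_{ij})\inv u_j' + O(\norm{u'}^2).
\]
The decisive simplification is that $J_\ell\inv$ and $J_r\inv$ are polynomials in $\hatop{\omega_{ij}}$ and therefore fix $\omega_{ij}$, so $\omega_{ij}\transpose J_\ell\inv = \omega_{ij}\transpose J_r\inv = \omega_{ij}\transpose$. Thus $\theta^2 - \theta_{ij}^2 = 2\omega_{ij}\transpose(-u_i' + u_j') + O(\norm{u'}^2)$, giving $\nabla \theta|_0 = (-u_{ij};\, u_{ij})$. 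Applying $\gbar_{ij} = \dot\rho(\theta_{ij})\nabla\theta$ and transforming back to $v$-coordinates yields \eqref{eq:RA_term_gradient_3d}.

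\textbf{Second-order expansion and Hessian.} From $\theta^2 = F\transpose F$ I will use the identity
\[
\nabla^2 h_{ij} = \ddot\rho\,\nabla\theta(\nabla\theta)\transpose + \frac{\dot\rho}{\theta_{ij}}\Bigl[(\nabla F)\transpose \nabla F + F\transpose \nabla^2 F\Bigr] - \frac{\dot\rho}{\theta_{ij}}\nabla\theta(\nabla\theta)\transpose,
\]
evaluated at the origin. The first piece contributes $\ddot\rho \, u_{ij}u_{ij}\transpose$ blocks. For the middle piece, $(\nabla F)\transpose\nabla F$ is a block matrix whose blocks are products of $J_r\inv$ and $J_\ell\inv$, and the remaining term $F\transpose \nabla^2 F$ requires the second derivative of $F$, obtained by differentiating $J_\ell\inv$ and $J_r\inv$ (equivalently, expanding the BCH formula one more order). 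After this expansion, commutativity of the subalgebra generated by $\hatop{\omega_{ij}}$ causes a massive collapse: every resulting matrix lies in $\Span\{I,\hatop{u_{ij}},\hatop{u_{ij}}^2\}$, which by $\hatop{u_{ij}}^2 = u_{ij}u_{ij}\transpose - I$ is exactly $\Span\{I, u_{ij}u_{ij}\transpose, \hatop{u_{ij}}\}$. The symmetric ``in-plane'' coefficient $\alpha$ will emerge from the identity $\tfrac12(J_r\inv + J_\ell\inv) = I + \bigl(\tfrac{1}{\theta^2} - \tfrac{\cot(\theta/2)}{2\theta}\bigr)\hatop{\omega}^2$, producing the characteristic $\cot(\theta_{ij}/2)$ factor in \eqref{eq:alpha_func}; the antisymmetric part $\beta\hatop{u_{ij}}$ comes from the asymmetry $J_r\inv - J_\ell\inv = \hatop{\omega_{ij}}$; and the radial coefficient $\gamma = \ddot\rho - \alpha$ arises as what remains along $u_{ij}u_{ij}\transpose$ after peeling off the $\alpha I$ contribution (noting that along the axis $J_r\inv u_{ij} = J_\ell\inv u_{ij} = u_{ij}$). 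Finally, conjugating the $2\times2$ block matrix with blocks $\pm\Htilde_{ij}$ and $\mathrm{Sym}(\Htilde_{ij})$ by $\Diag(R_i\Rtilde_{ij}, R_j)$ yields \eqref{eq:RA_term_hessian_3d}.

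\textbf{Expected obstacle.} The main technical work will be controlling $F\transpose \nabla^2 F$ at the origin, which involves second-order derivatives of the matrix logarithm and explicit formulas for $\partial J_\ell\inv$ and $\partial J_r\inv$. The cleanest route is probably to evaluate the smooth curve $t \mapsto \Log(\Exp(-t u_i') R_{ij} \Exp(t u_j'))$ up to $t^2$ via BCH and then polarize; the commutativity of polynomials in $\hatop{\omega_{ij}}$ and the identity $\hatop{u_{ij}}^3 = -\hatop{u_{ij}}$ should suffice to reduce every term to the three basis elements $\{I, \hatop{u_{ij}}, u_{ij}u_{ij}\transpose\}$ and yield the scalar coefficients in \eqref{eq:alpha_func}--\eqref{eq:beta_func}.
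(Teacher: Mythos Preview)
Your approach is correct but takes a genuinely different route from the paper. The paper views $h_{ij}$ as the composition $\varphi\circ F$, where $F(S_i,S_j)=(S_iR_i\Rtilde_{ij},\,S_jR_j)$ is a Riemannian isometry of $\SOd(3)^2$, and applies the chain rule $\Hess(\varphi\circ F)=DF\transpose\circ\Hess\varphi\circ DF$. The nontrivial piece---$\Hess\varphi$ for the reshaped geodesic distance---is dispatched entirely by citing Tron's thesis (Propositions~E.2.1 and~E.3.1), which already give $\Htilde_{ij}$ in the form $\ddot\rho\,uu\transpose+\tfrac{\dot\rho}{\theta}(D\Log-uu\transpose)$; substituting the closed-form $D\Log$ and simplifying yields $\alpha,\beta,\gamma$ in a few lines. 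Your proposal instead rederives $\Hess\varphi$ from scratch via the left/right Jacobians and a second-order BCH expansion. This is precisely the computation that underlies Tron's cited propositions, so you are in effect reproving them. The payoff of your route is that it is self-contained and makes transparent where each coefficient comes from (the $\cot(\theta_{ij}/2)$ in $\alpha$ from $\tfrac12(J_r\inv+J_\ell\inv)$, the $\beta\hatop{u_{ij}}$ from $J_r\inv-J_\ell\inv=\hatop{\omega_{ij}}$); the cost is that the $F\transpose\nabla^2 F$ term---which you correctly flag as the main obstacle---genuinely contributes to the diagonal blocks and cannot be skipped, so you will have to carry the BCH expansion (or the directional derivatives of $J_\ell\inv,J_r\inv$) all the way through to recover the $\Sym(\Htilde_{ij})$ structure rather than just $(\nabla F)\transpose\nabla F$.
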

\begin{proof}
	Introduce new rotation variables $S_i, S_j \in \SOd(3)$ and consider the following function,
	\begin{equation}
		\hhat_{ij}(S_i, S_j) = \varphi(S_i R_i \Rtilde_{ij},  S_j R_j).
		\label{eq:RA_term_2}
	\end{equation}
	Note that $\gbar_{ij}$ and $\Hbar_{ij}$ correspond to the Riemannian gradient and Riemannian Hessian of \eqref{eq:RA_term_2} evaluated at $S_i = S_j = I$. 
	Define $F: \SOd(3) \times \SOd(3) \to \SOd(3) \times \SOd(3)$ be the mapping such that,
	\begin{equation}
		F(S_i, S_j) = (S_i R_i \Rtilde_{ij},  S_j R_j) \triangleq (\Shat_i, \Shat_j).
	\end{equation}
	Then we have 
	$\hhat_{ij}(S_i, S_j) = \varphi(F(S_i, S_j))$.
	By chain rule, 
	\begin{align}
		\rgrad \hhat_{ij}(S_i, S_j) 
		&= DF(S_i, S_j)^\top [ \rgrad \varphi(\Shat_i, \Shat_j)]
		\label{eq:RA_term_gradient_3d_chain_rule_init} \\
		&= DF(S_i, S_j)^\top [ \dot{\rho}(\theta_{ij})  \rgrad \distang(\Shat_i, \Shat_j)] \\
		&= \dot{\rho}(\theta_{ij}) DF(S_i, S_j)^\top [  \rgrad \distang(\Shat_i, \Shat_j)].
		\label{eq:RA_term_gradient_3d_chain_rule}
	\end{align}
	In \eqref{eq:RA_term_gradient_3d_chain_rule}, $DF(S_i, S_j)^\top$ stands for the adjoint operator (transpose in matrix form) of the differential
	$DF(S_i, S_j)$.
	Using the standard basis for the Lie algebra $\sod(3)$,
	Tron~\cite{Tron2012Thesis} showed that the Riemannian gradient of the geodesic distance is,
	\begin{equation}
		\rgrad \distang(\Shat_i, \Shat_j) = \begin{bmatrix}
		 -u_{ij} \\
		 u_{ij}
		\end{bmatrix},
		\label{eq:angular_distance_3d_gradient}
	\end{equation}
	\cf \cite[Equation~(E.31)]{Tron2012Thesis}.
	Substituting \eqref{eq:angular_distance_3d_gradient} into \eqref{eq:RA_term_gradient_3d_chain_rule} and furthermore using the matrix form of the differential $DF(S_i, S_j)$ in the standard basis, we have
	\begin{align}
		\gbar_{ij}=
		\rgrad \hhat_{ij}(S_i, S_j) 
		&= \dot{\rho}(\theta_{ij}) 
		\begin{bmatrix}
		R_i \Rtilde_{ij}  & 0 \\
		0 & R_j
		\end{bmatrix}
		\begin{bmatrix}
		-u_{ij} \\
		u_{ij}
		\end{bmatrix}.
	\end{align}
	For the Riemannian Hessian, differentiating \eqref{eq:RA_term_gradient_3d_chain_rule_init} again yields,
	\begin{align}
		\Hess \hhat_{ij}(S_i, S_j) 
		&= DF(S_i, S_j)^\top \circ \Hess \varphi(\Shat_i, \Shat_j) \circ DF(S_i, S_j).
		\label{eq:RA_term_hessian_3d_chain_rule}
	\end{align}
	For the Hessian term in the middle of \eqref{eq:RA_term_hessian_3d_chain_rule}, we once again leverage existing results from \cite[Proposition~E.3.1]{Tron2012Thesis}:
	\begin{align}
		\Hess \varphi(\Shat_i, \Shat_j) &= 
		\begin{bmatrix}
		\Sym(\Htilde_{ij})  & -\Htilde_{ij} \\
		-\Htilde_{ij}^\top  & \Sym(\Htilde_{ij})
		\end{bmatrix},
	\end{align}
	where the inner matrix $\Htilde_{ij}$ is defined as, 
	\begin{align}
		\Htilde_{ij} &=
		\ddot{\rho}(\theta_{ij}) u_{ij}u_{ij}^\top 
		+ \frac{\dot{\rho}(\theta_{ij})}{\theta_{ij}} (D\Log(\Rtilde_{ij}^\top R_i^\top R_j) - u_{ij}u_{ij}^\top ).
	\end{align}
	Using the expression for the differential of the logarithm map \cite[Proposition~E.2.1]{Tron2012Thesis},
	the previous expression further simplifies to,
	\begin{equation}
		\begin{aligned}
		\Htilde_{ij} 
		&=
		\ddot{\rho}(\theta_{ij}) u_{ij}u_{ij}^\top 
		+ \frac{\dot{\rho}(\theta_{ij})}{\theta_{ij}} 
		\left(u_{ij}u_{ij}^\top + \frac{\theta_{ij}}{2} 
		\left ( \hatop{u_{ij}} - \cot(\theta_{ij}/2) \hatop{u_{ij}}^2 \right )
		- u_{ij}u_{ij}^\top 
		\right ) \\
		&=
		\ddot{\rho}(\theta_{ij}) u_{ij}u_{ij}^\top 
		+ \frac{\dot{\rho}(\theta_{ij})}{2}
		\left(
		\hatop{u_{ij}} - \cot(\theta_{ij}/2) \hatop{u_{ij}}^2
		\right) \\
		&=
		\ddot{\rho}(\theta_{ij}) u_{ij}u_{ij}^\top 
		+ \frac{\dot{\rho}(\theta_{ij})}{2} \hatop{u_{ij}} 
		- \frac{\dot{\rho}(\theta_{ij})\cot(\theta_{ij}/2)}{2} \left(-I + u_{ij}u_{ij}^\top \right) \\
		&=
		\frac{\dot{\rho}(\theta_{ij})\cot(\theta_{ij}/2)}{2} I 
		+ \left(\ddot{\rho}(\theta_{ij}) - \frac{\dot{\rho}(\theta_{ij})\cot(\theta_{ij}/2)}{2} \right) u_{ij}u_{ij}^\top 
		+ \frac{\dot{\rho}(\theta_{ij})}{2} \hatop{u_{ij}} \\
		&=
		\alpha(\theta_{ij}) I
		+ \gamma(\theta_{ij})  u_{ij}u_{ij}^\top 
		+ \beta(\theta_{ij}) \hatop{u_{ij}}.
		\end{aligned}
	\end{equation}
	To conclude, the Hessian is obtained by substituting the above results into \eqref{eq:RA_term_hessian_3d_chain_rule}:
	\begin{equation}
	\Hbar_{ij} =
	\Hess \hhat_{ij}(S_i, S_j) 
	= 
	\begin{bmatrix}
	R_i \Rtilde_{ij}  & 0 \\
	0 & R_j
	\end{bmatrix}
	\begin{bmatrix}
	\Sym(\Htilde_{ij})  & -\Htilde_{ij} \\
	-\Htilde_{ij}^\top  & \Sym(\Htilde_{ij})
	\end{bmatrix}
	\begin{bmatrix}
	R_i \Rtilde_{ij}  & 0 \\
	0 & R_j
	\end{bmatrix}^\top.
	\end{equation}
\end{proof}

The Hessian expression in \cref{lem:RA_term_3d} is complicated in general.
However, we will show that as the angular error $\theta_{ij}$ tends to zero, 
the Hessian $\Hbar_{ij}$ converges to a particular simple form. 
\edit{We note that the case under geodesic distance (\cref{lem:Hij_limit_angular} below) can also be derived as a special case of \cite[Theorem~1]{Wilson2016Convexity}.}

\begin{lemma}[Limit of $\Hbar_{ij}$ under geodesic distance] 
	For rotation averaging under the geodesic distance, it holds that, 
	\begin{equation}
		\lim_{\theta_{ij} \to 0} \Hbar_{ij}(\theta_{ij}) = 
		\begin{bmatrix}
			I_3 & -I_3 \\
			-I_3 & I_3 \\
		\end{bmatrix}.
	\end{equation}
	\label{lem:Hij_limit_angular}
\end{lemma}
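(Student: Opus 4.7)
The strategy is to specialize the expressions for $\Htilde_{ij}$ and $\Hbar_{ij}$ from \cref{lem:RA_term_3d} to the squared geodesic distance case and then take the limit term by term. Since $\rho(\theta) = \theta^2/2$ by \eqref{eq:reshape_func_angular}, I immediately read off $\dot{\rho}(\theta_{ij}) = \theta_{ij}$ and $\ddot{\rho}(\theta_{ij}) = 1$, so the coefficient functions in \eqref{eq:alpha_func}--\eqref{eq:beta_func} become
\begin{equation*}
\alpha(\theta_{ij}) = \tfrac{1}{2}\theta_{ij}\cot(\theta_{ij}/2), \quad
\beta(\theta_{ij}) = \tfrac{1}{2}\theta_{ij}, \quad
\gamma(\theta_{ij}) = 1 - \alpha(\theta_{ij}).
\end{equation*}

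Next I would compute the three scalar limits as $\theta_{ij} \to 0$. The standard expansion $\cot(x) = 1/x - x/3 + O(x^3)$ gives $\theta\cot(\theta/2) = 2 - \theta^2/6 + O(\theta^4)$, hence $\alpha(\theta_{ij}) \to 1$. Consequently $\gamma(\theta_{ij}) \to 0$ and $\beta(\theta_{ij}) \to 0$ trivially. Substituting these limits into $\Htilde_{ij} = \alpha I + \gamma u_{ij}u_{ij}^\top + \beta \hatop{u_{ij}}$ yields $\Htilde_{ij} \to I_3$, so both $\Sym(\Htilde_{ij}) \to I_3$ and the inner $2 \times 2$ block matrix in \eqref{eq:RA_term_hessian_3d} converges to $\bigl[\begin{smallmatrix} I_3 & -I_3 \\ -I_3 & I_3 \end{smallmatrix}\bigr]$. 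Note that since $u_{ij}u_{ij}^\top$ and $\hatop{u_{ij}}$ remain bounded in norm ($\norm{u_{ij}}=1$), the convergence of the coefficients is enough to conclude convergence of $\Htilde_{ij}$ without any delicate uniformity argument.

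Finally, I would handle the outer ``rotation'' factor. The definition of $\theta_{ij}$ as the norm of $\Log(\Rtilde_{ij}^\top R_i^\top R_j)$ means that $\theta_{ij}\to 0$ forces $\Rtilde_{ij}^\top R_i^\top R_j \to I_3$, i.e., $R_i \Rtilde_{ij} \to R_j$. Hence the block-diagonal conjugating matrix tends to $I_2 \otimes R_j$, and conjugating the inner limit matrix by $I_2 \otimes R_j$ leaves it invariant because it has the tensor form $M \otimes I_3$ with $M = \bigl[\begin{smallmatrix}1 & -1 \\ -1 & 1\end{smallmatrix}\bigr]$; the $R_j$ and $R_j^\top$ factors cancel block-wise. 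Combining these computations gives the claimed limit. The only mildly delicate step is the asymptotic evaluation of $\alpha$, and even that is a one-line Taylor expansion, so no serious obstacle is anticipated.
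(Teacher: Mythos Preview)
Your proposal is correct and follows essentially the same approach as the paper: compute the limits $\alpha(\theta_{ij})\to 1$, $\beta(\theta_{ij})\to 0$, $\gamma(\theta_{ij})\to 0$, use boundedness of $u_{ij}u_{ij}^\top$ and $\hatop{u_{ij}}$ to kill the corresponding terms, and then observe that $R_i\Rtilde_{ij}\to R_j$ makes the outer conjugation collapse to the identity on the limiting block matrix. The only cosmetic difference is that the paper expands the conjugated expression into three additive terms before passing to the limit, whereas you first take the limit of $\Htilde_{ij}$ and then handle the conjugation; both arrangements are equivalent.
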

\begin{proof}
	We first compute limits of $\alpha(\theta), \gamma(\theta)$, and $\beta(\theta)$ 
	that appear in the definition of $\Hbar_{ij}$.
	For rotation averaging under the geodesic distance, the scalar function $\rho(\theta)$ is defined as in \eqref{eq:reshape_func_angular}.
	In this case, we have
	\begin{equation}
		\dot{\rho}(\theta) = \theta, \;\; \ddot{\rho}(\theta) = 1.
	\end{equation}
	Substituting into \eqref{eq:alpha_func}-\eqref{eq:beta_func},
	\begin{align}
		\alpha(\theta) &= \frac{1}{2} \theta \cot(\theta/2) 
		= \frac{1}{2} \frac{\theta}{\sin(\theta/2)} \cos(\theta/2), \\
		\gamma(\theta) &= 1 - \alpha(\theta), \\
		\beta(\theta) &= \theta / 2.
	\end{align}
	Take the limit as $\theta$ tends to zero,
	\begin{align}
		\lim_{\theta \to 0} \alpha(\theta)
		& = \frac{1}{2} \cdot \lim_{\theta \to 0} \frac{\theta}{\sin(\theta/2)} \cdot 
		\lim_{\theta \to 0} \cos(\theta/2)
		= 1, \\
		\lim_{\theta \to 0} \gamma(\theta)
		&= 1 - \lim_{\theta \to 0} \alpha(\theta) = 0, \\
		\lim_{\theta \to 0} \beta(\theta)
		&= 0.
	\end{align}
	Define the following matrix,
	\begin{equation}
		P = 
		\begin{bmatrix}
			R_i \Rtilde_{ij}  & 0 \\
			0 & R_j
		\end{bmatrix}.
	\end{equation}
	From the definition of $\Hbar_{ij}$ in \eqref{eq:RA_term_hessian_3d},
	\begin{equation}
		\begin{aligned}
			\Hbar_{ij} &= 
			\alpha(\theta_{ij}) P 
			\begin{bmatrix}
				I_3 & -I_3 \\
				-I_3 & I_3
			\end{bmatrix}
			P^\top \\
			&+
			\gamma(\theta_{ij}) P
			\begin{bmatrix}
				u_{ij} u_{ij}^\top & -u_{ij} u_{ij}^\top \\
				-u_{ij} u_{ij}^\top & u_{ij} u_{ij}^\top
			\end{bmatrix}
			P^\top \\
			&+ 
			\beta(\theta_{ij}) P
			\begin{bmatrix}
				0_3 & - \hatop{u_{ij}} \\
				- \hatop{u_{ij}}^\top & 0_3
			\end{bmatrix}
			P^\top.
		\end{aligned}
		\label{eq:Hij_limit_angular_step_1}
	\end{equation}
	Since $\lim_{\theta \to 0} \gamma(\theta) = \lim_{\theta \to 0} \beta(\theta) = 0$ and all matrices involved in \eqref{eq:Hij_limit_angular_step_1} are bounded, 
	we conclude that the last two terms in \eqref{eq:Hij_limit_angular_step_1} vanish as $\theta_{ij}$ converges to zero.
	For the first term in \eqref{eq:Hij_limit_angular_step_1},
	notice that,
	\begin{equation}
		\alpha(\theta_{ij}) P 
		\begin{bmatrix}
			I_3 & -I_3 \\
			-I_3 & I_3
		\end{bmatrix}
		P^\top
		= 
		\alpha(\theta_{ij})
		\begin{bmatrix}
			I_3 & -R_i \Rtilde_{ij} R_j^\top \\
			-R_j \Rtilde_{ij}^\top R_i^\top & I_3
		\end{bmatrix}.
		\label{eq:Hij_limit_angular_step_2}
	\end{equation}
	As $\theta_{ij}$ tends to zero, 
	$\alpha(\theta_{ij})$ converges to $1$ and 
	the off-diagonal blocks in \eqref{eq:Hij_limit_angular_step_2} tend to $-I_3$.
	Hence the proof is completed.
\end{proof}

\begin{lemma}[Limit of $\Hbar_{ij}$ under chordal distance]
	For rotation averaging under the chordal distance, it holds that, 
	\begin{equation}
		\lim_{\theta_{ij} \to 0} \Hbar_{ij}(\theta_{ij}) = 
		2
		\begin{bmatrix}
			I_3 & -I_3 \\
			-I_3 & I_3 \\
		\end{bmatrix}.
	\end{equation}
	\label{lem:Hij_limit_chordal}
\end{lemma}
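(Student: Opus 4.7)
The plan is to mimic the proof of Lemma~\ref{lem:Hij_limit_angular}, substituting the chordal reshape function $\rho(\theta) = 2 - 2\cos(\theta)$ from \eqref{eq:reshape_func_chordal} in place of the geodesic one. First I will compute the derivatives $\dot{\rho}(\theta) = 2\sin(\theta)$ and $\ddot{\rho}(\theta) = 2\cos(\theta)$, and plug them into the definitions \eqref{eq:alpha_func}--\eqref{eq:beta_func} of the scalar coefficients that appear in $\Htilde_{ij}$. Using the half-angle identity $\sin(\theta) = 2\sin(\theta/2)\cos(\theta/2)$, the coefficient $\alpha(\theta)$ simplifies to $2\cos^2(\theta/2)$, which tends to $2$ as $\theta \to 0$. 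Likewise $\gamma(\theta) = 2\cos(\theta) - 2\cos^2(\theta/2)\to 0$ and $\beta(\theta) = \sin(\theta) \to 0$.

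Next I will reuse the block decomposition \eqref{eq:Hij_limit_angular_step_1} from the geodesic proof, which expresses $\Hbar_{ij}$ as $\alpha(\theta_{ij})$ times an $I$-block, plus $\gamma(\theta_{ij})$ times a $u_{ij}u_{ij}^\top$-block, plus $\beta(\theta_{ij})$ times a $\hatop{u_{ij}}$-block, with all three conjugated by the bounded orthogonal matrix $P = \mathrm{BlockDiag}(R_i \Rtilde_{ij},\, R_j)$. Since the inner matrices involving $u_{ij}$ and $\hatop{u_{ij}}$ are uniformly bounded (entries of unit vectors and skew-symmetric matrices built from them), the last two terms vanish in the limit. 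For the remaining $\alpha$-term, I will observe that as $\theta_{ij}\to 0$ we have $\Rtilde_{ij}^\top R_i^\top R_j \to I_3$, so the off-diagonal block $R_i \Rtilde_{ij} R_j^\top \to I_3$ as in \eqref{eq:Hij_limit_angular_step_2}.

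Combining the two observations, the limit reduces to $\alpha(0)$ multiplied by $\begin{bmatrix} I_3 & -I_3 \\ -I_3 & I_3 \end{bmatrix}$, and since $\alpha(0) = 2$ in the chordal case, the claim follows. There is no real obstacle here: the proof is a direct parallel of Lemma~\ref{lem:Hij_limit_angular}, and the only non-routine step is recognizing the half-angle simplification for $\alpha(\theta)$, which makes its value at $\theta=0$ equal to $2$ rather than $1$. This factor of $2$ is precisely the reason the chordal edge weight in Theorem~\ref{thm:Hessian_approximation} becomes $w = 2\kappa$ instead of $w = \kappa$.
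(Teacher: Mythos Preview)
Your proposal is correct and follows essentially the same approach as the paper's own proof: compute $\dot{\rho},\ddot{\rho}$ for the chordal reshape function, evaluate the limits of $\alpha,\gamma,\beta$ (finding $\alpha\to 2$, $\gamma,\beta\to 0$), and then defer to the block decomposition argument of Lemma~\ref{lem:Hij_limit_angular}. In fact you spell out more detail than the paper, which simply states that the remainder is ``similar to that of \cref{lem:Hij_limit_angular} and is omitted.''
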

\begin{proof}
	For rotation averaging under the chordal distance, the scalar function $\rho(\theta)$ is defined as in \eqref{eq:reshape_func_chordal}.
	In this case, we have
	\begin{equation}
		\dot{\rho}(\theta) = 2 \sin(\theta), \;\; \ddot{\rho}(\theta) = 2\cos(\theta).
	\end{equation}
	Substituting into \eqref{eq:alpha_func}-\eqref{eq:beta_func},
	\begin{align}
		\alpha(\theta) &=  
		\sin(\theta) \cot(\theta/2) = 2\cos(\theta/2)^2, \\
		\gamma(\theta) &= 2\cos(\theta) - \alpha(\theta), \\
		\beta(\theta) &= \sin(\theta).
	\end{align}
	Take the limit as $\theta$ tends to zero,
	\begin{align}
		\lim_{\theta \to 0} \alpha(\theta)
		& = 2, \\
		\lim_{\theta \to 0} \gamma(\theta)
		&= 2 - \lim_{\theta \to 0} \alpha(\theta) = 0, \\
		\lim_{\theta \to 0} \beta(\theta)
		&= 0.
	\end{align}
	The remaining proof is similar to that of \cref{lem:Hij_limit_angular} and is omitted.
\end{proof}

\subsection{Proof of Theorem~\ref{thm:Hessian_approximation}}

\begin{proof}

We will use \cref{lem:Hij_limit_angular} to prove the theorem for the case of squared geodesic cost.
The case of squared chordal cost is analogous: instead of \cref{lem:Hij_limit_angular}, we will use \cref{lem:Hij_limit_chordal} and the remaining steps are the same. 
Recall the approximation of the overall cost function defined in \eqref{eq:RA_full_pullback_global_def}:
\begin{equation}
	h(v; R) 
	= \sum_{(i,j) \in \Ecal} \kappa_{ij} h_{ij}(v_i, v_j)
	= \sum_{(i,j) \in \Ecal} 
	\kappa_{ij}
	\varphi(\Exp(v_i) R_i \Rtilde_{ij}, \Exp(v_j) R_j),
	\label{eq:RA_full_pullback_global}
\end{equation}
Observe that the Hessian of $h(v; R)$ is simply given by the sum of the Hessian matrices of $h_{ij}(v_i, v_j)$, after ``lifting'' the latter to the dimension of the full optimization problem, \ie, 
\begin{equation}
	\begin{aligned}
		\Hbar(R) = \sum_{(i,j) \in \Ecal} \kappa_{ij} W_{ij}, \;\;
		W_{ij} =
		\renewcommand{\kbldelim}{[}
		\renewcommand{\kbrdelim}{]}
		\kbordermatrix{ & & i & & j & \cr
			& & \vdots & & \vdots & \cr
			i &\ldots & \Hbar_{ij}^{(ii)} & \ldots & \Hbar_{ij}^{(ij)} & \ldots \cr
			& & \vdots & & \vdots & \cr
			j & \ldots & \Hbar_{ij}^{(ji)} & \ldots & \Hbar_{ij}^{(jj)} & \ldots \cr
			& & \vdots & & \vdots & \cr
		}.
	\end{aligned}
	\label{eq:RA_full_hessian_3d}
\end{equation}
In \eqref{eq:RA_full_hessian_3d}, $W_{ij}$ is formed by placing the $p$-by-$p$ blocks of $\Hbar_{ij}$ defined in \eqref{eq:RA_term_hessian_3d} in corresponding locations of the full matrix.
{For instance, $\Hbar_{ij}^{(ii)}$ is the block of $\Hbar_{ij}$ that corresponds to vertex $i$.}

	In the following,  let $\theta_{ij}(R)$ denote the residual of edge $(i,j) \in \Ecal$ evaluated at $R \in \SOd(d)^n$.
	From \eqref{eq:RA_full_hessian_3d} and \cref{lem:Hij_limit_angular}, 
    we see that $\Hbar(R)$ has the following limit point as all edge residuals tend to zero, 
	\begin{equation}
		\lim_{
		\substack{
							\theta_{ij}(R) \to 0, \\
							\forall (i,j) \in \Ecal}
		} \Hbar(R)
		= M
		\triangleq
		L(G; \kappa)  \otimes I_p. 
	\end{equation}
	Recall from \cref{rm:approximate_newton_step_feasibility} that $M \succeq 0$ and
	$\ker(M) = \Ncal$ where $\Ncal$ is the vertical space defined in \eqref{eq:vertical_space_RA}. 
	In addition, recall the definition of $H(R)$ in \eqref{eq:Newton_step_quotient}: 
	\begin{equation}
		H(R) = P_H \Hbar(R) P_H.
	\end{equation}
	Since $P_H$ is the (constant) orthogonal projection matrix onto the horizontal space $\Hcal = \Ncal^\perp$,
	it holds that,
	\begin{equation}
		\lim_{
			\substack{
				\theta_{ij}(R) \to 0, \\
				\forall (i,j) \in \Ecal}
		} H(R)
		= P_H \, M \, P_H
		= M.
		\label{eq:H_limit_angular}
	\end{equation}
	
	Note that for singular symmetric matrices $A$ and $B$, $A\approx_\delta B$ necessarily means that $\ker(A) = \ker(B)$.
	Therefore, to prove the theorem, we must first show that $\ker(H(R)) = \ker(M) = \Ncal$ under our assumptions.
	Let $\lambda_1(A), \lambda_2(A), \hdots$ denote the eigenvalues of a symmetric matrix $A$ sorted in increasing order.
	By construction, $\Ncal$ is always contained in $\ker(H(R))$, and thus $\dim(\Ncal) = p$ eigenvalues of $H(R)$ are always zero. 
	Next, we will show that if all measurement residuals are sufficiently small, then the remaining 
	$pn-p$ eigenvalues of $H(R)$ will be strictly positive. 
	Define $E(R) \triangleq H(R) - M$.
	Let $x$ be any unit vector such that $x \perp \Ncal$. Note that,
	\begin{equation}
		\begin{aligned}
			x^\top H(R) x 
			&= x^\top M x + x^\top E(R) x \\
			&\geq \lambda_{p+1} (M) - \norm{E(R)}_2.
		\end{aligned}
	\label{eq:H_kernel_analysis}
	\end{equation}
	Since $M = L(G; \kappa) \otimes I_p$, it holds that $\lambda_{p+1} (M) = \lambda_2(L(G; \kappa))$.
	The latter is known as the \emph{algebraic connectivity} which is always positive for a connected graph $G$.
	Thus $\lambda_{p+1} (M) > 0$ and by \eqref{eq:H_limit_angular}, we also have $\lim_{\theta_{ij}(R) \to 0} E(R) = 0$.
	Consequently, when all $\theta_{ij}(R)$ are sufficiently small, 
	the right-hand side of \eqref{eq:H_kernel_analysis} is strictly positive, \ie,
	there exists $\bar{\theta}_1 > 0$ such that if $\theta_{ij}(R) \leq \bar{\theta}_1$ for all $(i,j) \in \Ecal$, we have,
	\begin{equation}
		\ker(H(R)) = \ker(M) = \Ncal.
		\label{eq:H_kernel_low_noise}
	\end{equation}
	
	Under \eqref{eq:H_kernel_low_noise}, the desired approximation $H(R) \approx_\delta M$ is equivalent to,
	\begin{equation}
		e^{-\delta} P_H \preceq M^{\frac{+}{2}} H(R) M^{\frac{+}{2}}  \preceq e^{\delta} P_H,
	\end{equation}
	where $M^{\frac{+}{2}}$ denotes the square root of the pseudoinverse of $M$, and
	$P_H$ is the orthogonal projection onto the horizontal space $\Hcal$.
	This condition is true if and only if the nontrivial eigenvalues are bounded as follows,
	\begin{align}
		\lambda_{p+1}(M^{\frac{+}{2}} H(R) M^{\frac{+}{2}}) \geq e^{-\delta},  \;
		\lambda_{pn}(M^{\frac{+}{2}} H(R) M^{\frac{+}{2}})   \leq e^\delta.
		\label{eq:eigenvalue_bounds}
	\end{align}
	Using the convergence result \eqref{eq:H_limit_angular}
	and the eigenvalue perturbation bounds in \cite[Corollary~6.3.8]{Horn2012MatrixAnalysis},
	we conclude that there exists $\thetabar_0 \in (0, \thetabar_1]$ such that 
	if $R \in \SOd(d)^n$ satisfies 
	\begin{equation}
		\theta_{ij}(R) \leq \bar{\theta}_0, \; \forall (i,j) \in \Ecal,
		\label{eq:edge_residual_bound}
	\end{equation}
	then \eqref{eq:eigenvalue_bounds} holds, \ie, we have the desired approximation,
	\begin{align}
		H(R) \approx_\delta M.
	\end{align}

	To conclude the proof, we need to show that there exist $\thetabar, r > 0$ such that 
	condition \eqref{eq:edge_residual_bound} holds for all $R \in B_r(\Rstar)$.
	Let us first consider residuals at the global minimizer $\Rstar$.
	Using assumption \eqref{thm:Hessian_approximation:noise_bound} and the cost function,
	we obtain the following simple bound:
	\begin{equation}
		\max_{(i,j) \in \Ecal} \frac{\kappa_{ij} \theta_{ij}(\Rstar)^2}{2} 
		\leq 
		f(\Rstar) \leq f(\Runder) \leq 
		\frac{\sum_{(i,j) \in \Ecal} \kappa_{ij} \thetabar^2}{2}.
		\label{eq:cost_function_bound}
	\end{equation}
	It can be verified that if,
	\begin{equation}
		\thetabar \leq \frac{\bar{\theta}_0}{2} \sqrt{\frac{\min_{(i,j) \in \Ecal} \kappa_{ij}}{\sum_{(i,j) \in \Ecal} \kappa_{ij}}},
		\label{eq:delta_bound} 
	\end{equation}
	then \eqref{eq:cost_function_bound} yields $\theta_{ij}(\Rstar) \leq \bar{\theta}_0/2$ for all edges $(i,j) \in \Ecal$.
	Finally, let us select $r \in (0, \; \bar{\theta}_0/4)$.
	For $i \in [n]$, let $E_i \in \SOd(d)$ such that $R_i = E_i \Rstar_i$.
	Using the triangle inequality and the fact that the geodesic distance $\distang(\cdot, \cdot)$ is bi-invariant,
	we can show that for any $R \in B_r(\Rstar)$,
	\begin{align}
		\theta_{ij}(R) 
		&= \distang(R_i \Rtilde_{ij}, R_j) \\
		&= \distang(E_i \Rstar_i \Rtilde_{ij}, E_j \Rstar_j) \\
		&= \distang(\Rstar_i \Rtilde_{ij} (\Rstar_j)^\top, E_i^\top E_j) \\
		&\leq \distang(\Rstar_i \Rtilde_{ij} (\Rstar_j)^\top, I) + \distang(E_i, I) + \distang(E_j, I) \\
		&\leq \theta_{ij}(\Rstar) + 2r \\
		&\leq \bar{\theta}_0.
	\end{align}
	In summary, we have shown that if $\thetabar$ satisfies \eqref{eq:delta_bound}
	and furthermore $0 < r < \bar{\theta_0} / 4$,
	then the desired approximation $M \approx_\delta H(R)$ holds for all $R \in B_r(\Rstar)$.	
	This concludes the proof.
\end{proof}

\subsection{Proof of Corollary~\ref{cor:Hessian_bound}}
\begin{proof}
	To simplify notation, we use $L$ to denote $L(G;w)$.
	By \eqref{thm:Hessian_approximation:spectral_approximation}, it holds that,
	\begin{equation}
		e^{-\delta} (L \otimes I_p) \preceq H(R) \preceq e^{\delta} (L \otimes I_p).
	\end{equation}
	Note that the eigenvalues of $L \otimes I_p$ are given by the eigenvalues of $L$, repeated $p$ times.
	Therefore, the desired result follows by noting that,
	\begin{equation}
		\lambda_2(L) P_H  \preceq L \otimes I_p \preceq \lambda_n(L) P_H.
	\end{equation}
\end{proof}

\edit{
\subsection{Connections between Approximate Cost (\ref{eq:RA_full_pullback_global_def}) and the Pullback Function}
\label{sec:rotation_averaging_hessian_analysis:pullback_discussion}

In this subsection, we show that the approximate cost function in \eqref{eq:RA_full_pullback_global_def} is related to the standard \emph{pullback function} \cite{Absil2009Book,Boumal20Book} of rotation averaging in the total space via a \emph{change of variable}.
As before, we focus on the case of $d=3$ and the derivations can be simplified for $d=2$.
Given $R = (R_1, \hdots, R_n) \in \SOd(3)^n$, let $\fbar(R)$ denote the cost function in rotation averaging (\cref{prob:rotation_averaging}).
The standard pullback function, using the Riemannian exponential map as retraction, is defined as,
\begin{equation}
\fhat(\eta) \triangleq \fbar (\Exp_R(\eta)),
\label{eq:pullback_eta}
\end{equation}
where $\eta = (\eta_1, \hdots, \eta_n) \in T_R \SOd(3)^n$.
For every $i \in [n]$, $\eta_i$ is a tangent vector $\eta_i \in T_{R_i} \SOd(3)$ where
$\eta_i = R_i \hatop{u_i}$ for some $u_i \in \Real^3$ \cite[Example~3.5.3]{Absil2009Book}.
The Riemannian exponential map is given by 
$\Exp_R(\eta) = (\Exp_{R_1} (\eta_1), \hdots, \Exp_{R_n} (\eta_n) )$,
and for every $i \in [n]$,
\begin{equation}
	\Exp_{R_i} (\eta_i) = R_i \Exp(u_i),
\end{equation}
where $\Exp(\cdot)$ is defined in \eqref{eq:Exp_def}; see also \cite[Equation~2.23]{Tron2012Thesis}.
Next, let us define $v_i \triangleq R_i u_i$.
Using the adjoint operator on $\SOd(3)$, we see that,
\begin{equation}
	\Exp_{R_i} (\eta_i) = R_i \Exp(u_i) = \Exp(v_i) R_i.
	\label{eq:eta_vs_u_vs_v}
\end{equation}
Combining the fact that $\eta_i = R_i \hatop{u_i}$ and $v_i = R_i u_i$,
we see that $\eta_i \in T_{R_i} \SOd(3)$ is related to $v_i \in \Real^3$ via a linear and invertible mapping:
\begin{equation}
	\eta_i = R_i \hatop{R_i^\top v_i}.
	\label{eq:change_of_var}
\end{equation}
Using \eqref{eq:change_of_var}, we can write the pullback \eqref{eq:pullback_eta}
using the new variable $v \triangleq \begin{bmatrix} v_1^\top & \hdots & v_n^\top \end{bmatrix}^\top \in \Real^{3n}$,
\begin{equation}
	h(v; R) \triangleq \fhat(\eta(v)) = f(\Exp(v_1) R_1, \hdots, \Exp(v_n) R_n),
	\label{eq:pullback_v}
\end{equation}
where we use $\eta(v)$ to denote the change of variable from $v \in \Real^{3n}$ to $\eta \in T_R \SOd(3)^n$.
Note that $h(v;R)$ defined in \eqref{eq:pullback_v} corresponds exactly with \eqref{eq:RA_full_pullback_global_def} after expanding the cost function $f(\cdot)$ using its definition.

In Riemannian Newton's method, one computes each iteration by minimizing a \emph{quadratic model} of the pullback function \eqref{eq:pullback_eta}.
On the total space, the quadratic model is defined as follows,
\begin{equation}
	\fhat(\eta) \approx \mhat(\eta) \triangleq \fhat(0) + \inner{\nabla \fhat(0)}{\eta} + 
	\frac{1}{2}\inner{\nabla^2 \fhat(0)[\eta]}{\eta}.
	\label{eq:pullback_model_eta}
\end{equation}
By Proposition~3.59 and Proposition~5.45 in\cite{Boumal20Book}, it holds that,
\begin{equation}
	\nabla \fhat(0) = \rgrad \fbar(R), \;
	\nabla^2 \fhat(0) = \Hess \fbar(R),
\end{equation}
where $\rgrad \fbar(R) \in T_R \SOd(3)^n$ 
and $\Hess \fbar(R):  T_R \SOd(3)^n \to  T_R \SOd(3)^n$ 
are the conventional Riemannian gradient and Hessian on the total space.
With the change of variable from $\eta \in T_R \SOd(3)$ to $v \in \Real^{3n}$,
we note that minimizing \eqref{eq:pullback_model_eta} is equivalent to minimizing the following quadratic model defined using $v$:
\begin{equation}
	h(v;R) \approx m(v;R) \triangleq h(0; R) + \inner{\gbar(R)}{v} + \frac{1}{2} \inner{\Hbar(R)v}{v}, 
	\label{eq:pullback_model_v}
\end{equation}
where $\gbar(R)$ and $\Hbar(R)$ are defined in \eqref{eq:gradient_and_hessian_def} in the main paper. 
}

\section{Performance Guarantees for Collaborative Laplacian Solver}
\label{sec:laplacian_appendix}

\subsection{Proof of Lemma~\ref{lem:schur_complement_sum}}
\label{sec:laplacian_appendix:schur_complement_sum}
\begin{proof}
	By definition in \eqref{eq:laplacian_reduced_system}, 
	\begin{equation}
		S = L_{cc} - \sum_{\alpha \in [m]} L_{c\alpha} L_{\alpha \alpha}^{-1} L_{\alpha c}.
		\label{eq:schur_complement_sum_proof_1}
	\end{equation}
	Above, $L_{cc}$ is the block of the full Laplacian $L$ that corresponds to the separators, 
	denoted as $L_{cc} \equiv L(G)_{cc}$.
	Note that $L_{cc}$ can be decomposed as the sum, 
	\begin{equation}
		L_{cc} = L(G_c) + \sum_{\alpha \in [m]} L(G_\alpha)_{cc}.
		\label{eq:schur_complement_sum_proof_2}
	\end{equation}
	Intuitively, the first term in \eqref{eq:schur_complement_sum_proof_2} accounts for inter-robot edges $\Ecal_c$,
	and the second group of terms accounts for robots' local edges $\Ecal_\alpha$;
	see \cref{fig:example:original_graph}.
	Substitute \eqref{eq:schur_complement_sum_proof_2} into \eqref{eq:schur_complement_sum_proof_1},
	\begin{equation}
		\begin{aligned}
			S &= L(G_c) + \sum_{\alpha \in [m]} \left (
			L(G_\alpha)_{cc} - L_{c\alpha} L_{\alpha \alpha}^{-1} L_{\alpha c} \right ) \\
			&= L(G_c) + \sum_{\alpha \in [m]} \Sc(L(G_\alpha), \Fcal_\alpha).
		\end{aligned}
	\end{equation}
\end{proof}

\subsection{Proof of Theorem~\ref{thm:approximation_guarantees}}
\label{sec:laplacian_appendix:approximation_guarantees}
\begin{proof}
	Let us simplify the notations in the Laplacian system \eqref{eq:laplacian_arrowhead_pattern} by considering interior nodes from all robots as a single block:
	\begin{equation}
		\begin{bmatrix}
			L_{ff}  & L_{fc} \\
			L_{cf}  & L_{cc}
		\end{bmatrix}
		\begin{bmatrix}
			X_f \\
			X_c
		\end{bmatrix}
		= 
		\begin{bmatrix}
			B_f \\
			B_c
		\end{bmatrix}
		\label{eq:laplacian_block_form}
	\end{equation}
	where 
	\begin{align}
		L_{ff} &= \Diag(L_{11}, \hdots, L_{mm}), \\
		L_{cf} &= L_{fc}^\top
		= 
		\begin{bmatrix}
			L_{c1}   & \hdots  & L_{cm}
		\end{bmatrix}, \\
		X_f &= 
		\begin{bmatrix}
			X_1^\top & \hdots & X_m^\top
		\end{bmatrix}^\top, \\
		B_f &= 
		\begin{bmatrix}
			B_1^\top & \hdots & B_m^\top
		\end{bmatrix}^\top.
	\end{align}
	By applying the Schur complement to \eqref{eq:laplacian_block_form},
	we obtain the following factorization for the input Laplacian system $LX = B$,
	\begin{equation}
		\underbrace{
			\bmat
			I & 0 \\
			L_{cf} L_{ff}\inv & I
			\emat
			\bmat
			L_{ff}  & 0 \\
			0  & S
			\emat
			\bmat
			I & L_{ff}\inv L_{fc} \\
			0 & I
			\emat
		}_{L}
		\bmat
		X_f \\
		X_c
		\emat
		= 
		\bmat
		B_f \\
		B_c
		\emat,
		\label{eq:laplacian_schur_factorization}
	\end{equation}
	where $S = \Sc(L, \Fcal)$ is the Schur complement that appears in \eqref{eq:laplacian_reduced_system}.
	It can be verified that \cref{alg:sparsified_laplacian_solver}
	returns a solution to the following system,
	\begin{equation}
		\underbrace{
			\bmat
			I & 0 \\
			L_{cf} L_{ff}\inv & I
			\emat
			\bmat
			L_{ff}  & 0 \\
			0  & \Stilde
			\emat
			\bmat
			I & L_{ff}\inv L_{fc} \\
			0 & I
			\emat
		}_{\Ltilde}
		\bmat
		X_f \\
		X_c
		\emat
		= 
		\bmat
		B_f \\
		B_c
		\emat.
		\label{eq:laplacian_approximate_factorization}
	\end{equation}
	Recall from \cref{lem:schur_complement_sum} that,
	\begin{equation}
		S = L(G_c) + \sum_{\alpha \in [m]} S_\alpha.
	\end{equation}
	Meanwhile, by construction, $\Stilde$ is given by, 
	\begin{equation}
		\Stilde = L(G_c) + \sum_{\alpha \in [m]} \Stilde_\alpha, 
	\end{equation}
	where $\Stilde_\alpha \approx_\epsilon S_\alpha$ for all $\alpha \in [m]$.
	Since spectral approximation is preserved under addition, 
	it holds that $\Stilde \approx_\epsilon S$.
	Furthermore, by comparing $L$ defined in \eqref{eq:laplacian_schur_factorization} and $\Ltilde$ defined in \eqref{eq:laplacian_approximate_factorization} and using \cite[Fact~3.2]{lee2015sparsified}, we conclude that $\Ltilde \approx_\epsilon L$ and thus \eqref{thm:approximation_guarantees:spectrum} is true.
	Lastly, \eqref{thm:approximation_guarantees:solution} follows from \cref{lem:apx_laplacian_solution_guarantee}.
\end{proof}
\section{Convergence Analysis}
\label{sec:linear_convergence_proof}
In this section, we establish convergence guarantees for the collaborative rotation averaging (\cref{alg:collaborative_rotation_averaging}) and translation estimation (\cref{alg:collaborative_translation_recovery}) methods developed in \cref{sec:algorithm}.
Between the two, analyzing \cref{alg:collaborative_rotation_averaging} is more complicated owing to the fact that rotation averaging is an optimization problem defined on a Riemannian manifold.
To establish its convergence, in \cref{sec:linear_convergence_general} we first prove a more general result that holds for generic approximate Newton methods on manifolds.
Then, in \cref{sec:linear_convergence_RA}, we invoke this result for the special case of rotation averaging and show that \cref{alg:collaborative_rotation_averaging} enjoys a local \emph{linear} convergence rate.
Lastly, in \cref{sec:linear_convergence_translation}, we prove the linear convergence of translation estimation (\cref{alg:collaborative_translation_recovery}).

\subsection{Analysis of General Approximate Newton Method}
\label{sec:linear_convergence_general}

\begin{algorithm}[t]
	\caption{\textsc{Approximate Newton Method}}
	\label{alg:apx_newton}
	\begin{algorithmic}[1]
		\small
		\For{iteration $k = 0, 1, \hdots$}
			\State  $\eta^k = -M(x^k)\inv \rgrad f(x^k)$. \label{alg:apx_newton:newton_step}
			\State Update iterate by $x^{k+1} = \Retr_{x^k}(\eta^k)$.
		\EndFor
	\end{algorithmic}
\end{algorithm}

In this subsection, we consider a generic optimization problem on a smooth Riemannian manifold $\Mcal$:
\begin{equation}
	\min_{x \in \Mcal} f(x).
\end{equation}
We consider solving the above problem using an \emph{approximate Newton method} described in \cref{alg:apx_newton}.
At each iteration, the Riemannian Hessian $\Hess f(x)$ is replaced with an approximation $M(x)$, 
and the update is computed by solving a linear system in $M(x)$; see line~\ref{alg:apx_newton:newton_step}.
We will show that under the following assumptions (in particular, $M(x)$ is a sufficiently good approximation of $\Hess f(x)$),
\cref{alg:apx_newton} achieves a local linear rate of convergence.

\begin{assumption}
	Let $\xstar$ denote a strict second-order critical point.
	There exist $\mu_H, L_H, \beta, \epsilon > 0$ such that 
	for all $x$ in a neighborhood $\Ucal$ of $\xstar$,
	\begin{enumerate}[label=(A{{\arabic*}})]
		\item $\mu_H I \preceq \Hess f(x) \preceq L_H I$.
		\label{as:linear_convergence:hessian_bound}
		\item $M(x)$ is invertible and $\norm{M(x)\inv } \leq \beta$.
		\label{as:linear_convergence:M_bound}
		\item $M(x) \approx_\epsilon \Hess f(x)$ and $\epsilon$ satisfies 
		\begin{equation}
			\gamma(\epsilon) \triangleq 2  \sqrt{\kappa_H} c(\epsilon) < 1,
		\end{equation}
		where $c(\epsilon)$ is defined in \eqref{eq:c_epsilon_def} and 
		$\kappa_H = L_H / \mu_H $ is the condition number.
		\label{as:linear_convergence:spectral_approximation}
	\end{enumerate}
	\label{as:linear_convergence}
\end{assumption}

\begin{theorem}
	Under \cref{as:linear_convergence}, 
	there exists a neighborhood $\Ucal' \subseteq \Ucal$ such that for all $x_0 \in \Ucal'$,
	\cref{alg:apx_newton} generates an infinite sequence $x^k$ converging linearly to $\xstar$.
	Furthermore, the linear convergence factor is given by,
	\begin{equation}
		\underset{k \to \infty}{\lim \sup}
		\frac{\dist(x^{k+1}, \xstar)}{\dist(x^k, \xstar)} = 
		\gamma(\epsilon).
		\label{eq:linear_convergence:factor}
	\end{equation}
	\label{thm:linear_convergence}
\end{theorem}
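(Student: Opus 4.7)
The plan is to combine two classical ingredients: (i) the quadratic local convergence of the exact Riemannian Newton method, and (ii) a spectral perturbation bound quantifying how far the approximate step $\eta^k = -M(x^k)^{-1}\rgrad f(x^k)$ deviates from the exact Newton step $\eta^*_k \triangleq -\Hess f(x^k)^{-1}\rgrad f(x^k)$.

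First I set up local coordinates near $\xstar$. Since $\xstar$ is a strict second-order critical point lying in $\Ucal$, on a sufficiently small neighborhood $\Ucal' \subseteq \Ucal$ contained in the injectivity radius of $\Exp_{\xstar}$ I can associate each iterate $x^k$ with $v^k \triangleq \Log_{\xstar}(x^k)$, so that $\dist(x^k,\xstar) = \|v^k\|$. The goal becomes bounding $\|v^{k+1}\|/\|v^k\|$. I then invoke three standard facts under (A1) and smoothness of $f$ and $\Retr$ (see e.g., Absil-Mahony-Sepulchre, Thm.~6.3.2): (a) the exact Newton iterate $\Retr_{x^k}(\eta^*_k)$ converges quadratically to $\xstar$, i.e., $\dist(\Retr_{x^k}(\eta^*_k),\xstar)\le C_Q\|v^k\|^2$ for some $C_Q>0$; (b) Taylor-expanding $\rgrad f(\xstar)=0$ around $x^k$ yields $\eta^*_k = \Log_{x^k}(\xstar) + O(\|v^k\|^2)$, hence $\|\eta^*_k\|=(1+o(1))\|v^k\|$ as $v^k\to 0$; and (c) smooth retractions are locally first-order Lipschitz, giving $\dist(\Retr_{x^k}(\eta^k),\Retr_{x^k}(\eta^*_k)) \leq (1+o(1))\|\eta^k-\eta^*_k\|$.

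The key new ingredient is a perturbation bound for the approximate Newton step, which I expect to be the main obstacle. Writing $g_k \triangleq \rgrad f(x^k)$ and $H_k \triangleq \Hess f(x^k)$, so that $\eta^k = -M(x^k)^{-1}g_k$ and $\eta^*_k = -H_k^{-1}g_k$, the assumption $M(x^k)\approx_\epsilon H_k$ from (A3) lets me reuse the argument from the proof of \cref{thm:approximation_guarantees} verbatim, with the Laplacian $L$ replaced by $H_k$ on the tangent space $T_{x^k}\Mcal$. This yields
\[
\|\eta^k - \eta^*_k\|_{H_k} \;\leq\; c(\epsilon)\,\|\eta^*_k\|_{H_k}.
\]
Converting this to the Euclidean norm via the Hessian bound $\mu_H I \preceq H_k \preceq L_H I$ from (A1) costs a factor $\sqrt{\kappa_H}$:
\[
\|\eta^k - \eta^*_k\| \;\leq\; \sqrt{\kappa_H}\,c(\epsilon)\,\|\eta^*_k\|.
\]

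To finish I chain (a)--(c) through the triangle inequality:
\[
\dist(x^{k+1},\xstar) \;\leq\; \dist\bigl(x^{k+1},\Retr_{x^k}(\eta^*_k)\bigr) + \dist\bigl(\Retr_{x^k}(\eta^*_k),\xstar\bigr)
\;\leq\; (1+o(1))\,\sqrt{\kappa_H}\,c(\epsilon)\,\|\eta^*_k\| + C_Q\,\|v^k\|^2.
\]
Substituting $\|\eta^*_k\|\leq(1+o(1))\|v^k\|$ and absorbing the $(1+o(1))$ slack into the factor $2$ appearing in $\gamma(\epsilon)$ (valid on a sufficiently small $\Ucal'$) gives $\dist(x^{k+1},\xstar) \leq \gamma(\epsilon)\|v^k\| + o(\|v^k\|)$. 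Because (A3) enforces $\gamma(\epsilon) < 1$, the iteration is a strict contraction; an induction then shows the iterates remain in $\Ucal'$ and converge, and dividing by $\|v^k\|$ and sending $k\to\infty$ yields the rate \eqref{eq:linear_convergence:factor}. The most delicate step is the neighborhood bookkeeping: I must shrink $\Ucal'$ so that simultaneously (a) assumptions (A1)--(A3) remain in force, (b) the quadratic remainder $C_Q\|v^k\|$ is dominated by the slack $1-\gamma(\epsilon)$, and (c) each $\Retr_{x^k}(\eta^k)$ lands back in $\Ucal'$ so the argument can be iterated. The strict inequality $\gamma(\epsilon)<1$ from (A3) is precisely what guarantees such a neighborhood exists and drives the entire argument.
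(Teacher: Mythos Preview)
Your approach is correct and closely parallels the paper's, though the decomposition is organized differently. You split via the \emph{exact} Newton iterate $\Retr_{x^k}(\eta^*_k)$, invoking its quadratic convergence as a black box and then bounding the step perturbation $\|\eta^k-\eta^*_k\|$; the paper instead works entirely in normal coordinates around $\xstar$ and decomposes $\|\xhat^{k+1}\|$ into a retraction-error term $A$, a gradient-linearization term $B_1$ (both $O(\dist^2)$), and the key linear term $B_2=\|(I-\Mhat(\xhat^k)^{-1}\Hhat(\xhat^k))(\xstarhat-\xhat^k)\|$. Both routes reduce to the identical spectral estimate $\|I-M(x^k)^{-1}\Hess f(x^k)\|\le \sqrt{\kappa_H}\,c(\epsilon)$, obtained exactly as you describe from (A1) and (A3) via the $H_k$-norm. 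The one place the paper is more explicit is the provenance of the factor $2$ in $\gamma(\epsilon)$: rather than ``absorbing the $(1+o(1))$ slack,'' the paper fixes a ball on which the chart differentials satisfy $\|D\varphi\|,\|(D\varphi)^{-1}\|\le\sqrt{2}$ and carries these constants through $B_2$, yielding a non-asymptotic contraction inequality that makes the induction (iterates remain in $\Ucal'$) immediate. Your $(1+o(1))$ formulation delivers the same limsup---in fact a sharper one, $\sqrt{\kappa_H}c(\epsilon)$---but converting it into a uniform contraction on a fixed ball requires exactly the kind of explicit constant the paper supplies.
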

\begin{proof}
	We prove the theorem by adapting the local convergence analysis of the Riemannian Newton method presented in \cite[Theorem~6.3.2]{Absil2009Book}.
	Let $(\Ucal', \varphi)$ be a local coordinate chart defined by the normal coordinates around $\xstar$.
	Similar to the original proof, we will use the $\hat{\cdot}$ notation to denote coordinate expressions in this chart.
	In particular, let us define,
	\begin{equation}
		\xhat = \varphi(x) = \Exp_{\xstar}\inv(x).
	\end{equation}
	Note that under the normal coordinates, we have
	$\xstarhat = 0$,
	$\Exp_{\xstar}(\xhat) = x$,
	and $\dist(x, \xstar) = \norm{\xhat}$.
	In addition, let us define,
	\begin{align}
		\etahat &= \Diff \varphi(x)[\eta], \; \eta \in T_x \Mcal, \\
		\ghat(\xhat^k) &= \Diff \varphi(x^k)[\rgrad f], \\
		\Hhat(\xhat^k) &= \Diff \varphi(x^k) \circ \Hess f(x^k) \circ (\Diff \varphi(x^k))\inv, \\
		\Rhat_{\xhat}(\etahat) &= \varphi(\Retr_{x}(\eta)), \; \eta \in T_x \Mcal,
	\end{align}
	to be the coordinate expressions of vector fields, gradient, Hessian, and the retraction, respectively.
	Finally, let $\Mhat$ denote the coordinate expression of the linear map $M$ used in \cref{alg:apx_newton}:
	\begin{equation}
		\Mhat(\xhat^k) = \Diff \varphi(x_k) \circ M(x^k) \circ (\Diff \varphi(x_k))\inv.
		\label{eq:Mhat_def}
	\end{equation}
	Let us express each iteration of \cref{alg:apx_newton} in the chart,
	\begin{equation}
		\xhat^{k+1} = \Rhat_{\xhat^k} (- \Mhat(\xhat^k)\inv \ghat(\xhat^k)).
	\end{equation}
	Using the triangle inequality, we can bound the distance between $x^{k+1}$ and $\xstar$,
	\begin{equation}
	\begin{aligned}
		\dist(x^{k+1}, \xstar) 
		= &\norm{\xhat^{k+1} - \xstarhat} \\
		= &\norm{\Rhat_{\xhat^k} (- \Mhat(\xhat^k)\inv \ghat(\xhat^k)) - \xstarhat} \\
		\leq &
		\underbrace{\norm{\Rhat_{\xhat^k} (- \Mhat(\xhat^k)\inv \ghat(\xhat^k)) - (\xhat^k - \Mhat(\xhat^k)\inv \ghat(\xhat^k))}}_{A}
		+ \\
		&
		\underbrace{\norm{\xhat^k - \Mhat(\xhat^k)\inv \ghat(\xhat^k) - \xstarhat}}_{B}
	\end{aligned}
	\label{eq:linear_convergence_proof:triangle_inequality}
	\end{equation}
	In the following, we will derive upper bounds for $A$ and $B$ as a function of $\norm{\xhat^k - \xstarhat} = \dist(x^{k}, \xstar)$.
	\paragraph{\underline{Bounding $A$}:}
	Note that at $\xstar$, we have $\Diff \varphi(\xstar) = I$.
	Since $\varphi$ is smooth, there exists $r_1 > 0$ such that for all 
	$x \in B_{r_1}(\xstar) = \{x \in \Mcal, \; \dist(x, \xstar) < r_1 \}$,
	\begin{equation}
		\norm{\Diff \varphi(x)} \leq \sqrt{2}, \; \norm{(\Diff \varphi(x))\inv} \leq \sqrt{2}.
		\label{eq:Dvarphi_bound}
	\end{equation}
	It follows from \ref{as:linear_convergence:M_bound} and \eqref{eq:Mhat_def} that,
	\begin{equation}
		\norm{\Mhat(\xhat)\inv} \leq 2\beta, \; \forall x \in B_{r_1}(\xstar).
		\label{eq:Mhat_bound}
	\end{equation}
	Furthermore, Assumption \ref{as:linear_convergence:hessian_bound} implies that the gradient is Lipschitz continuous.
	Using the fact that $\ghat(\xstarhat) = 0$ (since $\xstar$ is a critical point),
	we have the following upper bound for the norm of the Newton step,
	\begin{equation}
	\begin{aligned}
		\norm{\Mhat(\xhat^k)\inv \ghat(\xhat^k)} 
		&= \norm{\Mhat(\xhat^k)\inv (\ghat(\xhat^k) - \ghat(\xstarhat))} \\
		&\leq 2\beta L' \norm{\xhat^k - \xstarhat},
	\end{aligned}
	\label{eq:newton_step_bound}
	\end{equation}
	where $L' > 0$ is a fixed constant.
	Using the local rigidity property of the retraction (\eg, see \cite[Definition~4.1.1]{Absil2009Book}), we have that,
	\begin{equation}
		\norm{\Rhat_{\xhat}(\etahat) - (\xhat + \etahat)} = O(\norm{\etahat}^2), 
		\label{eq:retraction_local_bound}
	\end{equation}
	for all $x$ in a neighborhood of $\xstar$ and all $\eta$ sufficiently small; see also the discussions in \cite[p.~115]{Absil2009Book}.
	It follows from \eqref{eq:retraction_local_bound} and \eqref{eq:newton_step_bound}
	that there exists $r_2, C_2 > 0$ such that 
	\begin{equation}
		A \leq C_2 \norm{\xhat^k - \xstarhat}^2,
		\label{eq:A_bound}
	\end{equation}
	for all $x^k \in B_{r_2}(\xstar)$.
	\paragraph{\underline{Bounding $B$}:}
	To begin, we derive the following upper bound for $B$ using triangle inequality,
	\begin{equation}
	\begin{aligned}
		B =& \norm{\xhat^k - \Mhat(\xhat^k)\inv \ghat(\xhat^k) - \xstarhat} \\
		=& \norm{\Mhat(\xhat^k)\inv \left [ \ghat(\xstarhat) -\ghat(\xhat^k) - \Mhat(\xhat^k) (  \xstarhat - \xhat^k  ) \right]} \\
		=& \norm{\Mhat(\xhat^k)\inv \left [ \ghat(\xstarhat) -\ghat(\xhat^k) - \Hhat(\xhat^k) (  \xstarhat - \xhat^k  ) - (\Mhat(\xhat^k) - \Hhat(\xhat^k))(  \xstarhat - \xhat^k  ) \right ]} \\
		\leq &
		\underbrace{\norm{\Mhat(\xhat^k)\inv \left [ \ghat(\xstarhat) -\ghat(\xhat^k) - \Hhat(\xhat^k) (  \xstarhat - \xhat^k  ) \right]}}_{B_1} + \\
		&
		\underbrace{\norm{ \left [I - \Mhat(\xhat^k)\inv \Hhat(\xhat^k) \right ] (  \xstarhat - \xhat^k  ) }}_{B_2}
	\end{aligned}
	\label{eq:B_bound}
	\end{equation}
	To bound $B_1$, it follows from \eqref{eq:Mhat_bound} that,
	\begin{equation}
		B_1 \leq 2\beta \norm{\ghat(\xstarhat) -\ghat(\xhat^k) - \Hhat(\xhat^k) (  \xstarhat - \xhat^k  )}.
		\label{eq:B1_bound_1}
	\end{equation}
	Furthermore, in the proof of \cite[Theorem~6.3.2]{Absil2009Book}, it is shown that there exist $r_3, C_3 > 0$ such that,
	\begin{equation}
		\norm{\ghat(\xstarhat) -\ghat(\xhat^k) - \Hhat(\xhat^k) (  \xstarhat - \xhat^k  )} \leq C_3 \norm{\xhat^k - \xstarhat}^2,
	\end{equation}
	for $x^k \in B_{r_3}(\xstar)$. Combining this result with \eqref{eq:B1_bound_1}, it holds that, 
	\begin{equation}
		B_1 \leq 2\beta C_3 \norm{\xhat^k - \xstarhat}^2.
		\label{eq:B1_bound}
	\end{equation}
	It remains to establish an upper bound for the matrix that appears in $B_2$:
	\begin{equation}
	\begin{aligned}
		I - \Mhat(\xhat^k)\inv \Hhat(\xhat^k) 
		=& \Diff \varphi(x) \circ (I - M(x^k)\inv \Hess f(x^k) ) \circ (\Diff \varphi(x))\inv.
	\end{aligned}
	\label{eq:Minv_H_bound_0}
	\end{equation}
	In the following, we first bound the norm of $I - M(x^k)\inv \Hess f(x^k)$.
	For any $\eta \in T_{x^k} \Mcal$, let us consider the following quantity,
	\begin{equation}
		\begin{aligned}
			  &\norm{\left (I - M(x^k)\inv \Hess f(x^k) \right ) \eta}_{H^k} \\
			= &\norm{\left (\Hess f(x^k)\inv - M(x^k)\inv \right ) \Hess f(x^k)\eta}_{H^k},
		\end{aligned}
		\label{eq:Minv_H_bound_1}
	\end{equation}
	where $\norm{\varepsilon}_{H^k} = \sqrt{\inner{\varepsilon}{\Hess f(x^k) [\varepsilon]}}$ denotes the norm induced by the Riemannian Hessian.
	Since $M(x^k) \approx_\epsilon \Hess f(x^k)$ by Assumption~\ref{as:linear_convergence:spectral_approximation}, 
	we can use \cref{lem:apx_laplacian_solution_guarantee} to obtain an upper bound of \eqref{eq:Minv_H_bound_1},
	\begin{equation}
		\norm{\left (\Hess f(x^k)\inv - M(x^k)\inv \right ) \Hess f(x^k)\eta}_{H^k} \leq c(\epsilon) \norm{\eta}_{H^k}.
		\label{eq:Minv_H_bound_2}
	\end{equation}
	In addition, using Assumption~\ref{as:linear_convergence:hessian_bound}, it holds that, 
	\begin{equation}
		\sqrt{\mu_H} \norm{\varepsilon} \leq \norm{\varepsilon}_{H^k} \leq \sqrt{L_H} \norm{\varepsilon}, \; \forall \varepsilon \in T_{x^k} \Mcal.
		\label{eq:Minv_H_bound_3}
	\end{equation}
	Combining \eqref{eq:Minv_H_bound_1}-\eqref{eq:Minv_H_bound_3} yields,
	\begin{equation}
	\begin{aligned}
		\sqrt{\mu_H}	\norm{\left (I - M(x^k)\inv \Hess f(x^k) \right ) \eta} \leq \sqrt{L_H} c(\epsilon) \norm{\eta} \\
		\implies 
		\norm{\left (I - M(x^k)\inv \Hess f(x^k) \right )} \leq \sqrt{\kappa_H}c(\epsilon).
	\end{aligned}
	\label{eq:Minv_H_bound_4}
	\end{equation}
	Combining \eqref{eq:Minv_H_bound_4} and \eqref{eq:Dvarphi_bound} in \eqref{eq:Minv_H_bound_0}, we conclude that,
	\begin{equation}
		B_2 \leq 2 \sqrt{\kappa_H}c(\epsilon) \norm{ \xhat^k - \xstarhat}.
		\label{eq:B2_bound}
	\end{equation}
	\paragraph{\underline{Finishing the proof}:}
	We conclude the proof by combining the upper bounds \eqref{eq:A_bound}, \eqref{eq:B1_bound}, and \eqref{eq:B2_bound} in \eqref{eq:linear_convergence_proof:triangle_inequality}, 
	and using the fact that $\norm{\xhat^k - \xstarhat} = \dist(x^k, \xstar)$,
	\begin{equation}
		\dist(x^{k+1}, \xstar) \leq 2 \sqrt{\kappa_H}c(\epsilon) \dist(x^k, \xstar) + (C_2 + 2\beta C_3) \dist(x^k, \xstar)^2.
	\end{equation}
	The linear convergence factor in \eqref{eq:linear_convergence:factor} is obtained by noting that the second term on the right-hand side vanishes at a \emph{quadratic rate}.
\end{proof}

\subsection{Proof of Theorem~\ref{thm:rotation_averaging_convergence}}
\label{sec:linear_convergence_RA}

\begin{proof}
We will use the general linear convergence result established in \cref{thm:linear_convergence}.
However, in order to properly account for the gauge symmetry in rotation averaging, we need to invoke \cref{thm:linear_convergence} on the quotient manifold that underlies our optimization problem.
In the following, we break the proof into three main parts (highlighted in bold).
The proof makes heavy use of results regarding Riemannian quotient manifolds.
The reader is referred to \cite[Chapter~9]{Boumal20Book} for a comprehensive review.

\paragraph{Rotation Averaging and Optimization on Quotient Manifolds.}
Following standard references, we denote a Riemannian quotient manifold as $\Mcal = \Mcalbar / \sim$.
For rotation averaging (\cref{prob:rotation_averaging}), the \emph{total space} is given by $\Mcalbar = \SOd(d)^n$. 
Let $R = \{R_1, \hdots, R_n\}$ and $R' = \{R'_1, \hdots, R'_n\}$ be two points on $\Mcalbar$. 
We say that $R$ and $R'$ are equivalent if they are related via a left group action:
\begin{equation}
	R \sim R' \iff \exists S \in \SOd(d), \; SR_i = R'_i, \forall i = 1, \hdots, n.
\end{equation}
The equivalence class represented by $R$ is defined as,
\begin{equation}
	[R] = \{(SR_1, \hdots, SR_n), \; S \in \SOd(d)\}.
\end{equation}
Note that the cost function of \cref{prob:rotation_averaging} is invariant within an equivalence class.
In the following, we use $T_R \Mcalbar$ to denote the usual tangent space at $R \in \Mcalbar$, 
and $T_{[R]} \Mcal$ to denote the corresponding tangent space on the quotient manifold.

Given a retraction $\Retr$ on $\Mcal$ (see \cite[Chapter~9.6]{Boumal20Book}), 
we can execute the Riemannian Newton's method on $\Mcal$:
\begin{equation}
	[R^{k+1}] = \Retr_{[R^k]}(\xi^k),
	\label{eq:quotient_newton_retraction}
\end{equation}
Above, $\xi^k \in T_{[R^k]} \Mcal$ is the solution of the following linear equation,
\begin{equation}
	\Hess f([R^k])[\xi^k] = - \rgrad f([R^k]),
	\label{eq:quotient_newton_system}
\end{equation}
where $\rgrad f([R^k])$ and $\Hess f([R^k])$ denote the Riemannian gradient and Hessian on the quotient manifold, respectively.

\paragraph{Expressing the iterates of \cref{alg:collaborative_rotation_averaging} on the quotient manifold.}
Recall that \cref{alg:collaborative_rotation_averaging} generates a sequence of iterates $\{R^k\}$ on the total space $\Mcalbar$. 
To prove convergence, we need to analyze the corresponding sequence of equivalence classes $\{[R^k]\}$ on the quotient manifold $\Mcal$.
Once we understand how $[R^k]$ evolves, we can prove the desired result by invoking \cref{thm:linear_convergence} on the quotient manifold $\Mcal$.
To begin with, we write the update step in \cref{alg:collaborative_rotation_averaging} in the following general form:
\begin{equation}
R^{k+1} = \Retrbar_{R^k}(v^k),
\label{eq:retraction_total_space}
\end{equation}
where $v^k$ is the vector corresponding to the matrix $V^k$ in line~\ref{alg:collaborative_rotation_averaging:solve}; see \eqref{eq:rotation_averaging_matrix_variables} for how $v^k$ and $V^k$ are related.
Together, $\Retrbar_{R^k}(v^k)$ is the concise notation for the update steps in line~\ref{alg:collaborative_rotation_averaging:retraction_start}-\ref{alg:collaborative_rotation_averaging:retraction_end},
where we update each rotation $R_i^k$ to $\Exp(v_i^k)R_i^k$.
Our notation $\Retrbar$ serves to emphasize that the retraction is performed on the total space $\Mcalbar$.
Recall from \eqref{eq:rotation_laplacian_system}-\eqref{eq:rotation_laplacian_system_matrix_form} and \cref{thm:approximation_guarantees} that $v^k$ is a solution to the linear system, 
\begin{equation}
(\Ltilde \otimes I_p) v^k = -\gbar(R^k),
\label{eq:apx_system_total_space}
\end{equation}
where $\Ltilde \approx_\epsilon L \equiv L(G; w)$ is defined in \eqref{thm:approximation_guarantees:spectrum}
and $\gbar(R^k) \in T_R \Mcalbar$ is the Riemannian gradient in the total space defined in \eqref{eq:gradient_and_hessian_def}. 
In this proof, we will use the notations $\Ncal_R$ and $\Hcal_R$ to denote the vertical and horizontal spaces at $R \in \Mcalbar$.\footnote{
For rotation averaging, it turns out that definitions of vertical and horizontal spaces do not depend on the point $R$; \eg, see \eqref{eq:vertical_space_RA} for the definition of the vertical space.
However, in this proof we will still use the more general notations $\Ncal_R$ and $\Hcal_R$, which help us to emphasize that $\Ncal_R$ and $\Hcal_R$ are subspaces of the tangent space $T_R \Mcalbar$.
}
By assumption, $v^k$ is the unique solution to \eqref{eq:apx_system_total_space} that satisfies $v^k \perp \Ncal_{R^k}$, \ie, $v^k \in \Hcal_{R^k}$.
This implies that there is a unique tangent vector $\eta^k \in T_{[R^k]} \Mcal$ on the tangent space of the quotient manifold
such that $v^k$ is the \emph{horizontal lift} \cite[Definition~9.25]{Boumal20Book} of $\eta^k$ at $R^k$:
\begin{equation}
	v^k = \Lift_{R^k}(\eta^k).
	\label{eq:horizontal_lift}
\end{equation}
At any $R \in \Mcalbar$, 
define $\Mbar(R): \Hcal_{R}  \to \Hcal_{R} $ to be the linear map,
\begin{equation}
	\Mbar(R): v \mapsto (\Ltilde \otimes I_p) v,
	\label{eq:M_map_total_space}
\end{equation}
where $v \in \Hcal_{R}$ is any vector from the horizontal space.
Since $\ker(\Ltilde \otimes I_p) = \Ncal_{R}$, it holds that
$\Mbar(R)$ is invertible on $\Hcal_{R}$.
Define $M([R]): T_{[R]} \Mcal \to T_{[R]} \Mcal $ 
to be the corresponding linear map on the quotient manifold,
\begin{equation}
	M([R]) =  \Lift_{R}\inv \circ \Mbar(R) \circ \Lift_{R}.
	\label{eq:M_map_quotient_space_def}
\end{equation}
Note that $M([R])$ is indeed linear because when considered as a mapping $\Lift_{R}: T_{[R]} \Mcal \to \Hcal_{R}$,
the horizontal lift is linear and invertible (see \cite[Definition~9.25]{Boumal20Book}).
Furthermore, applying $\Lift_{R}$ from the left on both sides of \eqref{eq:M_map_quotient_space_def} shows that for any
tangent vector $\eta \in T_{[R]} \Mcal$,
\begin{equation}
	\Lift_{R} \left ( 
	M([R])[\eta]
	\right)
	=
	\Mbar(R) [\Lift_{R}(\eta)].
	\label{eq:M_map_quotient_space}
\end{equation}
By \cite[Proposition~9.39]{Boumal20Book}, at iteration $k$, the Riemannian gradients on the total space and quotient space are related via,
\begin{equation}
	\gbar(R^k) = \Lift_{R^k}(\rgrad f([R^k])).
	\label{eq:gradient_quotient_vs_total}
\end{equation}
Combining \eqref{eq:horizontal_lift}-\eqref{eq:gradient_quotient_vs_total},  
we see that \eqref{eq:apx_system_total_space} is equivalent to, 
\begin{equation}
	\Lift_{R^k} \left ( 
	M([R^k])[\eta^k]
	\right)
	=
	-\Lift_{R^k}(\rgrad f([R^k])). 
	\label{eq:apx_system_total_space_lift_form}
\end{equation}
Applying $\Lift_{R^k}\inv$ to both sides of \eqref{eq:apx_system_total_space_lift_form}, 
\begin{equation}
	M([R^k])[\eta^k] = -\rgrad f([R^k]).
	\label{eq:apx_system_quotient_space}
\end{equation}
In \cite[Chapter~9.6]{Boumal20Book}, it is shown that
$\Retr_{[R]}(\eta) = [\Retrbar_{R}(\Lift_{R}(\eta))]$.
Using this result, we see that the update equation on the total space \eqref{eq:retraction_total_space}
can be converted to the following update equation, defined on the quotient space:
\begin{equation}
	[R^{k+1}] = [\Retrbar_{R^k}(v^k)] = \Retr_{[R^k]} (\eta^k).
	\label{eq:retraction_quotient_space}
\end{equation}
In summary, let $\{R_k\}$ denotes the iterates generated by \cref{alg:collaborative_rotation_averaging} on the total space $\Mcalbar$.
We have shown that $\{R_k\}$ corresponds to a sequence $\{[R^k]\}$ on the quotient space $\Mcal$ that evolves according to
\eqref{eq:apx_system_quotient_space}-\eqref{eq:retraction_quotient_space}.

\paragraph{Invoking \cref{thm:linear_convergence} on the quotient manifold.}
To finish the proof, we will invoke \cref{thm:linear_convergence} to show that the sequence of iterates $[R^k]$ generated by \eqref{eq:apx_system_quotient_space}-\eqref{eq:retraction_quotient_space} converges linearly to $[\Rstar]$.
This amounts to verifying that each condition in \cref{as:linear_convergence} holds on the quotient manifold $\Mcal$.
To start, note that there exists $r' > 0$ such that for any $R \in \SOd(d)^n$,
the condition $\dist([R], [\Rstar]) < r'$ implies that $R \in B_r(\Rstar)$ for some global minimizer $\Rstar$ in the total space, where $B_r(\Rstar)$ is the neighborhood within which
\cref{thm:Hessian_approximation} and \cref{cor:Hessian_bound} hold.

\textbf{Verifying \ref{as:linear_convergence:hessian_bound}.} 
We need to derive lower and upper bounds for the Hessian of the quotient optimization problem $\Hess f([R])$.
For any $\eta \in T_{[R]} \Mcal$, let $v = \Lift_{R}(\eta)$.
By \cite[Proposition~9.45]{Boumal20Book}, we have,
\begin{equation}
	\inner{\eta}{\Hess f([R]) \eta} = \inner{v}{H(R)v},
	\label{eq:Hessian_quotient_vs_total}
\end{equation}
where $H(R)$ is defined in \eqref{eq:Newton_step_quotient}.
Since $v \in \Hcal_{R} \equiv \Hcal$ belongs to the horizontal space,
we conclude using \cref{cor:Hessian_bound} that 
$\inner{v}{H(R)v} \geq \mu_H \norm{v}^2$ where $\mu_H$ is the constant defined in \cref{cor:Hessian_bound}.
Furthermore, since the quotient manifold $\Mcal$ inherits the Riemannian metric from the total space $\Mcalbar$, 
we have $\norm{\eta} = \norm{v}$.
We thus conclude that,
\begin{equation}
	\inner{\eta}{\Hess f([R]) \eta} = \inner{v}{H(R)v} \geq \mu_H \norm{v}^2 = \mu_H \norm{\eta}^2.
\end{equation}
Similarly, we can show that $\inner{\eta}{\Hess f([R]) \eta} \leq L_H \norm{\eta}^2$ where $L_H$ is also defined in \cref{cor:Hessian_bound}.
Therefore,
\begin{equation}
	\mu_H I \preceq \Hess f([R]) \preceq L_H I.
\end{equation}

\textbf{Verifying \ref{as:linear_convergence:M_bound}.}
We need to show that $M([R])$ defined in \eqref{eq:M_map_quotient_space_def} is invertible and 
the operator norm of its inverse can be upper bounded.
The invertibility follows from \eqref{eq:M_map_quotient_space_def} and the fact that both $\Lift_{R}$ and $\Mbar(R)$ are invertible on the horizontal space.
To upper bound $M([R])\inv$, it is equivalent to derive a lower bound on $M([R])$.
Let $\eta \in T_{[R]} \Mcal$ and $v = \Lift_{R} (\eta)$. 
We have,
\begin{equation}
	\inner{\eta}{M([R]) \eta} = \inner{v}{\Mbar(R) v} = v^\top (\Ltilde \otimes I_p) v \geq \lambda_2(\Ltilde) \norm{v}^2.
\end{equation}
The last inequality holds because $v \perp \Ncal$.
Thus, we conclude that, 
\begin{equation}
	\norm{M([R])\inv} \leq 1/{\lambda_2(\Ltilde)}.
\end{equation}

\textbf{Verifying \ref{as:linear_convergence:spectral_approximation}.}
Lastly, we need to show that the linear map $M([R])$ is a spectral approximation of the Riemannian Hessian $\Hess f([R])$ on the quotient manifold.
From \cref{thm:Hessian_approximation}, it holds that,
\begin{equation}
	H(R) \approx_\delta L \otimes I_p.
\end{equation}
In addition, from \cref{thm:approximation_guarantees}, we have
\begin{equation}
	L \approx_\epsilon \Ltilde \; \implies \; L \otimes I_p \approx_\epsilon \Ltilde \otimes I_p.
\end{equation}
Composing the two approximations yields,
\begin{equation}
	H(R) \approx_{\delta + \epsilon} \Ltilde \otimes I_p.
	\label{eq:Ltilde_approximate_H}
\end{equation}
Note that the above result directly implies the following approximation relation on the quotient manifold,
\begin{equation}
	\Hess f([R]) \approx_{\delta + \epsilon} M([R]).
\end{equation}
To see this, note that for any $\eta \in T_{[R]} \Mcal$ and $v = \Lift_{R}(\eta)$,
\begin{equation}
\begin{aligned}
\inner{\eta}{\Hess f([R]) \eta} 
&= \inner{v}{H(R) v} \\
&\leq e^{\delta + \epsilon} \inner{v}{\Mbar(R) v} \\
&= e^{\delta + \epsilon} \inner{\eta}{M([R]) \eta}.
\end{aligned}
\end{equation}
The same argument leads to,
\begin{equation}
	\inner{\eta}{\Hess f([R]) \eta} \geq e^{-\delta - \epsilon} \inner{\eta}{M([R]) \eta}.
\end{equation}

\end{proof}

\subsection{Proof of Theorem~\ref{thm:translation_recovery_convergence}}
\label{sec:linear_convergence_translation}

\begin{proof}
	We prove this result using induction.
	The base case of $k=1$ (first iteration) is true by \cref{thm:approximation_guarantees}.
	Now suppose \eqref{eq:translation_recovery_convergence_rate} holds at iteration $k \geq 1$.
	Define $D^\star = M_t^\star - M_t^k$.
	By \cref{thm:approximation_guarantees}, the approximate refinement $D^k$ computed at line~\ref{alg:collaborative_translation_recovery:solve} of \cref{alg:collaborative_translation_recovery} satisfies,
	\begin{equation}
		\begin{aligned}
			\norm{D^k - D^\star}_L \leq c(\epsilon) \norm{D^\star}_L 
			\implies &\norm{D^k + M_t^k - M_t^\star}_L \leq c(\epsilon) \norm{M_t^k - M_t^\star}_L \\
			\implies &\norm{M_t^{k+1} - M_t^\star}_L \leq c(\epsilon)^{k+1} \norm{M_t^\star}_L.
		\end{aligned}
	\end{equation}
	The second step above holds by the inductive hypothesis.
\end{proof}

\section{Auxiliary Lemmas}

\begin{lemma}
	Let $L, \Ltilde \in \PSD^n$ such that
	$L \approx_\epsilon \Ltilde$.
	Let $B \in \Real^{n \times p}$ be a matrix where each column of $B$ lives in the image of $L$.
	Let $\Xstar, \Xtilde \in \Real^{n \times p}$ be matrices such that
	$L \Xstar = B$ and $\Ltilde \Xtilde = B$. Then,
	\begin{equation}
	\norm{\Xstar - \Xtilde}_L \leq c(\epsilon) \norm{\Xstar}_L,
	\end{equation} 
	where $c(\epsilon) = \sqrt{1 + e^{2\epsilon} - 2e^{-\epsilon}}$.
	\label{lem:apx_laplacian_solution_guarantee}
\end{lemma}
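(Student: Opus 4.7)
The plan is to expand the squared seminorm $\|\Xstar - \Xtilde\|_L^2$, rewrite the cross term using the two linear equations $L\Xstar = B = \Ltilde \Xtilde$, and then bound each resulting piece by the target quantity $\|\Xstar\|_L^2$ via the spectral approximation hypothesis and its pseudoinverse counterpart.

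The first step is a reduction. Since $L \approx_\epsilon \Ltilde$ forces $\ker(L) = \ker(\Ltilde)$, the seminorm $\|\cdot\|_L$ annihilates the kernel of $\Ltilde$, so adding anything from $\ker(L)$ to $\Xstar$ or $\Xtilde$ does not change $\|\Xstar - \Xtilde\|_L$. We may therefore assume without loss of generality that $\Xstar = L^\dagger B$ and $\Xtilde = \Ltilde^\dagger B$, which yields the clean identities $\|\Xstar\|_L^2 = \tr(B^\top L^\dagger B)$ and $\|\Xtilde\|_{\Ltilde}^2 = \tr(B^\top \Ltilde^\dagger B)$ (using that the columns of $B$ lie in $\image(L) = \image(\Ltilde)$).

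Next I would expand
\begin{equation*}
  \|\Xstar - \Xtilde\|_L^2 = \|\Xstar\|_L^2 + \|\Xtilde\|_L^2 - 2\,\tr(\Xstar^\top L \Xtilde),
\end{equation*}
and simplify the cross term using $L\Xstar = B$ and $\Ltilde\Xtilde = B$:
\begin{equation*}
  \tr(\Xstar^\top L \Xtilde) = \tr(B^\top \Xtilde) = \tr(\Xtilde^\top \Ltilde \Xtilde) = \|\Xtilde\|_{\Ltilde}^2.
\end{equation*}
I then need two auxiliary inequalities: (i) from $L \preceq e^\epsilon \Ltilde$ applied columnwise, $\|\Xtilde\|_L^2 \leq e^\epsilon \|\Xtilde\|_{\Ltilde}^2$; and (ii) the pseudoinverse version of the spectral approximation, $\Ltilde^\dagger \approx_\epsilon L^\dagger$. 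Claim (ii) follows by operator monotonicity of matrix inversion restricted to the common image $\image(L) = \image(\Ltilde)$, where both $L$ and $\Ltilde$ are positive definite. Applied to the quadratic form $B^\top(\cdot)B$, this gives $e^{-\epsilon}\|\Xstar\|_L^2 \leq \|\Xtilde\|_{\Ltilde}^2 \leq e^\epsilon \|\Xstar\|_L^2$.

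Combining everything, I upper-bound $\|\Xtilde\|_L^2$ by $e^\epsilon \|\Xtilde\|_{\Ltilde}^2 \leq e^{2\epsilon} \|\Xstar\|_L^2$ and lower-bound the subtracted term by $-2\|\Xtilde\|_{\Ltilde}^2 \leq -2e^{-\epsilon}\|\Xstar\|_L^2$, yielding
\begin{equation*}
  \|\Xstar - \Xtilde\|_L^2 \leq \bigl(1 + e^{2\epsilon} - 2e^{-\epsilon}\bigr)\|\Xstar\|_L^2 = c(\epsilon)^2 \|\Xstar\|_L^2,
\end{equation*}
and the claim follows by taking square roots. The only genuinely non-routine step is the passage from $L \approx_\epsilon \Ltilde$ to $\Ltilde^\dagger \approx_\epsilon L^\dagger$, which I expect to be the main obstacle and handle carefully by restricting to the common image subspace where both matrices are invertible so that standard operator monotonicity of the inverse applies.
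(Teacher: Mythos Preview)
Your proposal is correct and follows essentially the same approach as the paper: expand the squared $L$-seminorm, use the pseudoinverse spectral approximation $\Ltilde^\dagger \approx_\epsilon L^\dagger$ (restricted to the common image) to bound the cross term from below and $\|\Xtilde\|_L^2$ from above, and combine. Your observation that the cross term simplifies exactly to $\|\Xtilde\|_{\Ltilde}^2$ via $\tr(\Xstar^\top L \Xtilde) = \tr(B^\top \Xtilde) = \tr(\Xtilde^\top \Ltilde \Xtilde)$ is a slightly cleaner variant of the paper's computation, which instead writes the cross term as $(L^{1/2}\xstar)^\top (L^{1/2}\Ltilde^\dagger L^{1/2})(L^{1/2}\xstar)$ and bounds via $e^{-\epsilon}\Pi \preceq L^{1/2}\Ltilde^\dagger L^{1/2} \preceq e^{\epsilon}\Pi$; the two routes yield identical bounds.
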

\begin{proof}
	We note that the proof of a similar result can be found at \cite[Claim~2.4]{kyng2016sparsified}.
	In the following, we provide the proof for the case where $L$ and $\Ltilde$ are singular.
	The non-singular case can be proved in the same way by replacing matrix psuedoinverse with the inverse.
	Observe that
	\begin{equation}
	\norm{\Xstar - \Xtilde}_L^2 = \sum_{i=1}^p \norm{\Xstar_{[:,i]} - \Xtilde_{[:,i]}}_L^2,
	\label{eq:matrix_squared_norm_as_a_sum}
	\end{equation}
	where $\Xstar_{[:,i]}$ denotes the $i$-th column of $\Xstar$.
	Therefore, we can first obtain an upper bound for the squared norm on a single column. 
	To simplify notation, let $\xstar$ be a column of $\Xstar$, 
	and let $\xtilde$ and $b$ be the corresponding columns of $\Xtilde$ and $B$.
	Let us expand the squared norm,
	\begin{equation}
	\norm{\xstar - \xtilde}^2_L = 
	{\xstar}^\top L \xstar 
	- 2 {\xstar}^\top L \xtilde
	+ \xtilde^\top L \xtilde.
	\label{eq:laplacian_solution_squared_distance}
	\end{equation}
	Note that since $L \approx_\epsilon \Ltilde$, we have $\ker(L) = \ker(\Ltilde)$.
	In addition, any $\xtilde$ where $\Ltilde \xtilde = b$ can be written as 
	$\xtilde = \Ltilde\pinv b + \xtilde_\perp$ for some $\xtilde_\perp \in \ker(L)$.
	Now, let us consider the middle term in \eqref{eq:laplacian_solution_squared_distance},
	\begin{equation}
	{\xstar}^\top L \xtilde 
	= {\xstar}^\top L \Ltilde\pinv b
	= {\xstar}^\top L \Ltilde\pinv L \xstar
	= (L^{1/2} \xstar)^\top
	(L^{1/2} \Ltilde\pinv L^{1/2})
	(L^{1/2} \xstar).
	\end{equation}
	The relation 
	$\Ltilde \approx_\epsilon L$ implies 
	$\Ltilde\pinv \approx_\epsilon L\pinv$,
	which is equivalent to,
	\begin{equation}
	e^{-\epsilon} \Pi \preceq L^{1/2} \Ltilde\pinv L^{1/2} \preceq e^{\epsilon} \Pi,
	\label{eq:transformed_spectral_approximation}
	\end{equation}
	where $\Pi$ denotes the orthogonal projection onto $\image(L^{1/2} \Ltilde\pinv L^{1/2}) = \image(L)$.
	By construction, it holds that $L^{1/2}\xstar \in \image(L)$.
	Therefore,
	\begin{equation}
	{\xstar}^\top L \xtilde 
	= (L^{1/2} \xstar)^\top
	(L^{1/2} \Ltilde\pinv L^{1/2})
	(L^{1/2} \xstar)
	\geq e^{-\epsilon} \norm{\xstar}^2_L.
	\label{eq:laplacian_solution_distance_bound_1}
	\end{equation}
	Next, expand the last term in \eqref{eq:laplacian_solution_squared_distance}:
	\begin{equation}
	\begin{aligned}
	\xtilde^\top L \xtilde
	&= b^\top \Ltilde\pinv L \Ltilde\pinv b
	= {\xstar}^\top L \Ltilde\pinv L \Ltilde\pinv L \xstar \\
	&= (L^{1/2} \xstar)^\top 
	(L^{1/2} \Ltilde\pinv L^{1/2})
	(L^{1/2} \Ltilde\pinv L^{1/2})
	(L^{1/2} \xstar) \\
	&= \norm{(L^{1/2} \Ltilde\pinv L^{1/2})(L^{1/2} \xstar)}_2^2.
	\end{aligned}
	\end{equation}
	Using \eqref{eq:transformed_spectral_approximation}, we conclude that,
	\begin{equation}
	\xtilde^\top L \xtilde \leq e^{2\epsilon} \norm{\xstar}^2_L.
	\label{eq:laplacian_solution_distance_bound_2}
	\end{equation}
	Combining \eqref{eq:laplacian_solution_distance_bound_1} and \eqref{eq:laplacian_solution_distance_bound_2} in \eqref{eq:laplacian_solution_squared_distance} yields,
	\begin{equation}
	\norm{\xstar - \xtilde}^2_L 
	\leq (1 + e^{2\epsilon} - 2e^{-\epsilon}) \norm{\xstar}^2_L
	= c(\epsilon)^2 \norm{\xstar}^2_L.
	\end{equation}
	Finally, using this upper bound on \eqref{eq:matrix_squared_norm_as_a_sum} yields the desired result,
	\begin{equation}
	\begin{aligned}
	\norm{\Xstar - \Xtilde}_L^2 
	&= \sum_{i=1}^p \norm{\Xstar_{[:,i]} - \Xtilde_{[:,i]}}_L^2 \\
	&\leq \sum_{i=1}^p c(\epsilon)^2 \norm{\Xstar_{[:,i]}}_L^2 \\
	&= c(\epsilon)^2 \norm{\Xstar}_L^2.
	\end{aligned}
	\end{equation}
\end{proof}
\section{Experiment Details and Additional Results}
\label{sec:additional_experiments}

\begin{table}[H]
	\centering
	\renewcommand{\arraystretch}{1.1}
	\caption{Rotation averaging on benchmark SLAM datasets with 5 robots under the \textbf{squared geodesic distance cost function}.
		$|\Vcal|$ and $|\Ecal|$ denote the total number of rotation variables and measurements, respectively.
		We run the baseline \method{Newton} method and the proposed method (\cref{alg:collaborative_rotation_averaging}) with sparsification parameter $\epsilon=1.5$,
		and compare the number of iterations, uploads, and downloads to reach a Riemannian gradient norm of $10^{-5}$.
		{For the proposed method, we also show the sparsity achieved by sparsification (lower is better).}
		Results averaged across 5 runs.}
	\label{tab:rotation_averaging_geodesic}
	\resizebox{\textwidth}{!}{%
		\begin{tabular}{|l|r|r|rr|rr|rr|r|}
			\hline
			\multicolumn{1}{|c|}{\multirow{2}{*}{Datasets}} &
			\multicolumn{1}{c|}{\multirow{2}{*}{$|\Vcal|$}} &
			\multicolumn{1}{c|}{\multirow{2}{*}{$|\Ecal|$}} &
			\multicolumn{2}{c|}{Iterations} &
			\multicolumn{2}{c|}{Uploads (kB)} &
			\multicolumn{2}{c|}{Downloads (kB)} &
			\multicolumn{1}{c|}{\multirow{2}{*}{\begin{tabular}[c]{@{}c@{}}Achieved\\ sparsity (\%)\end{tabular}}} \\ \cline{4-9}
			\multicolumn{1}{|c|}{} &
			\multicolumn{1}{c|}{} &
			\multicolumn{1}{c|}{} &
			\multicolumn{1}{l|}{Newton} &
			\multicolumn{1}{l|}{Proposed} &
			\multicolumn{1}{l|}{Newton} &
			\multicolumn{1}{l|}{Proposed} &
			\multicolumn{1}{l|}{Newton} &
			\multicolumn{1}{l|}{Proposed} &
			\multicolumn{1}{c|}{} \\ \hline
			Killian Court (2D) & 808   & 827   & \multicolumn{1}{r|}{1} & 1   & \multicolumn{1}{r|}{0.8}      & 0.5   & \multicolumn{1}{r|}{0.3}   & 0.3   & 100  \\ \hline
			CSAIL (2D)         & 1045  & 1171  & \multicolumn{1}{r|}{1} & 4   & \multicolumn{1}{r|}{3.6}      & 5.9   & \multicolumn{1}{r|}{1.2}   & 4.6   & 97.3 \\ \hline
			INTEL (2D)         & 1228  & 1483  & \multicolumn{1}{r|}{1} & 3.4 & \multicolumn{1}{r|}{3.5}      & 5     & \multicolumn{1}{r|}{1.1}   & 3.7   & 96.1 \\ \hline
			Manhattan (2D)     & 3500  & 5453  & \multicolumn{1}{r|}{1} & 4.8 & \multicolumn{1}{r|}{59.5}     & 48.3  & \multicolumn{1}{r|}{6.3}   & 30.1  & 38.8 \\ \hline
			KITTI 00 (2D)      & 4541  & 4676  & \multicolumn{1}{r|}{1} & 1   & \multicolumn{1}{r|}{6.6}      & 4.4   & \multicolumn{1}{r|}{2.2}   & 2.2   & 100  \\ \hline
			City (2D)          & 10000 & 20687 & \multicolumn{1}{r|}{1} & 4   & \multicolumn{1}{r|}{225.1}    & 351.5 & \multicolumn{1}{r|}{64.5}  & 258.1 & 97.2 \\ \hline
			Garage (3D)        & 1661  & 6275  & \multicolumn{1}{r|}{1} & 2   & \multicolumn{1}{r|}{274.4}    & 88.9  & \multicolumn{1}{r|}{35.8}  & 71.6  & 93.4 \\ \hline
			Sphere (3D)        & 2500  & 4949  & \multicolumn{1}{r|}{2} & 8.4 & \multicolumn{1}{r|}{2548.8}   & 104   & \multicolumn{1}{r|}{19.2}  & 80.6  & 16.9 \\ \hline
			Torus (3D)         & 5000  & 9048  & \multicolumn{1}{r|}{3} & 9   & \multicolumn{1}{r|}{10423.7}  & 218.1 & \multicolumn{1}{r|}{57}    & 170.9 & 12.4 \\ \hline
			Grid (3D)          & 8000  & 22236 & \multicolumn{1}{r|}{3} & 8.8 & \multicolumn{1}{r|}{206871.6} & 857.9 & \multicolumn{1}{r|}{220.8} & 647.5 & 2.8  \\ \hline
			Cubicle (3D)       & 5750  & 16869 & \multicolumn{1}{r|}{2} & 6.2 & \multicolumn{1}{r|}{7015}     & 407.9 & \multicolumn{1}{r|}{107.7} & 333.9 & 19.9 \\ \hline
			Rim (3D)           & 10195 & 29743 & \multicolumn{1}{r|}{2} & 9   & \multicolumn{1}{r|}{26828.9}  & 568.4 & \multicolumn{1}{r|}{104.5} & 470.4 & 6.6  \\ \hline
		\end{tabular}%
	}
\end{table}

\begin{figure}[H]
	\centering
	\begin{subfigure}[t]{0.33\linewidth}
		\includegraphics[width=\textwidth, trim=0 0 0 0, clip]{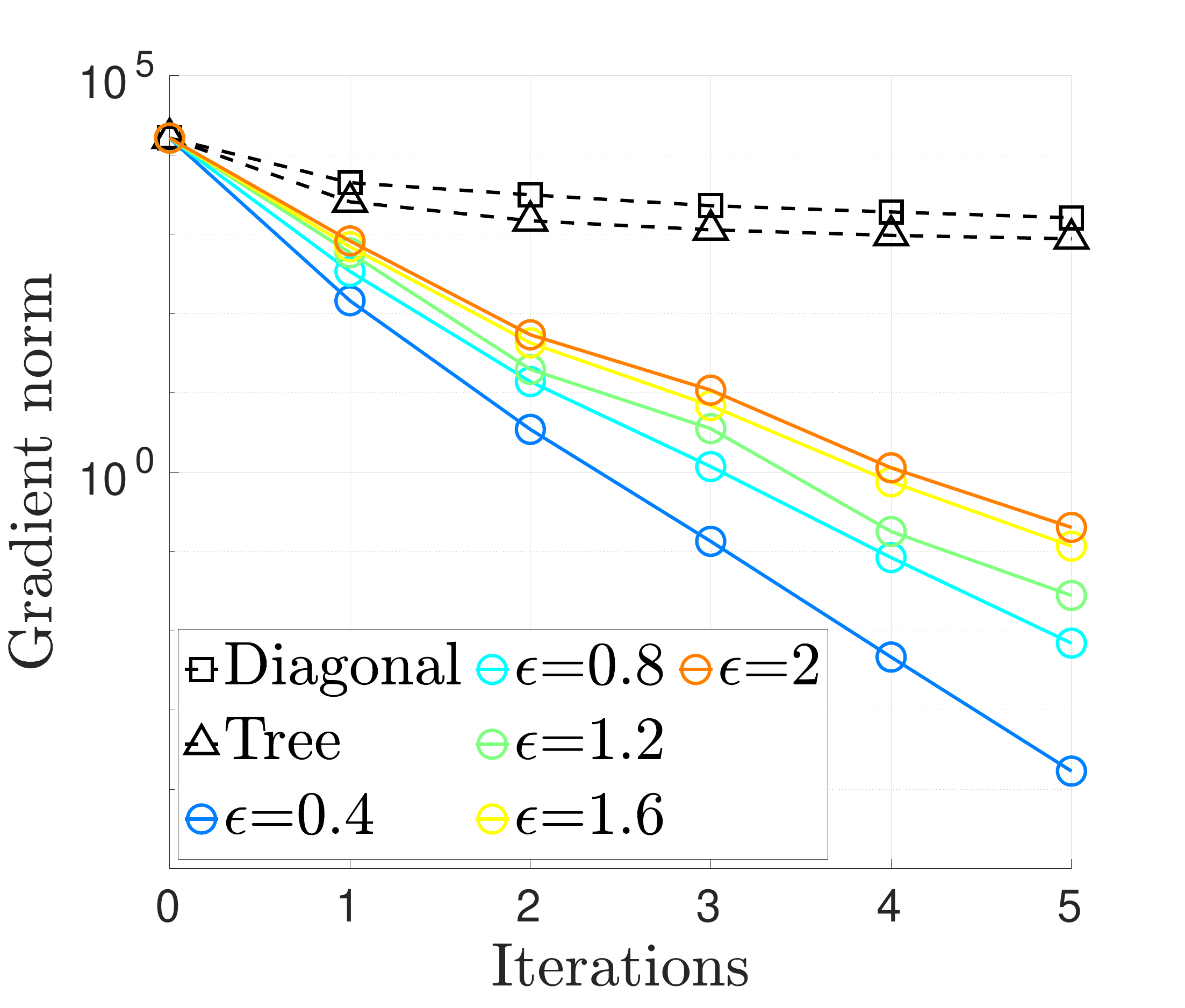}
		\caption{Gradient norm vs. iterations}
		\label{fig:translation_eval:iteration}
	\end{subfigure}
	\hspace{2cm}
	\begin{subfigure}[t]{0.33\linewidth}
		\includegraphics[width=\textwidth, trim=0 0 0 0, clip]{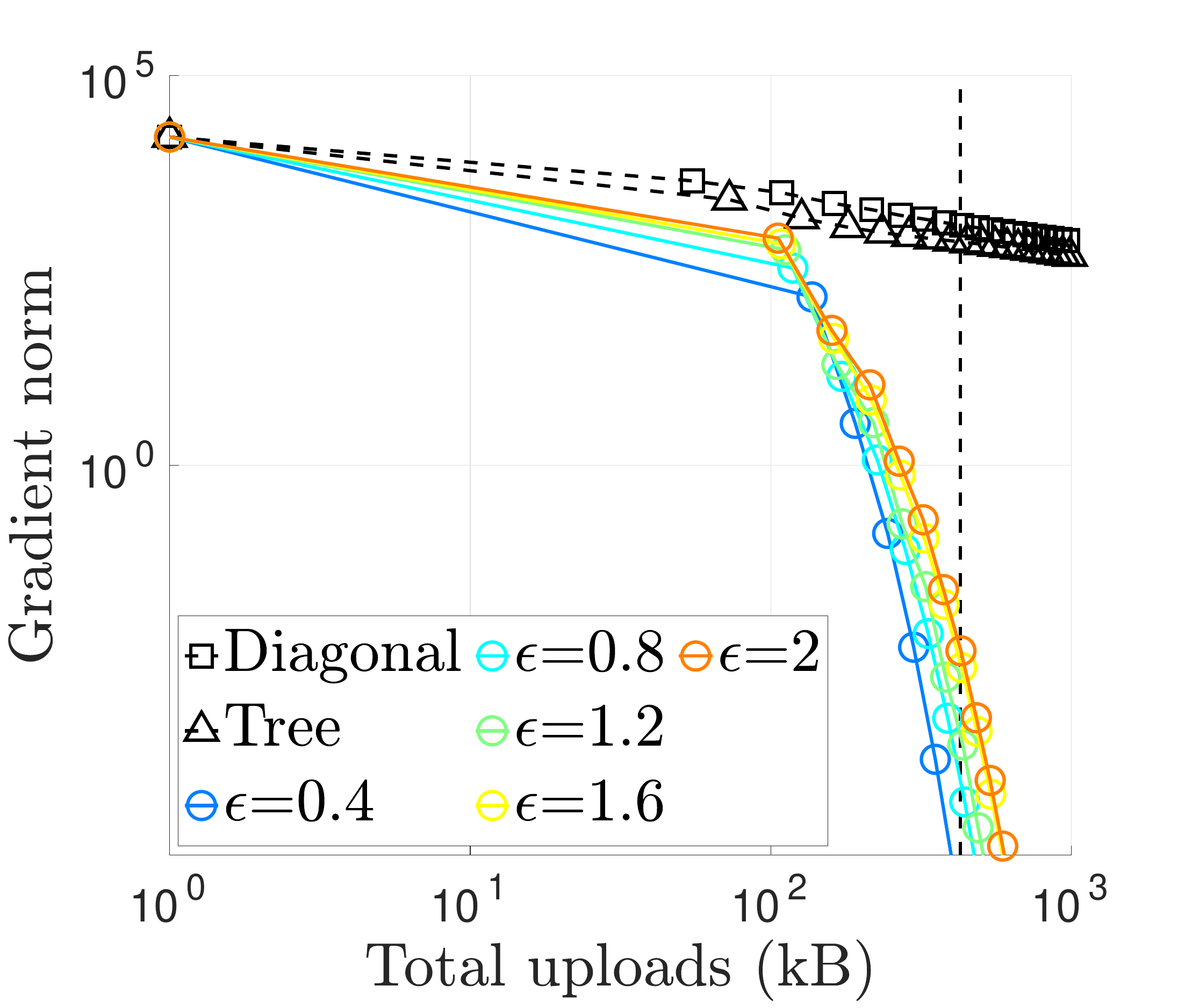}
		\caption{Gradient norm vs. uploads}
		\label{fig:translation_eval:communication}
	\end{subfigure}	
	\caption{
		Evaluation of \cref{alg:collaborative_translation_recovery} on the 5-robot translation estimation problem from the \dataset{Cubicle} dataset.
		(a) Evolution of gradient norm as a function of iterations.
		(b) Evolution of gradient norm as a function of total uploads to the server. 
	} 
	\label{fig:translation_eval}
\end{figure}

\begin{figure}[H]
	\centering
	\begin{subfigure}[t]{0.33\linewidth}
		\includegraphics[width=\textwidth, trim=0 0 0 0, clip]{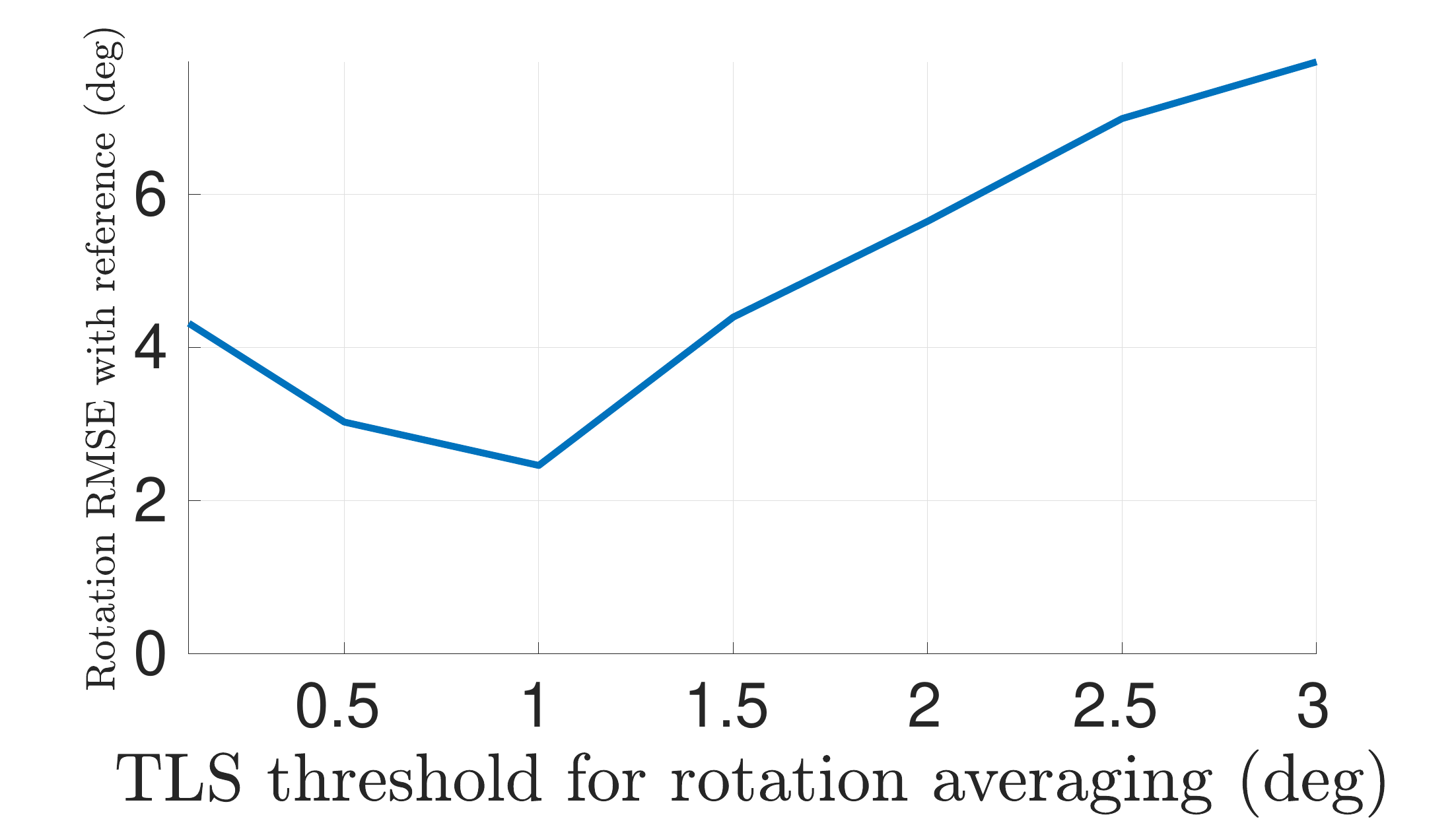}
		\caption{{Rotation estimation}}
		\label{fig:jackal_dataset_sensitivity:rotation}
	\end{subfigure}
	\hspace{2cm}
	\begin{subfigure}[t]{0.33\linewidth}
		\includegraphics[width=\textwidth, trim=0 0 0 0, clip]{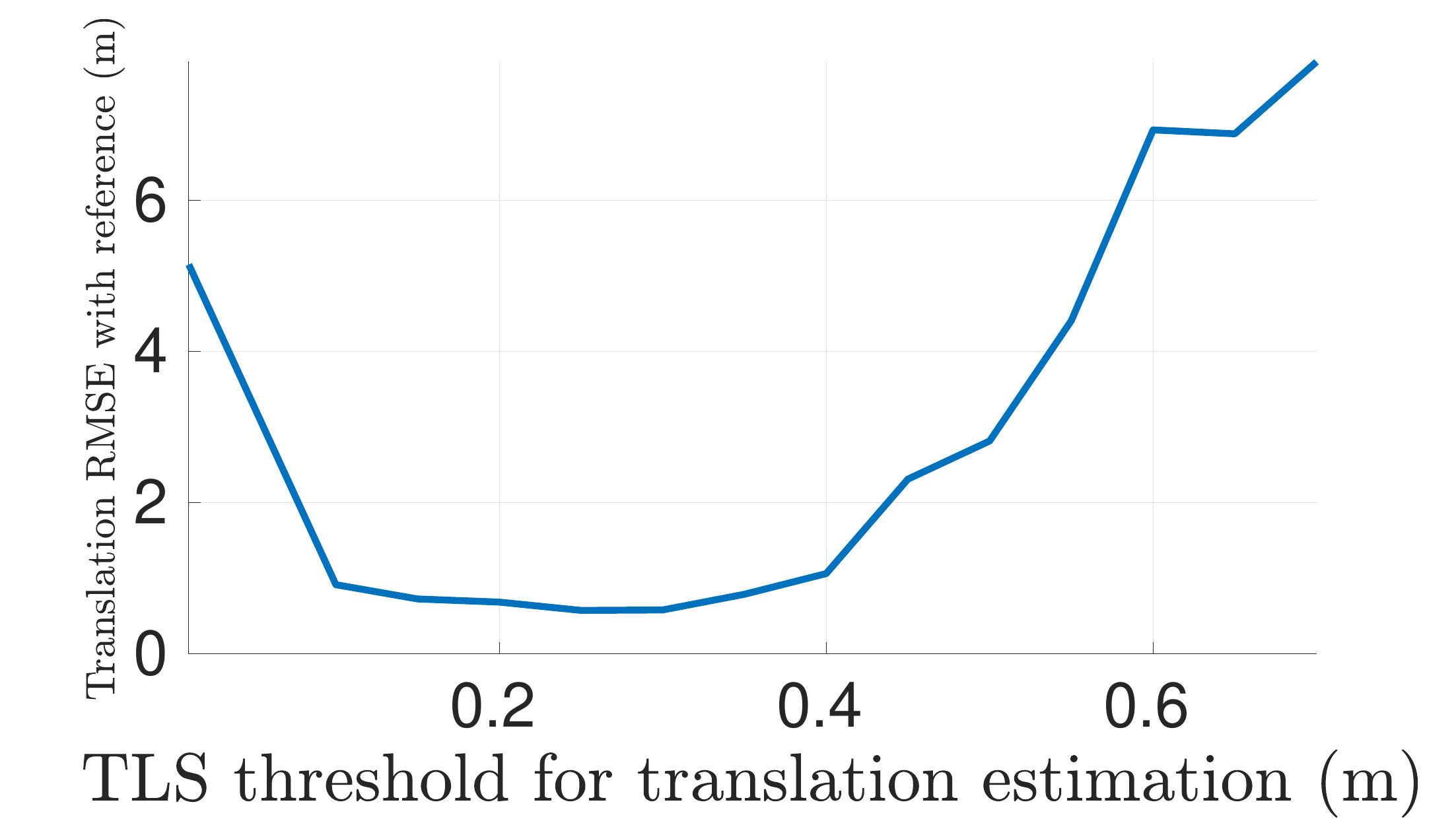}
		\caption{{Translation} estimation}
		\label{fig:jackal_dataset_sensitivity:translation}
	\end{subfigure}	
	\caption{
		Sensitivity of robust PGO initialization to TLS thresholds.
	} 
	\label{fig:jackal_dataset_sensitivity}
\end{figure}

\edit{
\myParagraph{Initialization of \method{RBCD++}}
In \cref{tab:pgo_initialization}, we initialize \method{RBCD++} using a procedure that requires a single round of communication between the server and all robots, and furthermore is expected to be more accurate than the spanning tree initialization.
In the following, let $\alpha \in \Natural$ denote the robot index.
For each pose variable $i \in \Vcal$ in the multi-robot pose graph, let $\alpha(i)$ denote the robot that owns this pose.
At a high level, the initialization approach we implement for \method{RBCD++} first lets robots solve local PGO problems in parallel, 
and then lets the server align the local trajectory estimates by solving another PGO problem defined using all inter-robot loop closures.
We outline the steps in detail below:
\begin{enumerate}[leftmargin=1.0cm]
	\item In parallel, each robot solves a local PGO problem using \method{SE-Sync} \cite{Rosen19IJRR} by considering all local odometry and intra-robot loop closures.
	For each pose variable $i \in \Vcal$, this step returns an estimate in the corresponding robot's local frame, denoted as $\That^{\alpha(i)}_{i} \in \SE(d)$.
	
	\item The server jointly estimates the transformation of each robot $\alpha$ to the global frame, denoted as $T_\alpha = (R_\alpha, t_\alpha) \in \SE(d)$,
	by solving the following PGO problem using \method{SE-Sync},
	\begin{equation}
	\label{eq:inter_robot_pgo}
		\begin{aligned}
			\minimize_{
				\substack{
					R_\alpha \in \SOd(d), 
					t_\alpha \in \Real^d}} \;
			\sum_{(i,j) \in \Ecal_\text{inter}} {\kappa_{ij}} \norm{R_{\alpha(i)} \Rhat^{\alpha(i)}_{\alpha(j)} - R_{\alpha(j)}}^2_F 
			+ {\tau_{ij}} \norm{t_{\alpha(j)} - t_{\alpha(i)} - R_{\alpha(i)} \that^{\alpha(i)}_{\alpha(j)}}^2_2.
		\end{aligned}
	\end{equation}
	In \eqref{eq:inter_robot_pgo}, $\Ecal_\text{inter}$ denotes the set of inter-robot loop closures.
	Constants $\kappa_{ij}, \tau_{ij} > 0$ are the rotation and translation measurement weights associated with the inter-robot loop closure $(i,j) \in \Ecal_\text{inter}$.
	Each $\That^{\alpha(i)}_{\alpha(j)} \triangleq (\Rhat^{\alpha(i)}_{\alpha(j)}, \that^{\alpha(i)}_{\alpha(j)} ) \in \SE(d)$ is an estimate of the relative transformation between robots $\alpha(i)$ and $\alpha(j)$ where $\alpha(i) \neq \alpha(j)$, computed as,
	\begin{equation}
	\label{eq:inter_robot_transform}
		\That^{\alpha(i)}_{\alpha(j)} = \That^{\alpha(i)}_{i} \Ttilde_{ij} \left (\That^{\alpha(j)}_{j} \right)^{-1}.
	\end{equation}
	In \eqref{eq:inter_robot_transform}, 
	$\Ttilde_{ij} \in \SE(d)$ is the original inter-robot loop closure, and
	$\That^{\alpha(i)}_{i}, \That^{\alpha(j)}_{j} \in \SE(d)$
	are the pose estimates in robots' local frames computed in step 1. 

	\item For each pose variable $i \in \Vcal$ in the pose graph, its final initialization (input to \method{RBCD++}) is computed as,
	\begin{equation}
		\That_i = \That_{\alpha(i)}  \That^{\alpha(i)}_{i},
	\end{equation}
	where $\That_{\alpha(i)} \in \SE(d)$ is robot $\alpha(i)$'s coordinate frame estimated in step 2.
\end{enumerate}
}

\myParagraph{Impact of Spectral Sparsification on Translation Estimation}
In  \cref{fig:translation_eval}, we evaluate the impact of spectral sparsification
on \cref{alg:collaborative_translation_recovery} for collaborative translation estimation.
For conciseness, we only evaluate convergence as a function of iteration (\cref{fig:translation_eval:iteration}) and uploads (\cref{fig:translation_eval:communication}).
Using the \dataset{Cubicle} dataset, we generate the translation estimation problem by fixing the rotation estimates in PGO \eqref{eq:pose_graph_optimization} to the solution produced by \cref{alg:collaborative_rotation_averaging}; see our discussions in \cref{sec:problem_definition:applications}.
{Note that for translation estimation, setting $\epsilon = 0$ (\ie, no sparsification) in \cref{alg:collaborative_translation_recovery}
	corresponds to the exact Newton method and recovers the exact solution in a single iteration (since \cref{prob:translation_recovery} is a linear least squares problem).}
In \cref{fig:translation_eval:communication}, the vertical line shows the total uploads incurred with this setting.
We run \cref{alg:collaborative_translation_recovery} under varying sparsification parameter $\epsilon$.
Similar to the rotation averaging experiment, 
we introduce two baselines in which each robot heuristically sparsifies its Schur complement matrix $S_\alpha$
according to a diagonal sparsity pattern (\method{Diagonal}) or a tree sparsity pattern (\method{Tree}).
Our results in \cref{fig:translation_eval} show that the proposed method dominates the baselines in terms of both iteration and communication complexity.
Furthermore, spectral sparsification provides a way to trade off accuracy with communication efficiency.
For instance, to obtain an approximate solution, we can run \cref{alg:collaborative_translation_recovery} with $\epsilon = 0.4$ and terminate when the gradient norm reaches $10^{-1}$, which would incur less communication compared to recovering the exact solution with $\epsilon = 0$.

\myParagraph{Sensitivity of robust PGO initialization to TLS thresholds}
In \cref{fig:jackal_dataset_sensitivity}, we evaluate the sensitivity of our two-stage robust PGO initialization to the choice of TLS thresholds on the real-world collaborative SLAM dataset (\cref{sec:experiments:jackal_dataset}).
First, for rotation estimation,
\cref{fig:jackal_dataset_sensitivity:rotation} shows the final rotation RMSE achieved by GNC as a function of the TLS threshold.
We observe that setting the threshold between $0.5$~deg to $1.5$~deg produces the best performance.
For values too small, GNC starts to reject correct (\ie inlier) measurements, while for large values the final solution is negatively influenced by outliers.
\cref{fig:jackal_dataset_sensitivity:translation} repeats the analysis for the translation estimation stage.
While a similar trend is observed, the result shows that GNC is less sensitive to the TLS threshold in translation estimation.

}

\end{document}